\newtheorem{theorem}{Theorem}
\newtheorem{lemma}{Lemma}
\newtheorem{proposition}{Proposition}
\newtheorem{remark}{Remark}
\newtheorem{corollary}{Corollary}
\newcommand\makebig[2]{%
	\@xp\newcommand\@xp*\csname#1\endcsname{\bBigg@{#2}}%
	\@xp\newcommand\@xp*\csname#1l\endcsname{\@xp\mathopen\csname#1\endcsname}%
	\@xp\newcommand\@xp*\csname#1r\endcsname{\@xp\mathclose\csname#1\endcsname}%
}
\newenvironment{rcases}
{\left.\begin{aligned}}
	{\end{aligned}\right\rbrace}
\newcommand{\norm}[1]{\left\lVert#1\right\rVert}
\def\underbracex#1#2{\mathop{\vtop{\m@th\ialign{##\crcr
				$\hfil\displaystyle{#2}\hfil$\crcr
				\noalign{\kern3\p@\nointerlineskip}%
				#1\crcr\noalign{\kern3\p@}}}}\limits}
\def\underbracea{\underbracex\upbracefilla}
\def\upbracefilla{$\m@th \setbox\z@\hbox{$\braceld$}%
	\bracelu\leaders\vrule \@height\ht\z@ \@depth\z@\hfill 
	\kern\p@\vrule \@width\p@\kern\p@\vrule \@width\p@\kern\p@\vrule \@width\p@
	$}
\def\underbraceb{\underbracex\upbracefillb}
\def\upbracefillb{$\m@th \setbox\z@\hbox{$\braceld$}%
	\vrule \@width\p@\kern\p@\vrule \@width\p@\kern\p@\vrule \@width\p@\kern\p@
	\leaders\vrule \@height\ht\z@ \@depth\z@\hfill\bracerd
	\braceld\leaders\vrule \@height\ht\z@ \@depth\z@\hfill
	\kern\p@\vrule \@width\p@\kern\p@\vrule \@width\p@\kern\p@\vrule \@width\p@
	$}
\def\upbracefillc{$\m@th \setbox\z@\hbox{$\braceld$}%
	\vrule \@width\p@\kern\p@\vrule \@width\p@\kern\p@\vrule \@width\p@\kern\p@
	\leaders\vrule \@height\ht\z@ \@depth\z@\hfill
	\kern\p@\vrule \@width\p@\kern\p@\vrule \@width\p@\kern\p@\vrule \@width\p@
	$}
\def\upbracefilld{$\m@th \setbox\z@\hbox{$\braceld$}%
	\vrule \@width\p@\kern\p@\vrule \@width\p@\kern\p@\vrule \@width\p@\kern\p@
	\leaders\vrule \@height\ht\z@ \@depth\z@\hfill\braceru$}
\def\underbracebd{\underbracex\upbracefillbd}
\def\upbracefillbd{$\m@th \setbox\z@\hbox{$\braceld$}%
	\vrule \@width\p@\kern\p@\vrule \@width\p@\kern\p@\vrule \@width\p@\kern\p@
	\bracerd\braceld
	\leaders\vrule \@height\ht\z@ \@depth\z@\hfill\braceru$}
\newcommand{\blue}[1]{\textcolor{blue}{#1}}
\title{Online Stochastic Gradient Descent Learns Linear Dynamical Systems from A Single Trajectory}
\author{
	Navid Reyhanian and Jarvis Haupt\\
	Department of Electrical and Computer Engineering\\
	University of Minnesota\\
	Minneapolis, MN 55455 \\
	\texttt{\{navid,jdhaupt\}@umn.edu} \\
}
\date{} %leave blank
\begin{document}
	
	\maketitle
	
\begin{abstract}%
	This work investigates the problem of estimating the weight matrices of a stable time-invariant linear
	dynamical system from a single sequence of noisy measurements. We show that if the unknown
	weight matrices describing the system are in Brunovsky canonical form, we can efficiently estimate
	the ground truth unknown matrices of the system from a linear system of equations formulated based
	on the transfer function of the system, using both online and offline stochastic gradient descent
	(SGD) methods. Specifically, by deriving concrete complexity bounds, we show that SGD converges
	linearly in expectation to any arbitrary small Frobenius norm distance from the ground truth weights.
	To the best of our knowledge, ours is the first work to establish linear convergence characteristics for
	online and offline gradient-based iterative methods for weight matrix estimation in linear dynamical
	systems from a single trajectory. Extensive numerical tests verify that the performance of the
	proposed methods is consistent with our theory, and show their superior performance relative to
	existing state of the art methods.
\end{abstract}
\begin{keywords}%
	Linear dynamical systems, stochastic gradient descent, Markov parameters, linear regression, linear systems of equations.
\end{keywords}

\section{Introduction}
Consider the linear time-invariant dynamical system giving rise to a single (or multiple) finite trajectory of noisy outputs $\{\mathbf{y}_t\}_t$, described by the following dynamics:
\begin{subequations}
	\begin{align}
		&\mathbf{h}_{t+1}=\mathbf{A}\mathbf{h}_{t}+\mathbf{B}\mathbf{u}_t,\label{eq:state}\\&
		\mathbf{y}_t=\mathbf{C}\mathbf{h}_t+\mathbf{D}\mathbf{u}_t+\boldsymbol{\zeta}_t,\label{eq:out}
	\end{align}\label{eq:sys}
\end{subequations}
where $\mathbf{h}_t$, $\mathbf{u}_t$ and $\boldsymbol{\zeta}_t$ represent the hidden state, the input and the noise of the measurement at time instance $t$, respectively. Here, the weight matrices $\mathbf{A}$, $\mathbf{B}$, $\mathbf{C}$ and $\mathbf{D}$ parameterize the system; we consider these as unknowns here. 

In the system described in \eqref{eq:state}-\eqref{eq:out}, the hidden state $\mathbf{h}_{t}$ cannot be measured. Instead, the system is indirectly measured from outputs. System identification for \eqref{eq:sys} -- the problem of identifying the unknown weight matrices (or a set of weight matrices giving identical dynamics) -- is involved in a wide variety of time-series analyses, robotics, economics, and modern control problems. Examples include text translation, time-series predictions, speech recognition, and many others \cite{graves2013speech,bahdanau2014neural,liu2015regularized,tu2012dynamical,thieffry2019trajectory}. %Linear dynamical system identification has been studied for decades as an essential problem in control theory due to its applications in feedback control systems and resembling non-linear systems \cite{brown1992introduction,prochazka2013signal,van2012subspace}. 

Besides their vital applications in the control theory, there is recent interest from the machine learning community in linear dynamical systems due to their connections with recurrent neural networks (RNNs). Indeed, similar to linear dynamical systems, RNNs process the inputs to the system using their internal hidden states \cite{oymak2019stochastic,hardt2018gradient}. Explorations of the connections between the linear dynamical systems and RNNs are fairly recent (see the aforementioned, as well as  \cite{laurent2016recurrent,oymak2019stochastic,chang2019antisymmetricrnn,sattar2020non}), and elucidating these connections plays a critical role in better understanding RNNs, such as long short-term memories (LSTMs) and gated recurrent units (GRUs), which have achieved significant success in different applications. %Once the non-linear activations are eliminated from RNNs, the state transitions from a linear dynamical system remain. The most common approach to identify linear and non-linear dynamical systems is gradient descent, despite the non-convexity of the formulated problems \cite{hardt2018gradient,oymak2019stochastic,chang2019antisymmetricrnn,arora2019conv}. Despite the flurry of recent research activity in this area, to our knowledge, no existing iterative method exhibits linear convergence rates; that comprises the main focus of this work.

\subsection{Relevant Work}\label{sec:litrev}
As a means of placing this work in proper context in the broader literature, we classify relevant work to this paper into two major domains: the papers that study linear dynamical systems and papers that study RNNs.  
\paragraph{Linear Dynamical Systems.}
A rich literature exists in control and systems theory on the identification of linear dynamical systems; see, e.g., \cite{ljung1999system,ho1966effective,venkatesh2001system,aastrom1971system}. More recent literature concentrates on data-driven approaches and provides sample complexity bounds, such as \cite{pereira2010learning,hardt2018gradient,oymak2019non,faradonbeh2018finite,sarkar2019finite,wagenmaker2020active,simchowitz2018learning,tsiamis2019finite,sarkar2018fast,simchowitz2019learning,zheng2020non,sun2020finite}. Given noisy observations generated by a discrete linear dynamical system, a gradient projection approach is proposed in \cite{hardt2018gradient} to minimize the population risk of learning an unknown, stable, single-input and single-output (SISO) system at a sublinear convergence rate. If $q(z)=z^n+a_1z^{n-1}+\dots+a_n$ is the characteristic polynomial of the system, \cite{hardt2018gradient} assumes that $\{a_i\}_{i=1}^n$ are such that the real and imaginary parts of $q(z)/z^n$ satisfy $\Re(q(z)/z^n)>|\Im(q(z)/z^n)|$ for any $z$, where $|z|=1$. The gradient approach in \cite{hardt2018gradient} fails if the $\{a_i\}_{i=1}^n$ in the characteristic polynomial of the underlying system do not satisfy the above assumption, and when $n$ increases, the above assumption becomes more difficult to be satisfied.
The SISO results are extended in \cite{hardt2018gradient} to multiple-input and multiple-output (MIMO) systems, where unknown transformation matrices $\mathbf{A}$ and $\mathbf{B}$ have the Brunovsky canonical form. 

Learning unknown weight matrices of an observable and controllable stable linear dynamical system is studied in \cite{oymak2019non}. Unlike \cite{hardt2018gradient} that updates the estimation of unknown weight matrices in each iteration using a subset of samples, the approach given in \cite{oymak2019non} processes all samples at the same time. In particular, there a set of $T$ Markov parameters of the system, denoted by $\boldsymbol{\Theta}_T$, are first estimated in \cite{oymak2019non}. Then, a Ho-Kalman algorithm that uses SVD is proposed to estimate the weight matrices from the estimated Hankel matrix. Although the identified weight matrices by \cite{oymak2019non} build an equivalent system that has an identical performance to the unknown system, the weight matrices are not necessarily equal to those for the underlying system. In \cite{simchowitz2019learning}, the authors provide complexity bounds for the estimated Markov parameters, where a prefiltered least squares approach is proposed to mitigate the effect of truncated Markov parameters and the measurement noise. Similar to \cite{oymak2019non}, \cite{simchowitz2019learning,sarkar2019finite,tsiamis2019finite,zheng2020non,sun2020finite} use Ho-Kalman type algorithms. The drawback of these approaches is that the size of the Hankel matrix increases quadratically with the number of estimated Markov parameters, which increases the cost of the corresponding SVDs. The estimation errors decay at a rate of $1/\sqrt{N}$ in \cite{oymak2019non,sarkar2019finite,tsiamis2019finite,zheng2020non,sun2020finite}, where $N$ denotes the trajectory length. 

Different assumptions about the stability, system order and the number of required trajectories to excite the unknown system are made in existing papers. When the spectral radius of $\mathbf{A}$ is less than one, i.e., $\rho(\mathbf{A})<1$, the linear dynamical system becomes stable. It is marginally stable and unstable if $\rho(\mathbf{A})\leq1$ and $\rho(\mathbf{A})>1$, respectively. Table \ref{tab:list} summarizes different assumptions in existing papers. The approaches in the above papers require all input-output samples to be stored in the memory, which makes them (potentially) memory inefficient. Furthermore, the approaches explained above are not necessarily scalable since they simultaneously process all samples of one (or multiple) trajectory to learn weight matrices. From the last column of Table \ref{tab:list}, we observe that only \cite{hardt2018gradient} provides conditions and guarantees for its proposed algorithm to converge to the ground truth weight matrices. 

If the dynamics of a system can be fully described only by \eqref{eq:state} and the system output is generated by 
$\mathbf{y}_t=\mathbf{h}_{t+1}$, the system is directly measured. Unlike the above papers that address the identification of indirectly measured systems, a number of papers study directly measured systems from a single trajectory. In \cite{simchowitz2018learning} and \cite{faradonbeh2018finite}, the estimation of $\mathbf{A}$ from a system with dynamic $\mathbf{h}_{t+1}=\mathbf{A}\mathbf{h}_t+\mathbf{u}_t$ is studied. Similarly, \cite{sarkar2018fast} studied estimating $\mathbf{A}$ and $\mathbf{B}$ from $\mathbf{h}_{t+1}=\mathbf{A}\mathbf{h}_t+\mathbf{B}\mathbf{u}_t+\boldsymbol{\zeta}_t$ via a regression method; where error bounds are provided. The same dynamics are considered in \cite{wagenmaker2020active}, where $\mathbf{A}$ is unknown and $\mathbf{B}$ is considered to be known. It is proven in \cite{wagenmaker2020active} that the estimation of $\mathbf{A}$ can be accelerated if inputs are controlled rather than merely being Gaussian.

\begin{table*}
	\centering
	
	\caption{\textsc{A summary of recent non-asymptotic analysis for LTI system learning}}
	\scalebox{0.7}{
		\begin{tabular}{c c c c c c c c c c}
			\hline \hline
			\multirow{2}{*}{ Paper} &  
			\multirow{2}{*}{\thead{ Known\\ order}} & 
			\multirow{2}{*}{ Meas.} & 
			\multirow{2}{*}{ Type} & 
			\multirow{2}{*}{ Stability} & 
			\multirow{2}{*}{ \thead{ \# of \\ trajectories}} & 
			\multirow{2}{*}{ Online} & 
			\multirow{2}{*}{ Inputs} & 
			\multicolumn{2}{c}{Estimation error}\\ 
			&&&&&&&& $\mathbf{\Theta}_T$ & \thead{$\mathbf{A},\mathbf{B}$\\ $\mathbf{C},\mathbf{D}$} \\ \cmidrule(lr){1-10}
			%------
			\cite{simchowitz2019learning} & No & Indirect & MIMO & $\rho(\mathbf{A})\leq 1$ & Single  & No & Gaussian & $\mathcal{O}(\frac{1}{\sqrt{N}})$ & --- \\
			\cite{oymak2019non} & No & Indirect & MIMO & $\rho(\mathbf{A})< 1$ & Single  & No & Gaussian & $\mathcal{O}(\frac{1}{\sqrt{N}})$ & --- \\
			\cite{sarkar2019finite} & Yes & Indirect & MIMO & $\rho(\mathbf{A})<  1$ & Single  & No & Gaussian & $\mathcal{O}(\frac{1}{\sqrt{N}})$ & --- \\
			\cite{tsiamis2019finite} & Yes & Indirect & MIMO & $\rho(\mathbf{A}) \leq 1$ & Single  & No & Gaussian & $\mathcal{O}(\frac{1}{\sqrt{N}})$ & --- \\
			\cite{hardt2018gradient} & Yes & Indirect & MIMO & $\rho(\mathbf{A})<  1$ & Multiple  & No & Gaussian & --- & $\mathcal{O}(\frac{1}{\sqrt{N}})$  \\
			\cite{zheng2020non} & No & Indirect & MIMO & Any & Multiple  & No & Gaussian & $\mathcal{O}(\frac{1}{\sqrt{N}})$ & --- \\
			\cite{sun2020finite} & No & Indirect & MISO & Any & Multiple  & No & Gaussian & $\mathcal{O}(\frac{1}{\sqrt{N}})$ & --- \\
			This paper & Yes & Indirect & MIMO & $\rho(\mathbf{A})< 1$ & Single  & Yes & Gaussian & $\mathcal{O}(\frac{1}{\sqrt{N}})$ & $\mathcal{O}(\frac{1}{\sqrt{N}})$ \\
			\cmidrule(lr){1-10}
			\cite{wagenmaker2020active} & Yes & Direct & MIMO & $\rho(\mathbf{A})<  1$ & Single  & No & Controlled & --- & $\mathcal{O}(\frac{1}{\sqrt{N}})$  \\
			\cite{sarkar2018fast} & Yes & Direct & MIMO & Any & Single  & No & Gaussian & --- & $\mathcal{O}(\frac{1}{\sqrt{N}})$ \\
			\cite{faradonbeh2018finite} & Yes & Direct & MIMO & Any & Single  & No & Gaussian & --- & ---  \\
			\cite{simchowitz2018learning} & Yes & Direct & MIMO & $\rho(\mathbf{A})\leq 1$ & Single  & No & Gaussian & --- & $\mathcal{O}(\frac{1}{\sqrt{N}})$ \\
			\bottomrule
	\end{tabular}}
	\label{tab:list}
\end{table*}
\paragraph{Recurrent Neural Networks.}
It is common to consider RNNs as non-linear dynamical systems. A growing number of papers have recently studied training RNNs and provided theoretical guarantees for the problem. The connections between RNNs and state equations of simple dynamical systems are characterized in \cite{chang2019antisymmetricrnn,oymak2019stochastic,sattar2020non}, where a neural network architecture is proposed to capture long-term dependencies enabled by the stability property of its underlying differential equation. In \cite{oymak2019stochastic}, a discrete-time dynamical system controlled by the state equation is considered, and an SGD algorithm is proposed to learn weight matrices of the state equation when the output layer activation function is a leaky rectified linear unit (ReLU). The approach in \cite{oymak2019stochastic} is extended in \cite{sattar2020non}, where the noise of measurements is also considered in the recursion dynamics. 
In \cite{tallec2018can,bahmani2019convex}, similar dynamical systems managed by the state equation given in \cite{oymak2019stochastic} are studied with different activation functions. In \cite{tallec2018can}, the activation function is hyperbolic tangent, however, it is differentiable and strongly convex in \cite{bahmani2019convex}. 
To prove the convergence of the proposed algorithms in the above papers for learning unknown weight matrices, it is assumed that the hidden state of the system is observable.
In practice, however, large RNNs with complex state evolutions are required to increase the representation power of networks. When $\mathbf{h}_{t+1}=\tanh(\mathbf{A}\mathbf{h}_{t}+\mathbf{B}\mathbf{u}_t) $ is considered instead of \eqref{eq:state}, a particular class of RNNs is obtained. The identification of this class via a non-linear regression is studied in \cite{vural2020rnn}. With continually running the RNN and implementing a gradient method to update $\mathbf{A}$, $\mathbf{B}$ and $\mathbf{C}$ from a non-linear regression, \cite{vural2020rnn} shows that a local minima of the problem can be obtained.

\subsection{Summary of Our Contributions}
We study the identification of a stable linear dynamical system based on a single sequence of input-output pairs. We formulate a finite sum problem to efficiently learn the truncated Markov parameters of the system. The formulated problem becomes strongly convex when the system input is white Gaussian noise. We prove that the sequence length strictly decreases the Frobenius norm distance between the regression solution and the truncated ground truth Markov parameters with a rate of $1/\sqrt{N}$. However, when the trajectory length increases, the complexity of solving the finite sum problem via the pseudo-inverse method increases. We propose low iteration cost online and offline stochastic gradient descent (SGD) algorithms to efficiently learn truncated Markov parameters. The offline SGD algorithms works on a batch of input-output pairs, however, the online SGD uses the most recent input-output pair to implement a gradient step in an online streaming fashion and then discards it. Therefore, it is storage efficient as compared with the existing methods. Via novel complexity bounds, we prove that when the system input is Gaussian, the proposed SGD algorithms \emph{linearly} converge in expectation to the finite sum solution. Unlike full-batch methods in \cite{hardt2018gradient,oymak2019non,sarkar2019finite,tsiamis2019finite}, an update step in our SGD algorithms is simply implemented via one input-output pair rather than a trajectory. 

When the unknown weight matrices have \emph{Brunovsky canonical form}, which is perhaps the most widely used form in control theory \cite{martin1978applications,hazewinkel1983representations,liu2008global}, we propose a novel approach to uniquely identify the ground truth weight matrices from a linear system of equations formulated based on the SGD iterates and the transfer function of the linear dynamical system. This is unlike widely used Ho-Kalman methods in \cite{simchowitz2019learning,oymak2019non,sarkar2019finite,tsiamis2019finite,zheng2020non,sun2020finite} that estimate some weight matrices to find a system with an equivalent performance. We solve the proposed linear system of equations in each iteration of the SGD algorithms. We use the derived bounds for the proposed SGD algorithms to develop complexity bounds for the identification of unknown weight matrices. We provide guarantees that the estimated weight matrices from the proposed linear system built from SGD iterates \emph{linearly} converge in expectation to the ground truth values.  Extensive numerical tests confirm the linear convergence of proposed approaches and demonstrate that they outperform the existing state of the art methods. 
\section{Problem Setup}
\label{gen_inst}

\paragraph{Notation.}
Bold upper-case and lower-case letters are used to denote matrices and vectors, respectively. The trace of matrix $\mathbf{M}$ is denoted by $\text{Tr}(\mathbf{M})$. $\mathbf{M}'$ denotes the transpose of $\mathbf{M}$. Given a matrix $\mathbf{M}$, $\norm{\mathbf{M}}_F$ denotes the Frobenius norm, and given a vector $\mathbf{m}$, $\norm{\mathbf{m}}_2$ denotes the $\ell_2$-norm. A diagonal matrix is denoted by $\mathbf{M}_{\mathbf{m}}$, where elements of $\mathbf{m}$ are on the diagonal. The vector of elements of $\mathbf{m}$ raised to power $2$ is denoted by $\mathbf{m}^{\cdot 2}$. We denote $(i,j)^{\text{th}}$ element of $\mathbf{M}$ by $[\mathbf{M}]_{ij}$. The spectral radius of matrix $\mathbf{M}$ is denoted by $\rho(\mathbf{M})$ and its spectral norm is denoted by $\norm{\mathbf{M}}_2$. The Hermitian adjoint
of $\mathbf{M}$ is denoted by $\mathbf{M}^H$.
\paragraph{Setup.}
As alluded above, we consider a \emph{time-invariant} linear dynamical  system characterized by matrices $\mathbf{A}\in \mathbb{R}^{n\times n}$, $\mathbf{B}\in \mathbb{R}^{n\times m}$, $\mathbf{C}\in \mathbb{R}^{p\times n}$ and $\mathbf{D}\in \mathbb{R}^{p\times m}$ as follows:
\begin{align}\textstyle
	&\mathbf{h}_{t+1}=\mathbf{A}\mathbf{h}_{t}+\mathbf{B}\mathbf{u}_t,\nonumber\\&
	\mathbf{y}_t=\mathbf{C}\mathbf{h}_t+\mathbf{D}\mathbf{u}_t+\boldsymbol{\zeta}_t,\nonumber
\end{align}
where $\mathbf{u}_t$ is an external control input vector at time instance $t$, $\mathbf{y}_t$ is the vector of system outputs, and $\boldsymbol{\zeta}_t$ is the noise of measurement. In the above model, the hidden state is denoted by $\mathbf{h}_t$, and $n$ is called the \emph{order of the system}. In addition, $\mathbf{A}$, $\mathbf{B}$, $\mathbf{C}$ and $\mathbf{D}$ are unknown transformation matrices. We assume that the system is stable, and thus, $\rho(\mathbf{A})<1$. Furthermore, we assume matrices $\mathbf{A}$, $\mathbf{B}$, $\mathbf{C}$ and $\mathbf{D}$ have bounded Frobenius norms. Based on one sequence of input-output pairs $\{\mathbf{u}_t, \mathbf{y}_t\}_{t=1}^N$, and assuming $n$ is known (similar to \citealp{oymak2019non,hardt2018gradient,oymak2019stochastic}), we aim to learn the unknown matrices and characterize complexity bounds for the accuracy of the estimated unknowns.

Consider that $T$ is a finite time horizon. Each $y_t, t\geq T-1$, can be expanded recursively using $\mathbf{u}_t,\dots,\mathbf{u}_{t-T+1}$ and $\mathbf{h}_{t-T+1}$ as follows:
\begin{align}\textstyle
	\mathbf{y}_t=\sum_{i=1}^{T-1}\mathbf{C}\mathbf{A}^{i-1}\mathbf{B}\mathbf{u}_{t-i}+\mathbf{D}\mathbf{u}_t+\boldsymbol{\zeta}_t+\mathbf{C}\mathbf{A}^{T-1}\mathbf{h}_{t-T+1},\label{eq:expand}
\end{align}
when $\mathbf{A}\mathbf{h}_{t-1}+\mathbf{B}\mathbf{u}_{t-1}$ is substituted for each $\mathbf{h}_{t}, t\in \{t,\dots,t-T+2\}$.
Suppose that the input signal $\mathbf{u}_t$ at each time instance is random and follows a normal distribution  $\mathcal{N}(\mathbf{0},\boldsymbol{\Sigma}_{\boldsymbol{\sigma}^{\cdot 2}})$, where $\boldsymbol{\Sigma}_{\boldsymbol{\sigma}^{\cdot 2}}$ is the covariance matrix. Furthermore, $\boldsymbol{\zeta}_t$ also follows a normal distribution $\mathcal{N}(\mathbf{0},\boldsymbol{\Sigma}_{\boldsymbol{\sigma}_\zeta^{\cdot 2}})$ and $\{\boldsymbol{\zeta}_t\}_t$ is independent of $\{\mathbf{u}_t\}_t$. Let $\mathbf{x}_t\in \mathbb{R}^{mT\times 1}$ denote a finite sequence of inputs with length $T$ as follows:
\begin{align}\textstyle
	\mathbf{x}_t=
	\begin{cases}
		[\mathbf{u}_t'\:\:\:\mathbf{u}_{t-1}'\:\:\: \mathbf{u}_{t-2}'\:\:\: \dots \:\:\:\mathbf{u}_{1}'\:\:\:\:\mathbf{0}\:\:\:\:\dots\:\:\:\:\mathbf{0}]', \hspace{2.5cm} \text{if}\:\:\:\:t< T,\\
		[\mathbf{u}_t'\:\:\:\mathbf{u}_{t-1}'\:\:\: \mathbf{u}_{t-2}'\:\:\: \dots \:\:\:\mathbf{u}_{t-T+1}']', \hspace{3.75cm}\text{if}\:\:\:\:t\geq T.\label{eq:x}
	\end{cases}
\end{align}
Using $\mathbf{x}_t$, we rewrite \eqref{eq:expand} as follows:
\begin{align}\textstyle
	&\mathbf{y}_t= \underbrace{[\mathbf{D}\:\:\:\:\mathbf{C}\mathbf{B}\:\:\:\:\mathbf{C}\mathbf{A}\mathbf{B}\:\:\:\:\dots \:\:\:\:\mathbf{C}\mathbf{A}^{T-2}\mathbf{B}]}_{\boldsymbol{\Theta}_T}\mathbf{x}_t+\mathbf{C}\mathbf{A}^{T-1}\mathbf{h}_{t-T+1}+\boldsymbol{\zeta}_{t}\nonumber.
\end{align} 
In the above equation, $\mathbf{h}_{t-T+1}$ is a linear combination of inputs and the initial state. In the following lemma, we bound the Frobenius norm of $\mathbf{C}\mathbf{A}^{T-1}\mathbf{h}_{t-T+1}$. 
\begin{lemma}\label{le:trun}
	Suppose that $\mathbf{A}=\mathbf{V}\boldsymbol{\Lambda}\mathbf{V}^{-1}$ is the eigenvalue decomposition for $\mathbf{A}$. We bound the norm of $\mathbf{C}\mathbf{A}^{T-1}\mathbf{h}_{t-T+1}$ when $t\geq T$ as follows:
	\begin{align}
		&\mathbb{E}_{\mathbf{u}}[\norm{\mathbf{C}\mathbf{A}^{T-1}\mathbf{h}_{t-T+1}}_2^2] \leq n^2\ell\norm{\mathbf{C}}_F^2\rho(\mathbf{A})^{2(T-1)}\nonumber\\
		&\times\left[n^2 \ell \rho(\mathbf{A})^2\norm{\mathbf{h}_0}_2^2+\frac{n^2\ell m\max(\boldsymbol{\sigma}^{\cdot 2})\rho(\mathbf{A})^2\norm{\mathbf{B}}_F^2}{1-\rho(\mathbf{A})^{2}}+m\max(\boldsymbol{\sigma}^{\cdot 2})\norm{\mathbf{B}}_F^2\right],\label{eq:hidden}
	\end{align}
	where $\ell=\norm{\mathbf{V}^{-1}}_F$.
\end{lemma}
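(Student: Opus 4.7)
The plan is to unroll the state recursion, use independence of the zero-mean Gaussian inputs to kill cross terms in the expectation, and then control the surviving deterministic matrix factors by means of the eigendecomposition of $\mathbf{A}$. Iterating \eqref{eq:state} gives
\[
\mathbf{h}_{t-T+1} = \mathbf{A}^{t-T+1}\mathbf{h}_0 + \sum_{i=0}^{t-T}\mathbf{A}^i \mathbf{B}\mathbf{u}_{t-T-i},
\]
so $\mathbf{C}\mathbf{A}^{T-1}\mathbf{h}_{t-T+1} = \mathbf{C}\mathbf{A}^{t}\mathbf{h}_0 + \sum_{i=0}^{t-T} \mathbf{C}\mathbf{A}^{T-1+i}\mathbf{B}\mathbf{u}_{t-T-i}$. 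Since $\{\mathbf{u}_j\}$ are zero-mean, mutually independent, and independent of $\mathbf{h}_0$, taking $\mathbb{E}_{\mathbf{u}}[\norm{\cdot}_2^2]$ annihilates all cross terms and leaves
\[
\mathbb{E}_{\mathbf{u}}[\norm{\mathbf{C}\mathbf{A}^{T-1}\mathbf{h}_{t-T+1}}_2^2] = \norm{\mathbf{C}\mathbf{A}^t \mathbf{h}_0}_2^2 + \sum_{i=0}^{t-T}\mathbb{E}[\norm{\mathbf{C}\mathbf{A}^{T-1+i}\mathbf{B}\mathbf{u}_{t-T-i}}_2^2].
\]

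Next, I bound each piece. For the stochastic terms I apply the identity $\mathbb{E}[\norm{\mathbf{M}\mathbf{u}}_2^2] = \text{Tr}(\mathbf{M}\boldsymbol{\Sigma}_{\boldsymbol{\sigma}^{\cdot 2}}\mathbf{M}') \leq m\max(\boldsymbol{\sigma}^{\cdot 2})\norm{\mathbf{M}}_F^2$ in order to expose Frobenius norms of $\mathbf{C}\mathbf{A}^{T-1+i}\mathbf{B}$. The eigendecomposition $\mathbf{A} = \mathbf{V}\boldsymbol{\Lambda}\mathbf{V}^{-1}$ then yields $\mathbf{A}^k = \mathbf{V}\boldsymbol{\Lambda}^k\mathbf{V}^{-1}$ with $\norm{\boldsymbol{\Lambda}^k}_2 = \rho(\mathbf{A})^k$, and combining submultiplicative inequalities of the type $\norm{\mathbf{X}\mathbf{Y}}_F \leq \norm{\mathbf{X}}_F\norm{\mathbf{Y}}_2$ — always assigning the spectral norm to the factor containing $\boldsymbol{\Lambda}^k$ — with $\norm{\mathbf{V}}_F\leq\sqrt{n}$ (the eigenvectors can be normalized) and $\norm{\mathbf{V}^{-1}}_F = \ell$ produces bounds of the form $\norm{\mathbf{C}\mathbf{A}^k\mathbf{B}}_F^2 \leq n^2\ell\,\norm{\mathbf{C}}_F^2\norm{\mathbf{B}}_F^2 \rho(\mathbf{A})^{2k}$ and analogously $\norm{\mathbf{C}\mathbf{A}^t}_F^2 \leq n^2\ell\,\norm{\mathbf{C}}_F^2 \rho(\mathbf{A})^{2t}$. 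The hypothesis $t\geq T$ together with $\rho(\mathbf{A})<1$ then lets me replace $\rho(\mathbf{A})^{2t}$ by $\rho(\mathbf{A})^{2T}$ in the initial-state contribution, producing the first bracketed term of \eqref{eq:hidden}.

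Finally, I assemble the stochastic contributions: peeling off the $i=0$ summand yields the third bracketed term $m\max(\boldsymbol{\sigma}^{\cdot 2})\norm{\mathbf{B}}_F^2$, which carries no additional $\rho(\mathbf{A})^2$ factor, while the remaining geometric sum $\sum_{i\geq 1}\rho(\mathbf{A})^{2i}\leq \rho(\mathbf{A})^2/(1-\rho(\mathbf{A})^2)$ produces the middle bracketed term. Factoring out the common prefactor $n^2\ell\norm{\mathbf{C}}_F^2\rho(\mathbf{A})^{2(T-1)}$ delivers the claimed inequality. I expect the main technical obstacle to lie in the norm bookkeeping of the middle step: turning the spectral decay $\norm{\boldsymbol{\Lambda}^k}_2=\rho(\mathbf{A})^k$ into a Frobenius-norm bound on $\mathbf{C}\mathbf{A}^k\mathbf{B}$ without picking up a spurious $\sqrt{n}$ from $\norm{\boldsymbol{\Lambda}^k}_F$ forces a careful choice of which factor receives the operator norm in each submultiplicative split, and parallel care is required when isolating the $i=0$ summand so that the three bracketed terms appear with exactly the stated constants.
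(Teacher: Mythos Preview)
Your approach is correct and in fact slightly sharper than the paper's, but it is organized differently. The paper does \emph{not} keep $\mathbf{C}\mathbf{A}^{T-1+i}\mathbf{B}$ together. Instead it first bounds $\mathbb{E}_{\mathbf{u}}[\norm{\mathbf{h}_{t-T+1}}_2^2]$ on its own---unrolling the state, killing cross terms, and applying the eigendecomposition to each $\mathbf{A}^i$---and only afterwards multiplies by $\norm{\mathbf{C}}_F^2\norm{\mathbf{A}^{T-1}}_F^2$, applying the eigendecomposition a \emph{second} time to $\mathbf{A}^{T-1}$. This two-stage split is precisely why the first two bracketed terms in \eqref{eq:hidden} carry an additional factor $n^2\ell$: one copy comes from bounding $\norm{\mathbf{A}^{T-1}}_F^2$ and another from bounding the powers of $\mathbf{A}$ inside $\mathbf{h}_{t-T+1}$. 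Your single-pass expansion uses the eigendecomposition only once per summand, so for the initial-state and $i\ge 1$ contributions you will obtain bounds that are a factor $n^2\ell$ \emph{smaller} than the corresponding bracketed terms; only the $i=0$ term matches exactly. That is fine for the lemma (your bound implies the stated one since $n^2\ell\ge 1$), but you should not expect the three bracketed constants to fall out verbatim as you suggest.

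One bookkeeping remark: the paper's proof actually sets $\ell=\norm{\mathbf{V}^{-1}}_F^2$ (squared) and bounds $\norm{\boldsymbol{\Lambda}^k}_F\le\sqrt{n}\,\rho(\mathbf{A})^k$ with the Frobenius norm, picking up the extra $\sqrt{n}$ there rather than avoiding it via $\norm{\boldsymbol{\Lambda}^k}_2$ as you propose. The statement's $\ell=\norm{\mathbf{V}^{-1}}_F$ is inconsistent with the proof, so whichever route you take your constants will not match the displayed form literally; just be explicit about which convention you adopt.
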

From Lemma \ref{le:trun}, one can observe that the resulted error from truncation, $\norm{\mathbf{C}\mathbf{A}^{T-1}\mathbf{h}_{t-T+1}}_2^2$, decreases exponentially with the truncation length $T$. Thus, the error becomes very small for a large enough $T$. To reconstruct the system output $\mathbf{y}_t$, it is essentially enough to identify 
\begin{align}
	\boldsymbol{\Theta}_T=[\mathbf{D}\:\:\:\: \mathbf{C}\mathbf{B} \:\:\:\:\mathbf{C}\mathbf{A}\mathbf{B} \:\:\:\: \mathbf{C}\mathbf{A}^2\mathbf{B} \:\:\:\: \dots\:\:\:\: \mathbf{C}\mathbf{A}^{T-2}\mathbf{B}],\nonumber
\end{align}
where the size of the above unknown matrix is  $p\times m\:T$. 
We notice that $\boldsymbol{\Theta}_T$ incorporates the first $T$ Markov parameters; the first one is $\mathbf{D}$ and the rest are $\{\mathbf{C}\mathbf{A}^{i}\mathbf{B}\}_{i=0}^{T-2}$. To estimate $\boldsymbol{\Theta}_T$, we use a regression approach and formulate the following optimization:
\begin{align}
	\hat{\boldsymbol{\Theta}}_T=\arg\min_{\hat{\boldsymbol{\Theta}}_T}\lim_{N\rightarrow\infty}\frac{1}{2N}\sum_{t=1}^N\norm{\mathbf{y}_t- \hat{\boldsymbol{\Theta}}_T\mathbf{x}_t}_2^2.\label{opt:reg}
\end{align}
The above problem is strongly convex in $\hat{\boldsymbol{\Theta}}_T$ since the Hessian matrix (or the covariance of the inputs) $\lim_{N\rightarrow\infty}\frac{1}{N}\sum_{t=1}^N\mathbb{E}(\mathbf{x}_t\mathbf{x}_t')$ is positive definite. This means that a unique solution is attained from the above minimization problem. 
\begin{proposition}\label{pro:noise}
	The solution $\hat{\boldsymbol{\Theta}}_T$ from \eqref{opt:reg} is identical to the ground truth $\boldsymbol{\Theta}_T$  in spite of the noise of measurements and excluding the hidden state transformation $\mathbf{C}\mathbf{A}^{T-1}\mathbf{h}_{t-T+1}$ in \eqref{opt:reg}.
\end{proposition}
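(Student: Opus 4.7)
The plan is to exploit first-order optimality of the strongly convex quadratic in \eqref{opt:reg}. Since the Hessian
\begin{align*}
\mathbf{H} := \lim_{N\to\infty}\frac{1}{N}\sum_{t=1}^N \mathbb{E}[\mathbf{x}_t\mathbf{x}_t']
\end{align*}
is positive definite (as noted immediately before the proposition), the minimizer is the unique $\hat{\boldsymbol{\Theta}}_T$ satisfying the normal equation
\begin{align*}
\hat{\boldsymbol{\Theta}}_T\,\mathbf{H} \;=\; \lim_{N\to\infty}\frac{1}{N}\sum_{t=1}^N \mathbb{E}[\mathbf{y}_t\mathbf{x}_t'].
\end{align*}
It therefore suffices to show that the right-hand side equals $\boldsymbol{\Theta}_T\,\mathbf{H}$, since invertibility of $\mathbf{H}$ then forces $\hat{\boldsymbol{\Theta}}_T=\boldsymbol{\Theta}_T$.

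For the substitution step, I would plug in the expansion from \eqref{eq:expand} rewritten as $\mathbf{y}_t = \boldsymbol{\Theta}_T\mathbf{x}_t + \mathbf{C}\mathbf{A}^{T-1}\mathbf{h}_{t-T+1} + \boldsymbol{\zeta}_t$ (valid for $t\geq T$) and reduce the claim to showing that the two cross-correlations $\mathbb{E}[\mathbf{C}\mathbf{A}^{T-1}\mathbf{h}_{t-T+1}\mathbf{x}_t']$ and $\mathbb{E}[\boldsymbol{\zeta}_t\mathbf{x}_t']$ both vanish. The second is immediate: $\boldsymbol{\zeta}_t$ is zero-mean and independent of $\{\mathbf{u}_\tau\}_\tau$, hence independent of $\mathbf{x}_t$, so $\mathbb{E}[\boldsymbol{\zeta}_t\mathbf{x}_t'] = \mathbb{E}[\boldsymbol{\zeta}_t]\mathbb{E}[\mathbf{x}_t'] = \mathbf{0}$. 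For the first, I would unroll \eqref{eq:state} to obtain
\begin{align*}
\mathbf{h}_{t-T+1} = \mathbf{A}^{t-T}\mathbf{h}_0 + \sum_{s=1}^{t-T}\mathbf{A}^{s-1}\mathbf{B}\,\mathbf{u}_{t-T-s+1},
\end{align*}
so that $\mathbf{h}_{t-T+1}$ depends only on $\mathbf{h}_0$ and the \emph{past} inputs $\{\mathbf{u}_1,\ldots,\mathbf{u}_{t-T}\}$. Since $\mathbf{x}_t$ is assembled only from the \emph{later} inputs $\{\mathbf{u}_{t-T+1},\ldots,\mathbf{u}_t\}$, independence of the i.i.d.\ Gaussian inputs (together with $\mathbf{h}_0$ being deterministic or independent of the input stream) and their zero-mean property force the cross expectation to factor and vanish.

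Finally, the transient window $1\leq t<T$, in which $\mathbf{x}_t$ is zero-padded as in \eqref{eq:x}, contributes only an $\mathcal{O}(T/N)$ share of the time average and drops out in the $N\to\infty$ limit. Combining the three observations yields $\hat{\boldsymbol{\Theta}}_T\,\mathbf{H} = \boldsymbol{\Theta}_T\,\mathbf{H}$, which proves the proposition. The step I expect to be most delicate is making the ``$\lim_{N\to\infty}$'' inside the $\arg\min$ rigorous --- specifically, justifying that minimizing the limiting empirical risk is equivalent to solving the population normal equation --- which in this setting amounts to a law-of-large-numbers / ergodicity argument for the stationary Gaussian process so that the empirical second-moment averages $\frac{1}{N}\sum_t \mathbf{x}_t\mathbf{x}_t'$ and $\frac{1}{N}\sum_t \mathbf{y}_t\mathbf{x}_t'$ converge to their expectations.
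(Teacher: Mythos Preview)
Your argument is correct and follows essentially the same strategy as the paper: substitute $\mathbf{y}_t=\boldsymbol{\Theta}_T\mathbf{x}_t+\mathbf{C}\mathbf{A}^{T-1}\mathbf{h}_{t-T+1}+\boldsymbol{\zeta}_t$, show that the cross terms with the noise and the truncated hidden-state contribution vanish, and conclude from strong convexity that the unique minimizer is $\boldsymbol{\Theta}_T$. The one noteworthy difference is in how you dispose of the hidden-state cross term: the paper invokes Lemma~\ref{le:trun} to argue $\norm{\mathbf{C}\mathbf{A}^{T-1}\mathbf{h}_{t-T+1}}_2^2$ is small for large $T$ (together with a Chebyshev argument for the noise average), whereas you observe that $\mathbf{h}_{t-T+1}$ is measurable with respect to $\{\mathbf{u}_s:s\le t-T\}$ while $\mathbf{x}_t$ depends only on $\{\mathbf{u}_s:t-T+1\le s\le t\}$, so the cross-correlation is \emph{exactly} zero for every $T$ by independence and zero-mean inputs. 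Your version is sharper on this point and makes the exact equality $\hat{\boldsymbol{\Theta}}_T=\boldsymbol{\Theta}_T$ cleaner; the law-of-large-numbers passage you flagged is the same step the paper handles (informally) via Chebyshev.
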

\begin{remark}\label{le:noise}
	The Markov parameters of the system can be learned from \eqref{opt:reg} if the process noise is considered in \eqref{eq:state}.
\end{remark}
\section{Regression Approach and Convergence Analysis}
This section is concerned with solving \eqref{opt:reg}. Overall, it is difficult to solve since an infinite sum of squared Frobenius norms are to be minimized. In practice, it is impossible to solve, as one cannot wait for  an infinite number of input-output pairs. We solve the following problem instead:
\begin{align}
	\hat{\boldsymbol{\Theta}}_T=\arg\min_{\hat{\boldsymbol{\Theta}}_T}\frac{1}{2N}\sum_{t=1}^{N}\norm{\mathbf{y}_t- \hat{\boldsymbol{\Theta}}_T\mathbf{x}_t}_2^2.\label{opt:lim}
\end{align}
Based on the finite collected input-output pairs, we estimate $\boldsymbol{\Theta}_T$. 
Due to the strong convexity of \eqref{opt:lim} when $N\geq 2T$ (i.e., $\frac{1}{N}\sum_{t=1}^{N}\mathbb{E}(\mathbf{x}_t\mathbf{x}_t') 	\succ \mathbf{0}$), increasing the number of samples $N$ strictly decreases the Frobenius norm distance between the minimizer of \eqref{opt:lim} and $\boldsymbol{\Theta}_T$. In the following theorem, we characterize the maximum Frobenius norm distance between $\boldsymbol{\Theta}_T$ and the minimizer of \eqref{opt:lim} as a function of $N$, the truncation length $T$, the covariance of inputs, and the measurement noise level.
\begin{theorem}\label{th:prob}
	For any given $N\geq 2T$, the maximum Frobenius norm distance between the first-order stationary solution to \eqref{opt:lim} and $\boldsymbol{\Theta}_T$ is upper-bounded as follows:
	\begin{align}
		&\mathbb{E}_{\mathbf{u}}\left[\mathbb{E}_{\boldsymbol{\zeta}}\left[\norm{\hat{\boldsymbol{\Theta}}_T-\boldsymbol{\Theta}_T}_F^2\right]\right]\leq\frac{n^2\ell \norm{\mathbf{C}}_F^2\rho(\mathbf{A})^{2(T-1)}m^3T^2(\max(\boldsymbol{\sigma}^{\cdot 2}))^3\norm{\mathbf{B}}_F^2 }{(N-T+1)\left(\min(\boldsymbol{\sigma}^{\cdot 2})\right)^2}  \nonumber\\
		&+ \frac{p m^2T^2 \max(\boldsymbol{\sigma}_\zeta^{\cdot 2})\max(\boldsymbol{\sigma}^{\cdot 2})}{(N-T+1)\left(\min(\boldsymbol{\sigma}^{\cdot 2})\right)^2} +\frac{n^4\ell^2 m^2T^2\rho(\mathbf{A})^{2T}\norm{\mathbf{C}}_F^2\left(\max(\boldsymbol{\sigma}^{\cdot 2})\right)^2\iota }{(N-T+1)\left(\min(\boldsymbol{\sigma}^{\cdot 2})\right)^2}=\chi_N^2,\label{eq:ub}
	\end{align} \textit{}
	where $\iota=\norm{\mathbf{h}_0}_2^2+\frac{ m\max(\boldsymbol{\sigma}^{\cdot 2})\norm{\mathbf{B}}_F^2}{1-\rho(\mathbf{A})^{2}}$.
\end{theorem}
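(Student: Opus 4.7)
The plan is to exploit the closed-form normal equations induced by the strong convexity of \eqref{opt:lim}, decompose the estimation error into a measurement-noise contribution and a truncation-error contribution, and bound each piece in expectation using the Gaussianity of the inputs together with Lemma~\ref{le:trun}.

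First, I would stack the samples into $\mathbf{X}\in\mathbb{R}^{mT\times N}$ and $\mathbf{Y}\in\mathbb{R}^{p\times N}$ with columns $\mathbf{x}_t$ and $\mathbf{y}_t$. The first-order optimality condition for \eqref{opt:lim} reads $\hat{\boldsymbol{\Theta}}_T\mathbf{X}\mathbf{X}' = \mathbf{Y}\mathbf{X}'$. Using \eqref{eq:expand} to write $\mathbf{y}_t = \boldsymbol{\Theta}_T\mathbf{x}_t + \mathbf{e}_t + \boldsymbol{\zeta}_t$, where $\mathbf{e}_t := \mathbf{C}\mathbf{A}^{T-1}\mathbf{h}_{t-T+1}$, the residual admits the explicit form
\[
\hat{\boldsymbol{\Theta}}_T - \boldsymbol{\Theta}_T = (\mathbf{E}\mathbf{X}' + \mathbf{Z}\mathbf{X}')(\mathbf{X}\mathbf{X}')^{-1},
\]
with $\mathbf{E}=[\mathbf{e}_1,\ldots,\mathbf{e}_N]$ and $\mathbf{Z}=[\boldsymbol{\zeta}_1,\ldots,\boldsymbol{\zeta}_N]$. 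Applying $\|\mathbf{M}\mathbf{N}\|_F\leq\|\mathbf{M}\|_F\|\mathbf{N}\|_2$ together with $(a+b)^2\leq 2a^2+2b^2$ gives
\[
\|\hat{\boldsymbol{\Theta}}_T - \boldsymbol{\Theta}_T\|_F^2 \leq \frac{2\|\mathbf{E}\mathbf{X}'\|_F^2 + 2\|\mathbf{Z}\mathbf{X}'\|_F^2}{\lambda_{\min}^2(\mathbf{X}\mathbf{X}')}.
\]

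I would then bound the three pieces separately. For the \emph{denominator}, the key observation is that among the $N$ columns of $\mathbf{X}$, exactly $N-T+1$ of them (those with $t\geq T$) carry i.i.d.\ Gaussian entries filling the full $mT$-dimensional support with block covariance $\mathbf{I}_T\otimes\boldsymbol{\Sigma}_{\boldsymbol{\sigma}^{\cdot 2}}$; a standard moment/concentration argument then yields $\lambda_{\min}(\mathbf{X}\mathbf{X}')\gtrsim(N-T+1)\min(\boldsymbol{\sigma}^{\cdot 2})$, which produces the factor $(N-T+1)(\min(\boldsymbol{\sigma}^{\cdot 2}))^2$ appearing in every summand of $\chi_N^2$. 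For the \emph{noise} numerator, the independence of $\boldsymbol{\zeta}_t$ across $t$ and from $\mathbf{X}$ gives $\mathbb{E}_{\boldsymbol{\zeta}}\|\mathbf{Z}\mathbf{X}'\|_F^2 = \mathrm{tr}(\boldsymbol{\Sigma}_{\boldsymbol{\sigma}_\zeta^{\cdot 2}})\,\|\mathbf{X}\|_F^2$, and a Gaussian second-moment estimate of $\|\mathbf{X}\|_F^2$ (together with entrywise aggregation of the resulting $p\times mT$ matrix) reproduces the middle term of \eqref{eq:ub} with the $pm^2T^2\max(\boldsymbol{\sigma}_\zeta^{\cdot 2})\max(\boldsymbol{\sigma}^{\cdot 2})$ numerator. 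For the \emph{truncation} numerator, I would invoke Lemma~\ref{le:trun} to control $\mathbb{E}_{\mathbf{u}}\|\mathbf{e}_t\|_2^2$ and pair it with the moments of $\|\mathbf{x}_t\|_2^2$ via Cauchy--Schwarz; after dividing by the denominator, the three summands inside the bracket of Lemma~\ref{le:trun} collapse into the first and third summands of $\chi_N^2$, with the $\|\mathbf{h}_0\|_2^2$ piece and the geometric tail consolidated into $\iota$.

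The principal obstacle lies in controlling $\mathbb{E}\|\mathbf{E}\mathbf{X}'\|_F^2 = \sum_{s,t}\mathbb{E}\,\mathrm{tr}(\mathbf{x}_t\mathbf{e}_t'\mathbf{e}_s\mathbf{x}_s')$. The diagonal terms are clean, since for $t\geq T$ the vector $\mathbf{e}_t$ depends only on $\mathbf{u}_0,\ldots,\mathbf{u}_{t-T}$ while $\mathbf{x}_t$ depends only on $\mathbf{u}_{t-T+1},\ldots,\mathbf{u}_t$, so they are independent and the expectation factorizes. The off-diagonal terms $s\neq t$ couple through overlapping past inputs and must be handled using the stability $\rho(\mathbf{A})<1$ together with Cauchy--Schwarz; it is this cross-term bookkeeping that accounts for the extra factor of $T^2$ in $\chi_N^2$. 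A secondary technicality is that $1/\lambda_{\min}^2(\mathbf{X}\mathbf{X}')$ is itself random, so the cleanest route is to condition on $\mathbf{X}$ when handling the noise piece (exploiting $\mathbf{Z}\perp\mathbf{X}$) and to substitute a high-probability lower bound on $\lambda_{\min}$ for the truncation piece, with the exceptional event contributing only lower-order error.
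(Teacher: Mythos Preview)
Your proposal follows the paper's route almost exactly: derive the normal equations, write $\hat{\boldsymbol{\Theta}}_T-\boldsymbol{\Theta}_T$ as the sum of a noise cross-term and a truncation cross-term multiplied by the inverse Gram matrix, then bound each piece. The paper restricts the sum to $t\in\{T,\ldots,N\}$ so that every $\mathbf{x}_t$ is fully populated, which is the origin of the $N-T+1$ in the denominator.

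The differences are in rigor rather than strategy. For the inverse factor the paper does not run a concentration argument or condition on $\mathbf{X}$; it simply asserts $\mathbb{E}_{\mathbf{u}}\bigl\|(\sum_{t}\mathbf{x}_t\mathbf{x}_t')^{-1}\bigr\|_F^2\leq mT\bigl((N-T+1)\min(\boldsymbol{\sigma}^{\cdot 2})\bigr)^{-2}$, i.e.\ it replaces the random Gram by its expectation and then converts the spectral norm to Frobenius (contributing one factor of $mT$). For the truncation numerator the paper goes straight from $\bigl\|\sum_t\mathbf{e}_t\mathbf{x}_t'\bigr\|_F^2$ to $\sum_t\|\mathbf{C}\|_F^2\|\mathbf{A}^{T-1}\|_F^2\,\mathbb{E}\|\mathbf{x}_t\|_2^2\,\mathbb{E}\|\mathbf{h}_{t-T+1}\|_2^2$ and invokes \eqref{eq:hidd}, silently discarding the off-diagonal terms you correctly identify as the delicate part. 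Consequently your attribution of the $T^2$ factor to cross-term bookkeeping does not match the paper's derivation: there the $m^2T^2$ is simply the product of the $mT$ from the Frobenius bound on the inverse and the $mT$ from $\mathbb{E}\|\mathbf{x}_t\|_2^2\leq mT\max(\boldsymbol{\sigma}^{\cdot 2})$, with no cross-term contribution at all. Your plan is sound and arguably more careful than the paper, but to reproduce the stated $\chi_N^2$ exactly you should use the Frobenius (not spectral) bound on the inverse and drop the cross terms as the paper does.
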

Based on the above theorem, increasing the trajectory length $N$ drives the solution of \eqref{opt:lim} closer to $\boldsymbol{\Theta}_T$. Although the Frobenius norm distance between the solution of \eqref{opt:lim} and the ground truth strictly decreases with $N$, solving \eqref{opt:lim} globally by the pseudo-inverse method (e.g., \cite{oymak2019non,tsiamis2019finite,zheng2020non}), second-order methods (e.g., log barrier), and gradient descent methods are costly and challenging. The reason is that when $N$ and $T$ are large numbers, the calculation and inversion of $\frac{1}{N}\sum_{t=1}^{N}\mathbf{x}_t\mathbf{x}_t'$, which is $mT\times mT$ dimensional becomes expensive.  Therefore, a computationally faster and more cost-efficient approach is desired.
\subsection{Offline SGD}
To alleviate the computational cost of solving \eqref{opt:lim}, we propose a low iteration cost SGD algorithm, which works based on a fixed batch of input-output pairs. Since this algorithm uses a fixed batch size, we name it offline SGD. The $\tau^{\text{th}}$ iteration of the offline SGD is described in the following step:
\begin{align}
	\hat{\boldsymbol{\Theta}}_{\tau,T}=\hat{\boldsymbol{\Theta}}_{\tau-1,T}-\eta(\hat{\boldsymbol{\Theta}}_{\tau-1,T}\mathbf{x}_t-\mathbf{y}_t)\mathbf{x}_t',\label{eq:offlineup}
\end{align}
where $\eta$ is a constant learning rate and $t\in\{T,T+1,\dots,N\}$ is chosen with probability $\frac{1}{N-T+1}$. When we use the offline SGD instead of the traditional gradient descent to solve \eqref{opt:lim} , the complexity
\begin{minipage}{0.37\textwidth}
	of solving the problem in each iteration reduces from $\mathcal{O}(NpmT)$ to  $\mathcal{O}(pmT)$,	  which is a significant improvement if $N$ is large. The offline SGD is summarized in Algorithm \ref{al:sgdoffline}. In the following theorem, we bound the maximum expected distance between the offline SGD iterate $\hat{\boldsymbol{\Theta}}_{\tau,T}$ and $\boldsymbol{\Theta}_{T}$ as a function of the number of iterations, $T$, the covariance
\end{minipage}%
\hfill 	
\scalebox{1}{
	\begin{minipage}{0.6\textwidth}	\vspace{-0.1cm}
		\begin{algorithm}[H]
			\SetAlgoLined
			\textbf{Initialization}: Assign small value to $\hat{\boldsymbol{\Theta}}_{0,T}$\\
			\textbf{Input}: $\{\mathbf{x}_t,\mathbf{y}_t\}_{t=1}^N$, learning rate $\eta$\\
			\textbf{Output}: Estimation of $\boldsymbol{\Theta}_T$\\
			\For{$\tau$ from $1$ to $\text{END}$}{
				Uniformly at random choose $t\in\{T,T+1,\dots,N\}$\\
				$\hat{\boldsymbol{\Theta}}_{\tau,T}=\hat{\boldsymbol{\Theta}}_{\tau-1,T}-\eta(\hat{\boldsymbol{\Theta}}_{\tau-1,T}\mathbf{x}_t-\mathbf{y}_t)\mathbf{x}_t'$
			}
			\caption{Offline SGD algorithm to learn $\boldsymbol{\Theta}_T$}\label{al:sgdoffline}
		\end{algorithm} 
\end{minipage}}\vspace{0.1cm}
 of inputs, $N$, and noise levels. 
\begin{theorem}\label{th:offline}
	Let $\boldsymbol{\phi}_\tau$ denote the difference between $\hat{\boldsymbol{\Theta}}_{\tau,T}$ (in the $\tau^{\text{th}}$ iteration) and ground truth $\boldsymbol{\Theta}_{T}$  as $\boldsymbol{\phi}_\tau=\hat{\boldsymbol{\Theta}}_{\tau,T}-\boldsymbol{\Theta}_{T}$, and $\boldsymbol{w}_0=\hat{\boldsymbol{\Theta}}_{0,T}-\hat{\boldsymbol{\Theta}}_T$. Consider that the offline SGD minimizes \eqref{opt:lim} with a batch of size $N\geq 2T$, where each iteration is implemented based on \eqref{eq:offlineup} with $\eta \leq \frac{1}{m\:T\max(\boldsymbol{\sigma}^{\cdot 2})}$. Then, $\mathbb{E}_{\mathbf{u}}[\mathbb{E}_{\boldsymbol{\zeta}}[{\norm{\boldsymbol{\phi}_{\tau}}_F^2}]]$ in the $\tau^{\text{th}}$ iteration of the offline SGD can be upper-bounded as follows:
	\begin{align}
		&\mathbb{E}_{\mathbf{u}}\left[\mathbb{E}_{\boldsymbol{\zeta}}[{\norm{\boldsymbol{\phi}_{\tau}}_F^2}]\right]<\norm{\boldsymbol{\omega}_0}_F^2\left(1-2\eta m\:T\min(\boldsymbol{\sigma}^{\cdot 2})+2\eta^2m^2\:T^2\min(\boldsymbol{\sigma}^{\cdot 2})\max(\boldsymbol{\sigma}^{\cdot 2})\right)^\tau\hspace{-.2cm}+\Delta_N+\chi_N^2,\label{eq:samon}
	\end{align}
	where $	\Delta_N$ is given in \eqref{eq:mash}, which depends on different problem parameters such as $\chi_N$, $\eta$, and $T$. 
\end{theorem}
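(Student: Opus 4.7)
The plan is to decompose the overall error as $\boldsymbol{\phi}_\tau = \boldsymbol{\omega}_\tau + (\hat{\boldsymbol{\Theta}}_T - \boldsymbol{\Theta}_T)$, where $\boldsymbol{\omega}_\tau = \hat{\boldsymbol{\Theta}}_{\tau,T} - \hat{\boldsymbol{\Theta}}_T$ is the \emph{optimization error} (distance from the SGD iterate to the minimizer of the finite-sum problem~\eqref{opt:lim}), and $\hat{\boldsymbol{\Theta}}_T - \boldsymbol{\Theta}_T$ is the \emph{statistical error}, already controlled by $\chi_N^2$ via Theorem~\ref{th:prob}. The $\chi_N^2$ summand in~\eqref{eq:samon} then arises directly from this latter component; all the real work lies in establishing geometric decay of $\mathbb{E}[\norm{\boldsymbol{\omega}_\tau}_F^2]$, while the $\Delta_N$ term accumulates the SGD-variance tail together with the parallelogram-type cross terms between $\boldsymbol{\omega}_\tau$ and the statistical error.

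First I would rewrite the update~\eqref{eq:offlineup} using $\mathbf{y}_t = \hat{\boldsymbol{\Theta}}_T\mathbf{x}_t + \mathbf{e}_t$, where $\mathbf{e}_t = \mathbf{y}_t - \hat{\boldsymbol{\Theta}}_T\mathbf{x}_t$ is the residual at sample $t$ under the finite-sum minimizer. This yields the one-step recursion
\[
\boldsymbol{\omega}_\tau = \boldsymbol{\omega}_{\tau-1}(\mathbf{I} - \eta\mathbf{x}_t\mathbf{x}_t') + \eta\,\mathbf{e}_t\mathbf{x}_t'.
\]
Taking squared Frobenius norm, I would then condition on the full trajectory and average over the SGD index $t$, uniform on $\{T,\dots,N\}$. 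Because the first-order optimality condition of \eqref{opt:lim} reads $\frac{1}{N-T+1}\sum_{t=T}^N \mathbf{e}_t\mathbf{x}_t' = \mathbf{0}$, the linear-in-$\eta$ cross term cancels and only an $\mathcal{O}(\eta^2)$ remainder survives, leaving a quadratic-in-$\boldsymbol{\omega}_{\tau-1}$ contraction plus an $\eta^2$ variance term $V_N = \frac{1}{N-T+1}\sum_t \norm{\mathbf{e}_t\mathbf{x}_t'}_F^2$.

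Next I would take the outer expectation over $\{\mathbf{u}_t\}$ and $\{\boldsymbol{\zeta}_t\}$. Since $\mathbf{x}_t$ is jointly Gaussian with block-diagonal covariance whose $T$ diagonal blocks are $\boldsymbol{\Sigma}_{\boldsymbol{\sigma}^{\cdot 2}}$, I can invoke Isserlis' (Wick's) theorem to evaluate the fourth-moment matrix $\mathbb{E}[\mathbf{x}_t\mathbf{x}_t'\mathbf{M}\mathbf{x}_t\mathbf{x}_t']$ exactly, then bound the resulting traces using the trace inequalities $mT\min(\boldsymbol{\sigma}^{\cdot 2}) \leq \mathrm{Tr}(\boldsymbol{\Sigma}_\mathbf{x}) \leq mT\max(\boldsymbol{\sigma}^{\cdot 2})$. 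This produces the per-step contraction $1 - 2\eta mT\min(\boldsymbol{\sigma}^{\cdot 2}) + 2\eta^2 m^2T^2\min(\boldsymbol{\sigma}^{\cdot 2})\max(\boldsymbol{\sigma}^{\cdot 2})$ appearing in~\eqref{eq:samon}; the hypothesis $\eta \leq 1/(mT\max(\boldsymbol{\sigma}^{\cdot 2}))$ keeps this factor in $[0,1)$ so that iteration yields linear convergence. For $V_N$, I would expand $\mathbf{e}_t$ as $(\boldsymbol{\Theta}_T - \hat{\boldsymbol{\Theta}}_T)\mathbf{x}_t + \mathbf{C}\mathbf{A}^{T-1}\mathbf{h}_{t-T+1} + \boldsymbol{\zeta}_t$ via~\eqref{eq:expand} and Proposition~\ref{pro:noise}, then control the hidden-state tail geometrically in $T$ using Lemma~\ref{le:trun} and handle the measurement-noise piece by independence of $\{\boldsymbol{\zeta}_t\}$, producing the various terms constituting $\Delta_N$ in~\eqref{eq:mash}.

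Finally I iterate the one-step recursion $\tau$ times. The initial error contracts as $\norm{\boldsymbol{\omega}_0}_F^2$ times the $\tau$-th power of the contraction factor, while the geometric series in $\eta^2 V_N$ sums to a finite constant that is absorbed into $\Delta_N$. To pass from $\boldsymbol{\omega}_\tau$ back to $\boldsymbol{\phi}_\tau$, I use the expansion $\mathbb{E}[\norm{\boldsymbol{\phi}_\tau}_F^2] = \mathbb{E}[\norm{\boldsymbol{\omega}_\tau}_F^2] + \mathbb{E}[\norm{\hat{\boldsymbol{\Theta}}_T - \boldsymbol{\Theta}_T}_F^2] + 2\,\mathbb{E}[\mathrm{Tr}(\boldsymbol{\omega}_\tau(\hat{\boldsymbol{\Theta}}_T - \boldsymbol{\Theta}_T)')]$, bounding the cross term by Cauchy--Schwarz (or a weighted Young's inequality) and absorbing the extra pieces into $\Delta_N$, then invoke Theorem~\ref{th:prob} for the $\chi_N^2$ contribution. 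The main obstacle is the Gaussian fourth-moment computation: although Isserlis' identities are mechanical, tracking the constants tightly enough to land on the exact coefficients $mT\min(\boldsymbol{\sigma}^{\cdot 2})$ and $m^2T^2\min(\boldsymbol{\sigma}^{\cdot 2})\max(\boldsymbol{\sigma}^{\cdot 2})$ requires care, as does the careful layering of conditional expectations (SGD index first, then trajectory) so that the optimality-induced cancellation is applied before averaging over the inputs.
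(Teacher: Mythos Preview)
Your overall architecture matches the paper's: decompose $\boldsymbol{\phi}_\tau=\boldsymbol{\omega}_\tau+\boldsymbol{\nu}$ with $\boldsymbol{\nu}=\hat{\boldsymbol{\Theta}}_T-\boldsymbol{\Theta}_T$, establish a one-step contraction for $\mathbb{E}[\norm{\boldsymbol{\omega}_\tau}_F^2]$, sum the resulting geometric series into $\Delta_N$, and add back $\chi_N^2$ via Theorem~\ref{th:prob}. Where you diverge is in \emph{how} the contraction factor is extracted. The paper does not use the first-order optimality condition $\sum_t \mathbf{e}_t\mathbf{x}_t'=0$ to kill the $\eta$-linear cross term, nor does it invoke Isserlis' theorem. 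Instead it applies Nesterov's strong-convexity inequality~\eqref{eq:str} to lower-bound $\mathbb{E}[\mathbf{x}_t'\boldsymbol{\omega}_\tau'(\hat{\boldsymbol{\Theta}}_{\tau,T}\mathbf{x}_t-\mathbf{y}_t)]$ and the co-coercivity inequality~\eqref{eq:lip} to upper-bound $\mathbb{E}[\norm{(\hat{\boldsymbol{\Theta}}_{\tau,T}\mathbf{x}_t-\mathbf{y}_t)\mathbf{x}_t'}_F^2]$ by a multiple of that same inner product; combining the two produces the factor $1-2\eta mT\min(\boldsymbol{\sigma}^{\cdot 2})+2\eta^2 m^2T^2\min(\boldsymbol{\sigma}^{\cdot 2})\max(\boldsymbol{\sigma}^{\cdot 2})$. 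Because the paper never cancels the residual cross term, it carries $\text{Tr}[\boldsymbol{\nu}\mathbb{E}[\mathbf{x}_t\mathbf{x}_t']\boldsymbol{\omega}_\tau']$ explicitly and bounds it by Von~Neumann's trace inequality, which is why $\norm{\boldsymbol{\omega}_0}_F^2$ and $\chi_N^2$ appear inside $\Delta_N$ in~\eqref{eq:mash}.

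Your route is legitimate and in some ways cleaner: the optimality cancellation removes a term the paper has to bound by hand. Two caveats, though. First, the trace inequality you cite, $mT\min(\boldsymbol{\sigma}^{\cdot 2})\le\text{Tr}(\boldsymbol{\Sigma}_{\mathbf{x}})$, is not what controls the $-2\eta$ term; that term is $\text{Tr}(\boldsymbol{\omega}_{\tau-1}'\boldsymbol{\omega}_{\tau-1}\boldsymbol{\Sigma}_{\mathbf{x}})$, which is governed by the \emph{smallest eigenvalue} of $\boldsymbol{\Sigma}_{\mathbf{x}}$, not its trace, so a direct Isserlis computation will give you $\min(\boldsymbol{\sigma}^{\cdot 2})$ rather than $mT\min(\boldsymbol{\sigma}^{\cdot 2})$ in that slot; you will recover a valid linear rate but not literally the constants in~\eqref{eq:samon}. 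Second, after you condition on the trajectory and average over the SGD index, the empirical second and fourth moments of $\mathbf{x}_t$ that multiply $\boldsymbol{\omega}_{\tau-1}$ are themselves random and correlated with $\boldsymbol{\omega}_{\tau-1}$ through the trajectory; taking the outer expectation does not simply replace them by population moments. The paper sidesteps this by working with the population quantities $\mathbb{E}_{\mathbf{u}}[\cdot]$ from the outset via~\eqref{eq:str}--\eqref{eq:lip}, treating $\boldsymbol{\omega}_\tau$ as fixed in the one-step bound. You can do the same, but then the exact optimality cancellation is lost (it holds for the empirical sum, not the population), and you end up with essentially the paper's argument.
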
 
\begin{corollary}
	The cost function in \eqref{opt:lim} is in expectation $mT\min(\boldsymbol{\sigma}^{\cdot 2})$-strongly convex and the Lipschitz constant for its gradient is $m\:T\max(\boldsymbol{\sigma}^{\cdot 2})$. When the step-size is $\eta = \frac{1}{2m\:T\max(\boldsymbol{\sigma}^{\cdot 2})}$, the fastest convergence rate is obtained. This convergence rate is equal to that given in the state of the art method \cite[Theorem 3.1]{gower2019sgd}. Compared to \cite[Appendix A]{oymak2019stochastic},	\eqref{eq:samon} is tighter since in \eqref{eq:samon} the third term in parenthesis depends linearly on the Lipschitz constant, while the dependence is quadratic in \cite[Appendix A]{oymak2019stochastic}. The dependence of the SGD error bound on the batch size and the truncation length is not studied in any of the aforementioned papers.
\end{corollary}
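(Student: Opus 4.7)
The plan is to read off the strong-convexity and Lipschitz constants directly from the convergence factor produced in the proof of Theorem~\ref{th:offline}, justify them by unpacking the Gaussian fourth-moment identities used there, and then minimize the resulting one-variable quadratic in $\eta$. First, I would place the per-iteration factor in \eqref{eq:samon},
\[
1 - 2\eta\, mT\min(\boldsymbol{\sigma}^{\cdot 2}) + 2\eta^{2} m^{2}T^{2}\min(\boldsymbol{\sigma}^{\cdot 2})\max(\boldsymbol{\sigma}^{\cdot 2}),
\]
next to the canonical Gower-style SGD form $1 - 2\eta\mu + 2\eta^{2}\mu L$. Matching coefficients immediately identifies $\mu = mT\min(\boldsymbol{\sigma}^{\cdot 2})$ as the in-expectation strong-convexity constant of \eqref{opt:lim} and $L = mT\max(\boldsymbol{\sigma}^{\cdot 2})$ as the expected-smoothness (Lipschitz-gradient) constant for the stochastic oracle used in \eqref{eq:offlineup}.

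Next, I would give a self-contained derivation of these two constants. Setting $W = \boldsymbol{\omega}_{\tau-1}'\boldsymbol{\omega}_{\tau-1}$ and $\boldsymbol{\Sigma}_{\mathbf{x}} = \mathbb{E}[\mathbf{x}_{t}\mathbf{x}_{t}'] = I_{T}\otimes\boldsymbol{\Sigma}_{\boldsymbol{\sigma}^{\cdot 2}}$ for $t\ge T$, the standard Gaussian moment identities give
\[
\mathbb{E}[\mathbf{x}_{t}'W\mathbf{x}_{t}] = \mathrm{Tr}(W\boldsymbol{\Sigma}_{\mathbf{x}}),\qquad \mathbb{E}[(\mathbf{x}_{t}'W\mathbf{x}_{t})(\mathbf{x}_{t}'\mathbf{x}_{t})] = \mathrm{Tr}(W\boldsymbol{\Sigma}_{\mathbf{x}})\,\mathrm{Tr}(\boldsymbol{\Sigma}_{\mathbf{x}}) + 2\,\mathrm{Tr}(W\boldsymbol{\Sigma}_{\mathbf{x}}^{2}).
\]
Combining $\mathrm{Tr}(\boldsymbol{\Sigma}_{\mathbf{x}}) = T\sum_{j}\sigma_{j}^{2} \le mT\max(\boldsymbol{\sigma}^{\cdot 2})$ with the PSD inequality $\min(\boldsymbol{\sigma}^{\cdot 2})\,\mathrm{Tr}(W) \le \mathrm{Tr}(W\boldsymbol{\Sigma}_{\mathbf{x}}) \le \max(\boldsymbol{\sigma}^{\cdot 2})\,\mathrm{Tr}(W)$, the $\eta$-linear term of $\mathbb{E}\|\boldsymbol{\omega}_{\tau-1}(I-\eta\mathbf{x}_{t}\mathbf{x}_{t}')\|_{F}^{2}$ picks up the coefficient $mT\min(\boldsymbol{\sigma}^{\cdot 2})$, whereas the $\eta^{2}$ term picks up the product $mT\min(\boldsymbol{\sigma}^{\cdot 2})\cdot mT\max(\boldsymbol{\sigma}^{\cdot 2})$ — exactly the $\mu$ and $L$ asserted in the corollary.

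For the optimal step-size, I would treat the per-iteration factor as a convex quadratic in $\eta$, differentiate, and solve $-2\mu + 4\eta\mu L = 0$, obtaining $\eta^{\star} = \tfrac{1}{2L} = \tfrac{1}{2\,mT\max(\boldsymbol{\sigma}^{\cdot 2})}$. Substituting back gives the minimal per-iteration contraction factor $1 - \mu/(2L)$, and this $\eta^{\star}$ automatically satisfies the admissibility condition $\eta \le \tfrac{1}{mT\max(\boldsymbol{\sigma}^{\cdot 2})}$ required by Theorem~\ref{th:offline}.

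Finally, the comparisons are bookkeeping: the quadratic coefficient in \eqref{eq:samon} is $2\eta^{2}\mu L$, matching exactly the ``expected-smoothness'' form in \cite[Theorem~3.1]{gower2019sgd}, while the corresponding bound in \cite[Appendix A]{oymak2019stochastic} carries $2\eta^{2}L^{2}$, which is strictly larger whenever $L > \mu$. The main subtlety I anticipate is the self-contained justification that the $mT$ dimensional factor really belongs inside $\mu$ rather than appearing as a bare $\min(\boldsymbol{\sigma}^{\cdot 2})$; this hinges on propagating the trace $\mathrm{Tr}(\boldsymbol{\Sigma}_{\mathbf{x}})$ through the Gaussian fourth-moment step above instead of replacing it with an operator-norm bound, which is the step that ties the corollary tightly to the convergence analysis of Theorem~\ref{th:offline}.
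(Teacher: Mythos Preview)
The paper offers no separate proof of this corollary: the strong-convexity and Lipschitz constants are lifted directly from the contraction factor established in Theorem~\ref{th:offline} (via the application of \eqref{eq:str} at \eqref{eq:bes1} and the gradient bound \eqref{eq:shir2} in its proof), and the ``fastest'' step-size is simply the vertex of the quadratic $\eta\mapsto 1-2\eta\mu+2\eta^{2}\mu L$. Your coefficient-matching and quadratic-minimization argument is precisely this, so the core of your proposal matches what the paper does.

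Your additional ``self-contained derivation,'' however, has a gap that you flag but do not close. From your own PSD inequality $\min(\boldsymbol{\sigma}^{\cdot 2})\,\mathrm{Tr}(W)\le\mathrm{Tr}(W\boldsymbol{\Sigma}_{\mathbf{x}})$ the $\eta$-linear coefficient is $\min(\boldsymbol{\sigma}^{\cdot 2})$, not $mT\min(\boldsymbol{\sigma}^{\cdot 2})$: the smallest eigenvalue of $\boldsymbol{\Sigma}_{\mathbf{x}}=I_{T}\otimes\boldsymbol{\Sigma}_{\boldsymbol{\sigma}^{\cdot 2}}$ is $\min(\boldsymbol{\sigma}^{\cdot 2})$, and no dimensional factor enters on that side. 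The trace $\mathrm{Tr}(\boldsymbol{\Sigma}_{\mathbf{x}})\le mT\max(\boldsymbol{\sigma}^{\cdot 2})$ you invoke feeds only the $\eta^{2}$ (fourth-moment) term, not the $\eta$-linear one, so the resolution you sketch in your last paragraph cannot place the $mT$ factor inside $\mu$. In the paper the constant $mT\min(\boldsymbol{\sigma}^{\cdot 2})$ is simply asserted when \eqref{eq:str} is invoked at \eqref{eq:bes1}; your derivation inherits rather than independently establishes this constant.
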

Theorem \ref{th:offline} states that Algorithm \ref{al:sgdoffline} linearly converges up to the sum of two additive constant terms, which are calibrated by $\boldsymbol{\sigma}_\zeta^{\cdot 2}$, $\mathbf{h}_0$, $T$, $\eta$,  $\boldsymbol{\sigma}^{\cdot 2}$, and the batch size $N$ as given in \eqref{eq:mash}. With a small enough $\Delta_N$, Algorithm \ref{al:sgdoffline} linearly converges to a region with a maximum distance of $\mathcal{O}(\frac{1}{\sqrt{N}})$ to the ground truth Markov parameters since $\chi_N$ decreases with rate $\mathcal{O}(\frac{1}{\sqrt{N}})$. We observe from \eqref{eq:mash} that one can make the two additive terms as small as desired by increasing $N$, which decreases $\chi_N^2$, and picking a smaller $\eta$, which slows down the convergence rate of Algorithm \ref{al:sgdoffline}. 
One drawback of full-batch methods in \cite{oymak2019stochastic,hardt2018gradient,sattar2020non} is that they simultaneously require all the samples to be stored and processed. Although Algorithm \ref{al:sgdoffline} decreases the cost of computation by utilizing one input-output pair in each iteration, it requires all samples to be stored. We proved in Theorem \ref{th:prob} that increasing the batch size helps to reach  a closer neighborhood of the ground truth solution. However, storing a large batch of input-output pairs is challenging and storage inefficient. Therefore, we propose an online SGD that does not require samples to be stored.
\subsection{Online SGD}
\begin{minipage}{0.45\textwidth}
	Motivated by Theorem \ref{th:prob}, we propose an algorithm which utilizes  newly arrived samples and discards the old ones.  We develop an SGD algorithm to learn $\boldsymbol{\Theta}_T$ in an online streaming fashion. The online SGD algorithm implements the descent on the loss function \eqref{opt:lim} in each iteration using a gradient obtained from the most recent input-output pair at time instance $t$ as follows: 
\end{minipage}\hfil
\begin{minipage}{0.55\textwidth}
	\vspace{-0.4cm}
		\begin{algorithm}[H]
		\SetAlgoLined
		\textbf{Initialization}: Assign small value to $\hat{\boldsymbol{\Theta}}_{0,T}$, $t=1$\\
		\textbf{Input}: $\{\mathbf{x}_i,\mathbf{y}_i\}_{i=1}^t$, learning rate $\eta$\\
		\textbf{Output}: Estimation of $\boldsymbol{\Theta}_T$\\
		\If{a new input-output pair arrives}{
			$\hat{\boldsymbol{\Theta}}_{t,T}=\hat{\boldsymbol{\Theta}}_{t-1,T}-\eta(\hat{\boldsymbol{\Theta}}_{t-1,T}\mathbf{x}_t-\mathbf{y}_t)\mathbf{x}_t'$\\
			$t=t+1$
		}
		\caption{Online SGD algorithm to learn $\boldsymbol{\Theta}_T$}\label{al:sgd}
	\end{algorithm} 
\end{minipage}
\begin{align}
	\hat{\boldsymbol{\Theta}}_{t,T}=\hat{\boldsymbol{\Theta}}_{t-1,T}-\eta(\hat{\boldsymbol{\Theta}}_{t-1,T}\mathbf{x}_t-\mathbf{y}_t)\mathbf{x}_t'.\label{eq:onlineup}
\end{align}
In each time instance $t$, one iteration is implemented. The proposed online SGD algorithm is summarized in Algorithm \ref{al:sgd}; we provide a corresponding convergence guarantee below.
\begin{theorem}\label{th:one}
	Consider that the online SGD minimizes \eqref{opt:lim}, where each iteration is implemented based on \eqref{eq:onlineup} with $\eta \leq \frac{1}{m\:T\max(\boldsymbol{\sigma}^{\cdot 2})}$. The maximum expected Frobenius norm distance $\mathbb{E}_{\mathbf{u}}[\mathbb{E}_{\boldsymbol{\zeta}}[{\norm{\boldsymbol{\phi}_{t}}_F^2}]]$ in $t^{\text{th}}$ iteration of the proposed online SGD when $\boldsymbol{\phi}_t=\hat{\boldsymbol{\Theta}}_{t,T}-\boldsymbol{\Theta}_{T}$ and $t\geq 2T$ is upper-bounded as:
	\begin{align}
		&\mathbb{E}_{\mathbf{u}}\left[\mathbb{E}_{\boldsymbol{\zeta}}\left[\norm{\boldsymbol{\phi}_t}_F^2\right]\right]
		<\norm{\boldsymbol{\omega}_0}_F^2\left(1-2\eta m\:T\min(\boldsymbol{\sigma}^{\cdot 2})+2\eta^2m^2\:T^2\min(\boldsymbol{\sigma}^{\cdot 2})\max(\boldsymbol{\sigma}^{\cdot 2})\right)^t\hspace{-.1cm}+\Delta_t+\chi_t^2,\label{eq:samoff}
	\end{align}
	where $	\Delta_t$ is $	\Delta_N|_{N=t}$  depends on different problem parameters such as $\chi_t$, $\eta$, and $T$. 
\end{theorem}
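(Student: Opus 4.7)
The plan is to mirror the analysis of Theorem \ref{th:offline}, exploiting the fact that the online update \eqref{eq:onlineup} has the same algebraic form as the offline update \eqref{eq:offlineup}. I would first derive a one-step error recursion. Substituting the expansion $\mathbf{y}_t=\boldsymbol{\Theta}_T\mathbf{x}_t+\mathbf{C}\mathbf{A}^{T-1}\mathbf{h}_{t-T+1}+\boldsymbol{\zeta}_t$ into \eqref{eq:onlineup} and defining $\mathbf{e}_t:=\mathbf{C}\mathbf{A}^{T-1}\mathbf{h}_{t-T+1}+\boldsymbol{\zeta}_t$, I get
$\boldsymbol{\phi}_t=\boldsymbol{\phi}_{t-1}(\mathbf{I}-\eta\mathbf{x}_t\mathbf{x}_t')+\eta\mathbf{e}_t\mathbf{x}_t'.$
Squaring the Frobenius norm expands into (i) a self-term $\|\boldsymbol{\phi}_{t-1}(\mathbf{I}-\eta\mathbf{x}_t\mathbf{x}_t')\|_F^2$, (ii) a cross-term $2\eta\,\mathrm{Tr}\bigl(\boldsymbol{\phi}_{t-1}(\mathbf{I}-\eta\mathbf{x}_t\mathbf{x}_t')\mathbf{x}_t\mathbf{e}_t'\bigr)$, and (iii) a residual $\eta^2\|\mathbf{e}_t\mathbf{x}_t'\|_F^2$.

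Next, I would take expectations layer by layer using the tower property with the filtration $\mathcal{F}_{t-1}=\sigma(\mathbf{u}_1,\dots,\mathbf{u}_{t-1},\boldsymbol{\zeta}_1,\dots,\boldsymbol{\zeta}_{t-1})$. Under $\mathcal{F}_{t-1}$ the iterate $\boldsymbol{\phi}_{t-1}$ is fixed, $\boldsymbol{\zeta}_t$ is zero-mean independent of everything, and only the top block of $\mathbf{x}_t$ (containing $\mathbf{u}_t$) is new. Taking a further unconditional expectation eliminates the dependence on the past inputs appearing in $\mathbf{x}_t$, yielding $\mathbb{E}[\mathbf{x}_t\mathbf{x}_t']=\mathbf{I}_T\otimes\boldsymbol{\Sigma}_{\boldsymbol{\sigma}^{\cdot 2}}$. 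This reproduces the same strong-convexity/Lipschitz tradeoff as in the offline proof, giving a contraction factor equal to $1-2\eta mT\min(\boldsymbol{\sigma}^{\cdot 2})+2\eta^2 m^2T^2\min(\boldsymbol{\sigma}^{\cdot 2})\max(\boldsymbol{\sigma}^{\cdot 2})$, provided $\eta\le\frac{1}{mT\max(\boldsymbol{\sigma}^{\cdot 2})}$ so that the squared step remains subdominant. For the residual term, Lemma \ref{le:trun} bounds the truncation contribution and independence of $\boldsymbol{\zeta}_t$ bounds the measurement-noise contribution; these two ingredients produce the additive constant $\Delta_t$ (which is exactly $\Delta_N$ with $N$ replaced by $t$).

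Then I would iterate this one-step bound from $\tau=1$ to $\tau=t$: the contraction factor compounds geometrically on $\|\boldsymbol{\omega}_0\|_F^2$, while the constant additive piece accumulates into $\Delta_t$ (the geometric series in the contraction coefficient is absorbed into the definition of $\Delta_t$, mirroring how $\Delta_N$ is assembled for Theorem \ref{th:offline}). Finally, to translate a distance to the finite-sample minimizer into a distance to the true Markov parameters $\boldsymbol{\Theta}_T$, I would insert $\pm\hat{\boldsymbol{\Theta}}_T$, apply the triangle inequality in squared form, and invoke Theorem \ref{th:prob} with $N=t$ to bound $\mathbb{E}\|\hat{\boldsymbol{\Theta}}_T-\boldsymbol{\Theta}_T\|_F^2\le\chi_t^2$; this requires $t\ge 2T$ so that Theorem \ref{th:prob} applies, which is exactly the stated hypothesis.

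The main obstacle is the coupling between $\mathbf{x}_t$ and $\boldsymbol{\phi}_{t-1}$: the blocks $\mathbf{u}_{t-1},\dots,\mathbf{u}_{t-T+1}$ of $\mathbf{x}_t$ are precisely the inputs that generated the previous updates, so $\mathbf{x}_t$ is not independent of $\boldsymbol{\phi}_{t-1}$, unlike in Theorem \ref{th:offline} where uniform resampling of the index decouples them. The resolution is to condition on $\mathcal{F}_{t-1}$, write $\mathbb{E}[\mathbf{x}_t\mathbf{x}_t'|\mathcal{F}_{t-1}]$ as a matrix whose only genuinely random (new) block is $\mathbf{E}[\mathbf{u}_t\mathbf{u}_t']=\boldsymbol{\Sigma}_{\boldsymbol{\sigma}^{\cdot 2}}$ with all cross blocks having zero conditional mean, and then take an outer expectation over the past inputs to recover the block-diagonal structure. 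A parallel, more delicate argument is needed for the fourth-moment term $\mathbb{E}\|\boldsymbol{\phi}_{t-1}\mathbf{x}_t\mathbf{x}_t'\|_F^2$: I would bound the spectrum of $\mathbf{x}_t\mathbf{x}_t'$ by $\lambda_{\max}(\mathbf{x}_t\mathbf{x}_t')\le \mathrm{Tr}(\mathbf{x}_t\mathbf{x}_t')\le mT\max(\boldsymbol{\sigma}^{\cdot 2})$ in expectation and use $\eta\le1/(mT\max(\boldsymbol{\sigma}^{\cdot 2}))$ to ensure $\mathbf{I}-\eta\mathbf{x}_t\mathbf{x}_t'$ is positive semidefinite in expectation, which is what allows the square of the contraction to be upper-bounded by a single linear contraction factor.
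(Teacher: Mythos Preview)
Your high-level route matches the paper's: reduce to the offline analysis of Theorem~\ref{th:offline} and replace $\chi_N$ by $\chi_t$. The paper's own argument is extremely terse---it simply asserts that ``the expectation of gradients in \eqref{eq:offlineup} and \eqref{eq:onlineup} are equal'' and then copies the offline bound with $N=t$, never confronting the dependence structure at all. You are in fact more careful than the paper in flagging the coupling between $\mathbf{x}_t$ and $\boldsymbol{\phi}_{t-1}$ as the central difficulty.

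That said, your proposed resolution of the coupling does not work as written. Conditioning on $\mathcal{F}_{t-1}$ does freeze $\boldsymbol{\phi}_{t-1}$, but it also freezes the lower $(T-1)m\times(T-1)m$ block of $\mathbf{x}_t\mathbf{x}_t'$: that block is the outer product $\tilde{\mathbf{x}}\tilde{\mathbf{x}}'$ of the stale inputs $\mathbf{u}_{t-1},\dots,\mathbf{u}_{t-T+1}$, not $\mathbf{I}\otimes\boldsymbol{\Sigma}_{\boldsymbol{\sigma}^{\cdot 2}}$. When you then take the outer expectation over the past, you face a term of the form $\mathbb{E}\bigl[\mathrm{Tr}\bigl(\boldsymbol{\phi}_{t-1}\,\tilde{\mathbf{x}}\tilde{\mathbf{x}}'\,\boldsymbol{\phi}_{t-1}'\bigr)\bigr]$, and $\boldsymbol{\phi}_{t-1}$ cannot be pulled outside because it is built from precisely those same past inputs. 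So the claimed contraction with strong-convexity constant $mT\min(\boldsymbol{\sigma}^{\cdot 2})$ is not justified by this two-step conditioning; the same obstruction hits the fourth-moment (Lipschitz) side. The paper sidesteps this by fiat rather than by argument, so this is a gap in both treatments---but it is a gap.

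There is also an internal inconsistency you should fix. You derive the one-step recursion directly for $\boldsymbol{\phi}_t=\hat{\boldsymbol{\Theta}}_{t,T}-\boldsymbol{\Theta}_T$, which would compound the contraction on $\|\boldsymbol{\phi}_0\|_F^2$; yet the theorem (and your own sketch two paragraphs later) state the bound in terms of $\|\boldsymbol{\omega}_0\|_F^2=\|\hat{\boldsymbol{\Theta}}_{0,T}-\hat{\boldsymbol{\Theta}}_T\|_F^2$ and then add $\chi_t^2$. The paper obtains that form by running the whole recursion on $\boldsymbol{\omega}_\tau=\hat{\boldsymbol{\Theta}}_{\tau,T}-\hat{\boldsymbol{\Theta}}_T$ (distance to the finite-sample minimizer), and only at the end uses $\|\boldsymbol{\phi}_\tau\|_F^2\le\|\boldsymbol{\omega}_\tau\|_F^2+\|\boldsymbol{\nu}\|_F^2$ with $\|\boldsymbol{\nu}\|_F^2\le\chi_t^2$ from Theorem~\ref{th:prob}. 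If you want your argument to land on the stated inequality \eqref{eq:samoff}, you should track $\boldsymbol{\omega}_t$ rather than $\boldsymbol{\phi}_t$ in the recursion.
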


\section{Transfer Function Estimation and Recovery of Weight Matrices}
The transfer function of a linear dynamical system is obtained by taking $z$-transformation of the impulse response of the system and is computed as follows \cite[p. 267--p. 268]{luenberger1979introduction}:
\begin{align}
	\mathbf{G}(z)=\sum_{t=1}^{\infty} z^{-t}\mathbf{C}\mathbf{A}^{t-1}\mathbf{B}+\mathbf{D}=\mathbf{C}(z\mathbf{I}_{n\times n}-\mathbf{A})^{-1}\mathbf{B}+\mathbf{D}.\nonumber
\end{align}
We can rewrite the above transfer function as follows:
\begin{eqnarray}
	\mathbf{G}(z)=	\sum_{t=1}^{T-1} z^{-t}\mathbf{C}\mathbf{A}^{t-1}\mathbf{B}+\mathbf{D}+\mathbf{E}_{z,T},\label{eq:transf}
\end{eqnarray}
where $\mathbf{E}_{z,T}=\sum_{t=T}^{\infty} z^{-t}\mathbf{C}\mathbf{A}^{t-1}\mathbf{B}$. Given a large enough $T$, the Frobenius norm of $\mathbf{E}_{z,T}$ becomes close to zero as shown in the following lemma.
\begin{lemma}\label{le:te}
	The truncation error in computing the transfer function is upper-bounded as follows:
	\begin{align}
		\norm{\mathbf{E}_{z,T}}_F^2=\norm{\sum_{t=T}^\infty z^{-t}\mathbf{C}\mathbf{A}^{t-1}\mathbf{B}}_F^2\leq \frac{n^2\ell \norm{\mathbf{C}}_F^2\norm{\mathbf{B}}_F^2\rho(\mathbf{A})^{2(T-1)}}{1-\rho(\mathbf{A})^2|z|^{-2}}\label{eq:error}.
	\end{align}
	Give a large $T$, the RHS of \eqref{eq:error} tends to zero and the LHS is enforced to be very small. 
\end{lemma}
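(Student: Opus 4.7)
\textbf{Proof plan for Lemma~\ref{le:te}.} The strategy is to diagonalize $\mathbf{A}$ so that powers of $\mathbf{A}$ reduce to powers of its eigenvalues, pull the norm inside via Frobenius submultiplicativity, and sum the resulting geometric tail. With $\mathbf{A}=\mathbf{V}\boldsymbol{\Lambda}\mathbf{V}^{-1}$ (so $\mathbf{A}^{t-1}=\mathbf{V}\boldsymbol{\Lambda}^{t-1}\mathbf{V}^{-1}$), first rewrite
$$\mathbf{E}_{z,T}=\mathbf{C}\mathbf{V}\Bigl(\sum_{t=T}^{\infty}z^{-t}\boldsymbol{\Lambda}^{t-1}\Bigr)\mathbf{V}^{-1}\mathbf{B}=\mathbf{C}\mathbf{V}\mathbf{S}\mathbf{V}^{-1}\mathbf{B},$$
which converges absolutely under the standard assumption $|z|>\rho(\mathbf{A})$. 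The inner matrix $\mathbf{S}$ is diagonal with entries $s_i=z^{-T}\lambda_i^{T-1}/(1-z^{-1}\lambda_i)$ obtained by telescoping the scalar geometric series.

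From there, I would apply repeated Frobenius submultiplicativity to obtain $\norm{\mathbf{E}_{z,T}}_F\le \norm{\mathbf{C}}_F\norm{\mathbf{V}}_F\norm{\mathbf{S}}_F\norm{\mathbf{V}^{-1}}_F\norm{\mathbf{B}}_F$, then substitute $\norm{\mathbf{V}}_F\le\sqrt{n}$ (the unit-norm-eigenvector convention implicitly used in Lemma~\ref{le:trun}) and $\norm{\mathbf{V}^{-1}}_F=\ell$, and bound $\norm{\mathbf{S}}_F^2=\sum_i|s_i|^2$ entry-wise using $|\lambda_i|\le\rho(\mathbf{A})$. Summing the $n$ diagonal contributions, each at most $|z|^{-2T}\rho(\mathbf{A})^{2(T-1)}/|1-z^{-1}\lambda_i|^2$, absorbing $|z|^{-2T}\le 1$ (valid in the transfer-function regime $|z|\ge 1$), and finally squaring collects the prefactor of order $n^2\ell$ multiplied by $\norm{\mathbf{C}}_F^2\norm{\mathbf{B}}_F^2$, the $\rho(\mathbf{A})^{2(T-1)}$ factor, and a geometric-series denominator.

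The main technical obstacle is extracting the exact denominator $1-\rho(\mathbf{A})^2|z|^{-2}$ stated in the lemma rather than the cruder $(1-\rho(\mathbf{A})|z|^{-1})^2$ that falls out of a direct triangle-inequality bound on the original series after squaring. My plan is to handle the entry-wise quantity $\sum_i|1-z^{-1}\lambda_i|^{-2}$ via the scalar identity $\sum_{t\ge T}|z|^{-2t}|\lambda_i|^{2(t-1)}=|z|^{-2T}|\lambda_i|^{2(T-1)}/(1-|z|^{-2}|\lambda_i|^2)$ applied within the diagonal representation, then uniformly bound $|\lambda_i|\le\rho(\mathbf{A})$. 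Once this tighter geometric tail is in place, the claimed bound follows by assembling the Frobenius factors. The second assertion of the lemma is then immediate: since $\rho(\mathbf{A})<1$, the factor $\rho(\mathbf{A})^{2(T-1)}$ forces the bound to decay exponentially in $T$ while the denominator stays bounded away from zero.
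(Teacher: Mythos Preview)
Your diagonalize-first route is clean, but the workaround you propose for the denominator does not close. Once you collapse the tail into the closed-form diagonal entry $s_i=z^{-T}\lambda_i^{T-1}/(1-z^{-1}\lambda_i)$, you have $|s_i|^2=|z|^{-2T}|\lambda_i|^{2(T-1)}/|1-z^{-1}\lambda_i|^2$, and the scalar identity you invoke, $\sum_{t\ge T}|z|^{-2t}|\lambda_i|^{2(t-1)}=|z|^{-2T}|\lambda_i|^{2(T-1)}/(1-|z|^{-2}|\lambda_i|^2)$, is \emph{not} an upper bound on $|s_i|^2$: its left side is $\sum_t |a_t|^2$ for $a_t=z^{-t}\lambda_i^{t-1}$, whereas $|s_i|^2=\bigl|\sum_t a_t\bigr|^2$, and there is no general inequality $\bigl|\sum a_t\bigr|^2\le \sum|a_t|^2$. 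Nor can you simply replace the denominator, since $|1-z^{-1}\lambda_i|^2\ge 1-|z|^{-2}|\lambda_i|^2$ fails already for $z=1$, $\lambda_i=\tfrac12$. So as written your plan stalls exactly at the obstacle you flagged.

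The paper's argument takes the opposite order and is much shorter: it first writes
\[
\norm{\textstyle\sum_{t\ge T} z^{-t}\mathbf{C}\mathbf{A}^{t-1}\mathbf{B}}_F^2\le \sum_{t\ge T}\norm{z^{-t}\mathbf{C}\mathbf{A}^{t-1}\mathbf{B}}_F^2,
\]
then bounds each summand via the eigendecomposition by $n^2\ell\,\norm{\mathbf{C}}_F^2\norm{\mathbf{B}}_F^2\,\rho(\mathbf{A})^{2(t-1)}|z|^{-2t}$, and finally sums the geometric series with ratio $\rho(\mathbf{A})^2|z|^{-2}$. Pushing the square inside the sum \emph{before} bounding is precisely what manufactures the denominator $1-\rho(\mathbf{A})^2|z|^{-2}$ rather than $(1-\rho(\mathbf{A})|z|^{-1})^2$. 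You were right to suspect that this step is the crux; note, though, that the displayed inequality $\norm{\sum}_F^2\le \sum\norm{\cdot}_F^2$ is not a valid inequality in general, so the paper's derivation of that exact denominator is formally loose. A rigorous version---using the ordinary triangle inequality and then squaring---yields the same numerator with denominator $(1-\rho(\mathbf{A})|z|^{-1})^2$, which is weaker but suffices for every downstream use in the paper.
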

When $\norm{\mathbf{E}_{z,T}}_F^2$ is small enough, we can efficiently approximate $\mathbf{G}(z)$ using $T$ Markov parameters: $\{\mathbf{C}\mathbf{A}^{t-1}\mathbf{B}\}_{t=1}^{T-1}$ and $\mathbf{D}$, which are learned by Algorithms \ref{al:sgdoffline} and \ref{al:sgd}.
Upon the convergence of $\hat{\boldsymbol{\Theta}}_{t,T}$ (or $\hat{\boldsymbol{\Theta}}_{\tau,T}$), the first Markov parameter, $\mathbf{D}$, is learned and needs no further processing.
To recover $\mathbf{A}$, $\mathbf{B}$  and $\mathbf{C}$ from the estimated transfer function, we assume $\mathbf{A}$, $\mathbf{B}$ and $\mathbf{C}$ have Brunovsky canonical form \cite{brunovsky1970classification},  which is perhaps the most popular canonical form \cite{martin1978applications,hazewinkel1983representations,liu2008global}. In Brunovsky canonical form, we have:
\begin{align}\textstyle
	\mathbf{A}=
	\begin{bmatrix}
		\mathbf{0} & \mathbf{I}_{m\times m} & \mathbf{0} & \cdots & \mathbf{0} \\
		\mathbf{0} & \mathbf{0} &\mathbf{I}_{m\times m} & \cdots & \mathbf{0} \\
		\vdots  & \vdots  & \vdots & \ddots & \vdots\\
		\mathbf{0} & \mathbf{0} &\mathbf{0} & \cdots & \mathbf{I}_{m\times m} \\
		-a_n \mathbf{I}_{m\times m}& -a_{n-1} \mathbf{I}_{m\times m} &-a_{n-2} \mathbf{I}_{m\times m} & \cdots & -a_1 \mathbf{I}_{m\times m} 
	\end{bmatrix}, \hspace{.5cm}\mathbf{B}=
	\begin{bmatrix}
		\mathbf{0}  \\
		\vdots \\
		\mathbf{0}  \\
		\mathbf{I}_{m\times m}
	\end{bmatrix},\label{eq:ab}
\end{align}
where $\mathbf{A}\in \mathbb{R}^{nm\times nm}$, $\mathbf{B}\in \mathbb{R}^{nm\times m}$, and $\mathbf{C}\in \mathbb{R}^{p\times nm}$. 
To recover matrix $\mathbf{A}$, it is enough to find $\{a_i\}_{i=1}^n$. To recover $\mathbf{C}$, all elements should be estimated. In Brunovsky canonical form, $\mathbf{B}$ is known as given in \eqref{eq:ab}. The above special forms for $\mathbf{A}$ and $\mathbf{B}$ matrices help to find unknowns.
If $\mathbf{A}$ and $\mathbf{B}$ are in Brunovsky canonical form, $\mathbf{G}(z)$ is obtained as follows:
\begin{align}
	\mathbf{G}(z)=\mathbf{C}\mathbf{S}(z)+\mathbf{D},\label{eq:trans1}
\end{align}
where $\mathbf{S}(z)=(z\mathbf{I}_{nm\times nm}-\mathbf{A})^{-1}\mathbf{B}$ and can be rewritten as follows \cite[Lemma B.1]{hardt2018gradient}:
\begin{align}
	\mathbf{S}(z)=\frac{1}{z^n+a_1z^{n-1}+\dots+a_n}
	&\underbrace{\begin{bmatrix}
			\mathbf{I}_{m\times m} \:\:\:
			z\mathbf{I}_{m\times m} \:\:\:
			\dots  \:\:\:
			z^{n-1}\mathbf{I}_{m\times m} 
		\end{bmatrix}'}_{\mathbf{W}},\nonumber
\end{align}
where $\mathbf{W}\in\mathbb{C}^{nm\times m}$. The denominator of $\mathbf{S}(s)$ is called \emph{characteristic polynomial} and is denoted by $q(z)$. If $\mathbf{A}$, $\mathbf{B}$ and $\mathbf{C}$ have Brunovsky canonical form, the transfer function \eqref{eq:trans1} is uniquely realized by the state-space representation \cite{hardt2018gradient}. When the linear dynamical system is SISO, i.e., $m=p=1$, Brunovsky canonical form reduces to the \emph{controllable canonical form}. We match \eqref{eq:transf} and \eqref{eq:trans1} as follows:
\begin{align}
	\sum_{t=1}^{T-1} z^{-t}\mathbf{C}\mathbf{A}^{t-1}\mathbf{B}=\mathbf{C}\mathbf{S}(z)-\mathbf{E}_{z,T}.\label{eq:equality}
\end{align}
The LHS of the above equation can be efficiently estimated using $\hat{\boldsymbol{\Theta}}_{t,T}$ (or $\hat{\boldsymbol{\Theta}}_{\tau,T}$) from the regression problem. In Brunovsky canonical form, there are $n$ and $pnm$ unknown elements in $\mathbf{A}$ and $\mathbf{C}$, respectively. We need at least $n+pnm$ equations to identify unknowns. To find $n+pnm$ equations, we match both sides of \eqref{eq:equality} in $n+pnm$ complex frequencies. In particular, we choose $z$ such that it does not yield $\norm{\sum_{t=1}^{T-1} z^{-t}\mathbf{C}\mathbf{A}^{t-1}\mathbf{B}}_F=0$ or make it unbounded. For example, one can choose frequencies on the unit circle $z_k=e^{j\frac{\pi(k-1)}{n+pnm}}$, $k\in\{1,\dots,n+pnm\}$ if none of them is a pole or zero of $\sum_{t=1}^{T-1} z^{-t}\mathbf{C}\mathbf{A}^{t-1}\mathbf{B}$. 
By choosing $|z_k|=1$, one can avoid the linear system of equations built using \eqref{eq:equality} from becoming ill-conditioned. When $|z_k|\neq1$ and $n$ is large, $\{z_k^{n-i}\}_{i=1}^n$, which are  coefficients of $\{a_i\}_{i=1}^n$, become very different in terms of their absolute value, and the linear system of equations becomes ill-conditioned. Each side of \eqref{eq:equality} is a $p\times m$ matrix and yields $pm$ equations in each frequency. Therefore, having $n+pnm$ frequencies yields an over-determined consistent system. To represent the LHS of \eqref{eq:equality} in a compact form, we define $\boldsymbol{\vartheta}$ as follows:
\begin{align}
	\boldsymbol{\vartheta}=\begin{bmatrix}
		\mathbf{0}_{m\times m} \hspace{.3cm}
		z^{-1}\mathbf{I}_{m\times m}\hspace{.3cm}
		z^{-2}\mathbf{I}_{m\times m} \hspace{.3cm}
		\dots  \hspace{.3cm}
		z^{-T+1}\mathbf{I}_{m\times m} 
	\end{bmatrix}'.\nonumber
\end{align}
Suppose $\boldsymbol{\vartheta}_k=\boldsymbol{\vartheta}|_{z=z_k}$ and $\mathbf{W}_k=\mathbf{W}|_{z=z_k}$. The linear system of equations is obtained as follows:
\begin{align}
	\hat{\boldsymbol{\Theta}}_{t,T}\:\boldsymbol{\vartheta}_kq(z_k)=\mathbf{C}\mathbf{W}_k-q(z_k)\mathbf{E}_{z_k,T}, \hspace{1cm}\forall k\in \{1,\dots,n+pnm\},\label{eq:linsys}
\end{align}
where $\hat{\boldsymbol{\Theta}}_{t,T}\:\boldsymbol{\vartheta}_k$ is numerically computed by Algorithm \ref{al:sgd}, and $\mathbf{E}_{z_k,T}$ is treated as noise when $\norm{\mathbf{E}_{z_k,T}}_F$ is small enough. The unknowns are embedded in $q(z)$ and $\mathbf{C}$. 
In general, solving a linear system is easier when compared to the SVD-based methods in \cite{oymak2019non,sarkar2019finite,tsiamis2019finite}. In Appendix \ref{sec:lineq}, we guarantee a unique solution for \eqref{eq:linsys}. Let the vector of unknowns be denoted by $\boldsymbol{\varrho}=\{\{a_i\}_{i=1}^n,\{c_{i,j}\}_{i=1:p,j=1:mn}\}$. Then, one can rewrite \eqref{eq:linsys} in the standard form of linear system of equations easily, as explained in Appendix \ref{sec:lineq}, as follows:
	\begin{align}
		\boldsymbol{\Gamma}_{t,T}\boldsymbol{\varrho}_t=\boldsymbol{\varkappa}_t,\label{eq:lin1}
	\end{align}
	where $\boldsymbol{\Gamma}_{t,T}$ and $\boldsymbol{\varkappa}_t$ are calculated using $\hat{\boldsymbol{\Theta}}_{t,T}$. The above equation can be solved either by the pseudo-inverse method or iterative methods, e.g., \cite{razaviyayn2019linearly,liu2016accelerated,ma2015convergence}. Consider in each iteration of Algorithm \ref{al:sgd}, we solve  \eqref{eq:lin1} by the pseudo-inverse method as given in Algorithm \ref{al:comb}.  
	Theorem \ref{th:linsys} ensures the linear convergence  of the vector of unknowns  returned by Algorithm \ref{al:comb} upon the convergence of  $\hat{\boldsymbol{\Theta}}_{t,T}$.

		\begin{algorithm}[H]
			\SetAlgoLined
			\textbf{Input}: $\{\mathbf{x}_i,\mathbf{y}_i\}_{i=1}^t$, learning rate $\eta$, $t=1$\\
			\textbf{Output}: Estimation of $\mathbf{A}$, $\mathbf{C}$ and $\mathbf{D}$\\
			\If{a new input-output pair arrives}{
				$\hat{\boldsymbol{\Theta}}_{t,T}=\hat{\boldsymbol{\Theta}}_{t-1,T}-\eta(\hat{\boldsymbol{\Theta}}_{t-1,T}\mathbf{x}_t-\mathbf{y}_t)\mathbf{x}_t'$\\
				Find $\boldsymbol{\Gamma}_{t,T}$ and $\boldsymbol{\varkappa}_t$\\
				$\hat{\boldsymbol{\varrho}}_t=(\boldsymbol{\Gamma}_{t,T}^H\boldsymbol{\Gamma}_{t,T})^{-1}\boldsymbol{\Gamma}^H_{t,T}\boldsymbol{\varkappa}_t$\\
				$t=t+1$
			}
			\caption{Online SGD combined with the linear system}\label{al:comb}
		\end{algorithm}

\begin{theorem}\label{th:linsys}
	Suppose that $\hat{\boldsymbol{\Theta}}_{t,T}$, which is the $t^{\text{th}}$ iterate of Algorithm \ref{al:comb}  with $\eta \leq \frac{1}{m\:T\max(\boldsymbol{\sigma}^{\cdot 2})}$, is used to find $\hat{\boldsymbol{\varrho}}_t$ from \eqref{eq:lin1}, where $|z_k|=1$. Then, $\hat{\boldsymbol{\varrho}}_t$ iterates satisfy 
	\begin{align}
		&\mathbb{E}_{\mathbf{u}}\left[\mathbb{E}_{\boldsymbol{\zeta}}\left[\norm{\hat{\boldsymbol{\varrho}}_t-\boldsymbol{\varrho}}_2^2\right]\right]< \Upsilon+l_2(l_1/n+1)\Big(nm(n+nmp)(T-1)\Big)\norm{\boldsymbol{\omega}_0}_F^2\nonumber\\
		&\times\left(1-2\eta m\:T\min(\boldsymbol{\sigma}^{\cdot 2})+2\eta^2m^2\:T^2\min(\boldsymbol{\sigma}^{\cdot 2})\max(\boldsymbol{\sigma}^{\cdot 2})\right)^t,\label{eq:shir}
	\end{align}
	where $\Upsilon,l_1,l_2>0$ (given in Appendix \ref{sec:lineq}). Based on the above inequality, the computational complexity to reach $\epsilon$-neighborhood of $\boldsymbol{\varrho}$ is $
	\mathcal{O}\left(\frac{1}{2\eta mT\min(\boldsymbol{\sigma}^{\cdot 2})(1-\eta mT\max(\boldsymbol{\sigma}^{\cdot 2}))}\log(\frac{n^2\:m^2\:p\:T}{\epsilon})\right).$
\end{theorem}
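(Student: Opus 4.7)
The plan is to propagate the online SGD error bound from Theorem \ref{th:one} through the pseudo-inverse solve of the linear system \eqref{eq:lin1}, treating the truncation error $\mathbf{E}_{z_k,T}$ from Lemma \ref{le:te} as an additive perturbation. Let $\boldsymbol{\Gamma}^{\star}_{T}$ and $\boldsymbol{\varkappa}^{\star}$ denote the counterparts of $\boldsymbol{\Gamma}_{t,T}$ and $\boldsymbol{\varkappa}_t$ constructed from the ground truth $\boldsymbol{\Theta}_T$ in place of $\hat{\boldsymbol{\Theta}}_{t,T}$. The ground truth vector $\boldsymbol{\varrho}$ satisfies $\boldsymbol{\Gamma}^{\star}_T\boldsymbol{\varrho}=\boldsymbol{\varkappa}^{\star}+\boldsymbol{e}_T$, where $\boldsymbol{e}_T$ stacks the $p\times m$ truncation blocks $q(z_k)\mathbf{E}_{z_k,T}$ across all $n+pnm$ frequencies. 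Subtracting this from $\hat{\boldsymbol{\varrho}}_t=(\boldsymbol{\Gamma}_{t,T}^H\boldsymbol{\Gamma}_{t,T})^{-1}\boldsymbol{\Gamma}_{t,T}^H\boldsymbol{\varkappa}_t$ and grouping terms yields a standard least-squares perturbation identity in which every deviation on the right-hand side is linear in $\hat{\boldsymbol{\Theta}}_{t,T}-\boldsymbol{\Theta}_T$ (plus the truncation residual $\boldsymbol{e}_T$).

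Next, I would bound the two sensitivity factors separately. Because each entry of $\boldsymbol{\Gamma}_{t,T}$ and $\boldsymbol{\varkappa}_t$ is a linear combination of at most $T-1$ entries of $\hat{\boldsymbol{\Theta}}_{t,T}$ with coefficients of the form $z_k^{-j}q(z_k)$, and since $|z_k|=1$ bounds each such coefficient by a polynomial in $n$ (absorbed into the constant $l_1$), one obtains
\begin{align}
\norm{\boldsymbol{\Gamma}_{t,T}-\boldsymbol{\Gamma}^{\star}_T}_F^2+\norm{\boldsymbol{\varkappa}_t-\boldsymbol{\varkappa}^{\star}}_2^2\;\le\; (l_1/n+1)\bigl(nm(n+nmp)(T-1)\bigr)\,\norm{\hat{\boldsymbol{\Theta}}_{t,T}-\boldsymbol{\Theta}_T}_F^2,\nonumber
\end{align}
where the combinatorial factor $nm(n+nmp)(T-1)$ simply counts the number of scalar equations ($pm$ per frequency, times $n+pnm$ frequencies) multiplied by the $T-1$ length of the inner product. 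The spectral bound $\norm{(\boldsymbol{\Gamma}_{t,T}^H\boldsymbol{\Gamma}_{t,T})^{-1}}_2\le l_2$ established in Appendix~\ref{sec:lineq} for unit-modulus frequencies then gives
\begin{align}
\norm{\hat{\boldsymbol{\varrho}}_t-\boldsymbol{\varrho}}_2^2\;\le\; l_2(l_1/n+1)\bigl(nm(n+nmp)(T-1)\bigr)\,\norm{\hat{\boldsymbol{\Theta}}_{t,T}-\boldsymbol{\Theta}_T}_F^2+R_T,\nonumber
\end{align}
with $R_T$ a deterministic term collecting $\norm{\boldsymbol{e}_T}_2^2$ and cross terms.

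Taking expectations of both sides with respect to the inputs and measurement noise and invoking Theorem~\ref{th:one} replaces $\mathbb{E}[\norm{\hat{\boldsymbol{\Theta}}_{t,T}-\boldsymbol{\Theta}_T}_F^2]$ by the geometric decay term $\norm{\boldsymbol{\omega}_0}_F^2(1-2\eta mT\min(\boldsymbol{\sigma}^{\cdot 2})+2\eta^2m^2T^2\min(\boldsymbol{\sigma}^{\cdot 2})\max(\boldsymbol{\sigma}^{\cdot 2}))^t$ plus the residuals $\Delta_t+\chi_t^2$; defining $\Upsilon$ to be the sum of $R_T$ and those residual contributions (scaled by the dimensional prefactor) produces exactly \eqref{eq:shir}. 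For the complexity statement, I would set the geometric term equal to $\epsilon$, use $\log(1-x)\le -x$ for small $x$ with $x=2\eta mT\min(\boldsymbol{\sigma}^{\cdot 2})(1-\eta mT\max(\boldsymbol{\sigma}^{\cdot 2}))$, and observe that the argument of the logarithm is $\text{poly}(n,m,p,T)\norm{\boldsymbol{\omega}_0}_F^2/\epsilon$, which simplifies to $\mathcal{O}(\log(n^2m^2pT/\epsilon))$ after dropping absolute constants.

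The main obstacle is the pseudo-inverse conditioning bound $\norm{(\boldsymbol{\Gamma}_{t,T}^H\boldsymbol{\Gamma}_{t,T})^{-1}}_2\le l_2$ uniformly in $t$, because $\boldsymbol{\Gamma}_{t,T}$ is itself random through $\hat{\boldsymbol{\Theta}}_{t,T}$. I would handle this by arguing that on the ground truth $\boldsymbol{\Gamma}^{\star}_T$ the bound holds by the Vandermonde-style structure induced by $\{z_k\}$ on the unit circle (the content of Appendix~\ref{sec:lineq}), then use a Weyl-type eigenvalue perturbation to transfer the bound to $\boldsymbol{\Gamma}_{t,T}$ once $\norm{\hat{\boldsymbol{\Theta}}_{t,T}-\boldsymbol{\Theta}_T}_F$ is small enough — a regime guaranteed after sufficiently many SGD iterations. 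The second, more mechanical difficulty is carefully tracking which factors of $n,m,p,T$ go into $l_1$ versus the explicit combinatorial term, since \eqref{eq:shir} separates them specifically to expose the $T$ and dimension dependence in the final complexity.
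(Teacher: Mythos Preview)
Your proposal is essentially the same perturbation argument the paper uses: subtract the ground-truth linear system from the iterate-based one, bound $\norm{\hat{\boldsymbol{\varrho}}_t-\boldsymbol{\varrho}}_2$ via the minimum singular value of the coefficient matrix together with the linear dependence of $\boldsymbol{\Gamma}_{t,T}-\boldsymbol{\Gamma}^{\star}_T$ and $\boldsymbol{\varkappa}_t-\boldsymbol{\varkappa}^{\star}$ on $\hat{\boldsymbol{\Theta}}_{t,T}-\boldsymbol{\Theta}_T$ (plus the truncation residual from Lemma~\ref{le:te}), then substitute the bound from Theorem~\ref{th:one} and finish the complexity via $\log(1/(1-x))\ge x$. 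The only notable discrepancy is bookkeeping: the paper places the truncation term $\mathbf{E}_{z_k,T}$ inside the ground-truth coefficient matrix rather than on the right-hand side, and it defines the constants explicitly as $l_1=1/\norm{\boldsymbol{\varrho}}_F^2$ and $l_2=\norm{\boldsymbol{\varrho}}_F^2/(\min\delta(\boldsymbol{\Psi}_{t,T}))^2$ (so $l_2$ is not a uniform deterministic bound as you conjecture but simply carries the random singular value through), which is worth noting since your Weyl-perturbation step to make $l_2$ deterministic is an addition beyond what the paper actually proves.
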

Similar to the $\Delta_N$, $\Upsilon$ can be made as small as desired by increasing $T$ and decreasing $\eta$. Unknowns can be learned at the linear convergence rate when Algorithm \ref{al:sgdoffline} iterates $\{\hat{\boldsymbol{\Theta}}_{\tau,T}\}_\tau$ are used in \eqref{eq:linsys} instead of $\{\hat{\boldsymbol{\Theta}}_{t,T}\}_t$ (Algorithm \ref{al:comb1} in Appendix \ref{sec:batch}).
\section{Numerical Tests}	
\begin{figure}[t!]
	\centering
	\subfigure{\includegraphics[width=0.29\textwidth]{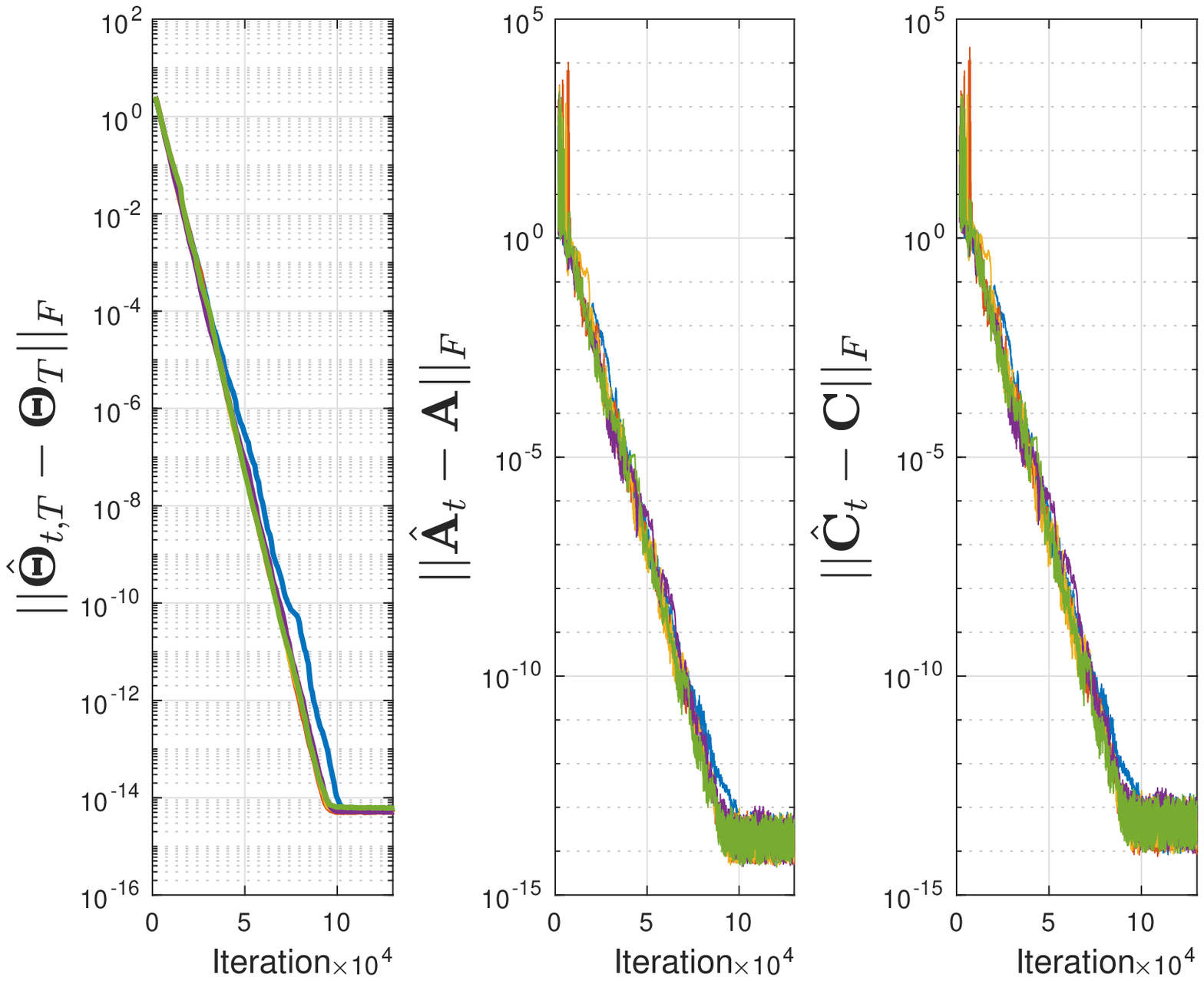}\label{fig:siso302}}
	\subfigure{\includegraphics[width=0.29\textwidth]{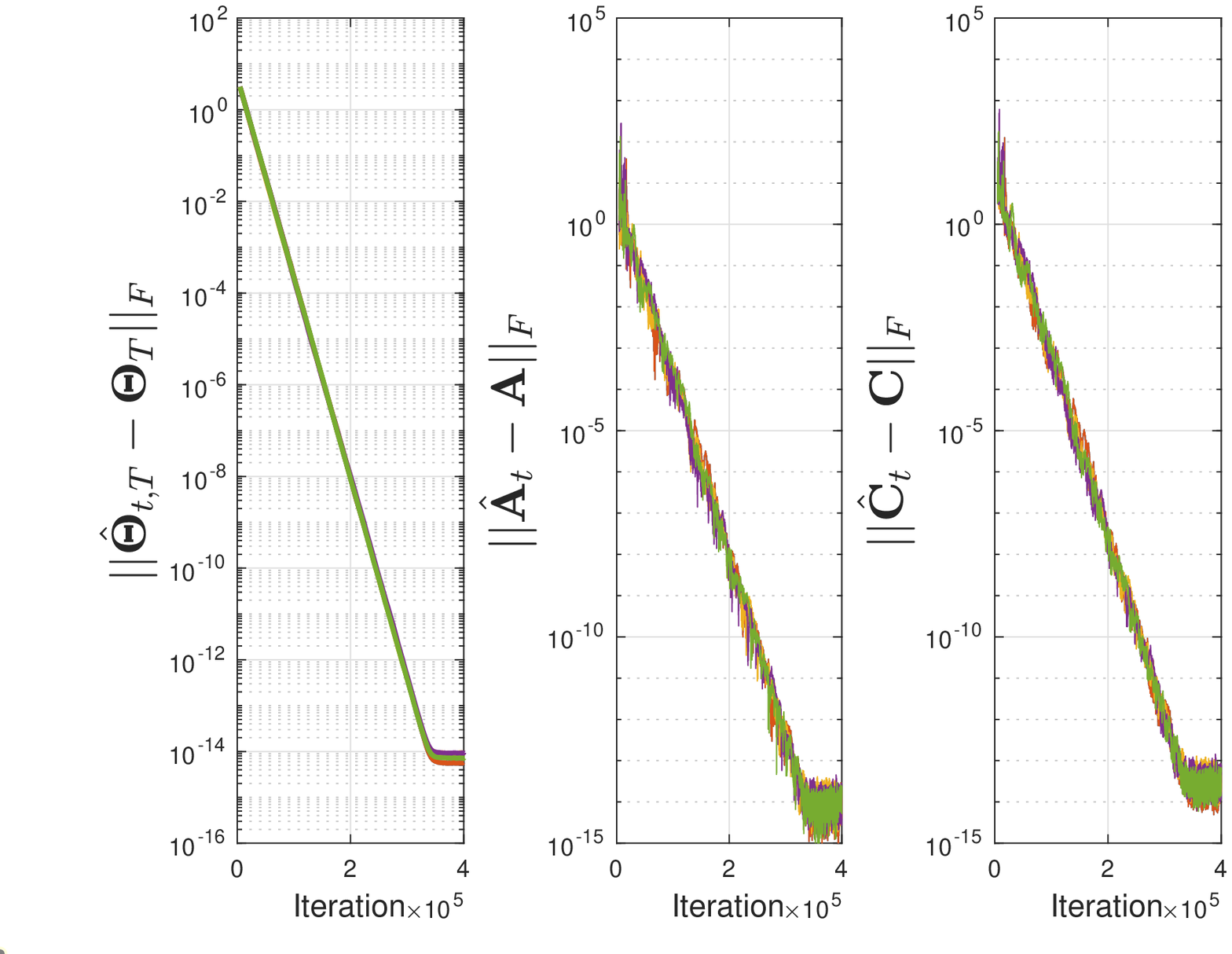}\label{fig:miso302}}
	\subfigure{\includegraphics[width=0.29\textwidth]{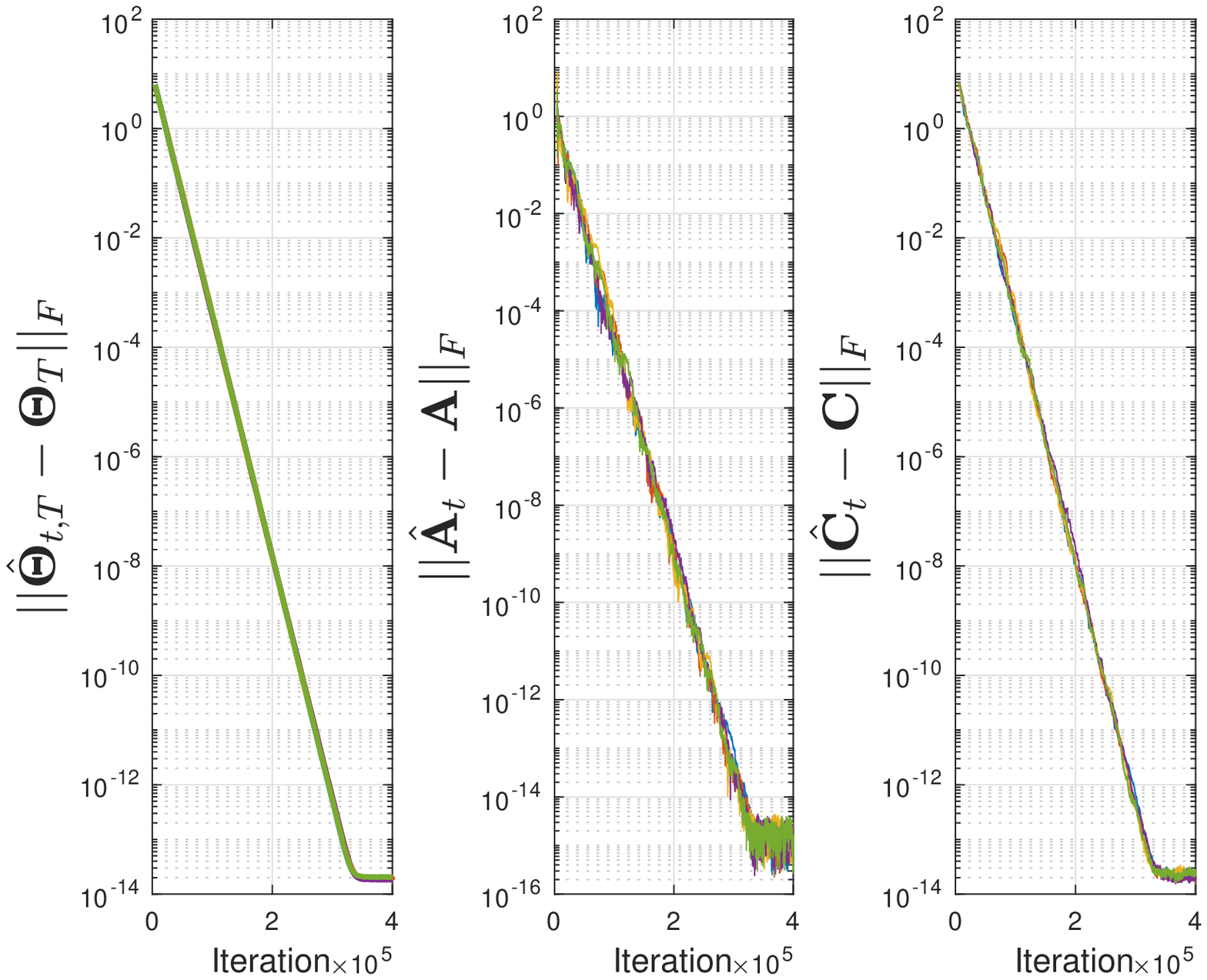}\label{fig:mimo302}}\\
	\subfigure{\includegraphics[width=0.29\textwidth]{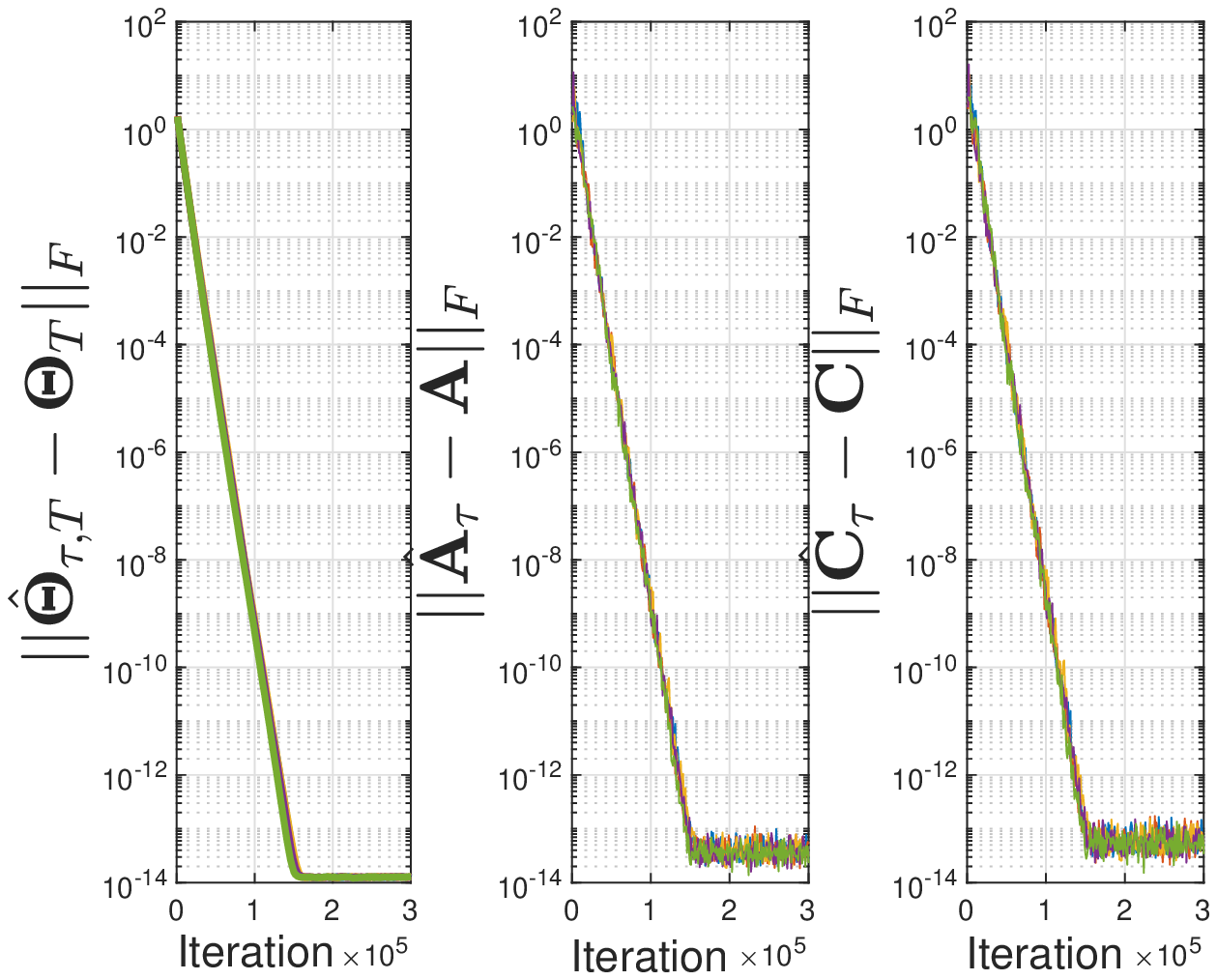}\label{fig:siso301}}
	\subfigure{\includegraphics[width=0.29\textwidth]{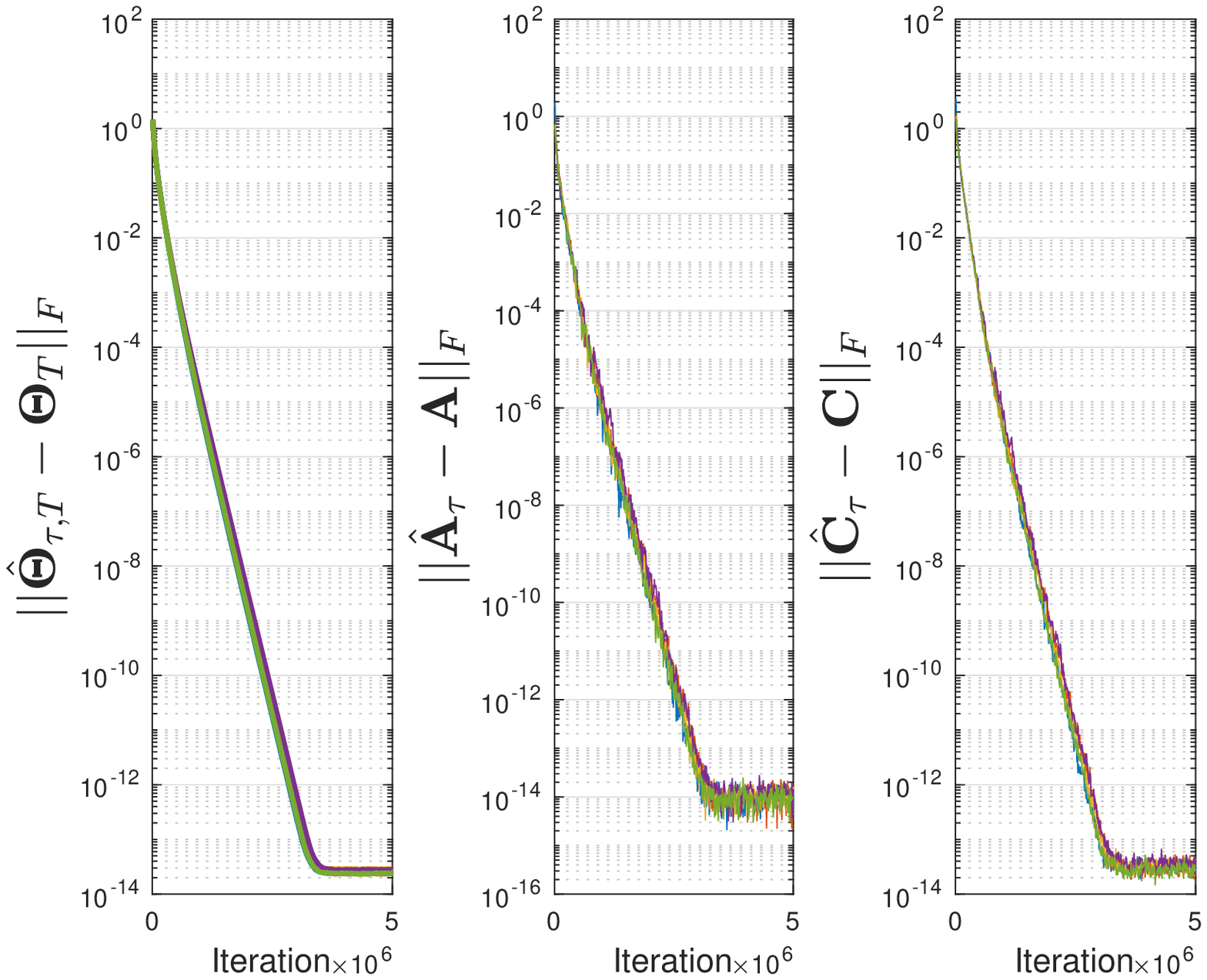}\label{fig:miso301}}
	\subfigure{\includegraphics[width=0.29\textwidth]{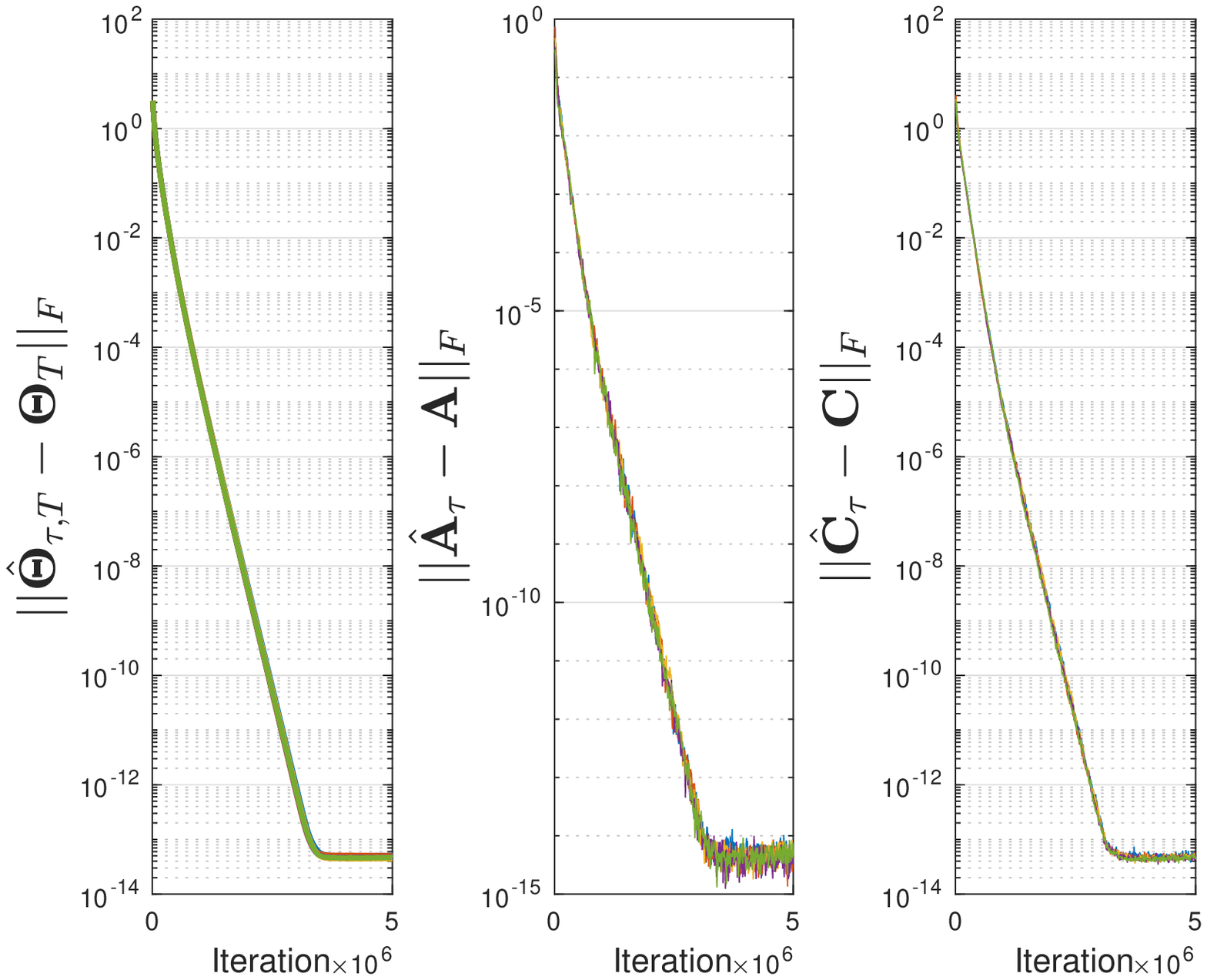}\label{fig:mimo301}}
	\caption{The top row depicts the convergence of Algorithm \ref{al:comb}. The bottom row depicts the convergence of Algorithm \ref{al:comb1}. In (a), (d) the underlying system is SISO with $n=30$, $m=1$, $p=1$. In (b), (e) the considered system is MISO with $n=5$, $m=6$, $p=1$. In (c), (f) the test system is MIMO with $n=5$, $m=6$, $p=4$.}
\end{figure}

In this section, we evaluate the performance of the proposed approaches. The matrix $\mathbf{A}$ is randomly generated by choosing the conjugate pairs of roots of the characteristic polynomial inside a circle with a maximum radius of $\rho(\mathbf{A})=0.975$. Elements of the matrices $\mathbf{C}$ and $\mathbf{D}$ are independently drawn from a standard Gaussian distribution $\mathcal{N}(0, 1)$ for each experiment. The initial state of the system is zero.
The performance measure in the experiments is the Frobenius norm distance between the estimated solution and the ground truth solution. We repeat each experiment 5 times and each curve corresponds to one independent realization. The spectral radius of $\mathbf{A}$ for SISO, multi-input single-output (MISO), and MIMO systems is $0.975$, $0.70$, and $0.64$, respectively. In experiments, as $\rho(\mathbf{A})$ is close to one, transfer function has a heavy tail and a large $T$ is required. %We consider that all systems are noise-free and noisy systems are examined in Appendix \ref{sec:exsim}.

%\subsection{Noise-Free Systems}
The convergence of Algorithm \ref{al:comb} for SISO, MISO, and MIMO systems is depicted in Figs. \ref{fig:siso302}-\ref{fig:mimo302}, where the measurement noise is zero. The hidden state dimension for considered systems is $30$. 
The convergence of Algorithm \ref{al:comb1} for identical systems is depicted in Figs. \ref{fig:siso301}-\ref{fig:mimo301}, when the batch size is $10,000$. In each iteration of Algorithm \ref{al:comb1}, one input-output pair is chosen uniformly at random, and the gradient is implemented based on that sample. The step-size and truncation length are identical for both approaches in each test and are outlined in Appendix \ref{sec:exsim} along with results for noisy systems.
It is observed that the number of iterations required by Algorithm \ref{al:comb} is fewer compared to Algorithm  \ref{al:comb1}. The reason for this difference is that Algorithm \ref{al:comb} has access to a greater number of input-output pairs. The numerical tests confirm that the system identification error can be as small as desired via adjusting the learning rate and the truncation length.   

\begin{figure}
	\centering
	\subfigure{\includegraphics[width=0.4\textwidth]{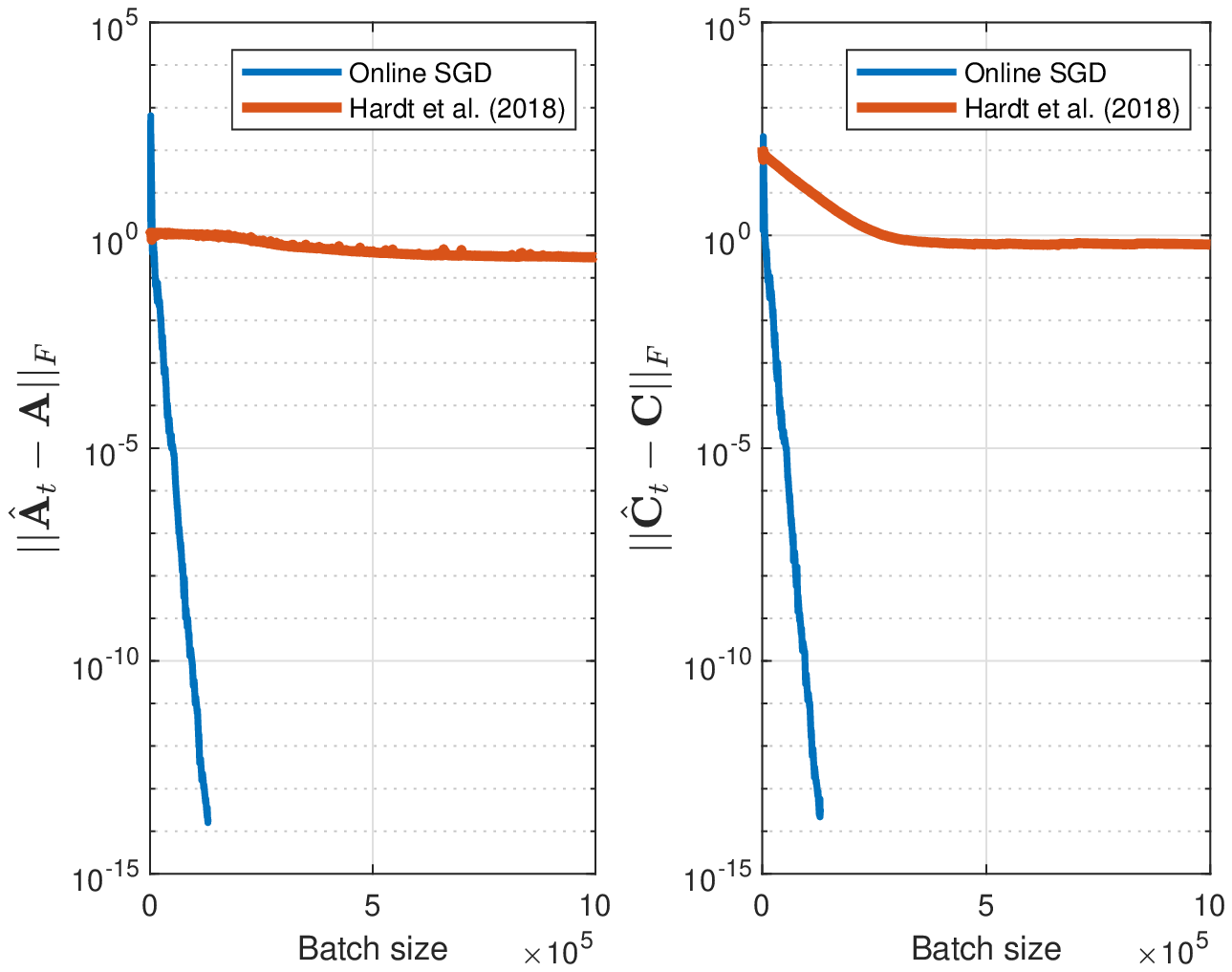}	\label{fig:hardt1}}
	\subfigure{\includegraphics[width=0.4\textwidth]{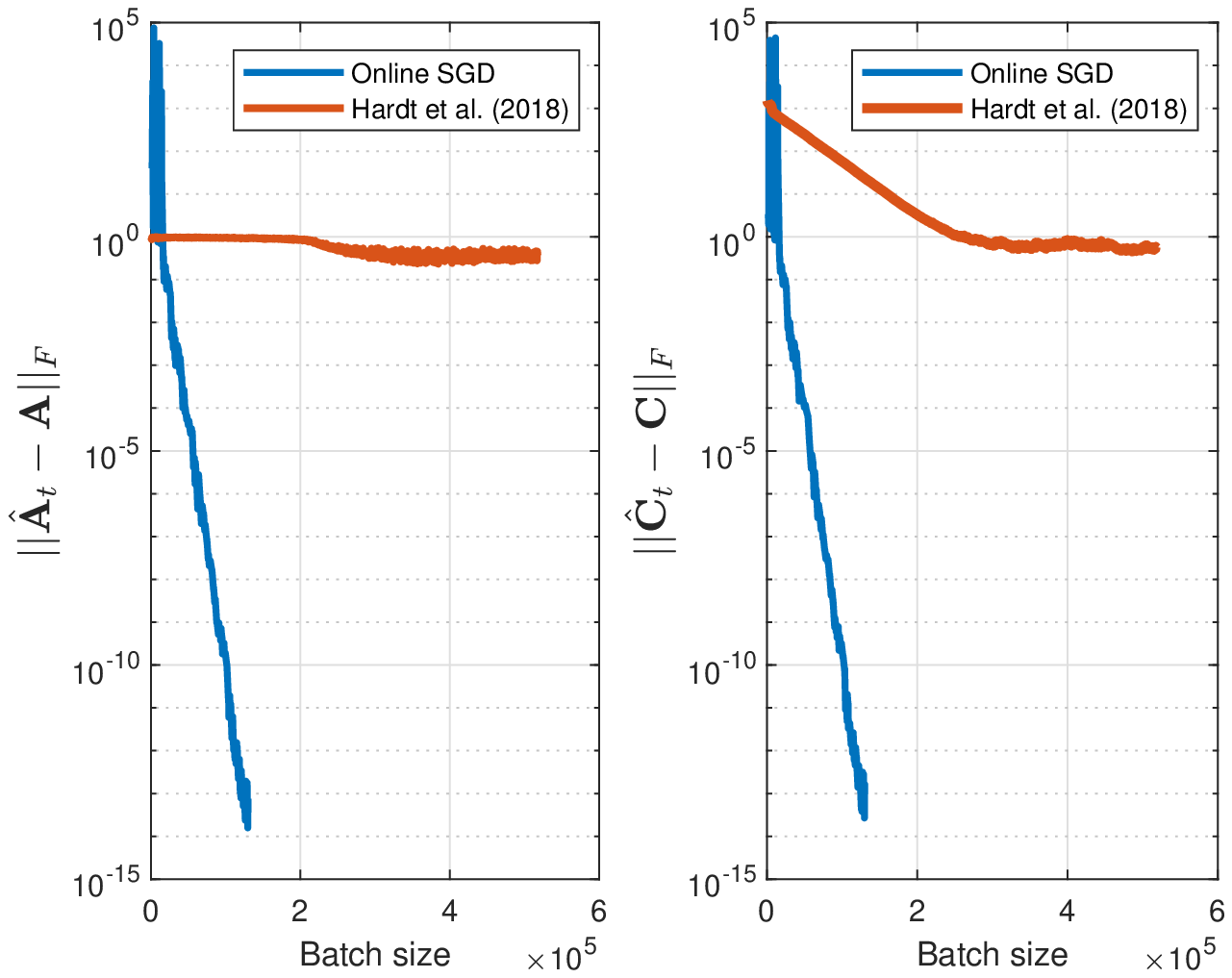}	\label{fig:hardt2}}
	\caption{Comparison of the performance of Algorithm \ref{al:comb} with \cite{hardt2018gradient}: (a) $m=1$, $n=20$, and $p=1$; and (b) $m=1$, $n=30$, and $p=1$.}
\end{figure}
%\subsection{Gradient Projection Algorithm in \cite{hardt2018gradient}}

We compare the performance of Algorithm \ref{al:comb} against \cite{hardt2018gradient}. The gradient projection algorithm in \cite{hardt2018gradient} implements gradient steps for $\mathbf{A}$ and $\mathbf{C}$ based on extracted information from a trajectory. After the gradients are implemented, the estimation of $\mathbf{A}$ is projected to a convex set. This set is characterized by $\Re(q(z)/z^n)>|\Im(q(z)/z^n)|$. The initialization of $\hat{\mathbf{A}}_0$ is critical for \cite{hardt2018gradient}. If the initial $\hat{\mathbf{A}}_0$ is unstable, the system blows up after one trajectory is fed to the system and the gradients cannot be computed. We compare the performance of the gradient projection algorithm in \cite{hardt2018gradient} against Algorithm \ref{al:comb} based on the number of input-output pairs that are fed to both approaches, where samples are discarded after the gradient implementations. In each iteration, the length of each trajectory fed to the gradient projection algorithm in \cite{hardt2018gradient} is $500$. For the comparisons, we consider two SISO systems. In the first system, we have $m=1$, $n=20$, and $p=1$. Moreover, for the second system, we set $m=1$, $n=30$, and $p=1$. We observe that Algorithm \ref{al:comb} outperforms the gradient projection algorithm in \cite{hardt2018gradient} given an identical number of input-output pairs. The reason is that the gradients for $\mathbf{A}$ and $\mathbf{C}$ are extracted from a non-linear non-convex regression in \cite{hardt2018gradient} and also the gradient implementation in \cite{hardt2018gradient} requires a greater number of input-output samples compared to Algorithm \ref{al:comb}. We compare the performance of our method against \cite{oymak2019non} in Appendix \ref{sec:exsim}. 

\vspace{-.3cm}
\section{Concluding Remarks and Future Directions}

This paper presents a novel approach to learn unknown transformation matrices of a certain class of stable linear dynamical systems from a single, noisy sequence of input-output pairs. We proposed online and offline SGD algorithms, proved that they efficiently learn the Markov parameters of the system at a linear convergence rate, and provide novel complexity bounds. When the unknown transformation matrices of the system have Brunovsky canonical form, we draw connections between Markov parameters and unknown transformation matrices using the transfer function of the system. We proved that the linear convergence of the Markov parameters enforces a linear convergence rate for unknown matrices to converge to their ground truth weights. We demonstrated the performance of our methods against state of the art methods through numerical simulations. It would be interesting to see whether our proposed approaches could be extended to the identification of periodic and Markov jump linear systems as well, as such systems are structurally more similar to multi-layer perceptron-type neural architectures. We defer such investigations to a future work.

\bibliography{ref_si}
\newpage
\appendix
\section{Bounding Markov Parameter Truncation Error}
In this section, we list the proofs that bound the error of Markov parameter truncation.
\allowdisplaybreaks
\subsection{Proof of Lemma \ref{le:trun}}
%\begin{proof}[of Lemma \ref{le:trun}]
First, let us bound $\norm{\mathbf{\mathbf{h}}_{t}}_2^2$. Since $\mathbf{\mathbf{h}}_{t}$ is a linear combination of system inputs, we have
\begin{align}
	\mathbf{h}_{t}=\mathbf{A}^{t}\mathbf{h}_{0}+\sum_{i=1}^{t}\mathbf{A}^{i-1}\mathbf{B}\mathbf{u}_{t-i}.\label{eq:rec1}
\end{align}
We bound $\norm{\mathbf{\mathbf{h}_{t+1}}}_2^2$ using recursion as follows:
\begin{align}
	& \mathbb{E}_{\mathbf{u}}[\norm{\mathbf{h}_{t+1}}_2^2]= \mathbb{E}_{\mathbf{u}}[\norm{\mathbf{A}\mathbf{h}_{t}+\mathbf{B}\mathbf{u}_t}_2^2]\overset{(a)}{=}\mathbb{E}_{\mathbf{u}}[\norm{\mathbf{A}\mathbf{h}_{t}}_2^2]+\mathbb{E}_{\mathbf{u}}[\norm{\mathbf{B}\mathbf{u}_t}_2^2]\nonumber\\
	&\leq\mathbb{E}_{\mathbf{u}}[\norm{\mathbf{A}\mathbf{h}_{t}}_2^2]+\mathbb{E}_{\mathbf{u}}[\norm{\mathbf{u}_t}_2^2]\norm{\mathbf{B}}_2^2\overset{(b)}{\leq} \mathbb{E}_{\mathbf{u}}[\norm{\mathbf{A}\mathbf{h}_{t}}_2^2]+m(\max(\boldsymbol{\sigma}^{\cdot 2}))\norm{\mathbf{B}}_F^2,\label{eq:rec2}
\end{align}
where (a) follows due to the fact that $\mathbf{\mathbf{h}}_{t}$ is made of $\{\mathbf{u}_i\}_{i=0}^{t-1}$ and the initial state $\mathbf{\mathbf{h}}_{0}$, which are independent from $\mathbf{u}_t$. We have (b) since $\mathbb{E}_{\mathbf{u}}[\norm{\mathbf{u}_t}_2^2]=\sum_{i=1}^{m}\boldsymbol{\sigma}^{\cdot 2}(i) \leq m(\max(\boldsymbol{\sigma}^{\cdot 2}))$. 

Based on the update rule for the hidden state, one can expand $\mathbb{E}_{\mathbf{u}}[\norm{\mathbf{A}\mathbf{h}_{t}}_2^2]$ recursively as  $\mathbb{E}_{\mathbf{u}}[\norm{\mathbf{A}\mathbf{h}_{t}}_2^2]=\mathbb{E}_{\mathbf{u}}[\norm{\mathbf{A}(\mathbf{A}\mathbf{h}_{t-1}+\mathbf{B}\mathbf{u}_{t-1})}_2^2]$. Consider eigendecomposition for $\mathbf{A}$ as $\mathbf{A}=\mathbf{V}\boldsymbol{\Lambda}\mathbf{V}^{-1}$. Using recursions \eqref {eq:rec1} and \eqref {eq:rec2}, we bound the norm of the hidden state as follows:
\begin{align}
	&\mathbb{E}_{\mathbf{u}}[\norm{\mathbf{h}_{t+1}}_2^2]\leq \mathbb{E}_{\mathbf{u}}\left[\norm{\mathbf{A}^{t+1}\mathbf{h}_0+\sum_{i=1}^{t}\mathbf{A}^i\mathbf{B}\mathbf{u}_{t-i}}_2^2\right]+m\max(\boldsymbol{\sigma}^{\cdot 2})\norm{\mathbf{B}}_F^2\nonumber\\
	&\overset{(a)}{=}\mathbb{E}_{\mathbf{u}}\left[\norm{\mathbf{V}\boldsymbol{\Lambda}^{t+1}\mathbf{V}^{-1}\mathbf{h}_0+\sum_{i=1}^{t}\mathbf{V}\boldsymbol{\Lambda}^{i}\mathbf{V}^{-1}\mathbf{B}\mathbf{u}_{t-i}}_2^2\right]+m\max(\boldsymbol{\sigma}^{\cdot 2})\norm{\mathbf{B}}_F^2\nonumber\\
	&=\mathbb{E}_{\mathbf{u}}\left[\norm{\mathbf{V}\boldsymbol{\Lambda}^{t+1}\mathbf{V}^{-1}\mathbf{h}_0+\sum_{i=1}^{t}\mathbf{V}\boldsymbol{\Lambda}^{i}\mathbf{V}^{-1}\mathbf{B}\mathbf{u}_{t-i}}_2^2\right]+m\max(\boldsymbol{\sigma}^{\cdot 2})\norm{\mathbf{B}}_F^2
	\nonumber\\
	&=\mathbb{E}_{\mathbf{u}}\left[\norm{\mathbf{V}\boldsymbol{\Lambda}^{t+1}\mathbf{V}^{-1}\mathbf{h}_0}_2^2+\sum_{i=1}^{t}\norm{\mathbf{V}\boldsymbol{\Lambda}^{i}\mathbf{V}^{-1}\mathbf{B}\mathbf{u}_{t-i}}_2^2\right]+m\max(\boldsymbol{\sigma}^{\cdot 2})\norm{\mathbf{B}}_F^2\nonumber\\
	&\overset{(b)}{\leq} n^2 \ell \rho(\mathbf{A})^{2(t+1)}\norm{\mathbf{h}_0}_2^2+n^2\ell m\max(\boldsymbol{\sigma}^{\cdot 2})\norm{\mathbf{B}}_F^2\sum_{i=1}^{t}\rho(\mathbf{A})^{2i}+m\max(\boldsymbol{\sigma}^{\cdot 2})\norm{\mathbf{B}}_F^2
	\nonumber\\
	&=n^2 \ell \rho(\mathbf{A})^{2(t+1)}\norm{\mathbf{h}_0}_2^2+\frac{n^2\ell m\max(\boldsymbol{\sigma}^{\cdot 2})\rho(\mathbf{A})^2(1-\rho(\mathbf{A})^{2t})\norm{\mathbf{B}}_F^2}{1-\rho(\mathbf{A})^{2}}+m\max(\boldsymbol{\sigma}^{\cdot 2})\norm{\mathbf{B}}_F^2\nonumber\\
	&\leq n^2 \ell \rho(\mathbf{A})^2\norm{\mathbf{h}_0}_2^2+\frac{n^2\ell m\max(\boldsymbol{\sigma}^{\cdot 2})\rho(\mathbf{A})^2\norm{\mathbf{B}}_F^2}{1-\rho(\mathbf{A})^{2}}+m\max(\boldsymbol{\sigma}^{\cdot 2})\norm{\mathbf{B}}_F^2,\label{eq:hidd}
\end{align}
where (a) follows from eigendecomposition for $\mathbf{A}$. In (b), $\ell=\norm{\mathbf{V}^{-1}}_F^2$.
Using the above upper-bound for $\mathbb{E}_{\mathbf{u}}[\norm{\mathbf{h}_{t+1}}_2^2]$, we bound $\mathbb{E}_{\mathbf{u}}[\norm{\mathbf{C}\mathbf{A}^{T-1}\mathbf{h}_{t-T+1}}_2^2]$ as follows:
\begin{align}
	&\mathbb{E}_{\mathbf{u}}[\norm{\mathbf{C}\mathbf{A}^{T-1}\mathbf{h}_{t-T+1}}_2^2]\leq \norm{\mathbf{C}}_F^2\norm{\mathbf{\mathbf{A}}^{T-1}}_F^2\mathbb{E}_{\mathbf{u}}[\norm{\mathbf{h}_{t-T+1}}_2^2]\nonumber\\
	&\leq \norm{\mathbf{C}}_F^2\norm{\mathbf{V}\boldsymbol{\Lambda}^{T-1}\mathbf{V}^{-1}}_F^2\mathbb{E}_{\mathbf{u}}[\norm{\mathbf{h}_{t-T+1}}_2^2]\nonumber\\
	&\leq n^2\ell\norm{\mathbf{C}}_F^2\rho(\mathbf{A})^{2(T-1)}\mathbb{E}_{\mathbf{u}}[\norm{\mathbf{h}_{t-T+1}}_2^2]\nonumber\\
	&\leq n^2\ell\norm{\mathbf{C}}_F^2\rho(\mathbf{A})^{2(T-1)}\nonumber\\
	&\times\underbrace{\left[n^2 \ell \rho(\mathbf{A})^2\norm{\mathbf{h}_0}_2^2+\frac{n^2\ell m\max(\boldsymbol{\sigma}^{\cdot 2})\rho(\mathbf{A})^2\norm{\mathbf{B}}_F^2}{1-\rho(\mathbf{A})^{2}}+m\max(\boldsymbol{\sigma}^{\cdot 2})\norm{\mathbf{B}}_F^2\right]}_{\gamma}.\nonumber
\end{align}
When the truncation length $T$ is large enough, $\lim_{T\rightarrow\infty}\rho(\mathbf{A})^{2(T-1)}=0$ since $\rho(\mathbf{A})<1$. For a large $T$, we find $\mathbb{E}_{\mathbf{u}}\left[\norm{\mathbf{C}\mathbf{A}^{T-1}\mathbf{h}_{t-T+1}}_2^2\right]\approx 0$, or equivalently, $\lim_{T\rightarrow\infty}\mathbb{E}_{\mathbf{u}}\left[\norm{\mathbf{C}\mathbf{A}^{T-1}\mathbf{h}_{t-T+1}}_2^2\right]=0$.
%\end{proof}
\begin{comment}
\subsection{Lemma \ref{le:trunc} and Its Proof}
\begin{lemma}\label{le:trunc}
The Frobenius norm difference between the truncated ground truth $\boldsymbol{\Theta}_T$ and the untruncated one $\lim_{T\rightarrow\infty}\boldsymbol{\Theta}_T$ is bounded as follows:
\begin{align}
&\norm{\boldsymbol{\Theta}_T-\lim_{T\rightarrow\infty}\boldsymbol{\Theta}_T}_F^2  \leq n^2\ell \norm{\mathbf{C}}_F^2\norm{\mathbf{B}}_F^2\frac{\rho(\mathbf{A})^{2(T-1)}}{1-\rho(\mathbf{A})^2}.\nonumber
\end{align}
\end{lemma}
\begin{proof}
We consider the eigendecomposition $\mathbf{A}=\mathbf{V}\boldsymbol{\Lambda}\mathbf{V}^{-1}$ for $\mathbf{A}$.	Based on the definition of $\boldsymbol{\Theta}_T$, we find
\begin{align}
&\norm{\boldsymbol{\Theta}_T-\lim_{T\rightarrow\infty}\boldsymbol{\Theta}_T}_F^2\leq  \sum_{t=T-1}^{\infty}\norm{\mathbf{C}\mathbf{A}^t\mathbf{B}}_F^2\leq  \sum_{t=T-1}^{\infty}\norm{\mathbf{C}\mathbf{V}\boldsymbol{\Lambda}^t\mathbf{V}^{-1}\mathbf{B}}_F^2\nonumber\\
&\leq
n\ell\sum_{t=T-1}^{\infty}\norm{\mathbf{C}}_F^2\norm{\boldsymbol{\Lambda}^t}_F^2\norm{\mathbf{B}}_F^2\leq n^2\ell \norm{\mathbf{C}}_F^2\norm{\mathbf{B}}_F^2\sum_{t=T-1}^{\infty}\rho(\mathbf{A})^{2t}=n^2\ell \norm{\mathbf{C}}_F^2\norm{\mathbf{B}}_F^2\frac{\rho(\mathbf{A})^{2(T-1)}}{1-\rho(\mathbf{A})^2},\label{eq:decom}
\end{align}
where $\ell=\norm{\mathbf{V}^{-1}}_F^2$.
\end{proof}
\end{comment}

\subsection{Proof of Proposition \ref{pro:noise}}\label{sec:proofprop}
%\begin{proof}
From the least squares problem, we have:
\begin{align}
	\hat{\boldsymbol{\Theta}}_T=&\arg\min_{\hat{\boldsymbol{\Theta}}_T}\lim_{N\rightarrow\infty}\frac{1}{2N}\sum_{t=1}^N\norm{\mathbf{y}_t- \hat{\boldsymbol{\Theta}}_T\mathbf{x}_t}_2^2\nonumber\\
	&=\arg\min_{\hat{\boldsymbol{\Theta}}_T}\lim_{N\rightarrow\infty}\frac{1}{2N}\sum_{t=1}^N\norm{\boldsymbol{\Theta}_T\mathbf{x}_t+\mathbf{C}\mathbf{A}^{T-1}\mathbf{h}_{t-T+1}+\boldsymbol{\zeta}_t- \hat{\boldsymbol{\Theta}}_T\mathbf{x}_t}_2^2\nonumber\\
	&=\arg\min_{\hat{\boldsymbol{\Theta}}_T}\lim_{N\rightarrow\infty}\frac{1}{2N}\sum_{t=1}^N\Bigg[\norm{\boldsymbol{\Theta}_T\mathbf{x}_t- \hat{\boldsymbol{\Theta}}_T\mathbf{x}_t}_2^2+\norm{\mathbf{C}\mathbf{A}^{T-1}\mathbf{h}_{t-T+1}+\boldsymbol{\zeta}_t}_2^2
	\nonumber\\
	&+2\text{Tr}\left(\mathbf{x}_t'(\boldsymbol{\Theta}_T- \hat{\boldsymbol{\Theta}}_T)'(\mathbf{C}\mathbf{A}^{T-1}\mathbf{h}_{t-T+1}+\boldsymbol{\zeta}_t)\right)\Bigg]\nonumber\\
	&\overset{(a)}{=}\arg\min_{\hat{\boldsymbol{\Theta}}_T}\lim_{N\rightarrow\infty}\frac{1}{2N}\sum_{t=1}^N\norm{(\boldsymbol{\Theta}_T- \hat{\boldsymbol{\Theta}}_T)\mathbf{x}_t}_2^2.\label{eq:min}
\end{align}
Before justifying (a), we notice that: 
\begin{enumerate}
	\item From Lemma \ref{le:trun}, we know that $\norm{\mathbf{C}\mathbf{A}^{T-1}\mathbf{h}_{t-T+1}}_2^2$ tends to zero when $T$ increases.
	\item We have $\lim_{N\rightarrow\infty}\frac{1}{2N}\sum_{t=1}^N\boldsymbol{\zeta}_t=\mathbf{0}$ with probability one due to  Chebyshev's inequality \cite{saw1984chebyshev}.
\end{enumerate}
In light of the above arguments, $\lim_{N\rightarrow\infty}\frac{1}{2N}\sum_{t=1}^N\text{Tr}\left(\mathbf{x}_t'(\boldsymbol{\Theta}_T- \hat{\boldsymbol{\Theta}}_T)'(\mathbf{C}\mathbf{A}^{T-1}\mathbf{h}_{t-T+1}+\boldsymbol{\zeta}_t)\right)$ can be made as small as desired.

The Hessian matrix for $\lim_{N\rightarrow\infty}\frac{1}{2N}\sum_{t=1}^N\norm{(\boldsymbol{\Theta}_T- \hat{\boldsymbol{\Theta}}_T)\mathbf{x}_t}_2^2$  is $\lim_{N\rightarrow\infty}\frac{1}{2N}\sum_{t=1}^N\mathbf{x}_t\mathbf{x}_t'$, which is positive definite with probability one due to Chebyshev's inequality, and thus, the solution for $\boldsymbol{\Theta}_T$ in \eqref{eq:min} is unique. Based on the above arguments, we observe that $\hat{\boldsymbol{\Theta}}_T=\boldsymbol{\Theta}_T$.
%\end{proof}
\subsection{Proof of Remark \ref{le:noise}}
Instead of \eqref{eq:state}, consider the following dynamics:
\begin{align}\textstyle
	&\mathbf{h}_{t+1}=\mathbf{A}\mathbf{h}_{t}+\mathbf{B}\mathbf{u}_t+\boldsymbol{\varsigma}_t,\nonumber\\&
	\mathbf{y}_t=\mathbf{C}\mathbf{h}_t+\mathbf{D}\mathbf{u}_t+\boldsymbol{\zeta}_t,\nonumber
\end{align}
where $\boldsymbol{\varsigma}_t$ is the process noise at time instance $t$. Similar to \ref{sec:proofprop}, we have:
\begin{align}
	\hat{\boldsymbol{\Theta}}_T=&\arg\min_{\hat{\boldsymbol{\Theta}}_T}\lim_{N\rightarrow\infty}\frac{1}{2N}\sum_{t=1}^N\norm{\mathbf{y}_t- \hat{\boldsymbol{\Theta}}_T\mathbf{x}_t}_2^2\nonumber\\
	&=\arg\min_{\hat{\boldsymbol{\Theta}}_T}\lim_{N\rightarrow\infty}\frac{1}{2N}\sum_{t=1}^N\norm{\boldsymbol{\Theta}_T\mathbf{x}_t+\mathbf{C}\mathbf{A}^{T-1}\mathbf{h}_{t-T+1}+\boldsymbol{\zeta}_t+\mathbf{C}\sum_{i=1}^{T-1}\mathbf{A}^{i-1}\boldsymbol{\varsigma}_{t-i}- \hat{\boldsymbol{\Theta}}_T\mathbf{x}_t}_2^2\nonumber\\
	&=\arg\min_{\hat{\boldsymbol{\Theta}}_T}\lim_{N\rightarrow\infty}\frac{1}{2N}\sum_{t=1}^N\Bigg[\norm{\boldsymbol{\Theta}_T\mathbf{x}_t- \hat{\boldsymbol{\Theta}}_T\mathbf{x}_t}_2^2+\norm{\mathbf{C}\mathbf{A}^{T-1}\mathbf{h}_{t-T+1}+\boldsymbol{\zeta}_t+\mathbf{C}\sum_{i=1}^{T-1}\mathbf{A}^{i-1}\boldsymbol{\varsigma}_{t-i}}_2^2
	\nonumber\\
	&+2\text{Tr}\left(\mathbf{x}_t'(\boldsymbol{\Theta}_T- \hat{\boldsymbol{\Theta}}_T)'(\mathbf{C}\mathbf{A}^{T-1}\mathbf{h}_{t-T+1}+\boldsymbol{\zeta}_t+\mathbf{C}\sum_{i=1}^{T-1}\mathbf{A}^{i-1}\boldsymbol{\varsigma}_{t-i})\right)\Bigg]\nonumber\\
	&=\arg\min_{\hat{\boldsymbol{\Theta}}_T}\lim_{N\rightarrow\infty}\frac{1}{2N}\sum_{t=1}^N\norm{(\boldsymbol{\Theta}_T- \hat{\boldsymbol{\Theta}}_T)\mathbf{x}_t}_2^2.\label{eq:min1}
\end{align}
Since the process noise is independent of
the inputs, from \eqref{eq:min1}, we observe that one can learn $\hat{\boldsymbol{\Theta}}_T$ through a regression given in \eqref{opt:reg}. Although $\mathbf{C}\sum_{i=1}^{T-1}\mathbf{A}^{i-1}\boldsymbol{\varsigma}_{t-i}$ accumulates in the hidden state of the system, the norm of $\mathbf{C}\mathbf{A}^{i-1}\boldsymbol{\varsigma}_{t-i}$ becomes small for a large $i$. The reason is that $\rho(\mathbf{A})<1$ (similar to \eqref{eq:hidd}).

\subsection{Proof of Lemma \ref{le:te}}
%\begin{proof}
We bound the Frobenius norm of the aggregate transfer function terms which are truncated and ignored in the transfer function approximation as follows: 
\begin{align}
	\norm{\mathbf{E}_{z,T}}_F^2=\norm{\sum_{t=T}^\infty z^{-t}\mathbf{C}\mathbf{A}^{t-1}\mathbf{B}}_F^2\hspace{-.2cm}\leq \sum_{t=T}^\infty\norm{ z^{-t}\mathbf{C}\mathbf{A}^{t-1}\mathbf{B}}_F^2\overset{(a)}{\leq} \sum_{t=T}^\infty n^2\ell \norm{\mathbf{C}}_F^2\norm{\mathbf{B}}_F^2\rho(\mathbf{A})^{2(t-1)}|z|^{-2t}\hspace{-.2cm},\nonumber
\end{align}
where (a) follows due to the eigendecomposition $\mathbf{A}=\mathbf{V}\boldsymbol{\Lambda}\mathbf{V}^{-1}$ for $\mathbf{A}$.
Assuming that $|\rho(\mathbf{A})|<|z|$ (stability region of the system), we have
\begin{align}
	&\norm{\mathbf{E}_{z,T}}_F^2=\norm{\sum_{t=T}^\infty z^{-t}\mathbf{C}\mathbf{A}^{t-1}\mathbf{B}}_F^2\leq \frac{n^2\ell \norm{\mathbf{C}}_F^2\norm{\mathbf{B}}_F^2\rho(\mathbf{A})^{2(T-1)}}{1-\rho(\mathbf{A})^2|z|^{-2}}\nonumber.
\end{align}
\section{Batch Methods}\label{sec:batch}
In this section, we introduce two batch algorithms.
\subsection{Offline SGD Combined with The Linear System}
The offline SGD algorithm can also be used to estimate the unknown weight matrices, where we find $\boldsymbol{\Gamma}_{\tau,T}$ and $\boldsymbol{\varkappa}_\tau$ using offline SGD iterates $\hat{\boldsymbol{\Theta}}_{\tau,T}$. This approach is summarized in Algorithm \ref{al:comb1}.
\begin{algorithm}
	\SetAlgoLined
	\textbf{Input}: $\{\mathbf{x}_t,\mathbf{y}_t\}_{t=1}^N$, learning rate $\eta$, $\tau=1$\\
	\textbf{Output}: Estimation of $\mathbf{A}$, $\mathbf{C}$ and $\mathbf{D}$\\
	\For{$\tau$ from $1$ to $\text{END}$}{
		Uniformly at random choose $t\in\{T,T+1,\dots,N\}$;\\
		$\hat{\boldsymbol{\Theta}}_{\tau,T}=\hat{\boldsymbol{\Theta}}_{\tau-1,T}-\eta(\hat{\boldsymbol{\Theta}}_{\tau-1,T}\mathbf{x}_t-\mathbf{y}_t)\mathbf{x}_t'$\\
		Find $\boldsymbol{\Gamma}_{\tau,T}$ and $\boldsymbol{\varkappa}_\tau$\\
		$\hat{\boldsymbol{\varrho}}_\tau=(\boldsymbol{\Gamma}_{\tau,T}^H\boldsymbol{\Gamma}_{\tau,T})^{-1}\boldsymbol{\Gamma}^H_{\tau,T}\boldsymbol{\varkappa}_\tau$
	}
	\caption{Offline SGD combined with the linear system}\label{al:comb1}
\end{algorithm}
\subsection{Online pseudo-inverse based method}
If the computation cost of pseudo-inversion to solve \eqref{opt:lim} is not high, one can solve \eqref{opt:lim} by the pseudo-inverse method when each new input-output pair arrives. We solve \eqref{opt:lim} with $N=t$, where $t$ is the number observed input-output pairs. We extract the estimated weight matrices from the linear system of equations \eqref{eq:linsys} if $\hat{\boldsymbol{\Theta}}_T$ is used instead of $\hat{\boldsymbol{\Theta}}_{t,T}$. This approach is summarized in Algorithm \ref{al:inv}. Similar to \eqref{eq:shir}, it is easy to show that the following complexity bound holds for Algorithm \ref{al:inv}:
\begin{align}
	\mathbb{E}_{\mathbf{u}}\left[\mathbb{E}_{\boldsymbol{\zeta}}\left[\norm{\hat{\boldsymbol{\varrho}}_t-\boldsymbol{\varrho}}_2^2\right]\right]\leq s_1+l_2\Big(nm(n+nmp)(T-1)\Big)\chi_t^2.\nonumber
\end{align}
where $s_1>0$ (given in Appendix \ref{sec:lineq}) can be made as small as desired by increasing $T$.
Although Algorithm \ref{al:inv} is more computationally expensive compared to Algorithm \ref{al:comb}, it is more robust to noise. The reason is that it simultaneously uses all the available samples to estimate the Markov parameters, and due to averaging, the aggregate noise  is attenuated.
\begin{algorithm}
	\SetAlgoLined
	\textbf{Input}: $\{\mathbf{x}_i,\mathbf{y}_i\}_{i=1}^t$, learning rate $\eta$, $t=1$\\
	\textbf{Output}: Estimation of $\mathbf{A}$, $\mathbf{C}$ and $\mathbf{D}$\\
	\If{a new input-output pair arrives}{
		Update \eqref{opt:lim}\\
		Solve \eqref{opt:lim} by the pseudo-inverse method\\
		Find $\boldsymbol{\Gamma}_{t,T}$ and $\boldsymbol{\varkappa}_t$\\
		$\hat{\boldsymbol{\varrho}}_t=(\boldsymbol{\Gamma}_{t,T}^H\boldsymbol{\Gamma}_{t,T})^{-1}\boldsymbol{\Gamma}^H_{t,T}\boldsymbol{\varkappa}_t$\\
		$t=t+1$
	}
	\caption{Online pseudo-inverse based method}\label{al:inv}
\end{algorithm}
\section{Convergence Analysis}
In this section, we list proofs that are related to the regression problem \eqref{opt:lim} and the  convergence of the proposed algorithms to solve it. 
\subsection{Proof of Theorem \ref{th:prob}}
%\begin{proof}[Theorem \ref{th:prob}]
Consider the following problem:
\begin{align}
	\hat{\boldsymbol{\Theta}}_T=\arg\min_{\hat{\boldsymbol{\Theta}}_T}\frac{1}{2(N-T+1)}\sum_{t=T}^{N}\norm{\mathbf{y}_t- \hat{\boldsymbol{\Theta}}_T\mathbf{x}_t}_2^2.\nonumber
\end{align}
The starting point in the above summation is $t=T$. The reason is that at this point, all the elements of the vector $\mathbf{x}_t$ are filled with random numbers (see \eqref{eq:x}). Once we bound the distance between $\hat{\boldsymbol{\Theta}}_T$ for the above problem and $\boldsymbol{\Theta}_T$, it will an upper-bound for $\hat{\boldsymbol{\Theta}}_T$ in \eqref{opt:lim}. The Hessian matrix for the above problem is $\frac{1}{N-T+1}\sum_{t=T}^{N}\mathbf{x}_t\mathbf{x}_t'$. When $N\geq 2\:T$, the Hessian matrix is full rank \cite{tsiligkaridis2013covariance}. Since the Hessian matrix is positive definite, we can solve the above problem using the first order optimality condition:
\begin{align}
	&	\nabla_{\hat{\boldsymbol{\Theta}}_T} \frac{1}{2(N-T+1)}\sum_{t=T}^{N}\norm{\mathbf{y}_t- \hat{\boldsymbol{\Theta}}_T\mathbf{x}_t}_2^2=\frac{1}{N-T+1}\sum_{t=T}^{N}\left(\mathbf{y}_t- \hat{\boldsymbol{\Theta}}_T\mathbf{x}_t\right)\mathbf{x}_t'\nonumber\\
	&=\frac{1}{N-T+1}\sum_{t=T}^{N}\left(\boldsymbol{\Theta}_T\mathbf{x}_t+\boldsymbol{\zeta}_t+\mathbf{C}\mathbf{A}^{T-1}\mathbf{h}_{t-T+1}- \hat{\boldsymbol{\Theta}}_{T}\mathbf{x}_t\right)\mathbf{x}_t'=\mathbf{0}.\nonumber
\end{align}
Using the above equation, we find:
\begin{align}
	& \left(\hat{\boldsymbol{\Theta}}_T-\boldsymbol{\Theta}_T\right)\left(\sum_{t=T}^{N}\mathbf{x}_t\mathbf{x}_t'\right)=\sum_{t=T}^{N}\boldsymbol{\zeta}_t\mathbf{x}_t'+\sum_{t=T}^{N}\mathbf{C}\mathbf{A}^{T-1}\mathbf{h}_{t-T+1}\mathbf{x}_t'.\nonumber
\end{align}
Since when $N\geq 2\:T$, the scatter matrix $\frac{1}{2(N-T+1)}\sum_{t=T}^{N}\mathbf{x}_t\mathbf{x}_t'$ is full rank and invertible. We find the difference between the optimal solution and the current point as follows:
\begin{align}
	\hat{\boldsymbol{\Theta}}_T-\boldsymbol{\Theta}_T=\left(\sum_{t=T}^{N}\boldsymbol{\zeta}_t\mathbf{x}_t'\right)\left(\sum_{t=T}^{N}\mathbf{x}_t\mathbf{x}_t'\right)^{-1}+\left(\sum_{t=T}^{N}\mathbf{C}\mathbf{A}^{T-1}\mathbf{h}_{t-T+1}\mathbf{x}_t'\right)\left(\sum_{t=T}^{N}\mathbf{x}_t\mathbf{x}_t'\right)^{-1}.
\end{align}
We bound the expected Frobenius norm distance to the global optimal solution as follows:
\begin{align}
	\mathbb{E}&_{\boldsymbol{\zeta}}\left[\mathbb{E}_{\mathbf{u}}\left[\norm{\hat{\boldsymbol{\Theta}}_T-\boldsymbol{\Theta}_T}_F^2\right]\right]\nonumber\\
	&=\mathbb{E}_{\boldsymbol{\zeta}}\left[\mathbb{E}_{\mathbf{u}}\left[\norm{\left(\sum_{t=T}^{N}\boldsymbol{\zeta}_t\mathbf{x}_t'\right)\left(\sum_{t=T}^{N}\mathbf{x}_t\mathbf{x}_t'\right)^{-1}+\left(\sum_{t=T}^{N}\mathbf{C}\mathbf{A}^{T-1}\mathbf{h}_{t-T+1}\mathbf{x}_t'\right)\left(\sum_{t=T}^{N}\mathbf{x}_t\mathbf{x}_t'\right)^{-1}}_F^2\right]\right]\nonumber\\
	&\leq\mathbb{E}_{\boldsymbol{\zeta}}\left[\mathbb{E}_{\mathbf{u}}\left[\norm{\left(\sum_{t=T}^{N}\boldsymbol{\zeta}_t\mathbf{x}_t'\right)\left(\sum_{t=T}^{N}\mathbf{x}_t\mathbf{x}_t'\right)^{-1}}_F^2\right]\right]\nonumber\\
	&+\mathbb{E}_{\boldsymbol{\zeta}}\left[\mathbb{E}_{\mathbf{u}}\left[\norm{\left(\sum_{t=T}^{N}\mathbf{C}\mathbf{A}^{T-1}\mathbf{h}_{t-T+1}\mathbf{x}_t'\right)\left(\sum_{t=T}^{N}\mathbf{x}_t\mathbf{x}_t'\right)^{-1}}_F^2\right]\right]\nonumber\\
	&\overset{(a)}{\leq} \mathbb{E}_{\boldsymbol{\zeta}}\left[\mathbb{E}_{\mathbf{u}}\left[\norm{\sum_{t=T}^{N}\boldsymbol{\zeta}_t\mathbf{x}_t'}_F^2\norm{\left(\sum_{t=T}^{N}\mathbf{x}_t\mathbf{x}_t'\right)^{-1}}_F^2\right]\right]
	\nonumber\\
	&+\mathbb{E}_{\mathbf{u}}\left[\norm{\sum_{t=T}^{N}\mathbf{C}\mathbf{A}^{T-1}\mathbf{h}_{t-T+1}\mathbf{x}_t'}_F^2\norm{\left(\sum_{t=T}^{N}\mathbf{x}_t\mathbf{x}_t'\right)^{-1}}_F^2\right],
	\nonumber\\
	&\leq \mathbb{E}_{\boldsymbol{\zeta}}\left[\mathbb{E}_{\mathbf{u}}\left[\norm{\sum_{t=T}^{N}\boldsymbol{\zeta}_t\mathbf{x}_t'}_F^2\right]\right]\mathbb{E}_{\mathbf{u}}\left[\norm{\left(\sum_{t=T}^{N}\mathbf{x}_t\mathbf{x}_t'\right)^{-1}}_F^2\right]
	\nonumber\\
	&+\mathbb{E}_{\mathbf{u}}\left[\norm{\sum_{t=T}^{N}\mathbf{C}\mathbf{A}^{T-1}\mathbf{h}_{t-T+1}\mathbf{x}_t'}_F^2\right]\mathbb{E}_{\mathbf{u}}\left[\norm{\left(\sum_{t=T}^{N}\mathbf{x}_t\mathbf{x}_t'\right)^{-1}}_F^2\right],
	\label{eq:distance}
\end{align}
where (a) follows because of the Cauchy-Schwarz inequality. Before simplifying the above inequality, let us bound the norm of $\left(\frac{1}{N-T+1}\sum_{t=T}^{N}\mathbf{x}_t\mathbf{x}_t'\right)^{-1}$ as follows:
\begin{align}
	& 	\mathbb{E}_{\mathbf{u}}\left[\norm{\left(\frac{1}{N-T+1}\sum_{t=T}^{N}\mathbf{x}_t\mathbf{x}_t'\right)^{-1}}_F^2\right] \nonumber\\
	&\leq mT	 \mathbb{E}_{\mathbf{u}}\left[\norm{\left(\frac{1}{N-T+1}\sum_{t=T}^{N}\mathbf{x}_t\mathbf{x}_t'\right)^{-1}}_2^2\right]=\frac{mT}{\left(\min(\boldsymbol{\sigma}^{\cdot 2})\right)^2}.\nonumber
\end{align}
We know that $\frac{1}{(N-T+1)}\left(\sum_{t=T}^{N}\mathbf{x}_t\mathbf{x}_t'\right)^{-1}$ is the unbiased estimator of the covariance matrix \cite{tsiligkaridis2013covariance}. Therefore, we have 
\begin{align}
	& 	\mathbb{E}_{\mathbf{u}}\left[\norm{\left(\sum_{t=T}^{N}\mathbf{x}_t\mathbf{x}_t'\right)^{-1}}_F^2\right] \leq mT	\mathbb{E}_{\mathbf{u}}\left[ \norm{\left(\sum_{t=T}^{N}\mathbf{x}_t\mathbf{x}_t'\right)^{-1}}_2^2\right]=\frac{mT}{(N-T+1)^2\left(\min(\boldsymbol{\sigma}^{\cdot 2})\right)^2}.\nonumber
\end{align}
We bound $\mathbb{E}_{\boldsymbol{\zeta}}\left[\mathbb{E}_{\mathbf{u}}\left[\norm{\sum_{t=T}^{N}\boldsymbol{\zeta}_t\mathbf{x}_t'}_F^2\right]\right]$ as follows:
\begin{align}
	\mathbb{E}_{\boldsymbol{\zeta}}\left[\mathbb{E}_{\mathbf{u}}\left[\norm{\sum_{t=T}^{N}\boldsymbol{\zeta}_t\mathbf{x}_t'}_F^2\right]\right]\leq \sum_{t=T}^{N}\mathbb{E}_{\boldsymbol{\zeta}}\left[\mathbb{E}_{\mathbf{u}}\left[\norm{\boldsymbol{\zeta}_t}_2^2\norm{\mathbf{x}_t'}_2^2\right]\right]\leq\left(N-T+1\right)pmT\max(\boldsymbol{\sigma}_\zeta^{\cdot 2})\max(\boldsymbol{\sigma}^{\cdot 2}).\nonumber
\end{align}
We bound $\mathbb{E}_{\mathbf{u}}\left[\norm{\sum_{t=T}^{N}\mathbf{C}\mathbf{A}^{T-1}\mathbf{h}_{t-T+1}\mathbf{x}_t'}_F^2\right]$ as follows:
\begin{align}
	&\mathbb{E}_{\mathbf{u}}\left[\norm{\sum_{t=T}^{N}\mathbf{C}\mathbf{A}^{T-1}\mathbf{h}_{t-T+1}\mathbf{x}_t'}_F^2\right] \leq \sum_{t=T}^{N}\norm{\mathbf{C}}_F^2\norm{\mathbf{A}^{T-1}}_F^2 \mathbb{E}_{\mathbf{u}}\left[\norm{\mathbf{x}_t'}_2^2\right]\mathbb{E}_{\mathbf{u}}\left[\norm{\mathbf{h}_{t-T+1}}_2^2\right]\nonumber\\
	&\leq \sum_{t=T}^{N}n^2\ell\norm{\mathbf{C}}_F^2\rho(\mathbf{A})^{2(T-1)} \mathbb{E}_{\mathbf{u}}\left[\norm{\mathbf{x}_t'}_2^2\right]\mathbb{E}_{\mathbf{u}}\left[\norm{\mathbf{h}_{t-T+1}}_2^2\right]\nonumber\\
	&\overset{(a)}{\leq} n^2\ell mT\norm{\mathbf{C}}_F^2\rho(\mathbf{A})^{2(T-1)}\left(\max(\boldsymbol{\sigma}^{\cdot 2})\right)^2\sum_{t=T}^{N} \left(m\max(\boldsymbol{\sigma}^{\cdot 2})\norm{\mathbf{B}}_F^2\right)\nonumber\\
	&+n^2\ell mT\norm{\mathbf{C}}_F^2\rho(\mathbf{A})^{2(T-1)}\left(\max(\boldsymbol{\sigma}^{\cdot 2})\right)^2\sum_{t=T}^{N} \left(n^2 \ell \rho(\mathbf{A})^2 \norm{\mathbf{h}_0}_2^2+\frac{n^2\ell m\max(\boldsymbol{\sigma}^{\cdot 2})\rho(\mathbf{A})^2\norm{\mathbf{B}}_F^2}{1-\rho(\mathbf{A})^{2}}\right)
	\nonumber\\
	&\leq n^2\ell \norm{\mathbf{C}}_F^2\rho(\mathbf{A})^{2(T-1)}m^2T(\max(\boldsymbol{\sigma}^{\cdot 2}))^3(N-T+1)\norm{\mathbf{B}}_F^2\nonumber\\
	&+n^4\ell^2mT \norm{\mathbf{C}}_F^2\rho(\mathbf{A})^{2T}\left(\max(\boldsymbol{\sigma}^{\cdot 2})\right)^2(N-T+1)\times\underbrace{\left[\norm{\mathbf{h}_0}_2^2+\frac{ m\max(\boldsymbol{\sigma}^{\cdot 2})\norm{\mathbf{B}}_F^2}{1-\rho(\mathbf{A})^{2}}\right]}_{\iota},
\end{align}
where in (a) we use \eqref{eq:hidd} to bound $\norm{\mathbf{h}_{t-T+1}}_2^2$.
We use the above inequalities to simplify \eqref{eq:distance} as follows:
\begin{align}
	&\mathbb{E}_{\boldsymbol{\zeta}}\left[\mathbb{E}_{\mathbf{u}}\left[\norm{\hat{\boldsymbol{\Theta}}_T-\boldsymbol{\Theta}_T}_F^2\right]\right]\nonumber\\
	&\leq  \mathbb{E}_{\boldsymbol{\zeta}}\left[\mathbb{E}_{\mathbf{u}}\left[\norm{\sum_{t=T}^{N}\boldsymbol{\zeta}_t\mathbf{x}_t'}_F^2\right]\right]\mathbb{E}_{\boldsymbol{\zeta}}\left[\mathbb{E}_{\mathbf{u}}\left[\norm{\left(\sum_{t=T}^{N}\mathbf{x}_t\mathbf{x}_t'\right)^{-1}}_F^2\right]\right]
	\nonumber\\
	&+\mathbb{E}_{\mathbf{u}}\left[\norm{\sum_{t=T}^{N}\mathbf{C}\mathbf{A}^{T-1}\mathbf{h}_{t-T+1}\mathbf{x}_t'}_F^2\right]\mathbb{E}_{\mathbf{u}}\left[\norm{\left(\sum_{t=T}^{N}\mathbf{x}_t\mathbf{x}_t'\right)^{-1}}_F^2\right]\nonumber\\
	&\leq \frac{p m^2T^2 \max(\boldsymbol{\sigma}_\zeta^{\cdot 2})\max(\boldsymbol{\sigma}^{\cdot 2})}{(N-T+1)\left(\min(\boldsymbol{\sigma}^{\cdot 2})\right)^2} + \frac{n^2\ell \norm{\mathbf{C}}_F^2\rho(\mathbf{A})^{2(T-1)}m^3T^2(\max(\boldsymbol{\sigma}^{\cdot 2}))^3\norm{\mathbf{B}}_F^2 }{(N-T+1)\left(\min(\boldsymbol{\sigma}^{\cdot 2})\right)^2}\nonumber\\
	&+ \frac{n^4\ell^2 m^2T^2\rho(\mathbf{A})^{2T}\norm{\mathbf{C}}_F^2\left(\max(\boldsymbol{\sigma}^{\cdot 2})\right)^2\iota }{(N-T+1)\left(\min(\boldsymbol{\sigma}^{\cdot 2})\right)^2}=\chi_N^2.\nonumber
\end{align}
%\end{proof}

\begin{lemma}
	For an arbitrary $\mu$-strongly convex function $ f(\mathbf{x})$ with an $L$-Lipschitz continuous gradient, we have 
	\begin{align}
		&\langle \nabla f(\mathbf{x})-\nabla f(\mathbf{y}), \mathbf{x}-\mathbf{y} \rangle \geq \mu \norm{\mathbf{x}-\mathbf{y}}_2^2,\label{eq:str}\\
		& \langle \nabla f(\mathbf{x})-\nabla f(\mathbf{y}), \mathbf{x}-\mathbf{y} \rangle \geq \frac{1}{L} \norm{\nabla f(\mathbf{x})-\nabla f(\mathbf{y})}_2^2,\label{eq:lip}
	\end{align}
	where \eqref{eq:str} follows from \cite[eq. 2.1.11]{nesterov1998introductory} and \eqref{eq:lip} follows from \cite[eq. 2.1.8]{nesterov1998introductory}. From the Bergstrom's inequality, we have
	\begin{align}
		& 2\norm{\mathbf{x}}_2^2+2\norm{\mathbf{y}}_2^2\geq \norm{\mathbf{x}+\mathbf{y}}_2^2.\label{eq:sum}
	\end{align}
	
	\begin{comment}
	\begin{theorem}
	Consider a $\mu$-strongly convex function $f(\mathbf{x})$ with an $L$-Lipschitz continuous gradient as follows:
	\begin{align}
	f(\mathbf{x})=\frac{1}{n}\sum_{i=1}^{n}f_i(\mathbf{x}),
	\end{align}
	where $\mathbb{E}_{\boldsymbol{\zeta}}[\nabla f_i(\mathbf{x})]=$
	\end{theorem}
	\end{comment}

\end{lemma}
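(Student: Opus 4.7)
The plan is to establish the three inequalities independently, since they are classical facts of convex analysis that do not interact with the LDS problem setup of the paper. The first two are cited directly to Nesterov's monograph, while \eqref{eq:sum} is an elementary parallelogram-type bound. In each case I would just verify the standard textbook derivation.

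For \eqref{eq:str}, I would begin from the definition of $\mu$-strong convexity in the first-order form $f(\mathbf{y}) \geq f(\mathbf{x}) + \langle \nabla f(\mathbf{x}), \mathbf{y}-\mathbf{x}\rangle + \tfrac{\mu}{2}\|\mathbf{y}-\mathbf{x}\|_2^2$, write the symmetric inequality obtained by swapping $\mathbf{x}$ and $\mathbf{y}$, and sum the two. The function values $f(\mathbf{x})$ and $f(\mathbf{y})$ cancel, and the cross terms combine to $-\langle \nabla f(\mathbf{x})-\nabla f(\mathbf{y}),\mathbf{y}-\mathbf{x}\rangle$ on the right, so a single sign flip delivers the claimed inequality.

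For \eqref{eq:lip} I would invoke the standard co-coercivity (Baillon--Haddad) argument. Fix $\mathbf{y}$ and consider the auxiliary function $\phi(\mathbf{z}) = f(\mathbf{z}) - \langle \nabla f(\mathbf{y}), \mathbf{z}\rangle$, which is convex with $L$-Lipschitz gradient and minimized at $\mathbf{z}=\mathbf{y}$ (since $\nabla \phi(\mathbf{y})=\mathbf{0}$). Applying the descent lemma $\phi(\mathbf{u}) \leq \phi(\mathbf{x}) + \langle \nabla \phi(\mathbf{x}),\mathbf{u}-\mathbf{x}\rangle + \tfrac{L}{2}\|\mathbf{u}-\mathbf{x}\|_2^2$ and minimizing the right-hand side over $\mathbf{u}$ produces $\phi(\mathbf{y}) \leq \phi(\mathbf{x}) - \tfrac{1}{2L}\|\nabla \phi(\mathbf{x})\|_2^2$. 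Rearranging gives $f(\mathbf{y}) \leq f(\mathbf{x}) + \langle \nabla f(\mathbf{y}),\mathbf{y}-\mathbf{x}\rangle - \tfrac{1}{2L}\|\nabla f(\mathbf{x})-\nabla f(\mathbf{y})\|_2^2$; adding this to the copy with $\mathbf{x},\mathbf{y}$ swapped eliminates the function values and yields the co-coercivity bound with constant $1/L$. This is the only step that requires more than one line of algebra, and would be the principal obstacle if I had to reconstruct it from scratch.

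For \eqref{eq:sum}, I would expand $\|\mathbf{x}+\mathbf{y}\|_2^2 = \|\mathbf{x}\|_2^2 + 2\langle \mathbf{x},\mathbf{y}\rangle + \|\mathbf{y}\|_2^2$ and note that $\|\mathbf{x}-\mathbf{y}\|_2^2\geq 0$ rearranges to $2\langle \mathbf{x},\mathbf{y}\rangle \leq \|\mathbf{x}\|_2^2 + \|\mathbf{y}\|_2^2$; substituting this upper bound on the cross term gives the inequality immediately. Since the paper explicitly attributes \eqref{eq:str} and \eqref{eq:lip} to \cite{nesterov1998introductory}, a fully rigorous proof could equivalently be replaced by a direct citation; my write-up would simply reproduce the two-line derivations above to keep the exposition self-contained.
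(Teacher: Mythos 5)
Your derivations are correct: summing the two first-order strong-convexity inequalities gives the monotonicity bound, the auxiliary-function/descent-lemma argument is the standard proof of co-coercivity with constant $1/L$, and the last inequality follows from expanding the square and bounding the cross term. The paper itself offers no proof beyond citing Nesterov's monograph for the first two inequalities and stating the third, so your write-up simply makes explicit the standard arguments behind those citations and is consistent with the paper.
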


\subsection{Proof of Theorem \ref{th:offline}}\label{pr:offline}
%\begin{proof}[Theorem \ref{th:offline}]
Let us assume that the initial state of the system is denoted by $\mathbf{h}_0$. Each iteration of the offline SGD is as follows: 
\begin{align}
	\hat{\boldsymbol{\Theta}}_{\tau,T}=\hat{\boldsymbol{\Theta}}_{\tau-1,T}-\eta(\hat{\boldsymbol{\Theta}}_{\tau-1,T}\mathbf{x}_t-\mathbf{y}_t)\mathbf{x}_t',\nonumber
\end{align}
where $t$ is randomly chosen from $\{T,T+1,\dots,N\}$ with uniform probability. We let the first-order solution obtained from \eqref{opt:lim} be denoted by $\hat{\boldsymbol{\Theta}}_{T}$ and the ground truth solution is represented by $\boldsymbol{\Theta}_{T}$. The difference between $\hat{\boldsymbol{\Theta}}_{T}$ and the ground truth solution is denoted by $\boldsymbol{\nu}$ and defined as $\boldsymbol{\nu}=\hat{\boldsymbol{\Theta}}_{T}-\boldsymbol{\Theta}_{T}$. In Theorem \ref{th:prob}, the Frobenius norm of the difference is bounded as $\mathbb{E}_{\mathbf{u}}[\mathbb{E}_{\boldsymbol{\zeta}}[\norm{\boldsymbol{\nu}}]]_F^2\leq \chi_N^2$, where $N$ is the batch size. Let $\boldsymbol{\omega}_\tau$ denote the difference between $\hat{\boldsymbol{\Theta}}_{\tau,T}$ (in $\tau^{\text{th}}$ iteration) and $\hat{\boldsymbol{\Theta}}_T$ as $\boldsymbol{\omega}_\tau=\hat{\boldsymbol{\Theta}}_{\tau,T}-\hat{\boldsymbol{\Theta}}_{T}$. Based on the definition of $\boldsymbol{\omega}_\tau$, the update rule for $\boldsymbol{\omega}_\tau$ is $\boldsymbol{\omega}_{\tau+1}=\boldsymbol{\omega}_\tau-\eta(\hat{\boldsymbol{\Theta}}_{\tau,T}\mathbf{x}_t-\mathbf{y}_t)\mathbf{x}_t'$.  We have
\begin{align}
	&\hat{\boldsymbol{\Theta}}_{\tau,T}\mathbf{x}_t-\mathbf{y}_t=\hat{\boldsymbol{\Theta}}_{\tau,T}\mathbf{x}_t-\boldsymbol{\Theta}_{T}\mathbf{x}_t-\boldsymbol{\zeta}_t-\mathbf{C}\mathbf{A}^{T-1}\mathbf{h}_{t-T+1}\nonumber\\
	&=\hat{\boldsymbol{\Theta}}_{\tau,T}\mathbf{x}_t-\hat{\boldsymbol{\Theta}}_{T}\mathbf{x}_t+\boldsymbol{\nu}\mathbf{x}_t-\boldsymbol{\zeta}_t-\mathbf{C}\mathbf{A}^{T-1}\mathbf{h}_{t-T+1}\nonumber\\
	&=(\hat{\boldsymbol{\Theta}}_{\tau,T}-\hat{\boldsymbol{\Theta}}_{T})\mathbf{x}_t+\boldsymbol{\nu}\mathbf{x}_t-\boldsymbol{\zeta}_t-\mathbf{C}\mathbf{A}^{T-1}\mathbf{h}_{t-T+1}=\boldsymbol{\omega}_\tau\mathbf{x}_t+\boldsymbol{\nu}\mathbf{x}_t-\boldsymbol{\zeta}_t-\mathbf{C}\mathbf{A}^{T-1}\mathbf{h}_{t-T+1}.\nonumber
\end{align}
We bound the optimality gap as follows:
\begin{align}
	&\norm{\boldsymbol{\omega}_{\tau+1}}_F^2={\norm{\boldsymbol{\omega}_\tau-\eta(\hat{\boldsymbol{\Theta}}_{\tau,T}\mathbf{x}_t-\mathbf{y}_t)\mathbf{x}_t'}_F^2}=\norm{\boldsymbol{\omega}_\tau}_F^2-2\eta \mathbf{x}_t'\boldsymbol{\omega}_\tau'(\hat{\boldsymbol{\Theta}}_{\tau,T}\mathbf{x}_t-\mathbf{y}_t)\nonumber\\
	&+\eta^2\norm{(\hat{\boldsymbol{\Theta}}_{\tau,T}\mathbf{x}_t-\mathbf{y}_t)\mathbf{x}_t'}_F^2.\nonumber
\end{align}
We take expectations with respect to $t$ and $\mathbf{u}$ and obtain
\begin{align}
	&\mathbb{E}_{\mathbf{u}}[\mathbb{E}_{t}[\norm{\boldsymbol{\omega}_{\tau+1}}_F^2]]=\norm{\boldsymbol{\omega}_{\tau}}_F^2-2\eta \mathbb{E}_{\mathbf{u}}[\mathbb{E}_{t}[\mathbf{x}_t'\boldsymbol{\omega}_\tau'(\hat{\boldsymbol{\Theta}}_{\tau,T}\mathbf{x}_t-\mathbf{y}_t)]]+\eta^2\mathbb{E}_{\mathbf{u}}[\mathbb{E}_{t}[\norm{(\hat{\boldsymbol{\Theta}}_{\tau,T}\mathbf{x}_t-\mathbf{y}_t)\mathbf{x}_t'}_F^2]].\label{eq:bes}
\end{align}
To simplify \eqref{eq:bes}, we obtain a lower-bound for $\mathbb{E}_{\mathbf{u}}[\mathbb{E}_{t}[\mathbf{x}_t'\boldsymbol{\omega}_\tau'(\hat{\boldsymbol{\Theta}}_{\tau,T}\mathbf{x}_t-\mathbf{y}_t)]]$. Using \eqref{eq:str}, we have:
\begin{align}
	&\text{Tr}\left[\mathbb{E}_{\mathbf{u}}\left[\mathbb{E}_{t}[(\hat{\boldsymbol{\Theta}}_{\tau,T}\mathbf{x}_t-\mathbf{y}_t)\mathbf{x}_t'-(\hat{\boldsymbol{\Theta}}_{T}\mathbf{x}_t-\mathbf{y}_t)\mathbf{x}_t']\right]\boldsymbol{\omega}_{\tau}'\right]\geq m\:T\min(\boldsymbol{\sigma}^{\cdot 2})\norm{\hat{\boldsymbol{\Theta}}_{\tau,T}-\hat{\boldsymbol{\Theta}}_{T}}_F^2,\nonumber\\
	\Rightarrow&\text{Tr}\left[\mathbb{E}_{\mathbf{u}}\left[\mathbb{E}_{t}[(\hat{\boldsymbol{\Theta}}_{\tau,T}\mathbf{x}_t-\mathbf{y}_t)\mathbf{x}_t']\boldsymbol{\omega}_{\tau}'\right]\right]\nonumber\\
	&-\text{Tr}\left[\mathbb{E}_{\mathbf{u}}\left[\mathbb{E}_{t}[(\hat{\boldsymbol{\Theta}}_{T}\mathbf{x}_t-(\hat{\boldsymbol{\Theta}}_{T}-\boldsymbol{\nu})\mathbf{x}_t-\boldsymbol{\zeta}_t-\mathbf{C}\mathbf{A}^{T-1}\mathbf{h}_{t-T+1})\mathbf{x}_t']\boldsymbol{\omega}_{\tau}'\right]\right]\nonumber\\
	&\geq m\:T\min(\boldsymbol{\sigma}^{\cdot 2})\norm{\hat{\boldsymbol{\Theta}}_{\tau,T}-\hat{\boldsymbol{\Theta}}_{T}}_F^2,\nonumber\\
	\Rightarrow&\text{Tr}\left[\mathbb{E}_{\mathbf{u}}\left[\mathbb{E}_{t}[(\hat{\boldsymbol{\Theta}}_{\tau,T}\mathbf{x}_t-\mathbf{y}_t)\mathbf{x}_t']\boldsymbol{\omega}_{\tau}'\right]\right]-\text{Tr}\left[\mathbb{E}_{\mathbf{u}}\left[\mathbb{E}_{t}[\boldsymbol{\nu}\mathbf{x}_t\mathbf{x}_t']\boldsymbol{\omega}_{\tau}'\right]\right]\nonumber\\
	&-\frac{1}{N-T+1}\sum_{t=2T}^{N}\text{Tr}[\mathbb{E}_{\mathbf{u}}[\mathbf{C}\mathbf{A}^{T-1}\mathbf{h}_{t-T+1}\mathbf{x}_{t}'\boldsymbol{\omega}_{\tau}']]\geq m\:T\min(\boldsymbol{\sigma}^{\cdot 2})\norm{\hat{\boldsymbol{\Theta}}_{\tau,T}-\hat{\boldsymbol{\Theta}}_{T}}_F^2,\nonumber\\
	\Rightarrow&\mathbb{E}_{\mathbf{u}}\left[\mathbb{E}_{t}[\mathbf{x}_t'\boldsymbol{\omega}_\tau'(\hat{\boldsymbol{\Theta}}_{\tau,T}\mathbf{x}_t-\mathbf{y}_t)]\right]\geq \text{Tr}\left[\mathbb{E}_{\mathbf{u}}\left[\mathbb{E}_{t}[\boldsymbol{\nu}\mathbf{x}_t\mathbf{x}_t']\boldsymbol{\omega}_{\tau}'\right]\right]+m\:T\min(\boldsymbol{\sigma}^{\cdot 2})\norm{\hat{\boldsymbol{\Theta}}_{\tau,T}-\hat{\boldsymbol{\Theta}}_{T}}_F^2,\label{eq:bes1}
\end{align}
In the above chain of inequalities, we have $\frac{1}{N-T+1}\sum_{t=2T}^{N}\text{Tr}[\mathbb{E}_{\mathbf{u}}[\mathbf{C}\mathbf{A}^{T-1}\mathbf{h}_{t-T+1}\mathbf{x}_{t}'\boldsymbol{\omega}_{\tau}']]=0$ similar to \eqref{eq:min}.
Before we bound $\mathbb{E}_{\mathbf{u}}[\mathbb{E}_{t}[\norm{(\hat{\boldsymbol{\Theta}}_{\tau,T}\mathbf{x}_t-\mathbf{y}_t)\mathbf{x}_t'}_F^2]]$, first, we demonstrate that $\mathbb{E}_{\mathbf{u}}[\norm{\hat{\boldsymbol{\Theta}}_{\tau,T}\mathbf{x}_t-\mathbf{y}_t}_2^2]$ has a Lipschitz continuous gradient as follows:
\begin{align}
	&\mathbb{E}_{\mathbf{u}}\left[\norm{(\hat{\boldsymbol{\Theta}}_{\tau,T}\mathbf{x}_t-\mathbf{y}_t)\mathbf{x}_t'-(\hat{\boldsymbol{\Theta}}_{T}\mathbf{x}_t-\mathbf{y}_t)\mathbf{x}_t'}_F^2\right] \leq \mathbb{E}_{\mathbf{u}}\left[ \norm{(\hat{\boldsymbol{\Theta}}_{\tau,T}\mathbf{x}_t-\hat{\boldsymbol{\Theta}}_{T}\mathbf{x}_t)\mathbf{x}'_t}_F^2 \right]     \nonumber\\
	& \leq\mathbb{E}_{\mathbf{u}}\left[ \norm{\hat{\boldsymbol{\Theta}}_{\tau,T}-\hat{\boldsymbol{\Theta}}_{T}}_F^2 \right] \mathbb{E}_{\mathbf{u}}\left[ \norm{\mathbf{x}_t\mathbf{x}'_t}_F^2 \right] \leq m^2\:T^2(\max(\boldsymbol{\sigma}^{\cdot 2}))^2\mathbb{E}_{\mathbf{u}}\left[ \norm{\hat{\boldsymbol{\Theta}}_{\tau,T}-\hat{\boldsymbol{\Theta}}_{T}}_F^2 \right],\label{eq:shir2}
\end{align}
where $m\:T\max(\boldsymbol{\sigma}^{\cdot 2})$ is the Lipschitz constant. We have
\begin{align}
	&\mathbb{E}_{\mathbf{u}}[\mathbb{E}_{t}[\norm{(\hat{\boldsymbol{\Theta}}_{\tau,T}\mathbf{x}_t-\mathbf{y}_t)\mathbf{x}_t'}_F^2]]\leq \mathbb{E}_{\mathbf{u}}[\mathbb{E}_{t}[\norm{(\hat{\boldsymbol{\Theta}}_{\tau,T}\mathbf{x}_t-\mathbf{y}_t)\mathbf{x}_t'-(\hat{\boldsymbol{\Theta}}_{T}\mathbf{x}_t-\mathbf{y}_t)\mathbf{x}_t'+(\hat{\boldsymbol{\Theta}}_{T}\mathbf{x}_t-\mathbf{y}_t)\mathbf{x}_t'}_F^2]]\nonumber\\
	&\overset{(a)}{\leq} 2\mathbb{E}_{\mathbf{u}}[\mathbb{E}_{t}[\norm{(\hat{\boldsymbol{\Theta}}_{T}\mathbf{x}_t-\mathbf{y}_t)\mathbf{x}_t'-(\hat{\boldsymbol{\Theta}}_{\tau,T}\mathbf{x}_t-\mathbf{y}_t)\mathbf{x}_t'}_F^2]]+2\mathbb{E}_{\mathbf{u}}[\mathbb{E}_{t}[\norm{(\hat{\boldsymbol{\Theta}}_{T}\mathbf{x}_t-\mathbf{y}_t)\mathbf{x}_t'}_F^2]]\nonumber\\
	&\overset{(b)}{\leq} 2m\:T\max(\boldsymbol{\sigma}^{\cdot 2}) \mathbb{E}_{\mathbf{u}}\left[\mathbb{E}_{t}[\mathbf{x}_t'\boldsymbol{\omega}_\tau'(\hat{\boldsymbol{\Theta}}_{\tau,T}\mathbf{x}_t-\mathbf{y}_t-\hat{\boldsymbol{\Theta}}_{T}\mathbf{x}_t+\mathbf{y}_t)]\right]
	\nonumber\\
	&  +2\mathbb{E}_{\mathbf{u}}\left[\mathbb{E}_t[\norm{(\boldsymbol{\nu}\mathbf{x}_t-\mathbf{C}\mathbf{A}^{T-1}\mathbf{h}_{t-T+1}-\boldsymbol{\zeta}_t)\mathbf{x}_t'}_F^2]\right]\nonumber\\
	&\leq 2m\:T\max(\boldsymbol{\sigma}^{\cdot 2}) \mathbb{E}_{\mathbf{u}}\left[\mathbb{E}_{t}[\mathbf{x}_t'\boldsymbol{\omega}_\tau'(\hat{\boldsymbol{\Theta}}_{\tau,T}\mathbf{x}_t-\mathbf{y}_t)]\right]-2m\:T\max(\boldsymbol{\sigma}^{\cdot 2}) \mathbb{E}_{\mathbf{u}}\left[\mathbb{E}_{t}[\mathbf{x}_t'\boldsymbol{\omega}_\tau'(\hat{\boldsymbol{\Theta}}_{T}\mathbf{x}_t-\mathbf{y}_t)]\right]
	\nonumber\\
	&  +2\mathbb{E}_{\mathbf{u}}\left[\mathbb{E}_t[\norm{(\boldsymbol{\nu}\mathbf{x}_t-\mathbf{C}\mathbf{A}^{T-1}\mathbf{h}_{t-T+1}-\boldsymbol{\zeta}_t)\mathbf{x}_t'}_F^2]\right]\nonumber\\
	&\leq 2m\:T\max(\boldsymbol{\sigma}^{\cdot 2}) \mathbb{E}_{\mathbf{u}}\left[\mathbb{E}_{t}[\mathbf{x}_t'\boldsymbol{\omega}_\tau'(\hat{\boldsymbol{\Theta}}_{\tau,T}\mathbf{x}_t-\mathbf{y}_t)]\right]\nonumber\\
	& -2m\:T\max(\boldsymbol{\sigma}^{\cdot 2}) \mathbb{E}_{\mathbf{u}}\left[\mathbb{E}_{t}[\mathbf{x}_t'\boldsymbol{\omega}_\tau'(\boldsymbol{\nu}\mathbf{x}_t-\mathbf{C}\mathbf{A}^{T-1}\mathbf{h}_{t-T+1}-\boldsymbol{\zeta}_t)]\right]
	\nonumber\\
	&  +2\mathbb{E}_{\mathbf{u}}\left[\mathbb{E}_t[\norm{(\boldsymbol{\nu}\mathbf{x}_t-\mathbf{C}\mathbf{A}^{T-1}\mathbf{h}_{t-T+1}-\boldsymbol{\zeta}_t)\mathbf{x}_t'}_F^2]\right]\nonumber\\
	& \leq 2m\:T\max(\boldsymbol{\sigma}^{\cdot 2}) \mathbb{E}_{\mathbf{u}}\left[\mathbb{E}_{t}[\mathbf{x}_t'\boldsymbol{\omega}_\tau'(\hat{\boldsymbol{\Theta}}_{\tau,T}\mathbf{x}_t-\mathbf{y}_t)]\right]\nonumber\\
	& -2m\:T\max(\boldsymbol{\sigma}^{\cdot 2}) \mathbb{E}_{\mathbf{u}}\left[\mathbb{E}_{t}[\mathbf{x}_t'\boldsymbol{\omega}_\tau'\boldsymbol{\nu}\mathbf{x}_t]\right]
	\nonumber\\
	&  +2\mathbb{E}_{\mathbf{u}}\left[\mathbb{E}_t[\norm{(\boldsymbol{\nu}\mathbf{x}_t-\mathbf{C}\mathbf{A}^{T-1}\mathbf{h}_{t-T+1}-\boldsymbol{\zeta}_t)\mathbf{x}_t'}_F^2]\right],\nonumber
\end{align}
where (a) follows from \eqref{eq:sum} and (b) follows from \eqref{eq:lip}. We simplify \eqref{eq:bes} using \eqref{eq:bes1} and the above inequality as follows:
\begin{align}
	& \mathbb{E}_{\mathbf{u}}[\mathbb{E}_{t}[\norm{\boldsymbol{\omega}_{\tau+1}}_F^2]]\leq \norm{\boldsymbol{\omega}_{\tau}}_F^2-2\eta \mathbb{E}_{\mathbf{u}}[\mathbb{E}_{t}[\mathbf{x}_t'\boldsymbol{\omega}_\tau'(\hat{\boldsymbol{\Theta}}_{\tau,T}\mathbf{x}_t-\mathbf{y}_t)]]\nonumber\\
	&+2\eta^2m\:T\max(\boldsymbol{\sigma}^{\cdot 2}) \mathbb{E}_{\mathbf{u}}\left[\mathbb{E}_{t}[\mathbf{x}_t'\boldsymbol{\omega}_\tau'(\hat{\boldsymbol{\Theta}}_{\tau,T}\mathbf{x}_t-\mathbf{y}_t)]\right]\nonumber\\
	&+2\eta^2 \mathbb{E}_{\mathbf{u}}\left[\mathbb{E}_t[\norm{(\boldsymbol{\nu}\mathbf{x}_t-\mathbf{C}\mathbf{A}^{T-1}\mathbf{h}_{t-T+1}-\boldsymbol{\zeta}_t)\mathbf{x}_t'}_F^2]\right] -2\eta^2m\:T\max(\boldsymbol{\sigma}^{\cdot 2}) \mathbb{E}_{\mathbf{u}}\left[\mathbb{E}_{t}[\mathbf{x}_t'\boldsymbol{\omega}_\tau'\boldsymbol{\nu}\mathbf{x}_t]\right]
	\nonumber\\
	&\leq \norm{\boldsymbol{\omega}_{\tau}}_F^2+(-2\eta+2\eta^2m\:T\max(\boldsymbol{\sigma}^{\cdot 2}))\mathbb{E}_{\mathbf{u}}[\mathbb{E}_{t}[\mathbf{x}_t'\boldsymbol{\omega}_\tau'(\hat{\boldsymbol{\Theta}}_{\tau,T}\mathbf{x}_t-\mathbf{y}_t)]]\nonumber\\
	&+2\eta^2 \mathbb{E}_{\mathbf{u}}\left[\mathbb{E}_t[\norm{(\boldsymbol{\nu}\mathbf{x}_t-\mathbf{C}\mathbf{A}^{T-1}\mathbf{h}_{t-T+1}-\boldsymbol{\zeta}_t)\mathbf{x}_t'}_F^2]\right]-2\eta^2m\:T\max(\boldsymbol{\sigma}^{\cdot 2}) \mathbb{E}_{\mathbf{u}}\left[\mathbb{E}_{t}[\mathbf{x}_t'\boldsymbol{\omega}_\tau'\boldsymbol{\nu}\mathbf{x}_t]\right]\nonumber\\
	& \overset{(a)}{\leq} \norm{\boldsymbol{\omega}_{\tau}}_F^2(1-2\eta m\:T\min(\boldsymbol{\sigma}^{\cdot 2})+2\eta^2m^2\:T^2\min(\boldsymbol{\sigma}^{\cdot 2})\max(\boldsymbol{\sigma}^{\cdot 2}) )\nonumber   \\
	&+(-2\eta+2\eta^2m\:T\max(\boldsymbol{\sigma}^{\cdot 2}))\text{Tr}\left[\mathbb{E}_{\mathbf{u}}\left[\mathbb{E}_{t}[\boldsymbol{\nu}\mathbf{x}_t\mathbf{x}_t']\boldsymbol{\omega}_{\tau}'\right]\right]\nonumber   \\
	&-2\eta^2m\:T\max(\boldsymbol{\sigma}^{\cdot 2}) \mathbb{E}_{\mathbf{u}}\left[\mathbb{E}_{t}[\mathbf{x}_t'\boldsymbol{\omega}_\tau'\boldsymbol{\nu}\mathbf{x}_t]\right]\nonumber   \\
	&+2\eta^2 \mathbb{E}_{\mathbf{u}}\left[\mathbb{E}_t[\norm{(\boldsymbol{\nu}\mathbf{x}_t-\mathbf{C}\mathbf{A}^{T-1}\mathbf{h}_{t-T+1}-\boldsymbol{\zeta}_t)\mathbf{x}_t'}_F^2]\right]\nonumber\\
	& \overset{(b)}{\leq}  \norm{\boldsymbol{\omega}_{\tau}}_F^2(1-2\eta m\:T\min(\boldsymbol{\sigma}^{\cdot 2})+2\eta^2m^2\:T^2\min(\boldsymbol{\sigma}^{\cdot 2})\max(\boldsymbol{\sigma}^{\cdot 2}))\nonumber   \\
	&+2\eta mT\max(\boldsymbol{\sigma}^{\cdot 2})\norm{\boldsymbol{\nu}}_F\norm{\boldsymbol{\omega}_{0}}_F +2\eta^2 \mathbb{E}_{\mathbf{u}}\left[\mathbb{E}_t[\norm{(\boldsymbol{\nu}\mathbf{x}_t-\mathbf{C}\mathbf{A}^{T-1}\mathbf{h}_{t-T+1}-\boldsymbol{\zeta}_t)\mathbf{x}_t'}_F^2]\right]\nonumber\\
	&+2\eta^2m^2\:T^2(\max(\boldsymbol{\sigma}^{\cdot 2}))^2 |\text{Tr}[\boldsymbol{\nu}\boldsymbol{\omega}_{\tau}']|\nonumber\\
	& \leq  \norm{\boldsymbol{\omega}_{\tau}}_F^2(1-2\eta m\:T\min(\boldsymbol{\sigma}^{\cdot 2})+2\eta^2m^2\:T^2\min(\boldsymbol{\sigma}^{\cdot 2})\max(\boldsymbol{\sigma}^{\cdot 2}))\nonumber   \\
	&+(\eta mT\max(\boldsymbol{\sigma}^{\cdot 2})+\eta^2m^2\:T^2(\max(\boldsymbol{\sigma}^{\cdot 2}))^2)(\norm{\boldsymbol{\nu}}_F^2+\norm{\boldsymbol{\omega}_{0}}_F^2) \nonumber   \\
	&+2\eta^2 \mathbb{E}_{\mathbf{u}}\left[\mathbb{E}_t[\norm{(\boldsymbol{\nu}\mathbf{x}_t-\mathbf{C}\mathbf{A}^{T-1}\mathbf{h}_{t-T+1}-\boldsymbol{\zeta}_t)\mathbf{x}_t'}_F^2]\right],\label{eq:bes2}
\end{align}
where (a) follows from \eqref{eq:bes1} and the fact that $\mathbb{E}_{\mathbf{u}}\left[\mathbb{E}_{t}[\mathbf{x}_t'\boldsymbol{\omega}_\tau'\boldsymbol{\nu}\mathbf{x}_t]\right]\leq mT\max(\boldsymbol{\sigma}^{\cdot 2})\text{Tr}[\boldsymbol{\nu}\boldsymbol{\omega}_\tau]$. Furthermore, (b) follows from Von Neumann's trace inequality and also the assumption that $-2\eta+2\eta^2m\:T\max(\boldsymbol{\sigma}^{\cdot 2})<0$. To simplify \eqref{eq:bes2}, we consider the following bound:
\begin{align}
	&      \mathbb{E}_{\boldsymbol{\zeta}}\left[\mathbb{E}_{\mathbf{u}}\left[\mathbb{E}_t[\norm{(\boldsymbol{\nu}\mathbf{x}_t-\mathbf{C}\mathbf{A}^{T-1}\mathbf{h}_{t-T+1}-\boldsymbol{\zeta}_t)\mathbf{x}_t'}_F^2]\right] \right]  \nonumber\\
	& \leq \norm{\boldsymbol{\nu}}_F^2\mathbb{E}_{\mathbf{u}}[\norm{\mathbf{x}_t\mathbf{x}_t'}_F^2]+\norm{\mathbf{C}}_F^2\norm{\mathbf{A}^{T-1}}_F^2\mathbb{E}_{\mathbf{u}}[\norm{\mathbf{h}_{t-T+1}}_2^2]\mathbb{E}_{\mathbf{u}}[\norm{\mathbf{x}_{t}}_2^2]+\mathbb{E}_{\mathbf{u}}[\norm{\mathbf{x}_{t}}_2^2]\mathbb{E}_{\boldsymbol{\zeta}}[\norm{\boldsymbol{\zeta}_t}_2^2]\nonumber\\
	&\leq m^2\:T^2(\max(\boldsymbol{\sigma}^{\cdot 2}))^2\chi_N^2+n^2mT\max(\boldsymbol{\sigma}^{\cdot 2})\ell \rho(\mathbf{A})^{2(T-1)}\gamma\norm{\mathbf{C}}_F^2+pmT\max(\boldsymbol{\sigma}^{\cdot 2})\max(\boldsymbol{\sigma}_\zeta^{\cdot 2}).\nonumber
\end{align}
Using the above inequality, we continue \eqref{eq:bes2} as follow:
\begin{align}
	&\mathbb{E}_{\boldsymbol{\zeta}}\left[\mathbb{E}_{\mathbf{u}}[\mathbb{E}_{t}[\norm{\boldsymbol{\omega}_{\tau+1}}_F^2]]\right]\leq  \norm{\boldsymbol{\omega}_{\tau}}_F^2(1-2\eta m\:T\min(\boldsymbol{\sigma}^{\cdot 2})+2\eta^2m^2\:T^2\min(\boldsymbol{\sigma}^{\cdot 2})\max(\boldsymbol{\sigma}^{\cdot 2}))\nonumber\\
	& +(\eta mT\max(\boldsymbol{\sigma}^{\cdot 2})+\eta^2m^2\:T^2(\max(\boldsymbol{\sigma}^{\cdot 2}))^2)(\norm{\boldsymbol{\nu}}_F^2+\norm{\boldsymbol{\omega}_{0}}_F^2)\nonumber   \\
	& +2\eta^2\:\Big[m^2\:T^2(\max(\boldsymbol{\sigma}^{\cdot 2}))^2\chi_N^2+n^2mT\max(\boldsymbol{\sigma}^{\cdot 2})\ell \rho(\mathbf{A})^{2(T-1)}\gamma\norm{\mathbf{C}}_F^2\nonumber\\
	& +pmT\max(\boldsymbol{\sigma}^{\cdot 2})\max(\boldsymbol{\sigma}_\zeta^{\cdot 2})\Big].\label{eq:tir}
\end{align}
We observe if $1-2\eta m\:T\min(\boldsymbol{\sigma}^{\cdot 2})+2\eta^2m^2\:T^2\min(\boldsymbol{\sigma}^{\cdot 2})\max(\boldsymbol{\sigma}^{\cdot 2})\leq 1$, we obtain $\mathbb{E}_{t}[\mathbb{E}_{\mathbf{u}}[\norm{\boldsymbol{\omega}_{\tau+1}}_F^2]]\leq \norm{\boldsymbol{\omega}_{\tau}}_F^2$. Therefore, $\eta$ should satisfy
\begin{align}
	\eta \leq \frac{1}{m\:T\max(\boldsymbol{\sigma}^{\cdot 2})}.\label{eq:step}
\end{align}
To make the additive constant terms in \eqref{eq:tir} small enough, we can choose $\eta$ close to zero although very small $\eta$ makes the coefficient of $\mathbb{E}_{\mathbf{u}}\left[\mathbb{E}_{\boldsymbol{\zeta}}\left[\norm{\boldsymbol{\omega}_\tau}_F^2\right]\right]$ close to one and makes the convergence rate slow. Therefore, more iterations are required to reach a certain error.

From recursion, we bound the Frobenius norm distance between the solution of \eqref{opt:lim} and the initial point as follows:
\begin{align}
	&\mathbb{E}_{\mathbf{t}}\left[\mathbb{E}_{\mathbf{u}}\left[\mathbb{E}_{\boldsymbol{\zeta}}\left[{\norm{\boldsymbol{\omega}_{\tau}}_F^2}\right]\right]\right]<\norm{\boldsymbol{\omega}_0}_F^2\left(1-2\eta m\:T\min(\boldsymbol{\sigma}^{\cdot 2})+2\eta^2m^2\:T^2\min(\boldsymbol{\sigma}^{\cdot 2})\max(\boldsymbol{\sigma}^{\cdot 2})\right)^{\tau}\nonumber\\
	&+\:\sum_{i=0}^{\tau-1}\:\left(1-2\eta m\:T\min(\boldsymbol{\sigma}^{\cdot 2})+2\eta^2m^2\:T^2\min(\boldsymbol{\sigma}^{\cdot 2})\max(\boldsymbol{\sigma}^{\cdot 2})\right)^i\:\Big[2\eta^2m^2\:T^2(\max(\boldsymbol{\sigma}^{\cdot 2}))^2\chi_N^2\nonumber\\
	&+2n^2\eta^2mT\max(\boldsymbol{\sigma}^{\cdot 2})\ell \rho(\mathbf{A})^{2(T-1)}\gamma\norm{\mathbf{C}}_F^2+2\eta^2pmT\max(\boldsymbol{\sigma}^{\cdot 2})\max(\boldsymbol{\sigma}_\zeta^{\cdot 2})\nonumber\\
	&+(\eta mT\max(\boldsymbol{\sigma}^{\cdot 2})+\eta^2m^2\:T^2(\max(\boldsymbol{\sigma}^{\cdot 2}))^2)(\norm{\boldsymbol{\nu}}_F^2+\norm{\boldsymbol{\omega}_{0}}_F^2)\Big]\nonumber\\
	&<\norm{\boldsymbol{\omega}_0}_F^2\left(1-2\eta m\:T\min(\boldsymbol{\sigma}^{\cdot 2})+2\eta^2m^2\:T^2\min(\boldsymbol{\sigma}^{\cdot 2})\max(\boldsymbol{\sigma}^{\cdot 2})\right)^\tau\nonumber\\
	&+\frac{2\eta^2m^2\:T^2(\max(\boldsymbol{\sigma}^{\cdot 2}))^2\chi_N^2+(\eta mT\max(\boldsymbol{\sigma}^{\cdot 2})+\eta^2m^2\:T^2(\max(\boldsymbol{\sigma}^{\cdot 2}))^2)(\chi_N^2+\norm{\boldsymbol{\omega}_{0}}_F^2)}{1-2\eta m\:T\min(\boldsymbol{\sigma}^{\cdot 2})+2\eta^2m^2\:T^2\min(\boldsymbol{\sigma}^{\cdot 2})\max(\boldsymbol{\sigma}^{\cdot 2})}\nonumber\\
	&+\frac{2n^2\eta^2mT\max(\boldsymbol{\sigma}^{\cdot 2})\ell \rho(\mathbf{A})^{2(T-1)}\gamma\norm{\mathbf{C}}_F^2+2\eta^2pmT\max(\boldsymbol{\sigma}^{\cdot 2})\max(\boldsymbol{\sigma}_\zeta^{\cdot 2})}{1-2\eta m\:T\min(\boldsymbol{\sigma}^{\cdot 2})+2\eta^2m^2\:T^2\min(\boldsymbol{\sigma}^{\cdot 2})\max(\boldsymbol{\sigma}^{\cdot 2})}.\nonumber
\end{align}
The additive terms can become as small as desired by adjusting $\eta$. With smaller step-size, the proposed offline SGD requires additional iterations to reach a certain neighborhood of the ground truth solution. Suppose that the distance between $\hat{\boldsymbol{\Theta}}_{\tau,T}$ and the ground truth $\boldsymbol{\Theta}_T$ is denoted by $\boldsymbol{\phi}_\tau$. Then, we bound $\mathbb{E}_{\mathbf{t}}\left[\mathbb{E}_{\mathbf{u}}\left[\mathbb{E}_{\boldsymbol{\zeta}}[\boldsymbol{\phi}_\tau]\right]\right]$ as follows:
\begin{align}
	&\mathbb{E}_{\mathbf{t}}\left[\mathbb{E}_{\mathbf{u}}\left[\mathbb{E}_{\boldsymbol{\zeta}}\left[\norm{\boldsymbol{\phi}_\tau}_F^2\right]\right]\right]=\mathbb{E}_{\mathbf{t}}\left[\mathbb{E}_{\mathbf{u}}\left[\mathbb{E}_{\boldsymbol{\zeta}}\left[\norm{\hat{\boldsymbol{\Theta}}_{\tau,T}-\boldsymbol{\Theta}_{T}}_F^2\right]\right]\right]\nonumber\\
	&=\mathbb{E}_{\mathbf{t}}\left[\mathbb{E}_{\mathbf{u}}\left[\mathbb{E}_{\boldsymbol{\zeta}}\left[\norm{\hat{\boldsymbol{\Theta}}_{\tau,T}-\hat{\boldsymbol{\Theta}}_{T}+\hat{\boldsymbol{\Theta}}_{T}-\boldsymbol{\Theta}_{T}}_F^2\right]\right]\right]\nonumber\\
	&=\mathbb{E}_{\mathbf{t}}\left[\mathbb{E}_{\mathbf{u}}\left[\mathbb{E}_{\boldsymbol{\zeta}}\left[\norm{\boldsymbol{\omega}_\tau+\boldsymbol{\nu}}_F^2\right]\right]\right]\leq \mathbb{E}_{\mathbf{t}}\left[\mathbb{E}_{\mathbf{u}}\left[\mathbb{E}_{\boldsymbol{\zeta}}\left[\norm{\boldsymbol{\omega}_\tau}_F^2\right]\right]\right]+\mathbb{E}_{\mathbf{u}}\left[\mathbb{E}_{\boldsymbol{\zeta}}\left[\norm{\boldsymbol{\nu}}_F^2\right]\right]\nonumber\\
	&<\norm{\boldsymbol{\omega}_0}_F^2\left(1-2\eta m\:T\min(\boldsymbol{\sigma}^{\cdot 2})+2\eta^2m^2\:T^2\min(\boldsymbol{\sigma}^{\cdot 2})\max(\boldsymbol{\sigma}^{\cdot 2})\right)^\tau\nonumber\\
	&+\underbracea{\frac{2\eta^2m^2\:T^2(\max(\boldsymbol{\sigma}^{\cdot 2}))^2\chi_N^2+(\eta mT\max(\boldsymbol{\sigma}^{\cdot 2})+\eta^2m^2\:T^2(\max(\boldsymbol{\sigma}^{\cdot 2}))^2)(\chi_N^2+\norm{\boldsymbol{\omega}_{0}}_F^2)}{1-2\eta m\:T\min(\boldsymbol{\sigma}^{\cdot 2})+2\eta^2m^2\:T^2\min(\boldsymbol{\sigma}^{\cdot 2})\max(\boldsymbol{\sigma}^{\cdot 2})}}\nonumber\\
	&\underbracebd{+\frac{2n^2\eta^2mT\max(\boldsymbol{\sigma}^{\cdot 2})\ell \rho(\mathbf{A})^{2(T-1)}\gamma\norm{\mathbf{C}}_F^2+2\eta^2pmT\max(\boldsymbol{\sigma}^{\cdot 2})\max(\boldsymbol{\sigma}_\zeta^{\cdot 2})}{1-2\eta m\:T\min(\boldsymbol{\sigma}^{\cdot 2})+2\eta^2m^2\:T^2\min(\boldsymbol{\sigma}^{\cdot 2})\max(\boldsymbol{\sigma}^{\cdot 2})}}_{\Delta_N}+\chi_N^2.\label{eq:mash}
\end{align}
\subsection{Proof of Theorem \ref{th:one}}
%\begin{proof}[Theorem \ref{th:one}]
The convergence analysis of the online SGD is similar to that for the offline SGD. The reason is that the expectation of gradients in \eqref{eq:offlineup} and \eqref{eq:onlineup} are equal. We assume that the initial state of the system $\mathbf{h}_0$ is known. Each iteration of the online SGD iteration is implemented as follows: 
\begin{align}
	\hat{\boldsymbol{\Theta}}_{t,T}=\hat{\boldsymbol{\Theta}}_{t-1,T}-\eta(\hat{\boldsymbol{\Theta}}_{t-1,T}\mathbf{x}_t-\mathbf{y}_t)\mathbf{x}_t',\nonumber
\end{align}
where $t$ corresponds to the newest input-output pair and $t\geq 2T$. Similar to the proof given in Appendix \ref{pr:offline}, $\boldsymbol{\omega}_t$ denotes the difference between $\hat{\boldsymbol{\Theta}}_{t,T}$ and $\hat{\boldsymbol{\Theta}}_T$, i.e., $\boldsymbol{\omega}_t=\hat{\boldsymbol{\Theta}}_{t,T}-\hat{\boldsymbol{\Theta}}_T$, where $\hat{\boldsymbol{\Theta}}_T$ is the minimizer of $\frac{1}{2t}\sum_{i=1}^{t}\norm{\mathbf{y}_i- \hat{\boldsymbol{\Theta}}_T\mathbf{x}_i}_2^2$. The major difference between the online SGD and the offline version is that the online SGD has access to  $t$ samples. 
Therefore, the  gap due to limited batch size characterized in Theorem \ref{th:prob} is bounded as $\norm{\boldsymbol{\nu}}_F\leq \chi_t$, where $\boldsymbol{\nu}=\hat{\boldsymbol{\Theta}}_T-\boldsymbol{\Theta}_T$. Based on this, $\hat{\boldsymbol{\Theta}}_{t,T}\mathbf{x}_t-\mathbf{y}_t=\hat{\boldsymbol{\Theta}}_{t,T}\mathbf{x}_t-\boldsymbol{\Theta}_{T}\mathbf{x}_t-\boldsymbol{\zeta}_t-\mathbf{C}\mathbf{A}^{T-1}\mathbf{h}_{t-T+1}=\hat{\boldsymbol{\Theta}}_{t,T}\mathbf{x}_t-\hat{\boldsymbol{\Theta}}_T\mathbf{x}_t+\boldsymbol{\nu}\mathbf{x}_t-\boldsymbol{\zeta}_t-\mathbf{C}\mathbf{A}^{T-1}\mathbf{h}_{t-T+1}=(\hat{\boldsymbol{\Theta}}_{t,T}-\hat{\boldsymbol{\Theta}}_{T})\mathbf{x}_t+\boldsymbol{\nu}\mathbf{x}_t-\boldsymbol{\zeta}_t-\mathbf{C}\mathbf{A}^{T-1}\mathbf{h}_{t-T+1}=\boldsymbol{\omega}_t\mathbf{x}_t+\boldsymbol{\nu}\mathbf{x}_t-\boldsymbol{\zeta}_t-\mathbf{C}\mathbf{A}^{T-1}\mathbf{h}_{t-T+1}$. As the expected gradient of the online SGD is equal to that of the offline SGD, with the step size \eqref{eq:step}, we bound the optimality gap reduction in iteration $t+1$ as follows:
\begin{align}
	&\mathbb{E}_{\mathbf{u}}\left[\mathbb{E}_{\boldsymbol{\zeta}}\left[\norm{\boldsymbol{\phi}_t}_F^2\right]\right]
	<\norm{\boldsymbol{\omega}_0}_F^2\left(1-2\eta m\:T\min(\boldsymbol{\sigma}^{\cdot 2})+2\eta^2m^2\:T^2\min(\boldsymbol{\sigma}^{\cdot 2})\max(\boldsymbol{\sigma}^{\cdot 2})\right)^\tau\nonumber\\
	&+\underbracea{\frac{2\eta^2m^2\:T^2(\max(\boldsymbol{\sigma}^{\cdot 2}))^2\chi_t^2+(\eta mT\max(\boldsymbol{\sigma}^{\cdot 2})+\eta^2m^2\:T^2(\max(\boldsymbol{\sigma}^{\cdot 2}))^2)(\chi_t^2+\norm{\boldsymbol{\omega}_{0}}_F^2)}{1-2\eta m\:T\min(\boldsymbol{\sigma}^{\cdot 2})+2\eta^2m^2\:T^2\min(\boldsymbol{\sigma}^{\cdot 2})\max(\boldsymbol{\sigma}^{\cdot 2})}}\nonumber\\
	&\underbracebd{+\frac{2n^2\eta^2mT\max(\boldsymbol{\sigma}^{\cdot 2})\ell \rho(\mathbf{A})^{2(T-1)}\gamma\norm{\mathbf{C}}_F^2+2\eta^2pmT\max(\boldsymbol{\sigma}^{\cdot 2})\max(\boldsymbol{\sigma}_\zeta^{\cdot 2})}{1-2\eta m\:T\min(\boldsymbol{\sigma}^{\cdot 2})+2\eta^2m^2\:T^2\min(\boldsymbol{\sigma}^{\cdot 2})\max(\boldsymbol{\sigma}^{\cdot 2})}}_{\Delta_t}+\chi_t^2.\nonumber
\end{align}
\subsection{\blue{Proof of Theorem \ref{th:linsys}}}\label{sec:lineq}
%\begin{proof}
When $\mathbf{A}$, $\mathbf{B}$ and $\mathbf{C}$ are in Brunovsky canonical form, the transfer function \eqref{eq:trans1} is uniquely realized by the state-space representation \cite{hardt2018gradient}. Due to \cite[Lemma B.1]{hardt2018gradient}, one can observe that the LHS of \eqref{eq:trans1} uniquely realizes the RHS of \eqref{eq:trans1} and vice versa. 

We notice that the linear system in \eqref{eq:linsys} is consistent. The reason is that \eqref{eq:equality} holds in all frequencies including the chosen $z_k$. Using $\mathbf{W}_{k}$, we rewrite \eqref{eq:linsys} in the form of a linear system of equations as follows:\allowdisplaybreaks
\begin{align}
	&\underbrace{\begin{bmatrix}
			-(\boldsymbol{\Theta}_T\:\boldsymbol{\vartheta}_1+\mathbf{E}_{z_1,T})'z_1^{n-1} &    \cdots & -(\boldsymbol{\Theta}_T\:\boldsymbol{\vartheta}_1+\mathbf{E}_{z_1,T})' & \mathbf{W}_1'\\
			-(\boldsymbol{\Theta}_T\:\boldsymbol{\vartheta}_2+\mathbf{E}_{z_2,T})'z_2^{n-1} &   \cdots & -(\boldsymbol{\Theta}_T\:\boldsymbol{\vartheta}_2+\mathbf{E}_{z_2,T})' & \mathbf{W}_2'\\
			\vdots  &\ddots & \vdots & \vdots \\
			-(\boldsymbol{\Theta}_T\:\boldsymbol{\vartheta}_{n+pnm}+\mathbf{E}_{z_{n+pnm},T})'z_{n+pnm}^{n-1} &  \cdots & -(\boldsymbol{\Theta}_T\:\boldsymbol{\vartheta}_{n+pnm}+\mathbf{E}_{z_{n+pnm},T})' & \mathbf{W}_{n+pnm}'
	\end{bmatrix}}_{\boldsymbol{\Psi}_T}
	\nonumber\\&\underbrace{\begin{bmatrix}
			a_1\\
			\vdots\\
			a_n\\
			\mathbf{C}'
	\end{bmatrix}}_{\boldsymbol{\varphi}}
	=\underbrace{\begin{bmatrix}
			(	\boldsymbol{\Theta}_T\:\boldsymbol{\vartheta}_1z_1^n)'\\
			(\boldsymbol{\Theta}_T\:\boldsymbol{\vartheta}_2z_2^n)'\\
			\vdots\\
			(\boldsymbol{\Theta}_T\:\boldsymbol{\vartheta}_{n+pnm}z_{n+pnm}^n)'\\
	\end{bmatrix}}_{\boldsymbol{\kappa}}\label{eq:lin}.
\end{align}
In $\boldsymbol{\Psi}_T$, in each block row, all $\{(\boldsymbol{\Theta}_T\:\boldsymbol{\vartheta}_k+\mathbf{E}_{z_k,T})'z_k^{n-v}\}_{v=1}^n$ blocks are $m\times p$ matrix blocks except the last one, $\mathbf{W}_k$, which is an $m\times nm$ matrix block. The first $n$ blocks, $\{(\boldsymbol{\Theta}_T\:\boldsymbol{\vartheta}_k+\mathbf{E}_{z_k,T})'z_k^{n-v}\}_{v=1}^n$, in each row are multiplied by $\{a_v\}_{v=1}^n$ elements in $\boldsymbol{\varphi}$, and $\mathbf{W}_k$ is multiplied by $\mathbf{C}'$. In the above equation, $\boldsymbol{\kappa}$ is made of $n+pnm$ block matrices stacked vertically, each with the dimension $m\times p$. We note that \eqref{eq:lin} is made of $n+pnm$ block rows. The $k^{\text{th}}$ block row is as follows:
\begin{align}
	-\sum_{v=1}^{n}a_v(\boldsymbol{\Theta}_T\:\boldsymbol{\vartheta}_k+\mathbf{E}_{z_k,T})'z_k^{n-v}+\mathbf{W}_k'\mathbf{C}'=(\boldsymbol{\Theta}_T\:\boldsymbol{\vartheta}_kz_k^n)',\hspace{1cm}\forall k \in \{1,n+nmp\}.\label{eq:kth}
\end{align}
The above block equation is $m\times p$. In the $(i,j)^{\text{th}}$ equation of block equation \eqref{eq:kth}, the variables are $\{a_i\}_{i=1}^{n}$ and $\{c_{j,i+vm}\}_{v=0}^{n-1}$.  The $(i,j)^{\text{th}}$ equation from the above block is as follows:
\begin{align}
	-\sum_{v=1}^{n}a_v[(\boldsymbol{\Theta}_T\:\boldsymbol{\vartheta}_k+\mathbf{E}_{z_k,T})']_{ij}z_k^{n-v}+\sum_{v=0}^{n-1}c_{j,i+vm}z_k^{v}=[(\boldsymbol{\Theta}_T\:\boldsymbol{\vartheta}_kz_k^n)']_{ij},\label{eq:one}
\end{align}
We stack \eqref{eq:one} for all frequencies, $\{z_k\}_{k=1}^{n+nmp}$, and form the following linear system:
\allowdisplaybreaks
\begin{equation}
	\!\begin{aligned}
		&
		\underbracea{\Bigggg[\begin{matrix}
				-[(\boldsymbol{\Theta}_T\:\boldsymbol{\vartheta}_1+\mathbf{E}_{z_1,T})']_{ij}z_1^{n-1}&-[(\boldsymbol{\Theta}_T\:\boldsymbol{\vartheta}_1+\mathbf{E}_{z_1,T})']_{ij}z_1^{n-2}&	\dots\\
				-[(\boldsymbol{\Theta}_T\:\boldsymbol{\vartheta}_2+\mathbf{E}_{z_2,T})']_{ij}z_2^{n-1}&-[(\boldsymbol{\Theta}_T\:\boldsymbol{\vartheta}_2+\mathbf{E}_{z_2,T})']_{ij}z_2^{n-2}&	\dots\\
				\vdots&	\vdots& \vdots \\
				-[(\boldsymbol{\Theta}_T\:\boldsymbol{\vartheta}_{n+pnm}+\mathbf{E}_{z_{n+pnm},T})']_{ij}z_{n+nmp}^{n-1}&-[(\boldsymbol{\Theta}_T\:\boldsymbol{\vartheta}_{n+pnm}+\mathbf{E}_{z_{n+pnm},T})']_{ij}z_{n+nmp}^{n-2}&	\dots
		\end{matrix}}\\
		&\qquad\qquad
		\underbracebd{\begin{matrix}
				-[(\boldsymbol{\Theta}_T\:\boldsymbol{\vartheta}_1+\mathbf{E}_{z_1,T})']_{ij}& 1&z_1&	\dots& z_1^{n-1}\\
				-[(\boldsymbol{\Theta}_T\:\boldsymbol{\vartheta}_2+\mathbf{E}_{z_2,T})']_{ij}& 1&z_2&	\dots& z_2^{n-1}\\
				\vdots&\vdots&\vdots&\vdots&\vdots\\
				-[(\boldsymbol{\Theta}_T\:\boldsymbol{\vartheta}_{n+nmp}+\mathbf{E}_{z_{n+nmp,T}})']_{ij}& 1&z_{n+nmp}&	\dots& z_{n+nmp}^{n-1}
			\end{matrix}\Bigggg]}_{\mathbf{R}_{ij}}\left[\begin{matrix}\{a_v\}_{v=1}^{n}\\	\{c_{j,i+vm}\}_{v=0}^{n-1} \end{matrix}\right]\\
		&=\underbrace{\left[\begin{matrix}[(\boldsymbol{\Theta}_T\:\boldsymbol{\vartheta}_1z_1^n)']_{ij}\\ [(\boldsymbol{\Theta}_T\:\boldsymbol{\vartheta}_2z_2^n)']_{ij}\\\vdots\\ [(\boldsymbol{\Theta}_T\:\boldsymbol{\vartheta}_{n+nmp}z_{n+nmp}^n)']_{ij} \end{matrix}\right]}_{\mathbf{r}_{ij}}.\label{eq:decom1}
	\end{aligned}
\end{equation}
We notice that the exponential of a frequency, denoted by $z^v$, is different from the frequency $z$. We investigate the linear dependency of the columns in the above matrix, where the $k^{\text{th}}$ row corresponds to the $k^{\text{th}}$ frequency. We note that in $\mathbf{R}_{ij}$, each element in a row incorporates a particular set of frequencies. The set of frequencies embedded in $\boldsymbol{\vartheta}_k$ is $\{z_k^{-1},\dots,z_k^{-T+1}\}$. Given that $\mathbf{E}_{z_k,T}$ is negligible, we list the sets of frequencies incorporated in the coefficients of each unknown in a row of \eqref{eq:decom1} as follows:
\begin{subequations}
	\begin{align}
		&	c_{j,i+vm} \longrightarrow z_k^{v},\hspace{6.1cm} v\in\{0,\dots,n-1\},\label{eq:tir1}\\
		&	a_v \longrightarrow \{z_k^{n-v-1},z_k^{n-v-2},\dots,z_k^{n-v-T+1}\},\hspace{1cm} v\in\{1,\dots,n\}\label{eq:tir2}.
	\end{align}
\end{subequations}
The set of frequencies in the coefficient of $a_1$ is $\{z_k^{n-2},z_k^{n-3},\dots,z_k^{n-T}\}$. The set of frequencies in the coefficient of $a_2$ is $\{z_k^{n-3},z_k^{n-4},\dots,z_k^{n-1-T}\}$. We observe that the frequency $z_k^{n-1-T}$ does not exist in the set of frequencies in the coefficient of $a_1$. Similarly, the coefficient of $a_v$ incorporates frequency $z_k^{n-v-T+1}$ that does not exist in the coefficients of $\{a_1,\dots,a_{v-1}\}$. Furthermore, we observe from \eqref{eq:tir1} that the coefficient of $c_{j,i+vm}$ is $z_k^{v}$. The frequencies in the coefficients of $\{c_{j,i+vm}\}_{v=0}^{n-1}$ are separate and do not overlap. Given that $T \geq n+1$, we obtain $n-v-T+1<0$. There is at least one frequency $z^{n-v-T+1}$ in the coefficient of $a_v$ that do not appear in the frequencies incorporated in the coefficients of $\{c_{j,i+vm}\}_{v=0}^{n-1}$ since $n-v-T+1<0$. 

Let us suppose that the columns of the coefficient matrix in \eqref{eq:decom1} are linearly dependent. In this case, we have:
\begin{align}
	-\sum_{v=1}^{n}[(\boldsymbol{\Theta}_T\:\boldsymbol{\vartheta}_k+\mathbf{E}_{z_k,T})']_{ij}z_k^{n-v}\alpha_a^v+\sum_{v=1}^{n}z_k^{v-1}\alpha_c^v=0,\label{eq:red}
\end{align}
where $\alpha_a^v$ and $\alpha_c^v$ are given coefficients to the $v^\text{th}$ and $v+n^\text{th},v\in\{1,\dots,n\}$, columns of $\mathbf{R}_{ij}$, respectively, to ensure the linear dependency of columns. We consider two possibilities:
\begin{enumerate}
	\item Suppose that each $\alpha_a^v$ is zero, i.e., $\{\alpha_a^v\}_{v=1}^n=\{0\}$. Then, \eqref{eq:red} implies that $\sum_{v=1}^{n}z_k^{v-1}\alpha_c^v=0$. When at least two different $\alpha_c^v$ are non-zero, $\sum_{v=1}^{n}z_k^{v-1}\alpha_c^v=0$ yields a polynomial of $z$, in which $z_k$ is a root. However, we note that $z_k$ is arbitrarily chosen by us. It is impossible that a polynomial of $z$ with a finite degree has infinite roots. Therefore, by contradiction, we conclude that the columns $v+1$ to $v+n$ of $\mathbf{R}_{ij}$ are linearly independent of each other.
	\item Suppose that at least one $\alpha_a^v$, $\forall v\in\{1,\dots,n\}$, is not zero. Then, we consider  $[(\boldsymbol{\Theta}_T\:\boldsymbol{\vartheta}_k+\mathbf{E}_{z_k,T})']_{ij}z_k^{n-v}\alpha_a^v$. We choose the largest $v$ such that $\alpha_a^v\neq 0$. From \eqref{eq:tir2}, we see that the frequency $z_k^{n-v-T+1}$ appears only in $[(\boldsymbol{\Theta}_T\:\boldsymbol{\vartheta}_k+\mathbf{E}_{z_k,T})']_{ij}z_k^{n-v}\alpha_a^v$ and it does not exist in the other elements of the $k^{\text{th}}$ row, which are given in \eqref{eq:red}. The reason is that we picked the largest $v$ and $n-v-T+1$ is the least exponent for $z_k$ in \eqref{eq:red} . Hence, due to its uniqueness, $z_k^{n-v-T+1}$ cannot be removed by the linear combination of different elements in \eqref{eq:red}. Based on this fact, \eqref{eq:red} is always at least a polynomial of $z^{n-v-T+1}$, in which an arbitrary $z_k$ is a root. It is impossible that a polynomial of $z$ with a finite degree has infinite roots. Therefore, by contradiction, we conclude that the columns of $\mathbf{R}_{ij}$ are linearly independent of each other.
\end{enumerate}

\begin{algorithm}[H]
	\SetAlgoLined
	\textbf{Initialization}: $\boldsymbol{\Gamma}_T=\mathbf{0}_{mp(n+nmp)\times (n+nmp)}$, $\boldsymbol{\varkappa}=\mathbf{0}_{mp(n+nmp)\times 1}$\\
	\textbf{Input}: \eqref{eq:linsys} for all $(i,j)\in \{(i,j)\mid i\in\{1,\dots,m\},j\in\{1,\dots,p\}\}$\\
	\textbf{Output}: $\boldsymbol{\Gamma}_T$, $\boldsymbol{\varkappa}$\\
	\For{all $(i,j)\in \{(i,j)\mid i\in\{1,\dots,m\},j\in\{1,\dots,p\}\}$}{
		Find $\mathbf{R}_{ij}$ and $\mathbf{r}_{ij}$ from \eqref{eq:decom1}\\
		$\boldsymbol{\Gamma}_T((n+nmp)((i-1)p+j-1)+1:(n+nmp)((i-1)p+j),1:n)=\mathbf{R}_{ij}(:,1:n)$\\
		\For{$v\in\{0,\dots,n-1\}$} {
			$\boldsymbol{\Gamma}_T((n+nmp)((i-1)p+j-1)+1:(n+nmp)((i-1)p+j),n+(j-1)nm+i+vm)=\mathbf{R}_{ij}(:,n+v+1)$
		}
		$\boldsymbol{\varkappa}((n+nmp)((i-1)p+j-1)+1:(n+nmp)((i-1)p+j),1)=\mathbf{r}_{ij}$
	}
	\textbf{Return}: $\boldsymbol{\Gamma}_T$, $\boldsymbol{\varkappa}$
	\caption{Transforming \eqref{eq:linsys} to the standard form of linear system of equations} 	\label{al:stand}
\end{algorithm}
Since the coefficient matrix for the linear system \eqref{eq:decom1} is full-rank, one can identify both blocks of variables, i.e., $\{a_v\}_{v=1}^{n}$ and $\{c_{j,i+vm}\}_{v=0}^{n-1}$. By changing $i$ in the range $\{1,\dots, m\}$ and $j$ in the range $\{1,\dots, p\}$, one can identify all elements of $\mathbf{C}$. We can rewrite \eqref{eq:lin} in the standard form of a linear system of equations $\boldsymbol{\Gamma}_T\boldsymbol{\varrho}=\boldsymbol{\varkappa}$ by using Algorithm \ref{al:stand}. Algorithm \ref{al:stand} stacks \eqref{eq:lin} for different $i$ and $j$ one after the other, while it includes all coefficients for all elements of $\mathbf{C}$.

In general, $\boldsymbol{\Psi}_T$ is a tall matrix. The system $\boldsymbol{\Psi}_T\boldsymbol{\varrho}=\boldsymbol{\kappa}$ can be solved by different numerical approaches (e.g., \cite{razaviyayn2019linearly,liu2016accelerated,ma2015convergence}). 

We observe that both $\boldsymbol{\Psi}_T$ and $\boldsymbol{\kappa}$ are linearly parameterized by $\boldsymbol{\Theta}_T\:\boldsymbol{\vartheta}_k$. In addition, $\mathbf{E}_{z_k,T}=\sum_{t=T}^{\infty} z_k^{-t}\mathbf{C}\mathbf{A}^{t-1}\mathbf{B}$ appears in $\boldsymbol{\Psi}_T$. 
Let us represent the matrix $\boldsymbol{\Psi}_T$ by $\boldsymbol{\Psi}_{t,T}$ when 1) $\boldsymbol{\Psi}_T$ is parameterized by $\hat{\boldsymbol{\Theta}}_{t,T}\boldsymbol{\vartheta}_k$; and 2) $\mathbf{E}_{z_k,T}=\mathbf{0}$. Moreover, $\boldsymbol{\kappa}_t$ is parameterized by  $\hat{\boldsymbol{\Theta}}_{t,T}$. Based on this, we demonstrate that the linear convergence of $\hat{\boldsymbol{\Theta}}_{t,T}$ enforces the solution of $\boldsymbol{\Psi}_{t,T}\hat{\boldsymbol{\varphi}}_t=\boldsymbol{\kappa}_t$ to linearly converge to the ground truth values. We assume $|z_k|=1$. We have:
\begin{equation*}
	\hspace{-3.2cm}
	\begin{rcases}
		&\boldsymbol{\Psi}_{t,T}\hat{\boldsymbol{\varphi}}_t=\boldsymbol{\kappa}_t \\
		&\boldsymbol{\Psi}_{T}\boldsymbol{\varphi}=\boldsymbol{\kappa}
	\end{rcases}
	\rightarrow\boldsymbol{\Psi}_{t,T}\hat{\boldsymbol{\varphi}}_t-\boldsymbol{\Psi}_{T}\boldsymbol{\varphi}=\boldsymbol{\kappa}_t-\boldsymbol{\kappa}.\nonumber
\end{equation*}
We expand the above equation as follows:
\begin{align}
	&	\boldsymbol{\Psi}_{t,T}\hat{\boldsymbol{\varphi}}_t-\boldsymbol{\Psi}_{t,T}\boldsymbol{\varphi}+\boldsymbol{\Psi}_{t,T}\boldsymbol{\varphi}-\boldsymbol{\Psi}_{T}\boldsymbol{\varphi}=\boldsymbol{\kappa}_t-\boldsymbol{\kappa},\nonumber\\
	\Rightarrow	&\norm{\boldsymbol{\Psi}_{t,T}(\hat{\boldsymbol{\varphi}}_t-\boldsymbol{\varphi})+(\boldsymbol{\Psi}_{t,T}-\boldsymbol{\Psi}_{T})\boldsymbol{\varphi}}_F^2=\norm{\boldsymbol{\kappa}_t-\boldsymbol{\kappa}}_F^2,\nonumber\\
	\overset{(a)}{\Rightarrow}	&\norm{\boldsymbol{\Psi}_{t,T}(\hat{\boldsymbol{\varphi}}_t-\boldsymbol{\varphi})}_F^2-\norm{(\boldsymbol{\Psi}_{t,T}-\boldsymbol{\Psi}_{T})\boldsymbol{\varphi}}_F^2\leq \norm{\boldsymbol{\kappa}_t-\boldsymbol{\kappa}}_F^2,\nonumber\\
	\overset{(b)}{\Rightarrow}	&(\min(\delta(\boldsymbol{\Psi}_{t,T})))^2\norm{\hat{\boldsymbol{\varphi}}_t-\boldsymbol{\varphi}}_F^2\leq \norm{\boldsymbol{\kappa}_t-\boldsymbol{\kappa}}_F^2+\norm{(\boldsymbol{\Psi}_{t,T}-\boldsymbol{\Psi}_{T})}_F^2\norm{\boldsymbol{\varphi}}_F^2,\nonumber\\
	\overset{(c)}{\Rightarrow}	
	&\mathbb{E}_{\mathbf{u}}[\mathbb{E}_{\boldsymbol{\zeta}}[\norm{\hat{\boldsymbol{\varphi}}_t-\boldsymbol{\varphi}}_F^2]]\nonumber\\
	&\leq \mathbb{E}_{\mathbf{u}}\left[\mathbb{E}_{\boldsymbol{\zeta}}\left[\frac{m(T-1)(n+nmp)\norm{\hat{\boldsymbol{\Theta}}_{t,T}-\hat{\boldsymbol{\Theta}}_T}_F^2+nm(n+nmp)(T-1)\norm{\hat{\boldsymbol{\Theta}}_{t,T}-\hat{\boldsymbol{\Theta}}_T}_F^2\norm{\boldsymbol{\varphi}}_F^2}{(\min\left(\delta(\boldsymbol{\Psi}_{t,T}))\right)^2}\right]\right]\nonumber\\
	& + \frac{n^3(n+nmp)\ell \norm{\mathbf{C}}_F^2\norm{\mathbf{B}}_F^2\rho(\mathbf{A})^{2(T-1)}\norm{\boldsymbol{\varphi}}_F^2}{(1-\rho(\mathbf{A})^2)(\min\left(\delta(\boldsymbol{\Psi}_{t,T}))\right)^2}\nonumber\\
	&=\frac{(m(T-1)(n+nmp)+nm(n+nmp)(T-1)\norm{\boldsymbol{\varphi}}_F^2)}{(\min(\delta(\boldsymbol{\Psi}_{t,T})))^2}\mathbb{E}_{\mathbf{u}}\left[\mathbb{E}_{\boldsymbol{\zeta}}\left[\norm{\hat{\boldsymbol{\Theta}}_{t,T}-\hat{\boldsymbol{\Theta}}_T}_F^2\right]\right]\nonumber\\
	&+\underbrace{\frac{n^3(n+nmp)\ell \norm{\mathbf{C}}_F^2\norm{\mathbf{B}}_F^2\rho(\mathbf{A})^{2(T-1)}\norm{\boldsymbol{\varphi}}_F^2}{(1-\rho(\mathbf{A})^2)(\min\left(\delta(\boldsymbol{\Psi}_{t,T}))\right)^2}}_{s_1}.\label{eq:ineq}
\end{align}
In the above series of inequalities, we have $(a)$ due to the triangle difference inequality. 
Moreover, (b) follows due to the fact that 1) $\norm{(\boldsymbol{\Psi}_{t,T}-\boldsymbol{\Psi}_{T})\boldsymbol{\varphi}}_F^2\leq \norm{(\boldsymbol{\Psi}_{t,T}-\boldsymbol{\Psi}_{T})}_F^2\norm{\boldsymbol{\varphi}}_F^2$; and 2) $(\min(\delta(\boldsymbol{\Psi}_{t,T})))^2\norm{\hat{\boldsymbol{\varphi}}_t-\boldsymbol{\varphi}}_F^2\leq \norm{\boldsymbol{\Psi}_{t,T}(\hat{\boldsymbol{\varphi}}_t-\boldsymbol{\varphi})}_F^2$, where $\min(\delta(\boldsymbol{\Psi}_{t,T}))$ is the minimum non-zero singular value of $\boldsymbol{\Psi}_{t,T}$. In (c), we notice that $\boldsymbol{\Psi}_{t,T}-\boldsymbol{\Psi}_{T}$ is only a function of $\hat{\boldsymbol{\Theta}}_{t,T}-\boldsymbol{\Theta}_T$ and $\{\sum_{t=T}^{\infty} z_k^{-t}\mathbf{C}\mathbf{A}^{t-1}\mathbf{B}\}_{k=1}^{n+nmp}$ as follows:
\begin{align}
	\boldsymbol{\Psi}_{t,T}&-\boldsymbol{\Psi}_{T}=\underbrace{\Bigggg[\begin{matrix}
			-((\hat{\boldsymbol{\Theta}}_{t,T}-\boldsymbol{\Theta}_T)\:\boldsymbol{\vartheta}_1)'z_1^{n-1} &    \cdots & -((\hat{\boldsymbol{\Theta}}_{t,T}-\boldsymbol{\Theta}_T)\:\boldsymbol{\vartheta}_1)' & \boldsymbol{0}\\
			-((\hat{\boldsymbol{\Theta}}_{t,T}-\boldsymbol{\Theta}_T)\:\boldsymbol{\vartheta}_2)'z_2^{n-1} &   \cdots & -((\hat{\boldsymbol{\Theta}}_{t,T}-\boldsymbol{\Theta}_T)\:\boldsymbol{\vartheta}_2)' & \boldsymbol{0}\\
			\vdots  &\ddots & \vdots & \vdots \\
			-((\hat{\boldsymbol{\Theta}}_{t,T}-\boldsymbol{\Theta}_T)\:\boldsymbol{\vartheta}_{n+pnm})'z_{n+pnm}^{n-1} &  \cdots & -((\hat{\boldsymbol{\Theta}}_{t,T}-\boldsymbol{\Theta}_T)\:\boldsymbol{\vartheta}_{n+pnm})' & \boldsymbol{0}
		\end{matrix}\Bigggg]}_{\diamondsuit}\nonumber\\
	&+\underbrace{\Bigggg[\begin{matrix}
			(\sum_{t=T}^{\infty} z_1^{-t}\mathbf{C}\mathbf{A}^{t-1}\mathbf{B})'z_1^{n-1} &    \cdots & (\sum_{t=T}^{\infty} z_1^{-t}\mathbf{C}\mathbf{A}^{t-1}\mathbf{B})' & \boldsymbol{0}\\
			(\sum_{t=T}^{\infty} z_2^{-t}\mathbf{C}\mathbf{A}^{t-1}\mathbf{B})'z_2^{n-1} &   \cdots & (\sum_{t=T}^{\infty} z_2^{-t}\mathbf{C}\mathbf{A}^{t-1}\mathbf{B})' & \boldsymbol{0}\\
			\vdots  &\ddots & \vdots & \vdots \\
			(\sum_{t=T}^{\infty} z_{n+pnm}^{-t}\mathbf{C}\mathbf{A}^{t-1}\mathbf{B})'z_{n+pnm}^{n-1} &  \cdots & (\sum_{t=T}^{\infty} z_{n+pnm}^{-t}\mathbf{C}\mathbf{A}^{t-1}\mathbf{B})' & \boldsymbol{0}
		\end{matrix}\Bigggg]}_{\clubsuit},\nonumber
\end{align}
where 
$\diamondsuit$ can be factorized into $\hat{\boldsymbol{\Theta}}_{t,T}-\boldsymbol{\Theta}_T$ and a constant matrix whose norm is denoted by $nm(n+nmp)(T-1)$. When $|z|=1$, the norm of $\clubsuit$ is bounded using Lemma \ref{le:te} as follows:
\begin{align}
	\norm{\clubsuit}_F^2\leq \frac{n^3(n+nmp)\ell \norm{\mathbf{C}}_F^2\norm{\mathbf{B}}_F^2\rho(\mathbf{A})^{2(T-1)}}{1-\rho(\mathbf{A})^2}.\label{eq:res}
\end{align}
Due to its structure, $\boldsymbol{\kappa}_t-\boldsymbol{\kappa}$ can be factorized into $\hat{\boldsymbol{\Theta}}_{t,T}-\boldsymbol{\Theta}_T$ and a constant matrix whose norm is denoted by $m(T-1)(n+nmp)$. In (c), we decompose $\boldsymbol{\Psi}_{t,T}-\boldsymbol{\Psi}_t$ and $\boldsymbol{\kappa}_t-\boldsymbol{\kappa}$ and use the Cauchy–Schwarz inequality. 

Since we have already shown that by Algorithm \ref{al:comb1} (or Algorithm \ref{al:comb}) decreases $\norm{\hat{\boldsymbol{\Theta}}_{t,T}-\boldsymbol{\Theta}_T}_F^2$ exponentially, we observe from \eqref{eq:ineq} that  $\norm{\hat{\boldsymbol{\varphi}}_t-\boldsymbol{\varphi}}_2^2$ is enforced to be decreased at least  exponentially when $T$ is large enough to make $\norm{\triangle}_F^2$ very small. This concludes the linear convergence in expectation for the unknown parameters $\{a_i\}_{i=1}^n$ and $\{c_{i,j}\}_{i=1:p,j=1:mn}$ in $\hat{\boldsymbol{\varphi}}_t$ to $\boldsymbol{\varphi}$ when $\boldsymbol{\Psi}_{t,T}\hat{\boldsymbol{\varphi}}_t=\boldsymbol{\kappa}_{t}$ is solved in each iteration of Algorithm \ref{al:comb1} or Algorithm \ref{al:comb}. Since   $\norm{\hat{\boldsymbol{\varphi}}_{t}}_F=\norm{\hat{\boldsymbol{\varrho}}_{t}}_2$, Algorithm \ref{al:comb} linearly converges in expectation.

We substitute the upper-bound in \eqref{eq:samoff} for $\mathbb{E}_{\mathbf{u}}\Big[\mathbb{E}_{\boldsymbol{\zeta}}[\norm{\hat{\boldsymbol{\Theta}}_{t,T}-\hat{\boldsymbol{\Theta}}_T}_2^2]\Big]$, we find
\begin{align}\textstyle
	&\mathbb{E}_{\mathbf{u}}\left[\mathbb{E}_{\boldsymbol{\zeta}}\left[\norm{\hat{\boldsymbol{\varrho}}_t-\boldsymbol{\varrho}}_2^2\right]\right]\leq \underbrace{\frac{n^3(n+nmp)\ell \norm{\mathbf{C}}_F^2\norm{\mathbf{B}}_F^2\rho(\mathbf{A})^{2(T-1)}\norm{\boldsymbol{\varrho}}_F^2}{(1-\rho(\mathbf{A})^2)(\min(\delta(\boldsymbol{\Psi}_{t,T})))^2}}_{s_1}
	\nonumber\\
	&+\underbracea{\frac{(m(T-1)(n+nmp)+nm(n+nmp)(T-1)\norm{\boldsymbol{\varrho}}_F^2)}{(\min(\delta(\boldsymbol{\Psi}_{t,T})))^2}}\nonumber\\
	&\underbraceb{
		\Big[\frac{2\eta^2m^2\:T^2(\max(\boldsymbol{\sigma}^{\cdot 2}))^2\chi_t^2+2n^2\eta^2mT\max(\boldsymbol{\sigma}^{\cdot 2})\ell \rho(\mathbf{A})^{2(T-1)}\gamma\norm{\mathbf{C}}_F^2}{1-2\eta m\:T\min(\boldsymbol{\sigma}^{\cdot 2})+2\eta^2m^2\:T^2\min(\boldsymbol{\sigma}^{\cdot 2})\max(\boldsymbol{\sigma}^{\cdot 2})}}\nonumber\\
	&+\underbracebd{\frac{2\eta^2pmT\max(\boldsymbol{\sigma}^{\cdot 2})\max(\boldsymbol{\sigma}_\zeta^{\cdot 2})+(\eta mT\max(\boldsymbol{\sigma}^{\cdot 2})+\eta^2m^2\:T^2(\max(\boldsymbol{\sigma}^{\cdot 2}))^2)(\chi_t^2+\norm{\boldsymbol{\omega}_{0}}_F^2)}{1-2\eta m\:T\min(\boldsymbol{\sigma}^{\cdot 2})+2\eta^2m^2\:T^2\min(\boldsymbol{\sigma}^{\cdot 2})\max(\boldsymbol{\sigma}^{\cdot 2})}+\chi_t^2\Big]}_{s_2}\nonumber\\
	&+\frac{(m(T-1)(n+nmp)+nm(n+nmp)(T-1)\norm{\boldsymbol{\varrho}}_F^2)}{(\min(\delta(\boldsymbol{\Psi}_{t,T})))^2}\norm{\hat{\boldsymbol{\Theta}}_{0,T}-\hat{\boldsymbol{\Theta}}_{T}}_F^2\nonumber\\
	&\left(1-2\eta m\:T\min(\boldsymbol{\sigma}^{\cdot 2})+2\eta^2m^2\:T^2\min(\boldsymbol{\sigma}^{\cdot 2})\max(\boldsymbol{\sigma}^{\cdot 2})\right)^t.\label{eq:overal}
\end{align}
We conclude that the estimated unknown parameters $\{a_i\}_{i=1}^n$ and $\{c_{i,j}\}_{i=1:p,j=1:mn}$ linearly converge in expectation to $\boldsymbol{\varrho}$ as the obtained $\hat{\boldsymbol{\Theta}}_{t,T}$ by Algorithm \ref{al:comb} converges. As explained in Appendix \ref{pr:offline}, by adjusting the step-size, the batch size and the truncation length, we can make the neighborhood that Algorithm \ref{al:comb} converges to as small as desired. Therefore, the error of learning unknown parameters $\{a_i\}_{i=1}^n$ and $\{c_{i,j}\}_{i=1:p,j=1:mn}$, which is $\Upsilon=s_1+s_2$, can be made as small as desired by increasing $T$ and decreasing $\eta$. Similar to the above derivations, one can conclude that unknown parameters can be learned at a linear convergence rate (in expectation) if Algorithm \ref{al:comb1} is used instead of Algorithm \ref{al:comb}.

One can rewrite \eqref{eq:overal} as follows:
\begin{align}
	&\mathbb{E}_{\mathbf{u}}\left[\mathbb{E}_{\boldsymbol{\zeta}}\left[\norm{\hat{\boldsymbol{\varrho}}_t-\boldsymbol{\varrho}}_2^2\right]\right]\leq \Upsilon+l_2(l_1/n+1)\Big(nm(n+nmp)(T-1)\Big)\nonumber\\
	&\times\norm{\boldsymbol{\omega}_0}_F^2\left(1-2\eta m\:T\min(\boldsymbol{\sigma}^{\cdot 2})+2\eta^2m^2\:T^2\min(\boldsymbol{\sigma}^{\cdot 2})\max(\boldsymbol{\sigma}^{\cdot 2})\right)^t,\nonumber
\end{align}
where
\begin{align}
	l_1=\frac{1}{\norm{\boldsymbol{\varrho}}_F^2}, \hspace{1cm}l_2=\frac{\norm{\boldsymbol{\varrho}}_F^2}{(\min(\delta(\boldsymbol{\Psi}_{t,T})))^2},
\end{align}
and $\boldsymbol{\omega}_0=\hat{\boldsymbol{\Theta}}_{0,T}-\hat{\boldsymbol{\Theta}}_T$. Here, we analyze the iteration complexity when the step-size is as given in \eqref{eq:step}. Suppose $\epsilon>0$ such that $s_1+s_2 \leq \frac{\epsilon}{2}$. We take logarithm from \eqref{eq:overal} and rearrange as follows:
\begin{align}
	&\log\left(\frac{2\norm{\hat{\boldsymbol{\Theta}}_{0,T}-\hat{\boldsymbol{\Theta}}_T}_F^2(m(T-1)(n+nmp)+nm(n+nmp)(T-1)\norm{\boldsymbol{\varrho}}_F^2)}{\epsilon(\min(\delta(\boldsymbol{\Psi}_{t+1,T})))^2}\right)\nonumber\\
	&\leq t\log(\frac{1}{1-2\eta m\:T\min(\boldsymbol{\sigma}^{\cdot 2})+2\eta^2m^2\:T^2\min(\boldsymbol{\sigma}^{\cdot 2})\max(\boldsymbol{\sigma}^{\cdot 2})}).\nonumber
\end{align} 
Since $\log(\frac{1}{x})\geq 1- x$ when $0<x\leq 1$, we find
\begin{align}
	&\frac{1}{2\eta m\:T\min(\boldsymbol{\sigma}^{\cdot 2})-2\eta^2m^2\:T^2\min(\boldsymbol{\sigma}^{\cdot 2})\max(\boldsymbol{\sigma}^{\cdot 2})}\nonumber\\
	&\log\left(\frac{2\norm{\hat{\boldsymbol{\Theta}}_{0,T}-\hat{\boldsymbol{\Theta}}_T}_F^2(m(T-1)(n+nmp)+nm(n+nmp)(T-1)\norm{\boldsymbol{\varrho}}_F^2)}{\epsilon(\min(\delta(\boldsymbol{\Psi}_{t+1,T})))^2}\right)\leq t.\nonumber
\end{align}
Based on the above inequality, we find the computational complexity as follows: 
\begin{align}
	\mathcal{O}\left(\frac{1}{2\eta m\:T\min(\boldsymbol{\sigma}^{\cdot 2})-2\eta^2m^2\:T^2\min(\boldsymbol{\sigma}^{\cdot 2})\max(\boldsymbol{\sigma}^{\cdot 2})}\log(\frac{n^2\:m^2\:p\:T}{\epsilon})\right).\nonumber
\end{align}
\section{Extended Numerical Tests}\label{sec:exsim}

\subsection{Comparisons with Ho-Kalman Algorithm in \cite{oymak2019non}}
The Ho-Kalman algorithm in \cite{oymak2019non} estimates the Hankel matrix $\mathbf{H}$, which is built using the Markov parameters of the system. Consider that the last $mT/2$ columns of the Hankel matrix are denoted by $\mathbf{H}^+$, where $T=200$. The Ho-Kalman algorithm finds the rank-$n$-approximation of the Hankel matrix. Next, the rank-$n$-approximation, denoted by $\mathbf{L}$, is decomposed into the observability and controllability matrices. This decomposition is carried out using SVD. Therefore, if the rank-$n$-approximation of the Hankel matrix has an SVD decomposition like $\mathbf{L}=\mathbf{U}\boldsymbol{\Sigma}\mathbf{V}'$, the observability matrix is $\mathbf{O}=\mathbf{U}\boldsymbol{\Sigma}^{1/2}$ and the controllability matrix is $\mathbf{Q}=\boldsymbol{\Sigma}^{1/2}\mathbf{V}'$. Then, the estimated $\hat{\mathbf{C}}$ matrix is the first $p$ rows of the observability matrix. Furthermore, the estimated $\hat{\mathbf{A}}$ matrix is $(\hat{\mathbf{O}}'\hat{\mathbf{O}})^{-1}\hat{\mathbf{O}}\hat{\mathbf{H}}^+(\hat{\mathbf{Q}}'\hat{\mathbf{Q}})^{-1}\hat{\mathbf{Q}}$. We consider two MIMO systems for the comparisons. In the first system, the hidden state dimension is $20$, $m=4$, $n=5$, and $p=4$. In the second system, the hidden state dimension is $30$, $m=6$, $n=5$, and $p=6$.  Our numerical simulations confirm that if the ground truth Hankel matrix is given to the Ho-Kalman algorithm, the estimated matrices are not identical to the ground truth weight matrices. To help the Ho-Kalman algorithm to find the underlying weight matrices, we give the optimal transformation $\mathcal{T}$ to the Ho-Kalman algorithm such that $\mathbf{U}\boldsymbol{\Sigma}^{1/2}\mathcal{T}$ becomes the ground truth observability matrix, and $\mathcal{T}^{-1}\boldsymbol{\Sigma}^{1/2}\mathbf{V}'$ becomes the ground truth controllability matrix of the underlying system. Furthermore, we consider that the standard deviation of measurement noise is $0.1$.
\begin{figure}
	\centering
	\subfigure{\includegraphics[width=0.42\textwidth]{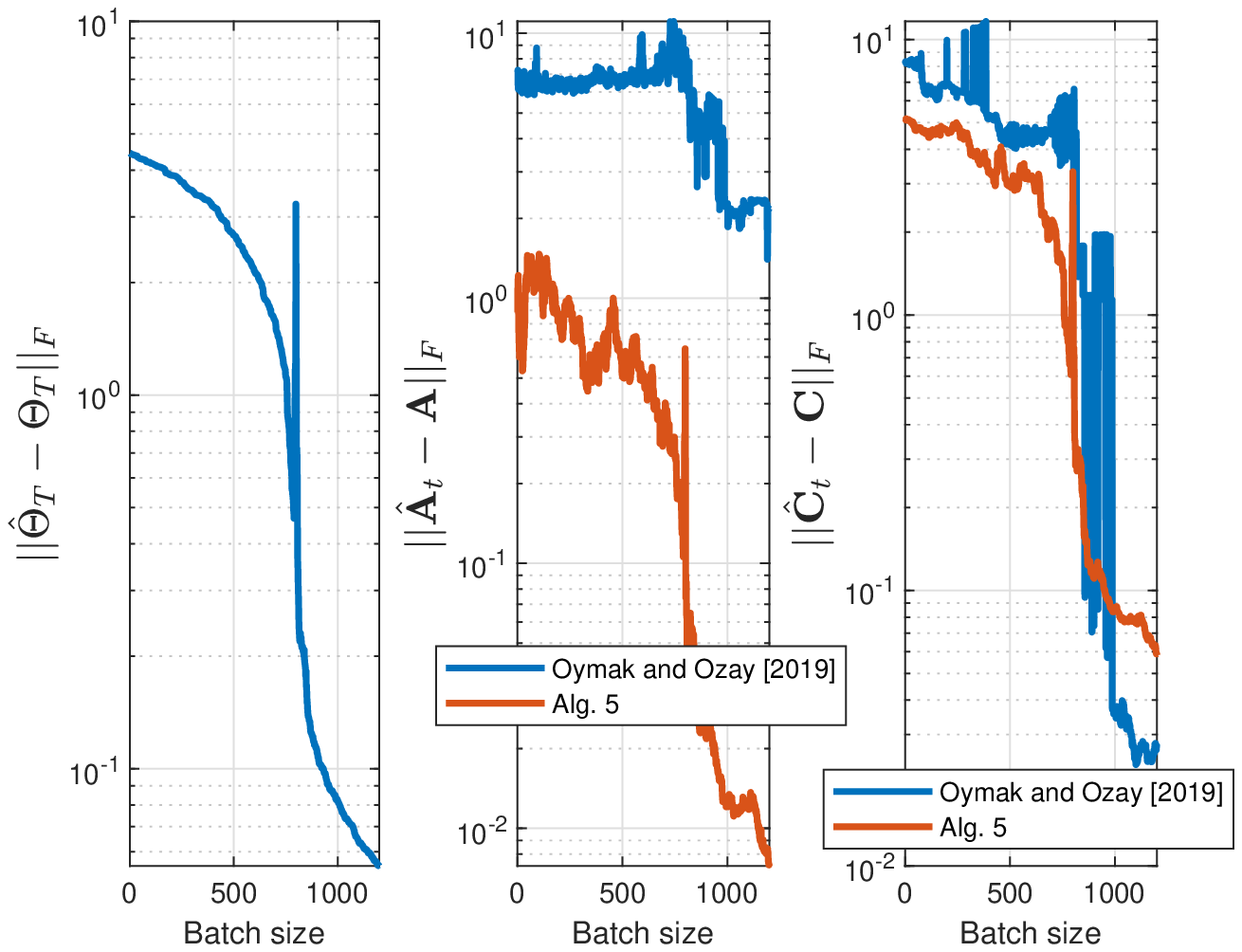}			\label{fig:oymak1}}
	\subfigure{\includegraphics[width=0.42\textwidth]{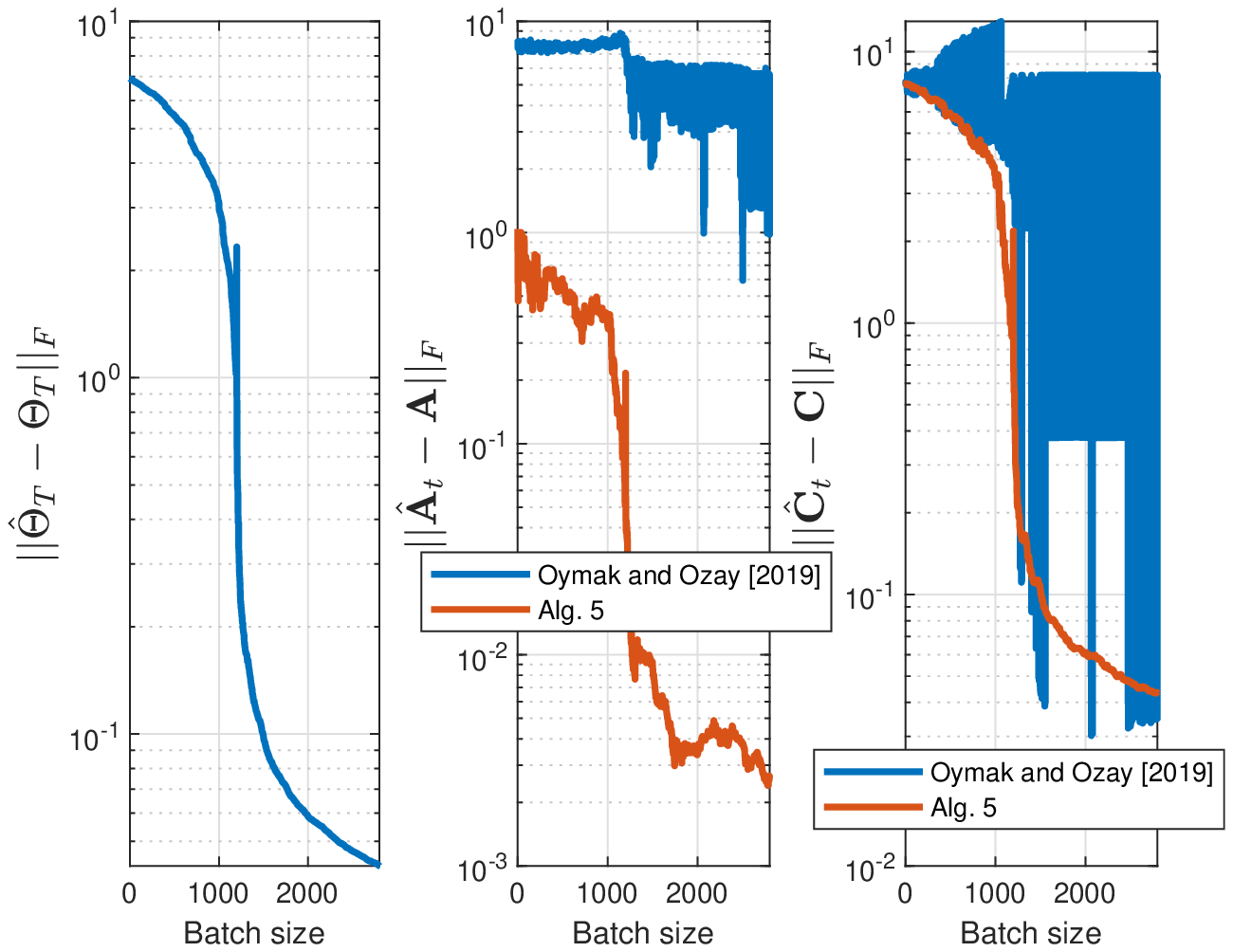}			\label{fig:oymak2}}\\
	\subfigure{\includegraphics[width=0.84\textwidth]{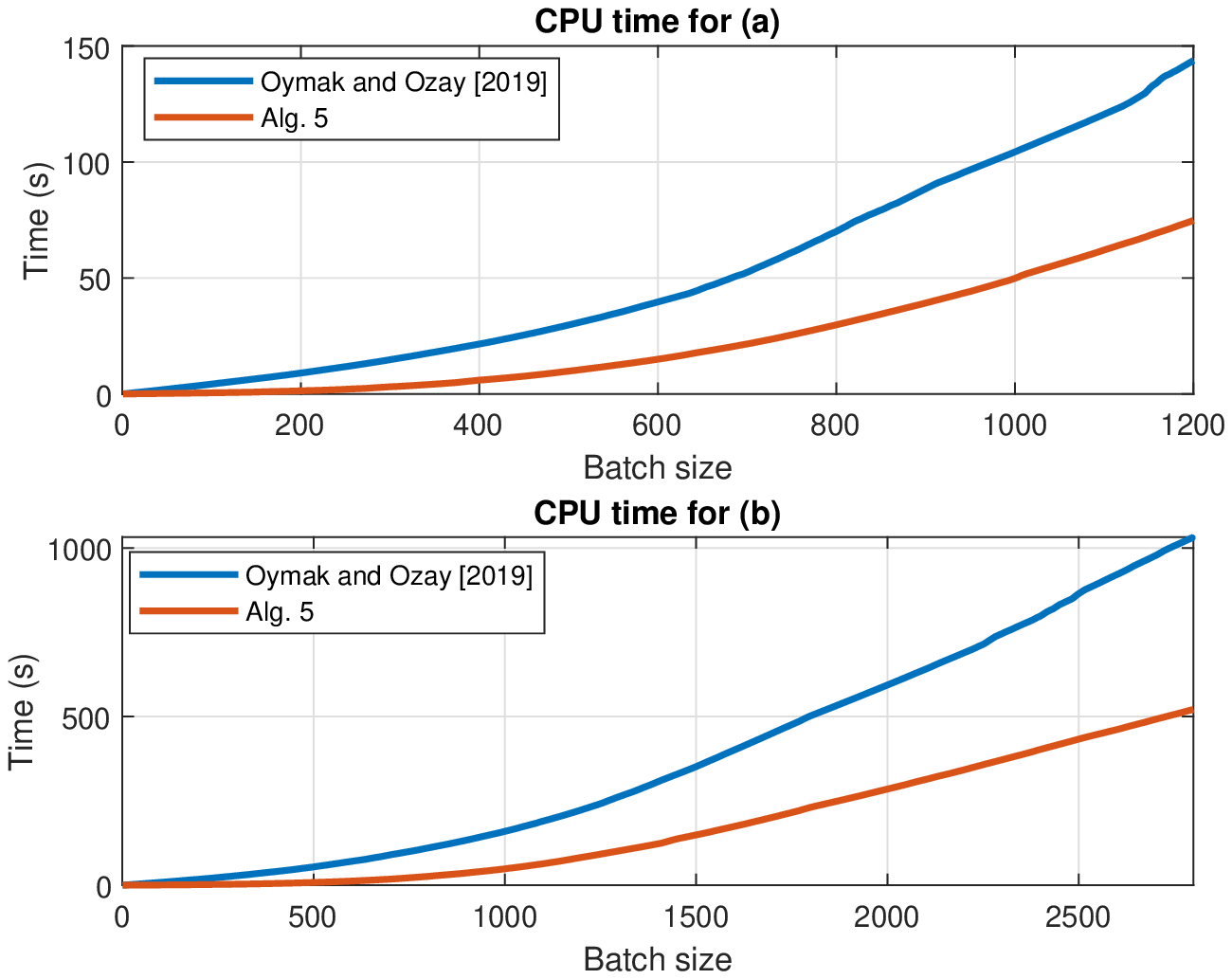}			\label{fig:oymak3}}
	\caption{Comparison of the performance of Algorithm \ref{al:inv} with \cite{oymak2019non}: (a)  $m=4$, $n=5$, and $p=4$; and (b) $m=6$, $n=5$, and $p=6$; and (c) the CPU time.}
\end{figure}

Since the Ho-Kalman Algorithm in \cite{oymak2019non} solves \eqref{opt:lim} by the pseudo-inverse method, we use Algorithm \ref{al:inv} for comparisons. We assume that input-output pairs arrive in an online streaming fashion and the batch size increases gradually. Both approaches share an identical estimation for the set of Markov parameters. From Figs. \ref{fig:oymak1} and \ref{fig:oymak2}, we observe that Algorithm \ref{al:inv} outperforms the Ho-Kalman approach in the estimation of $\mathbf{A}$. The reason is that Algorithm \ref{al:inv} directly extracts $\mathbf{A}$ from the Markov parameters. However, the Ho-Kalman approach estimates $\mathbf{H}$, $\mathbf{O}$ and $\mathbf{Q}$ first and based on these matrices, $\mathbf{A}$ is recovered. Therefore, the errors of estimations for $\mathbf{O}$, $\mathbf{Q}$ and $\mathbf{H}$ are added to each other in the estimation of $\mathbf{A}$. Furthermore, we observe from Fig. \ref{fig:oymak3} that the required CPU time by Algorithm \ref{al:inv} is significantly less than that by the Ho-Kalman algorithm as computing SVD is more costly compared to solving a linear system.

Next, we continue to evaluate the performance of our online and offline SGD algorithms on noisy and noisy-free linear dynamical systems when the system is SISO, single-input multi-output (SIMO), MISO, and MIMO. For each case, we consider three different hidden state dimensions and evaluate proposed algorithms in noisy and noise-free scenarios. The initial state of the system is zero. 
\subsection{SISO}
We consider three different hidden state dimensions, 20, 25 and 30 for the SISO system. It is observed from Figs. \ref{fig:siso20}, \ref{fig:siso25} and \ref{fig:siso30} that in all three cases  Algorithm \ref{al:comb} learns the unknown parameters $\boldsymbol{\Theta}_T$, $\mathbf{A}$ and $\mathbf{C}$ at a linear convergence rate. The convergence of Algorithm \ref{al:comb1} for identical systems is depicted in Figs. \ref{fig:siso20f}-\ref{fig:siso30f}. Each iteration of either approach is implemented based on the obtained gradient from one input-output sample while the measurement noise is zero. We observe that when the hidden state dimension increases, the required iterations by both algorithms to reach a certain residual error increase. In Fig. \ref{fig:siso20}, we have $\rho(\mathbf{A})=0.93$, and for Figs. \ref{fig:siso25} and \ref{fig:siso30} we have $\rho(\mathbf{A})=0.975$. To tackle the heavy-tail issue of the transfer function, we increase $T$ and also decrease the learning rate when the size of the hidden state increases. For Algorithms \ref{al:comb1} and \ref{al:comb}, we have $(T,\eta)=\{(800,3\times 10^{-4}),(1300,3\times 10^{-4}),(1600,2\times 10^{-4})\}$, when the $n=\{20,25,30\}$, respectively. For Algorithm \ref{al:comb1}, the batch size is $10,000$. The performance of Algorithms \ref{al:comb} and \ref{al:comb1} for previously described systems is depicted in Figs. \ref{fig:siso20n}-\ref{fig:siso30n} and \ref{fig:siso20offline}-\ref{fig:siso30offline}, respectively, when the measurement noise follows a normal distribution with zero mean and standard deviation $0.1$. The input to the system is Gaussian noise with zero mean and standard deviation $1$. For both approaches, we set $(T,\eta)=\{(170,5\times 10^{-8}),(400,4\times 10^{-8}),(600,3\times 10^{-8})\}$ for the three considered systems. The batch size is $10^7$ for Algorithm \ref{al:comb1}. 
In both noisy and noise-free systems, we observe that Algorithm \ref{al:comb1} requires a greater number of iterations compared to Algorithm \ref{al:comb} to reach a certain residual error. 
\begin{figure}
	\centering
	\subfigure{\includegraphics[width=0.32\textwidth]{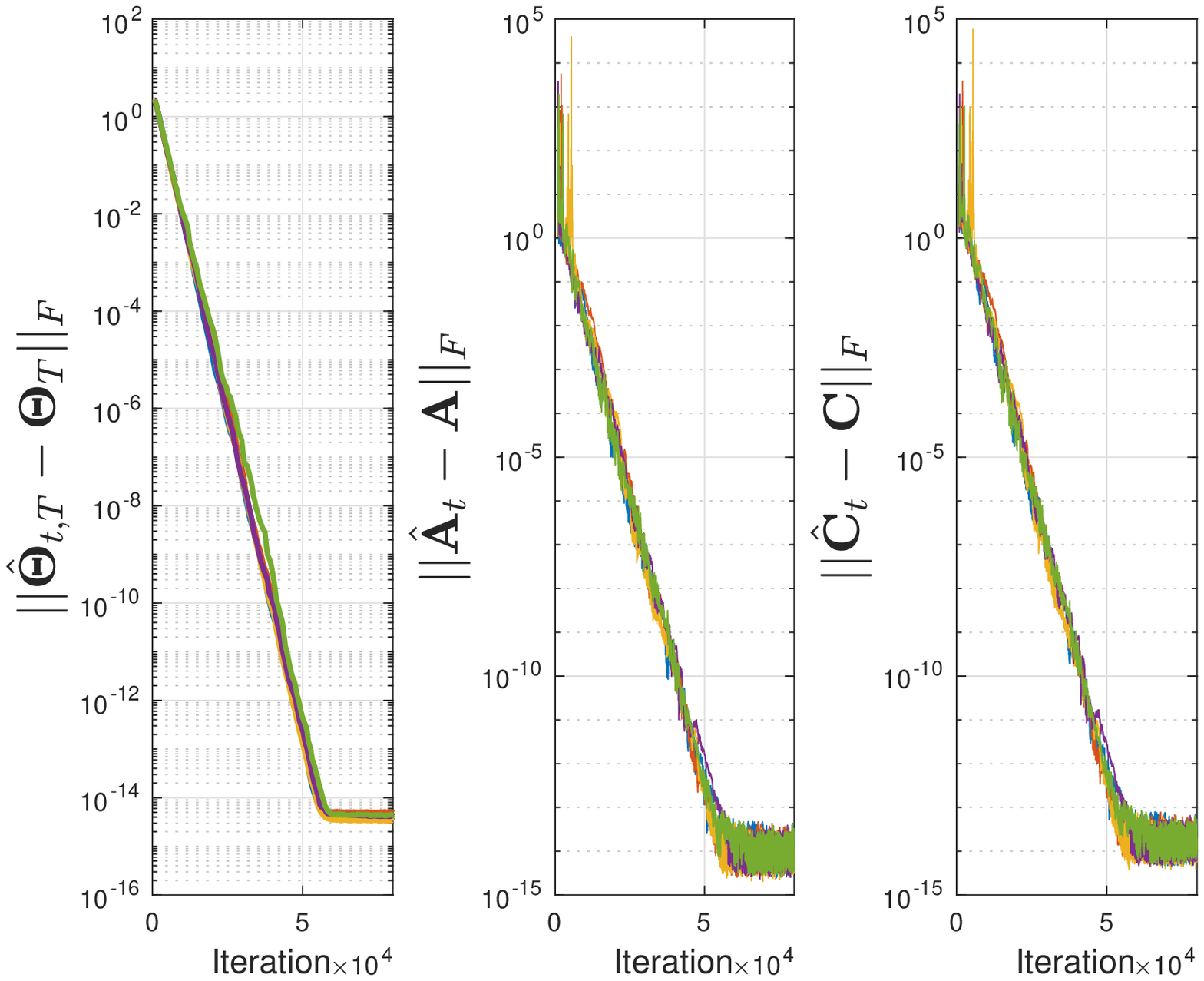}			\label{fig:siso20}}
	\subfigure{\includegraphics[width=0.32\textwidth]{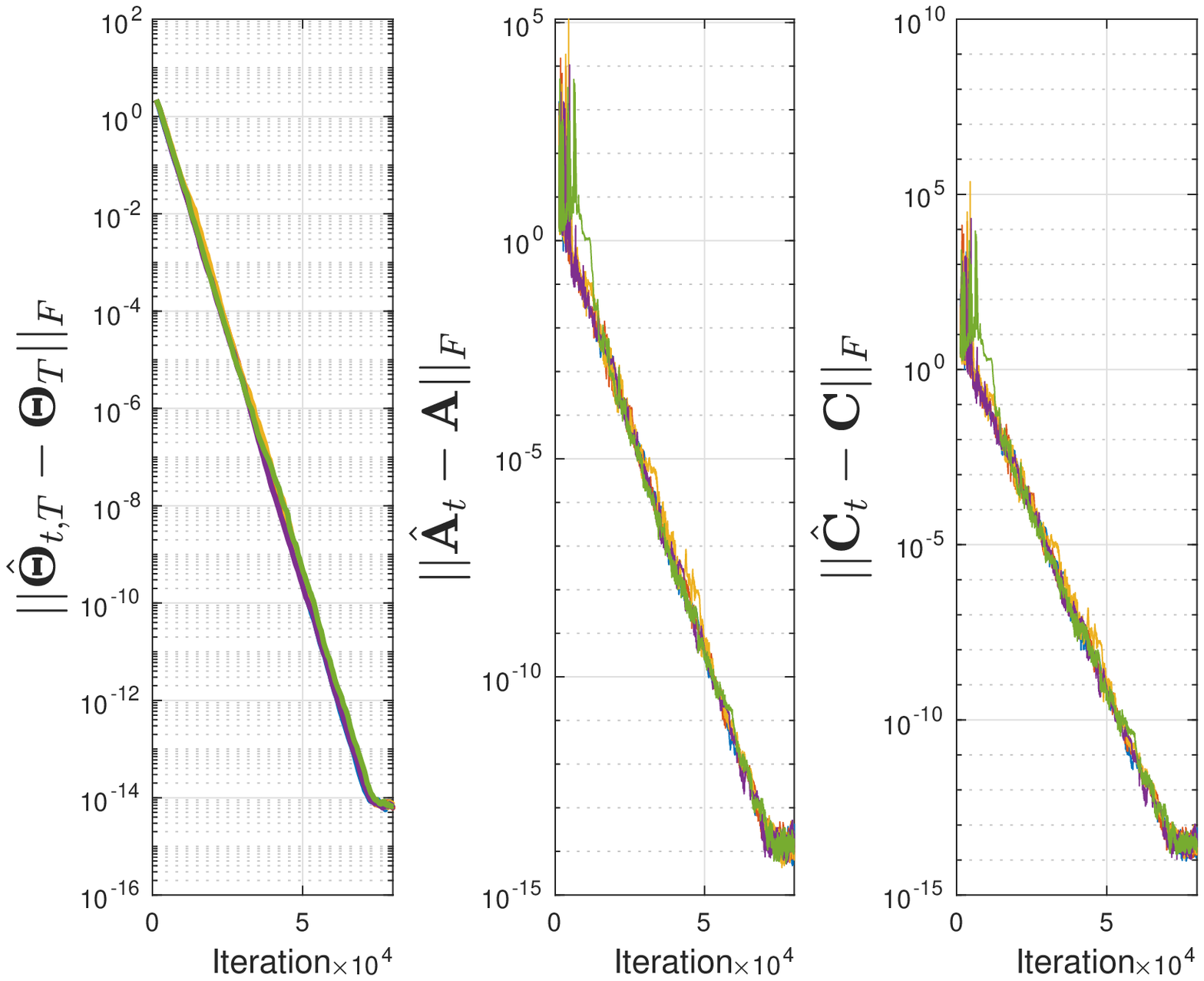}			\label{fig:siso25}}
	\subfigure{\includegraphics[width=0.32\textwidth]{figs/siso30.eps}			\label{fig:siso30}}
	\subfigure{\includegraphics[width=0.32\textwidth]{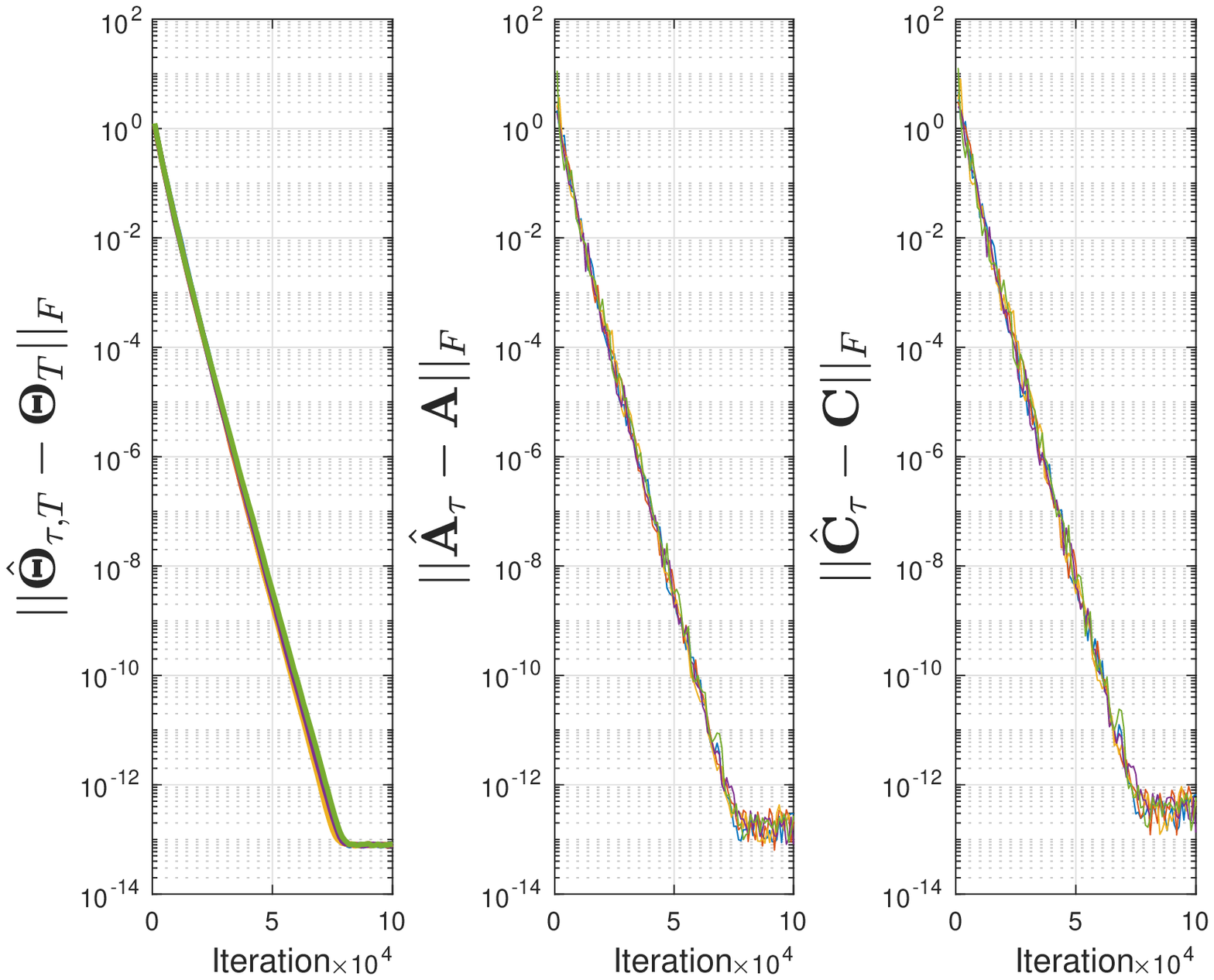}			\label{fig:siso20f}}
	\subfigure{\includegraphics[width=0.32\textwidth]{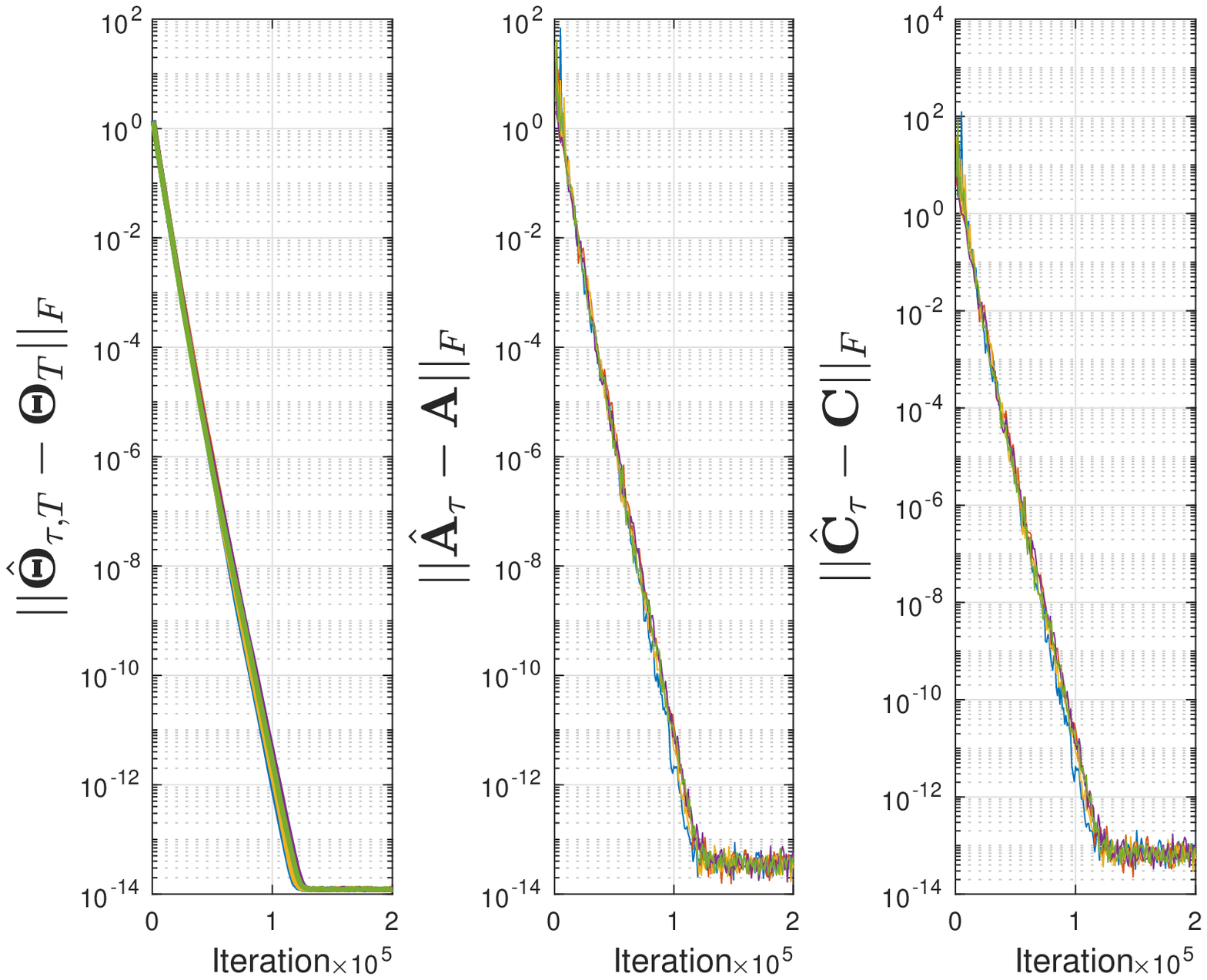}			\label{fig:siso25f}}
	\subfigure{\includegraphics[width=0.32\textwidth]{figs/siso30f.eps}			\label{fig:siso30f}}
	\subfigure{\includegraphics[width=0.32\textwidth]{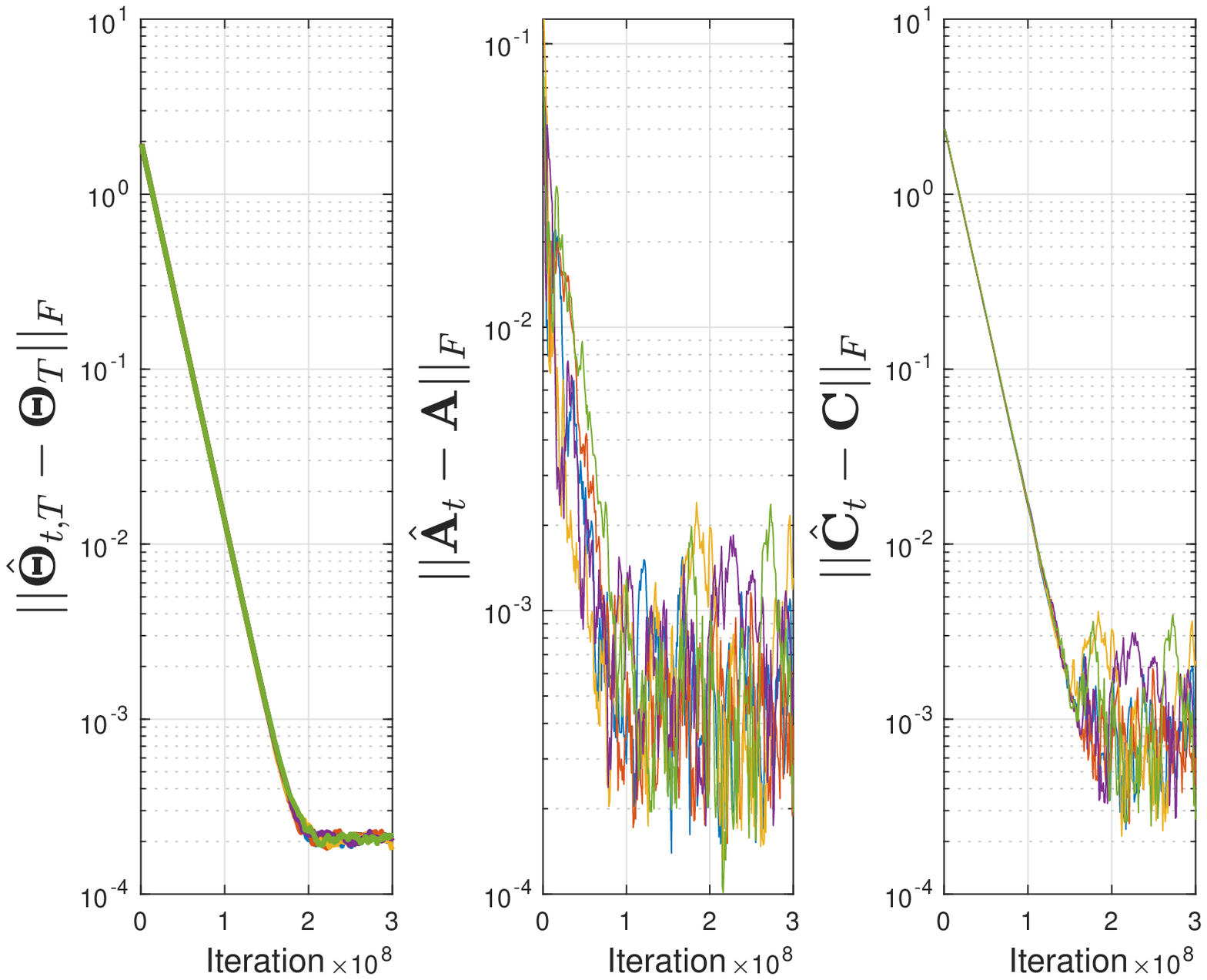}			\label{fig:siso20n}}
	\subfigure{\includegraphics[width=0.32\textwidth]{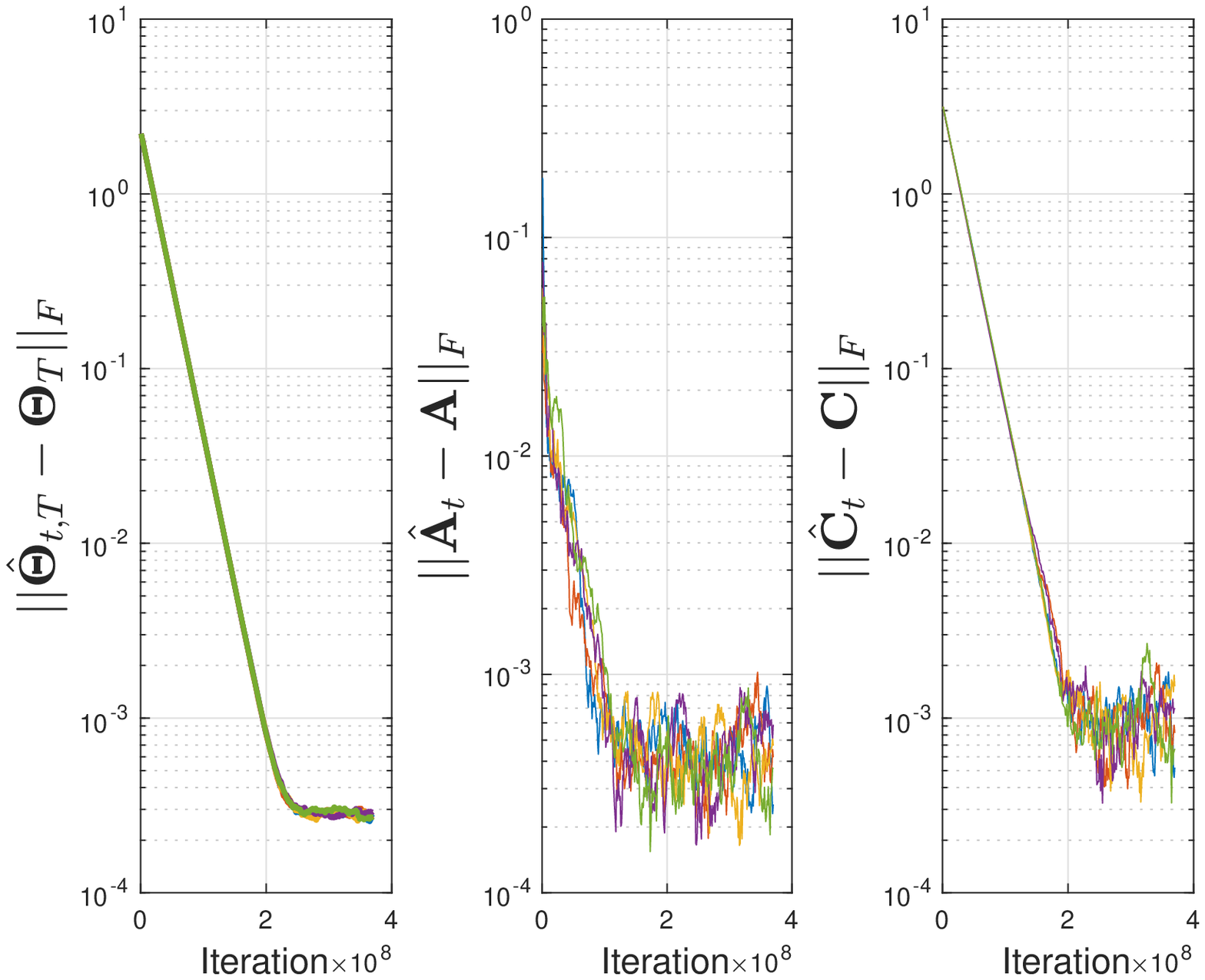}			\label{fig:siso25n}}
	\subfigure{\includegraphics[width=0.32\textwidth]{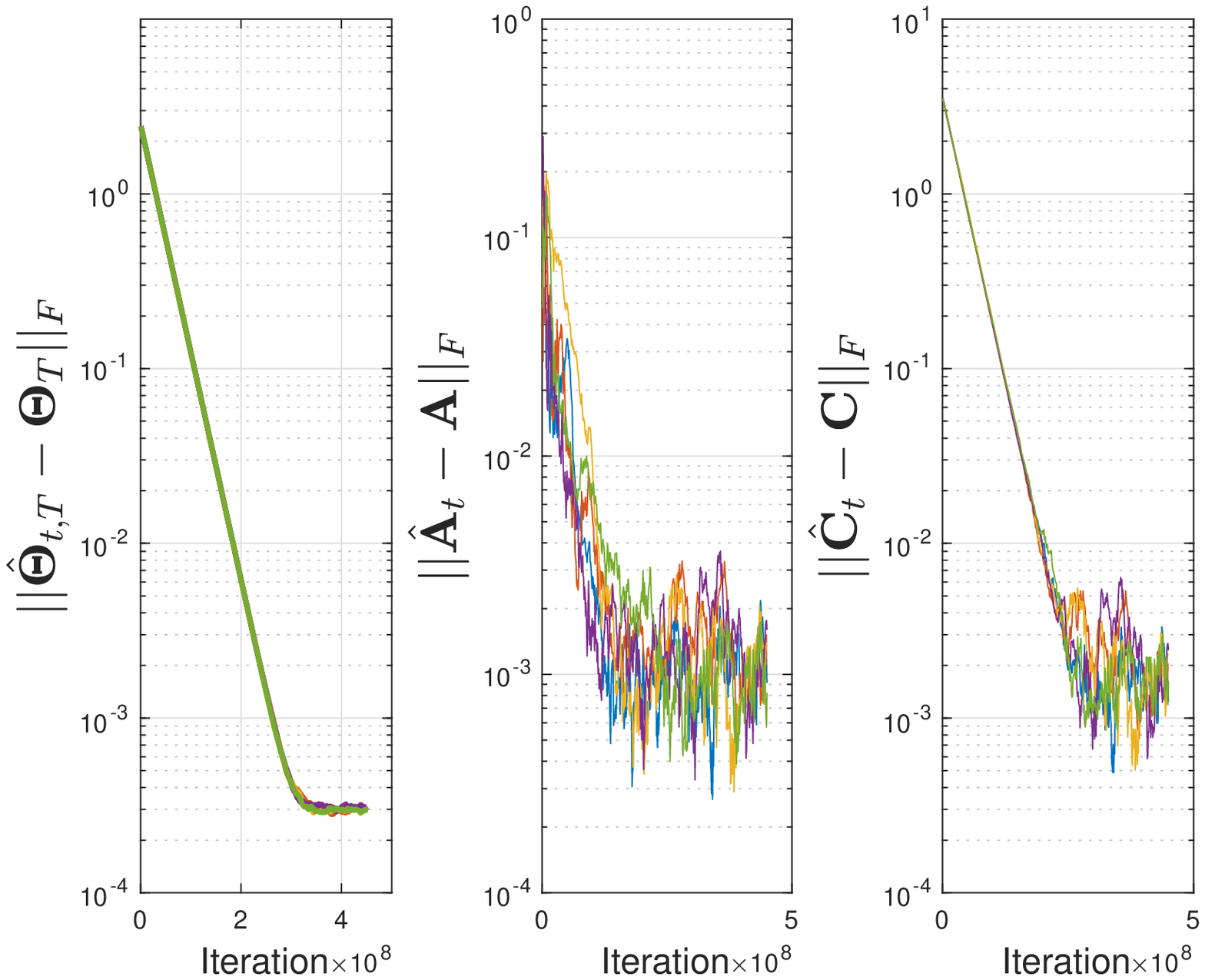}			\label{fig:siso30n}}
	\subfigure{\includegraphics[width=0.32\textwidth]{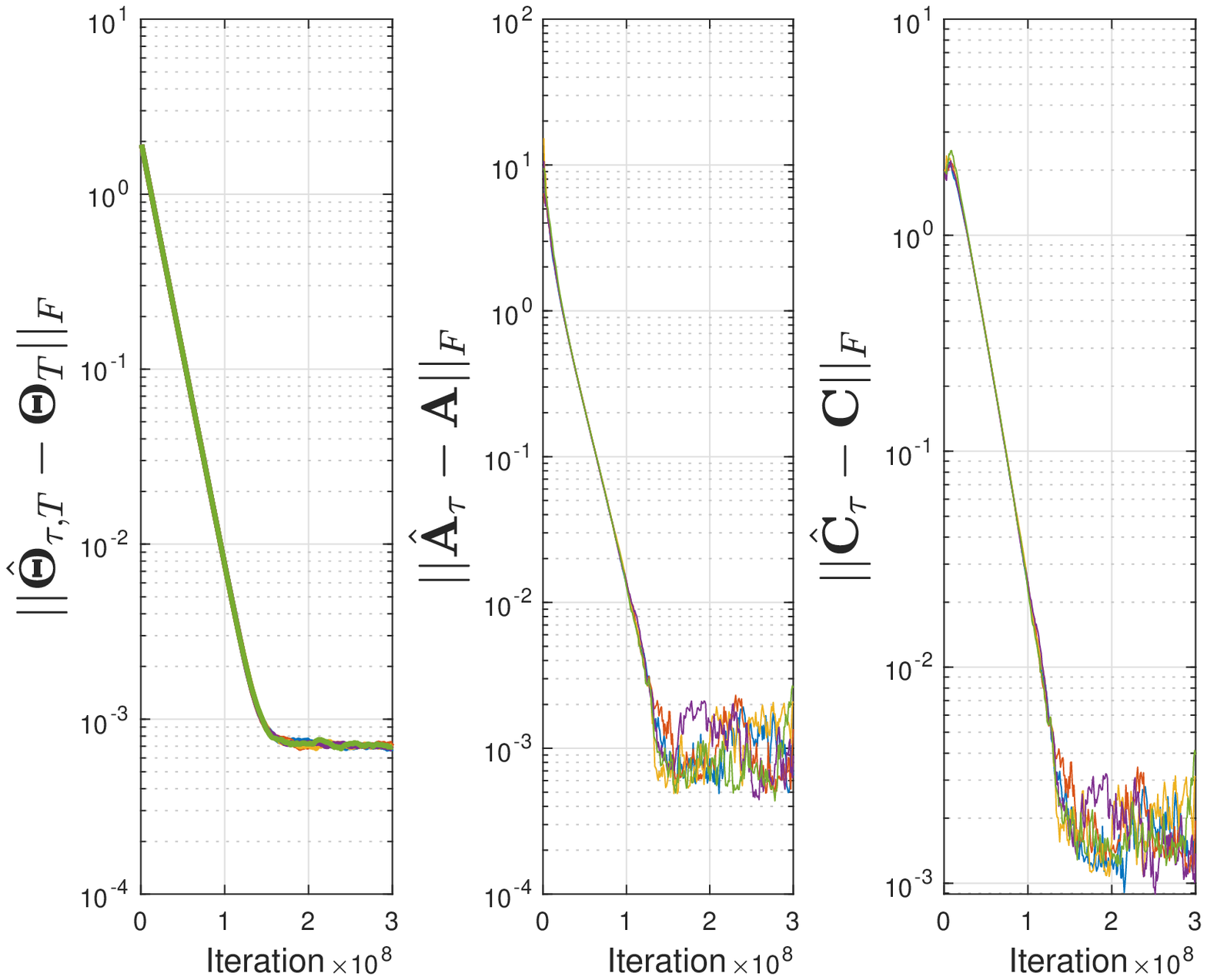}			\label{fig:siso20offline}}
	\subfigure{\includegraphics[width=0.32\textwidth]{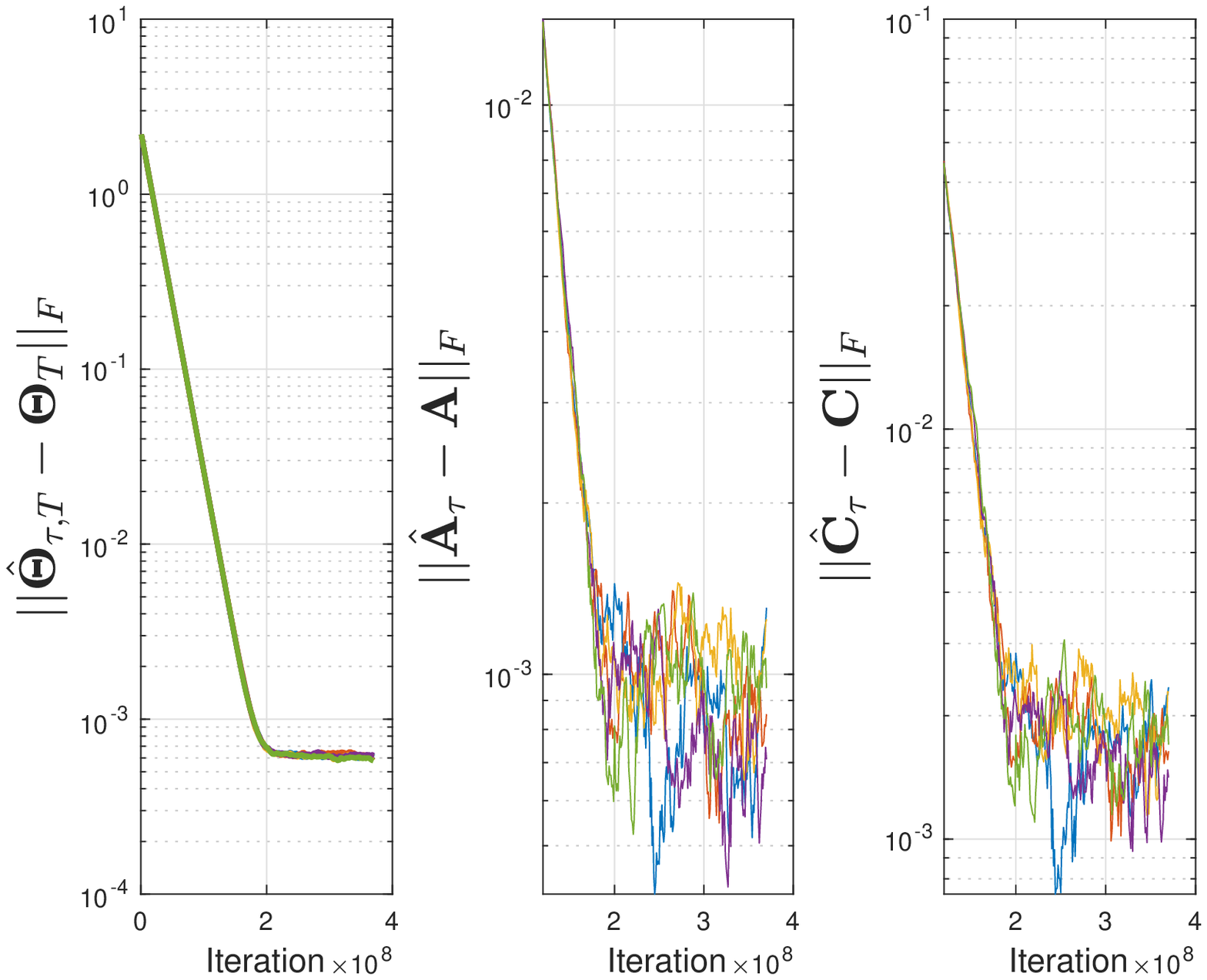}			\label{fig:siso25offline}}
	\subfigure{\includegraphics[width=0.32\textwidth]{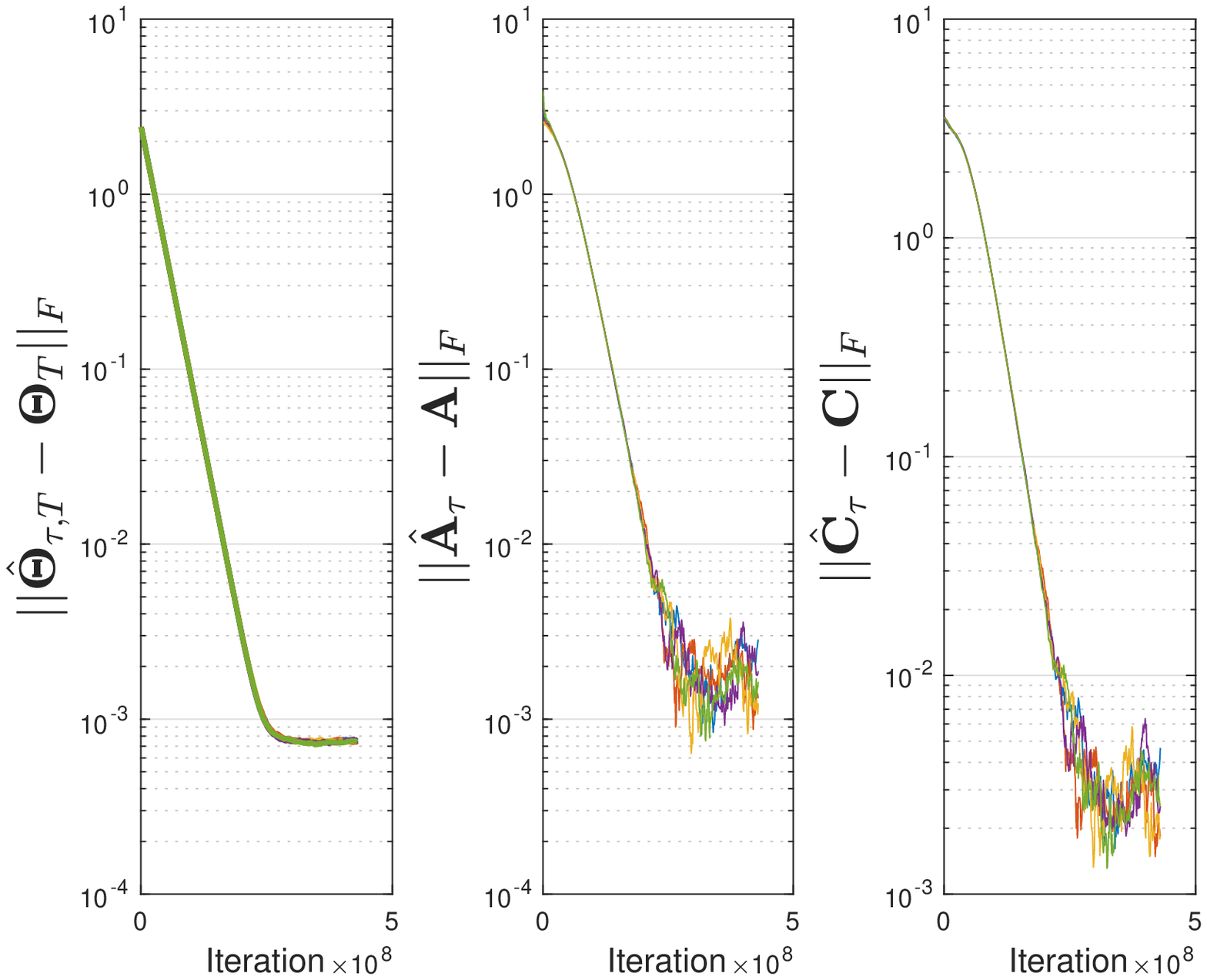}			\label{fig:siso30offline}}
	\caption{Results for SISO systems.  In (a)-(f), the systems are noise-free. In (g)-(l), the systems are noisy. In (a), (d), (g) and (j), $n=20$, $m=1$, $p=1$. In (b), (e), (h) and (k), $n=25$, $m=1$, $p=1$. In (c), (f), (i) and (l), $n=30$, $m=1$, $p=1$.}
\end{figure}

\subsection{SIMO}
We consider three different SIMO systems where $(n,p)=(20,4)$, $(n,p)=(25,5)$ and $(n,p)=(30,6)$. We observe from Figs. \ref{fig:simo20}, \ref{fig:simo25} and \ref{fig:simo30} that when the measurement noise is zero, in all three cases Algorithm \ref{al:comb}  reaches very close to the machine epsilon. For the above three dimensions, the necessary truncation length and the learning rate do not change significantly when the hidden state dimension and output size increase. The spectral radius of $\mathbf{A}$ in Figs. \ref{fig:simo20}, \ref{fig:simo25} and \ref{fig:simo30} is $0.93$, $0.95$ and $0.96$, respectively. The convergence of Algorithm \ref{al:comb1} for the three considered systems is depicted in Figs. \ref{fig:simo20f}-\ref{fig:simo30f}, when the batch size is $10,000$. For both algorithms, we have $(T,\eta)=\{(800,10^{-5}),(800, 10^{-5}),(800, 10^{-5})\}$. For the above systems, we consider measurement noise with zero mean and standard deviation $0.1$. For both approaches, we set $(T,\eta)=\{(300,4\times 10^{-8}),(500,3\times 10^{-8}),(700,3\times 10^{-8})\}$. The batch size is $10^7$ for Algorithm \ref{al:comb1}. The convergence of Algorithm \ref{al:comb} for noisy systems is depicted in Figs. \ref{fig:simo20n}-\ref{fig:simo30n}, and the convergence of Algorithm \ref{al:comb1}  is depicted in Figs. \ref{fig:simo20offline}-\ref{fig:simo30offline}. The input to the system is Gaussian noise with zero mean and standard deviation $1$, and system noise has the standard deviation $0.1$.
\begin{figure}
	\centering
	\subfigure{\includegraphics[width=0.32\textwidth]{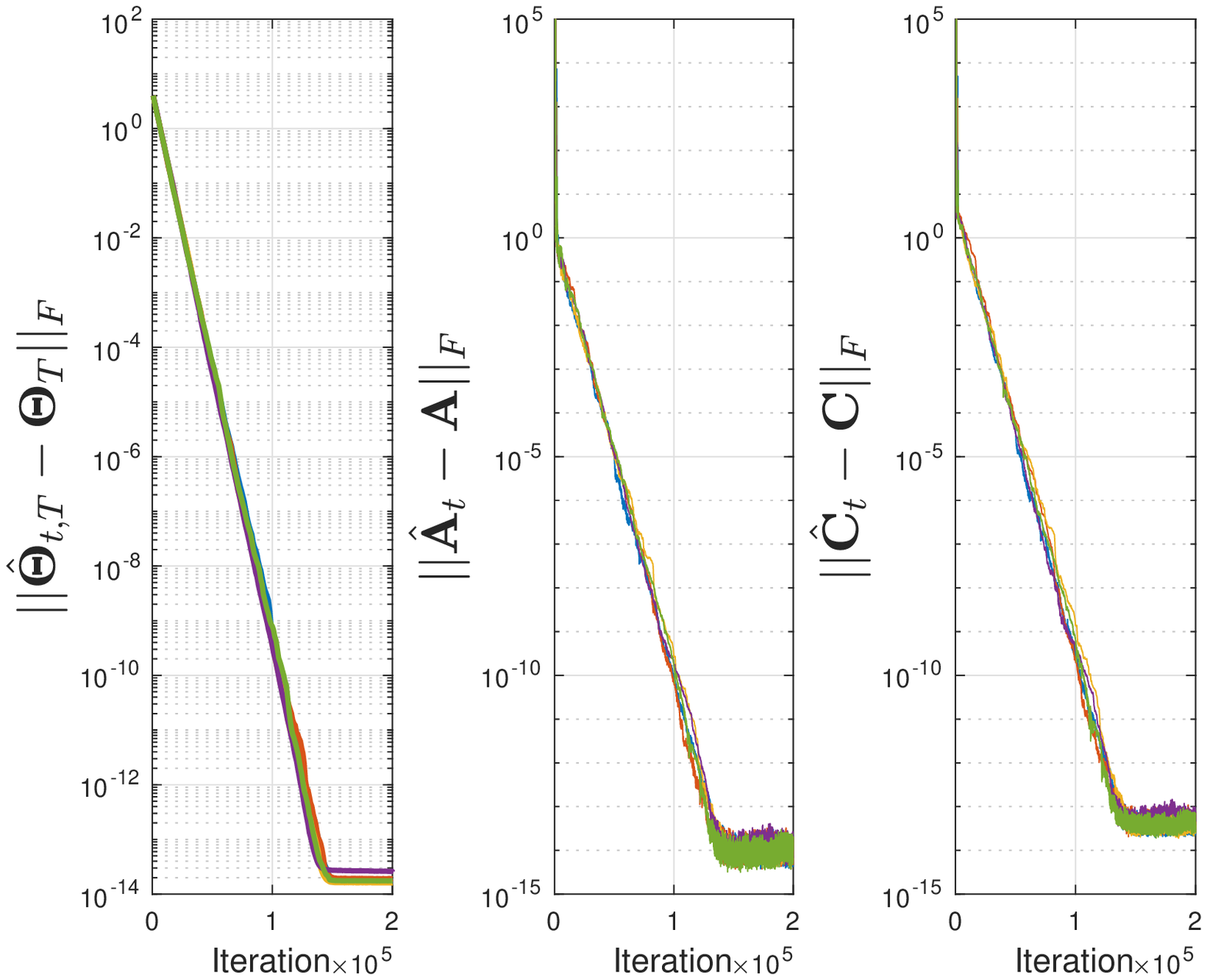}			\label{fig:simo20}}
	\subfigure{\includegraphics[width=0.32\textwidth]{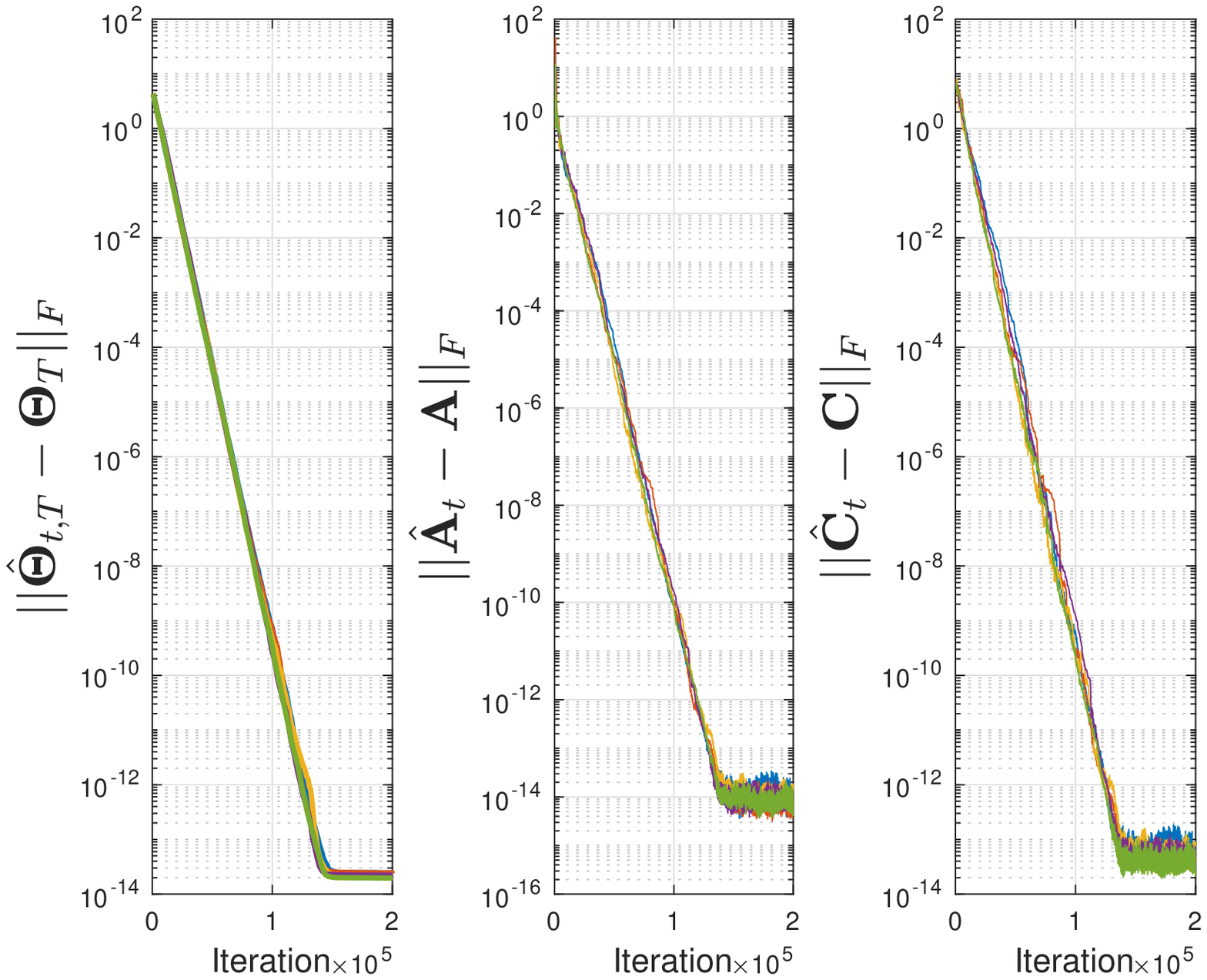}			\label{fig:simo25}}
	\subfigure{\includegraphics[width=0.32\textwidth]{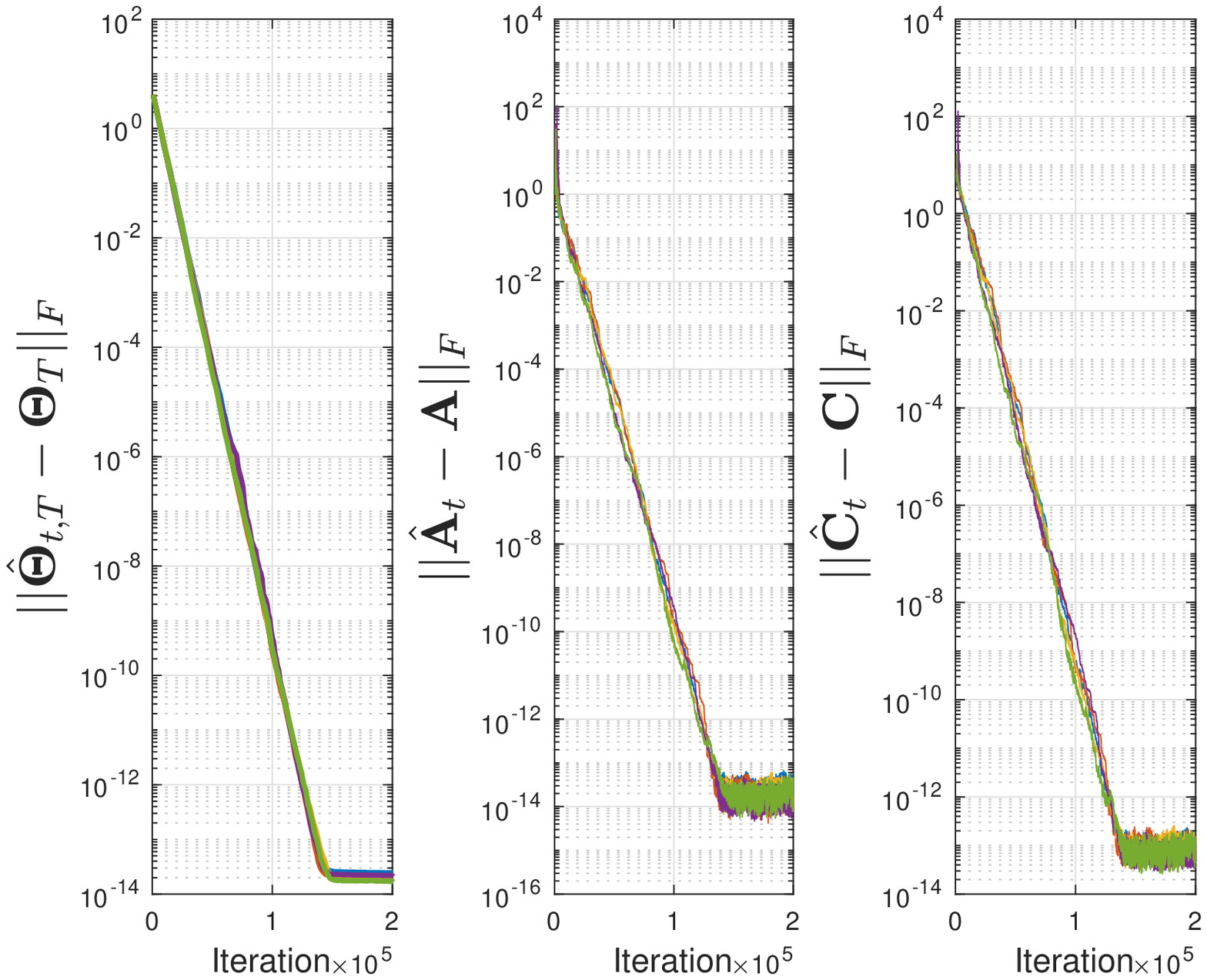}			\label{fig:simo30}}
	\subfigure{\includegraphics[width=0.32\textwidth]{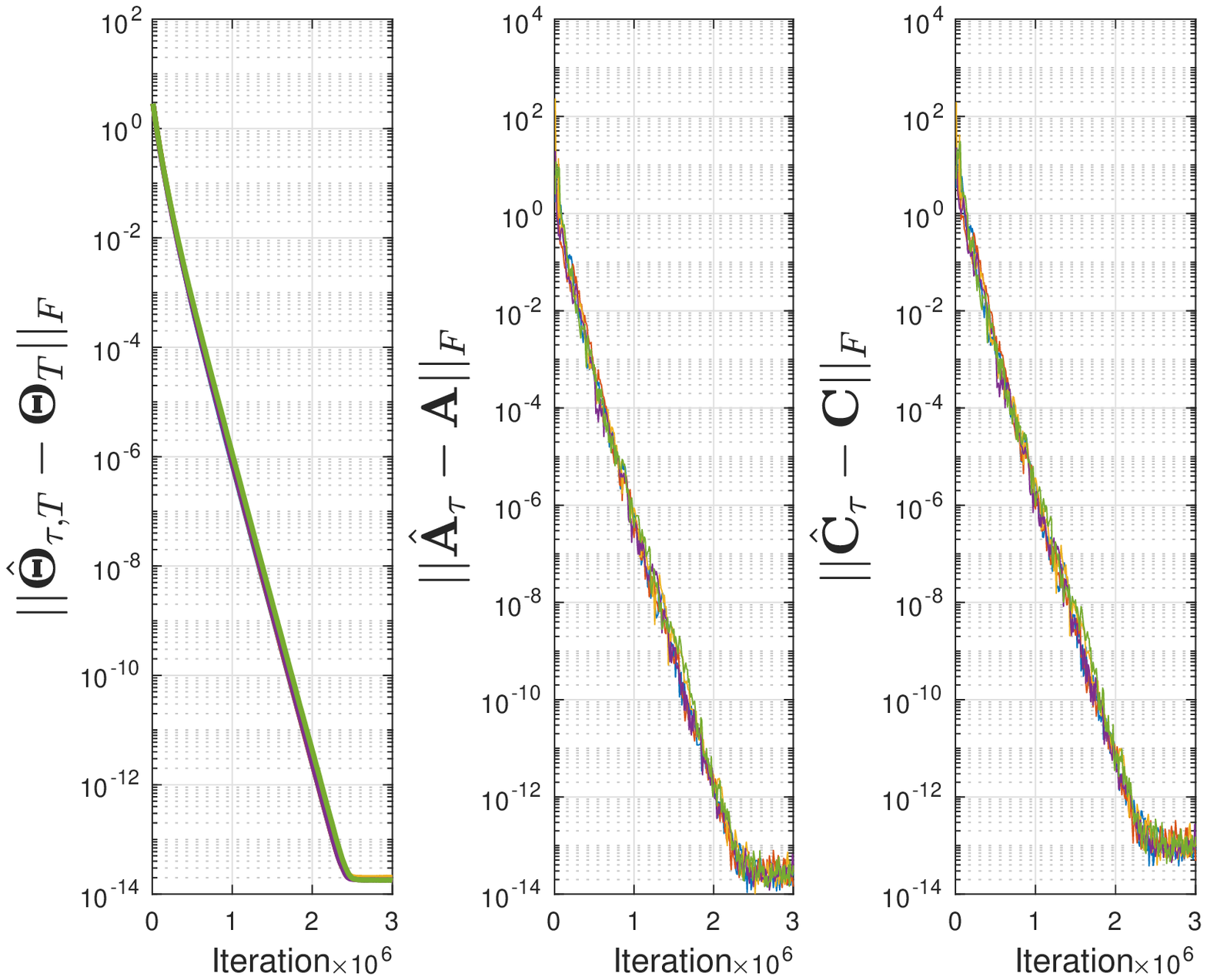}			\label{fig:simo20f}}
	\subfigure{\includegraphics[width=0.32\textwidth]{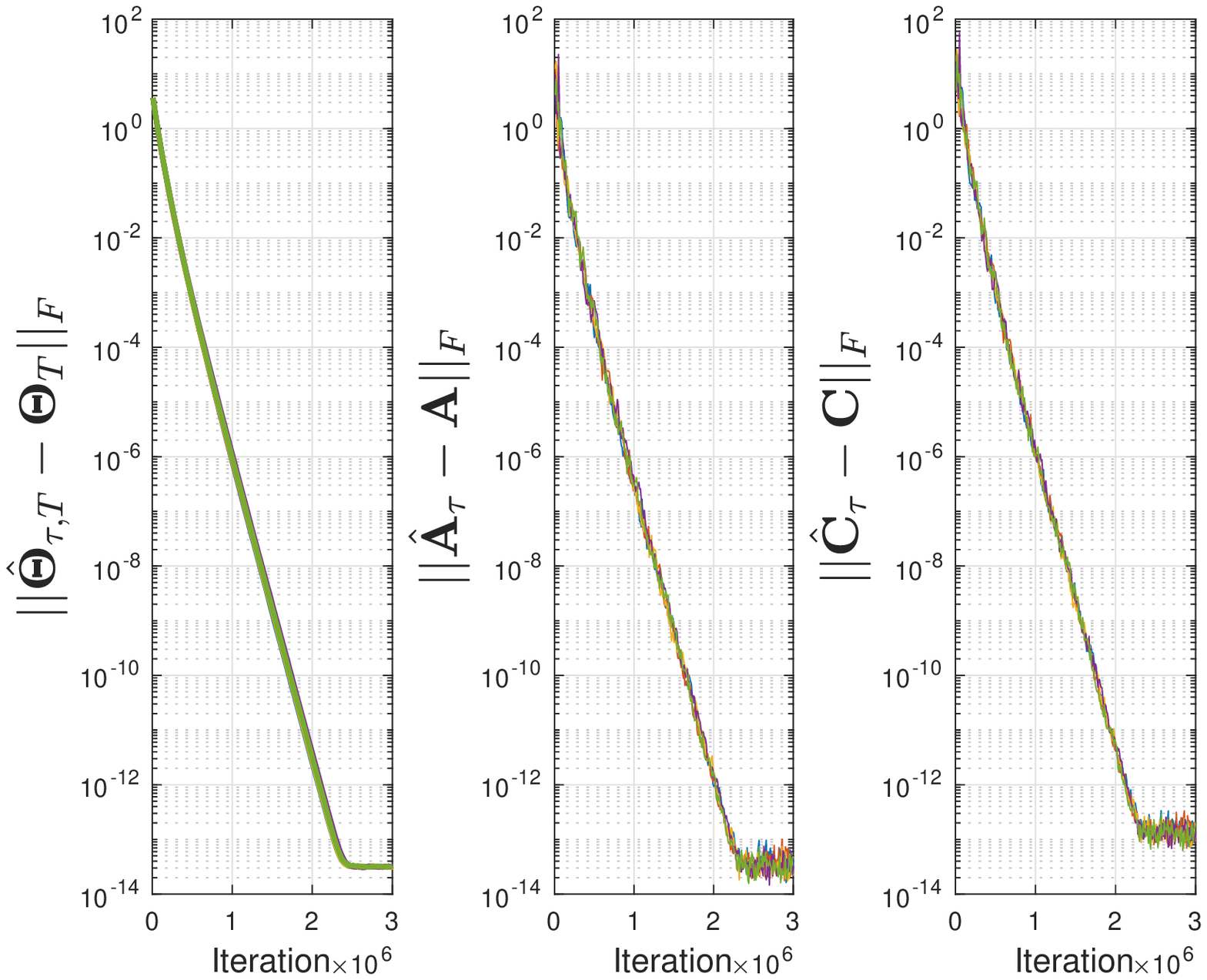}			\label{fig:simo25f}}
	\subfigure{\includegraphics[width=0.32\textwidth]{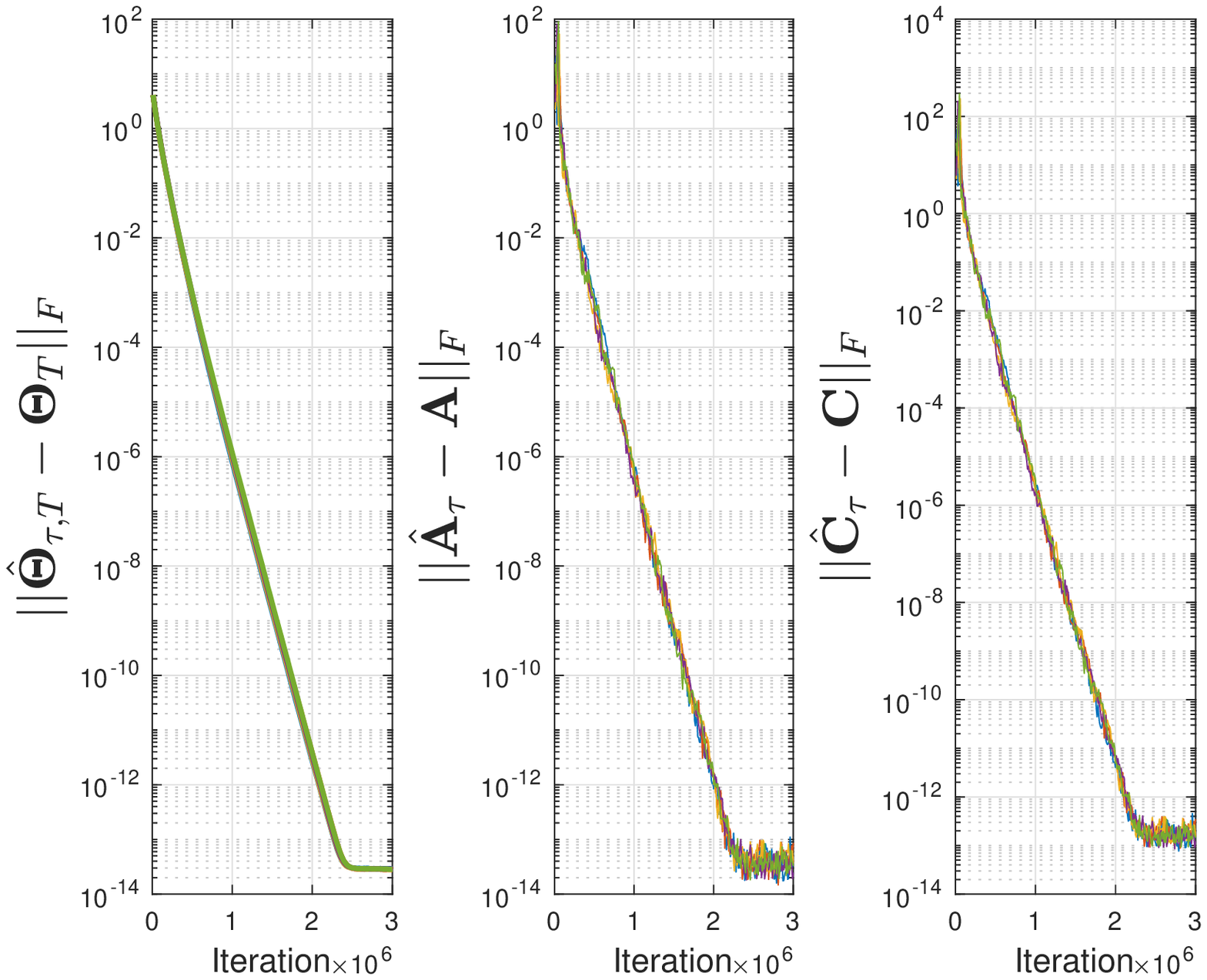}			\label{fig:simo30f}}
	\subfigure{\includegraphics[width=0.32\textwidth]{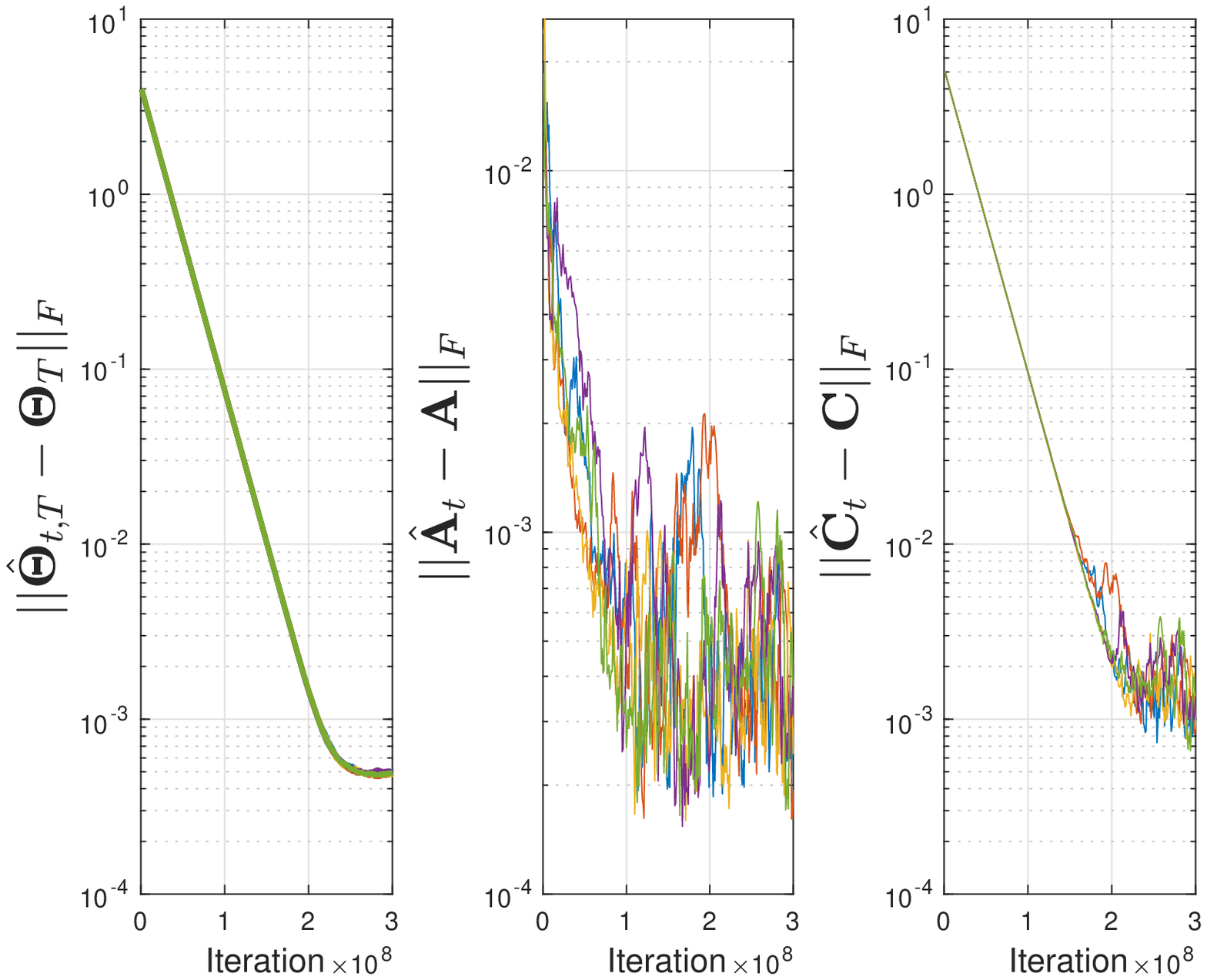}			\label{fig:simo20n}}
	\subfigure{\includegraphics[width=0.32\textwidth]{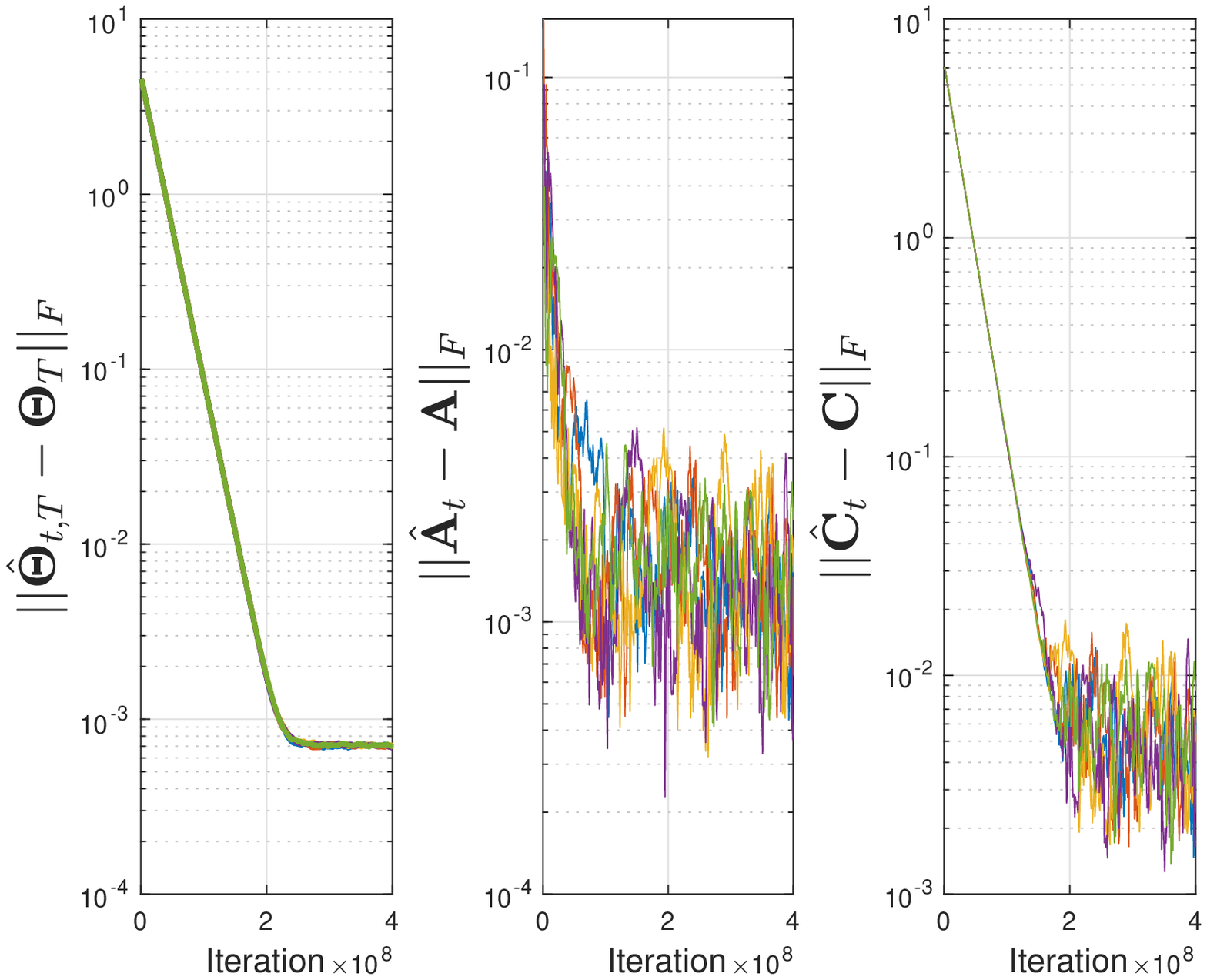}			\label{fig:simo25n}}
	\subfigure{\includegraphics[width=0.32\textwidth]{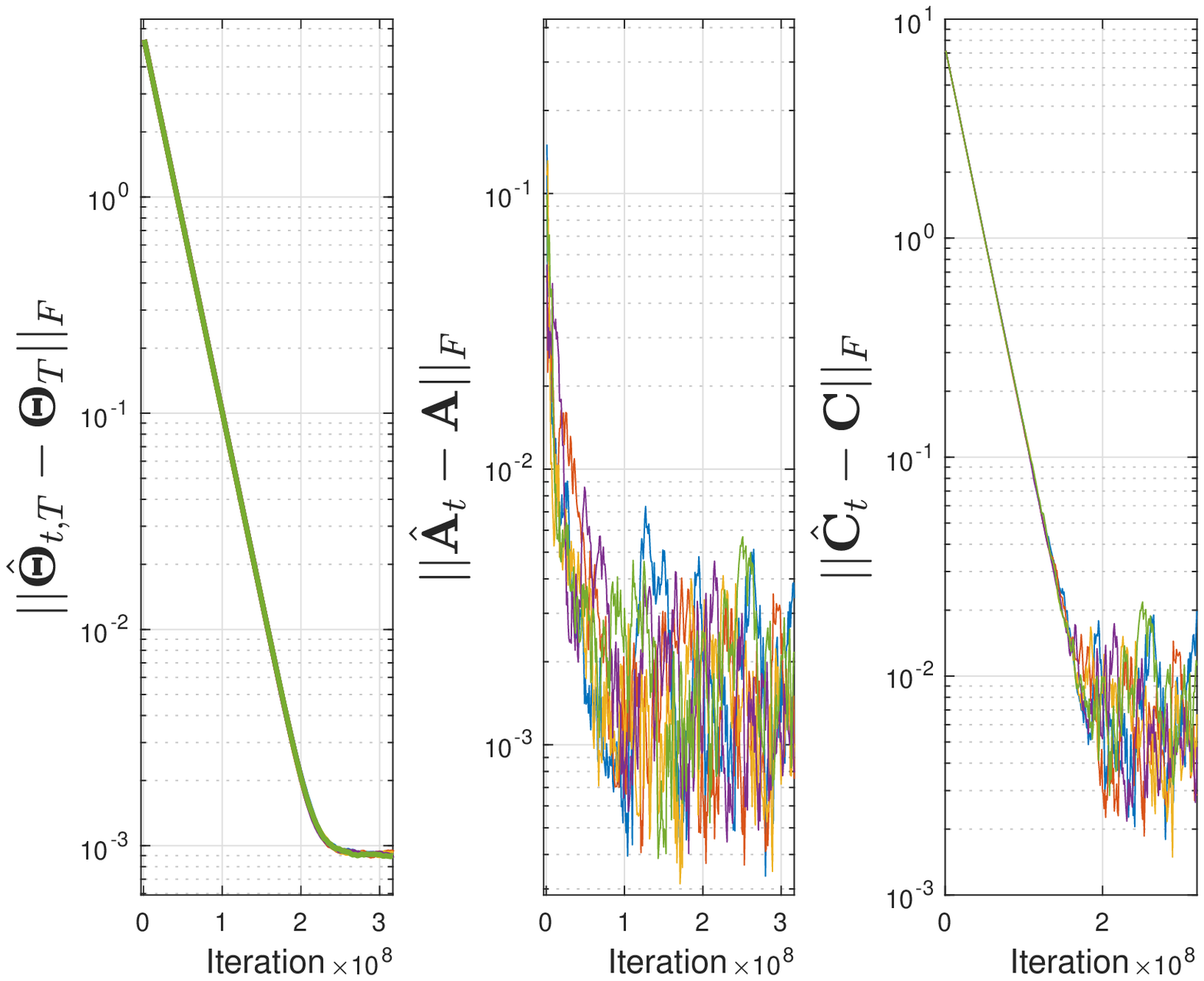}			\label{fig:simo30n}}
	\subfigure{\includegraphics[width=0.32\textwidth]{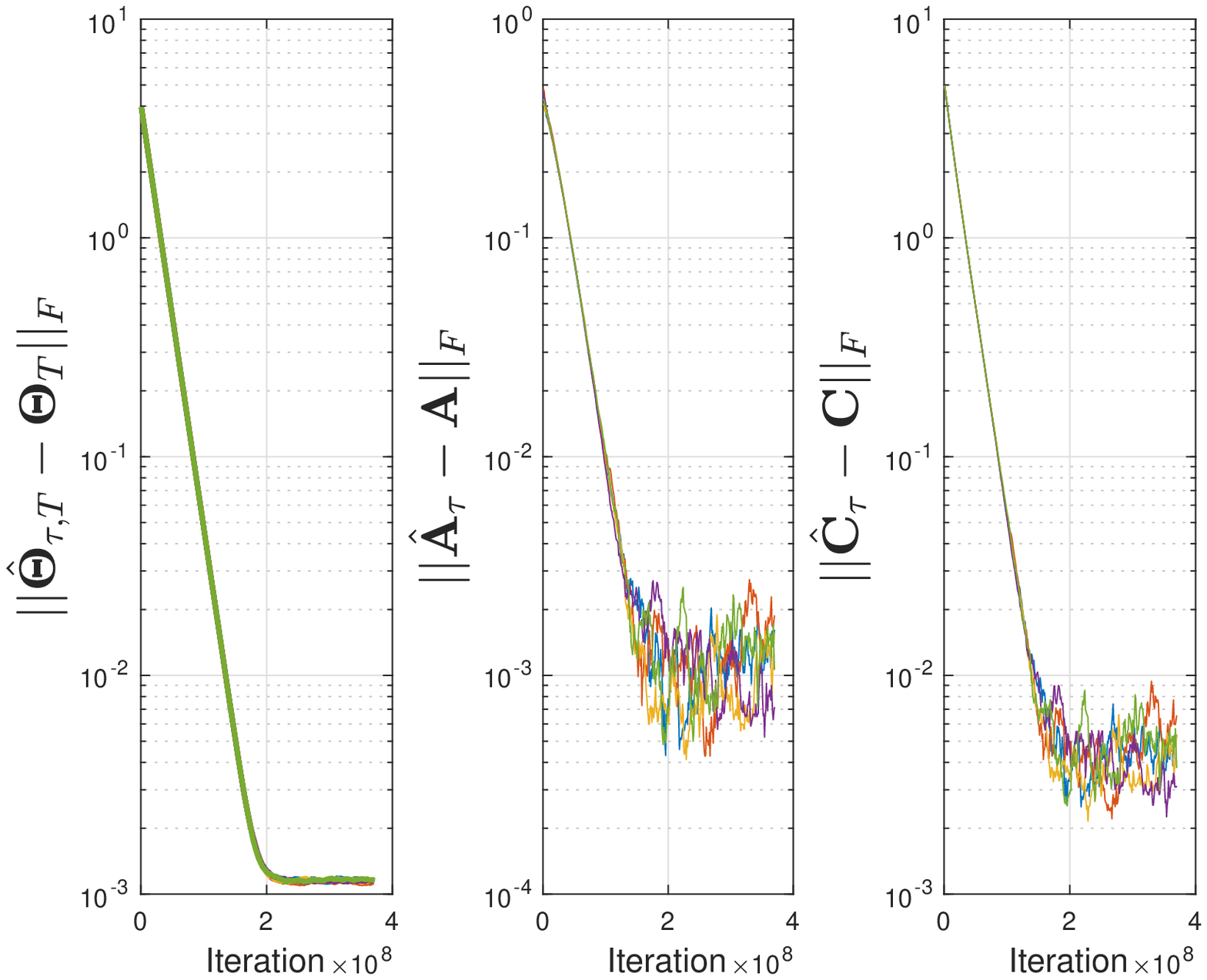}			\label{fig:simo20offline}}
	\subfigure{\includegraphics[width=0.32\textwidth]{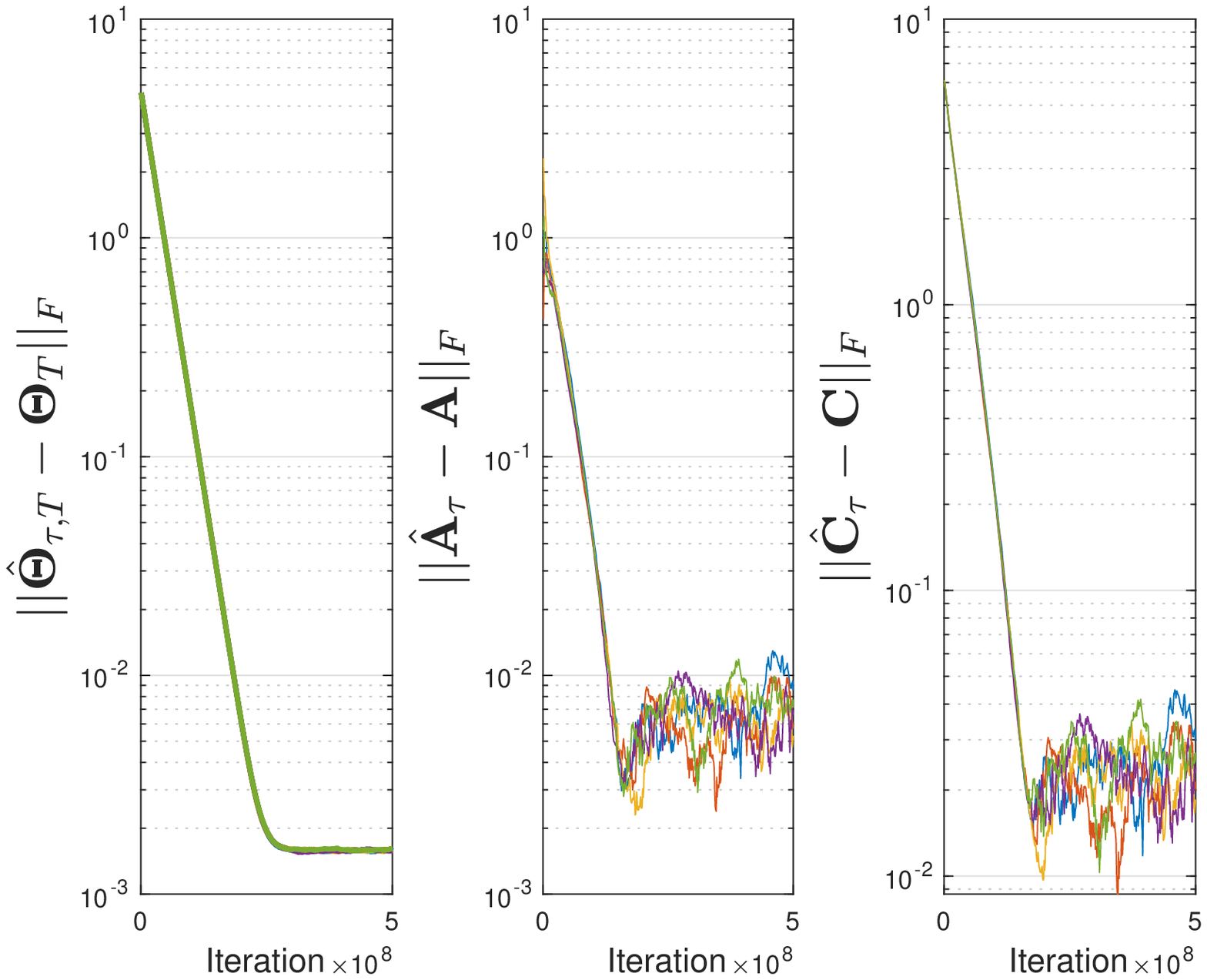}			\label{fig:simo25offline}}
	\subfigure{\includegraphics[width=0.32\textwidth]{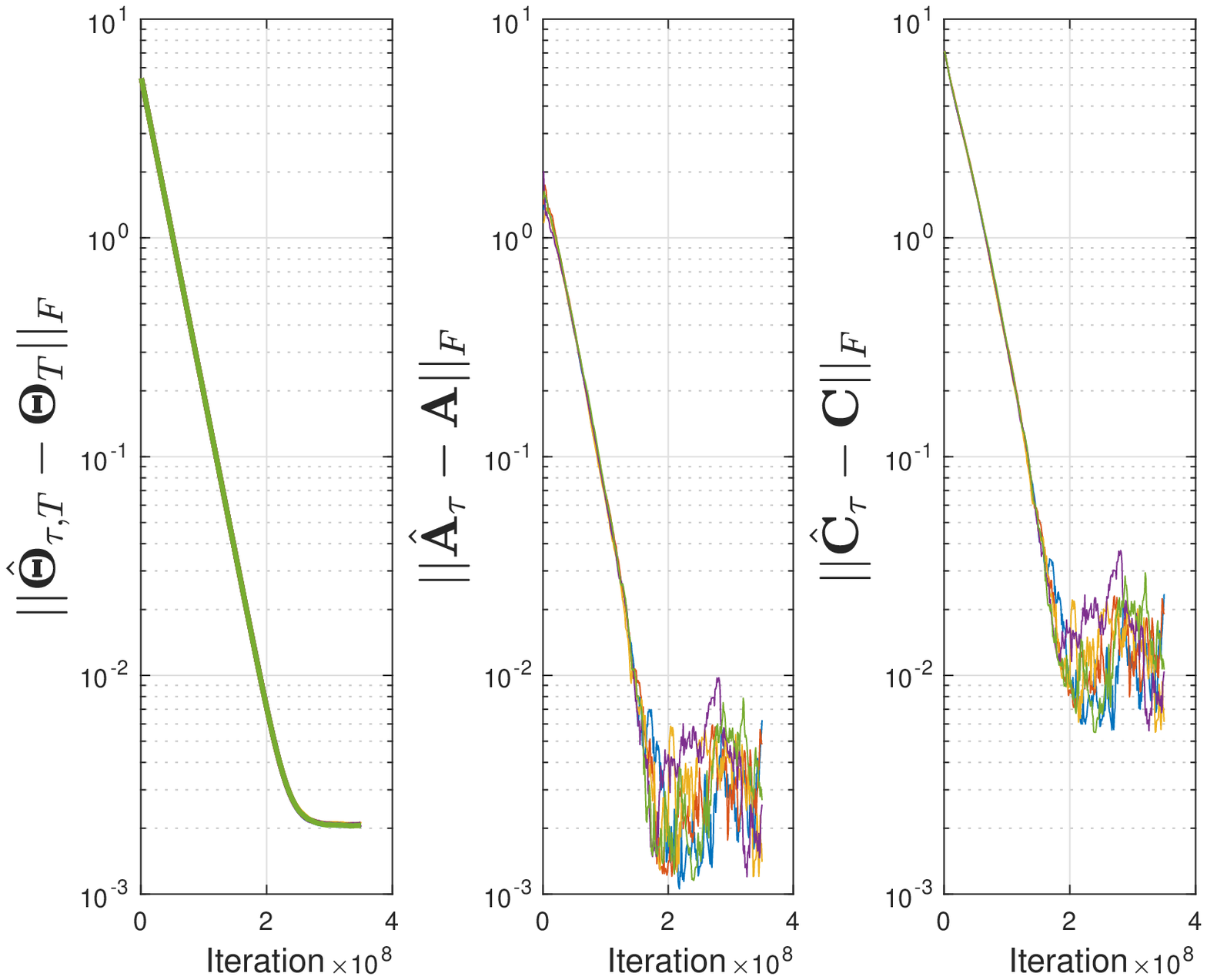}			\label{fig:simo30offline}}
	\caption{Results for SISO systems.  In (a)-(f), the systems are noise-free. In (g)-(l), the systems are noisy. In (a), (d), (g) and (j), $n=20$, $m=1$, $p=1$. In (b), (e), (h) and (k), $n=25$, $m=1$, $p=1$. In (c), (f), (i) and (l), $n=30$, $m=1$, $p=1$.}
\end{figure}

\subsection{MISO}
We consider three different MISO systems for which the hidden state dimensions are 20, 25 and 30. For these three systems, the input sizes are 4, 5 and 6, respectively.  As depicted in Figs. \ref{fig:miso20}, \ref{fig:miso25} and \ref{fig:miso30}, Algorithm \ref{al:comb} learns the unknown parameters at a linear convergence rate. In Fig. \ref{fig:miso20}, we have $\rho(\mathbf{A})=0.75$ and for Figs. \ref{fig:miso25} and \ref{fig:miso30}, we have $\rho(\mathbf{A})=0.70$. In Figs. \ref{fig:miso20}-\ref{fig:miso30}, we have $(T,\eta)=\{(800,10^{-5}),(800, 10^{-5}),(800, 10^{-5})\}$. Identical truncation length and the learning rate are considered for Algorithm \ref{al:comb1} in \ref{fig:miso20f}-\ref{fig:miso30f}. The batch size for Algorithm \ref{al:comb1}  is $10,000$. For noisy systems, the measurement noise is white, and its mean is zero and its variance is $0.01$. The standard deviation of the input signal is $0.1$. When the measurement noise is considered for the above systems, the convergence of Algorithm \ref{al:comb} is depicted in Figs. \ref{fig:miso20n}-\ref{fig:miso30n}. Moreover, Figs. \ref{fig:miso20offline}-\ref{fig:miso30offline} show the convergence of Algorithm \ref{al:comb1} for the three noisy systems. The truncation length and step-size for both algorithms are $(T,\eta)=\{(60,5\times 10^{-8}),(60,5\times 10^{-8}),(60,5\times 10^{-8})\}$ for different systems. The batch size for Algorithm \ref{al:comb1} is $10^7$.
\begin{figure}
	\centering
	\subfigure{\includegraphics[width=0.32\textwidth]{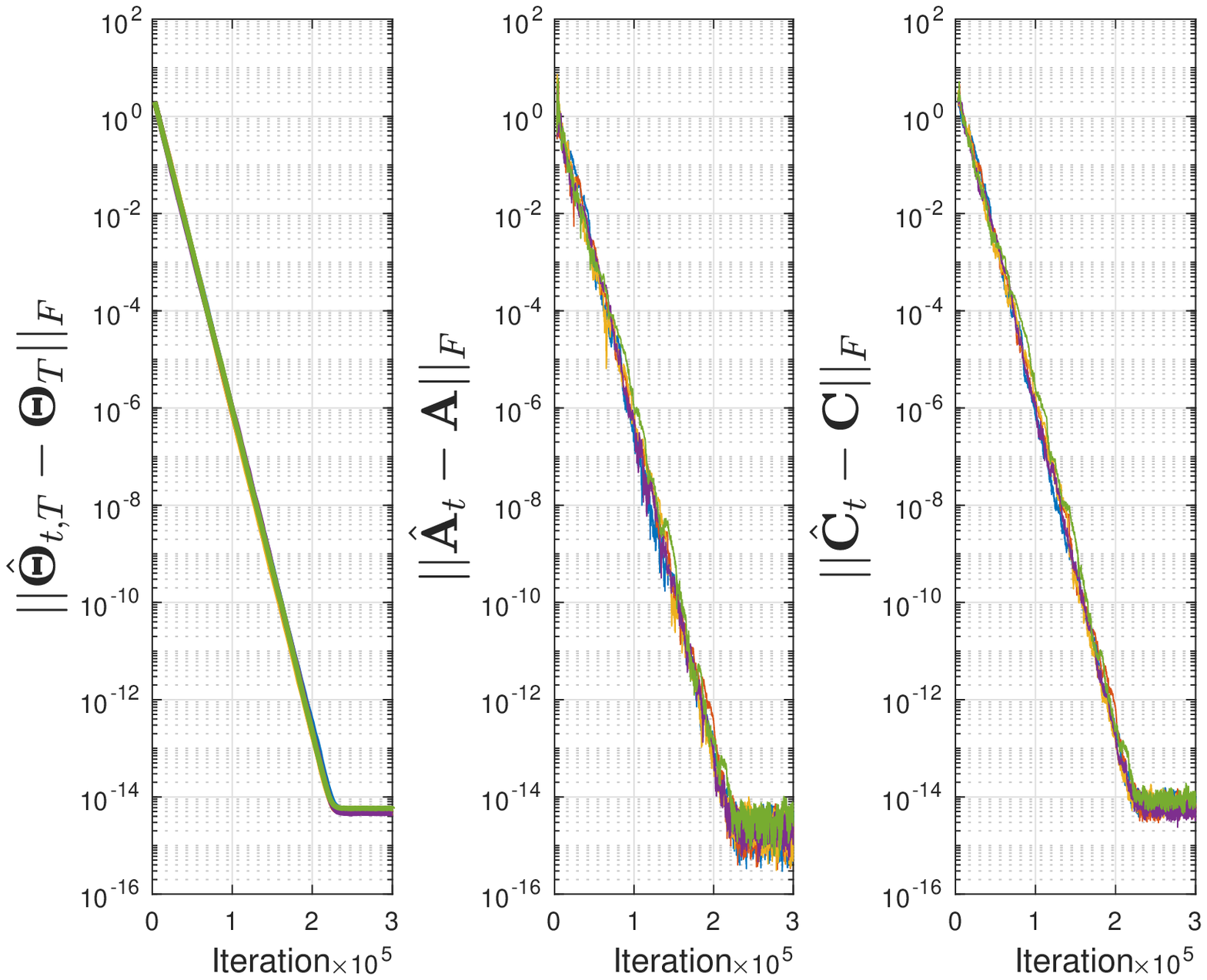}			\label{fig:miso20}}
	\subfigure{\includegraphics[width=0.32\textwidth]{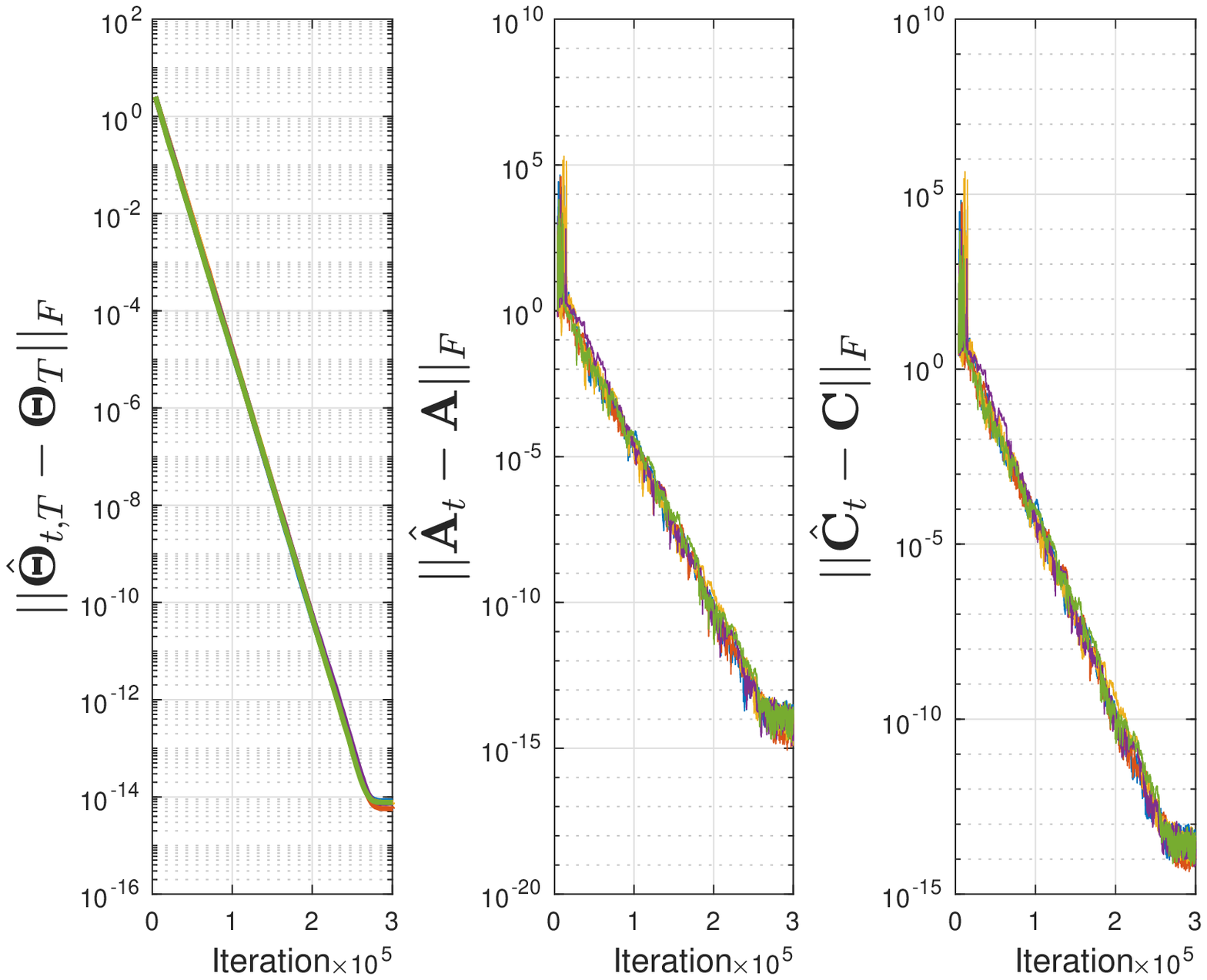}			\label{fig:miso25}}
	\subfigure{\includegraphics[width=0.32\textwidth]{figs/miso30.eps}			\label{fig:miso30}}
	\subfigure{\includegraphics[width=0.32\textwidth]{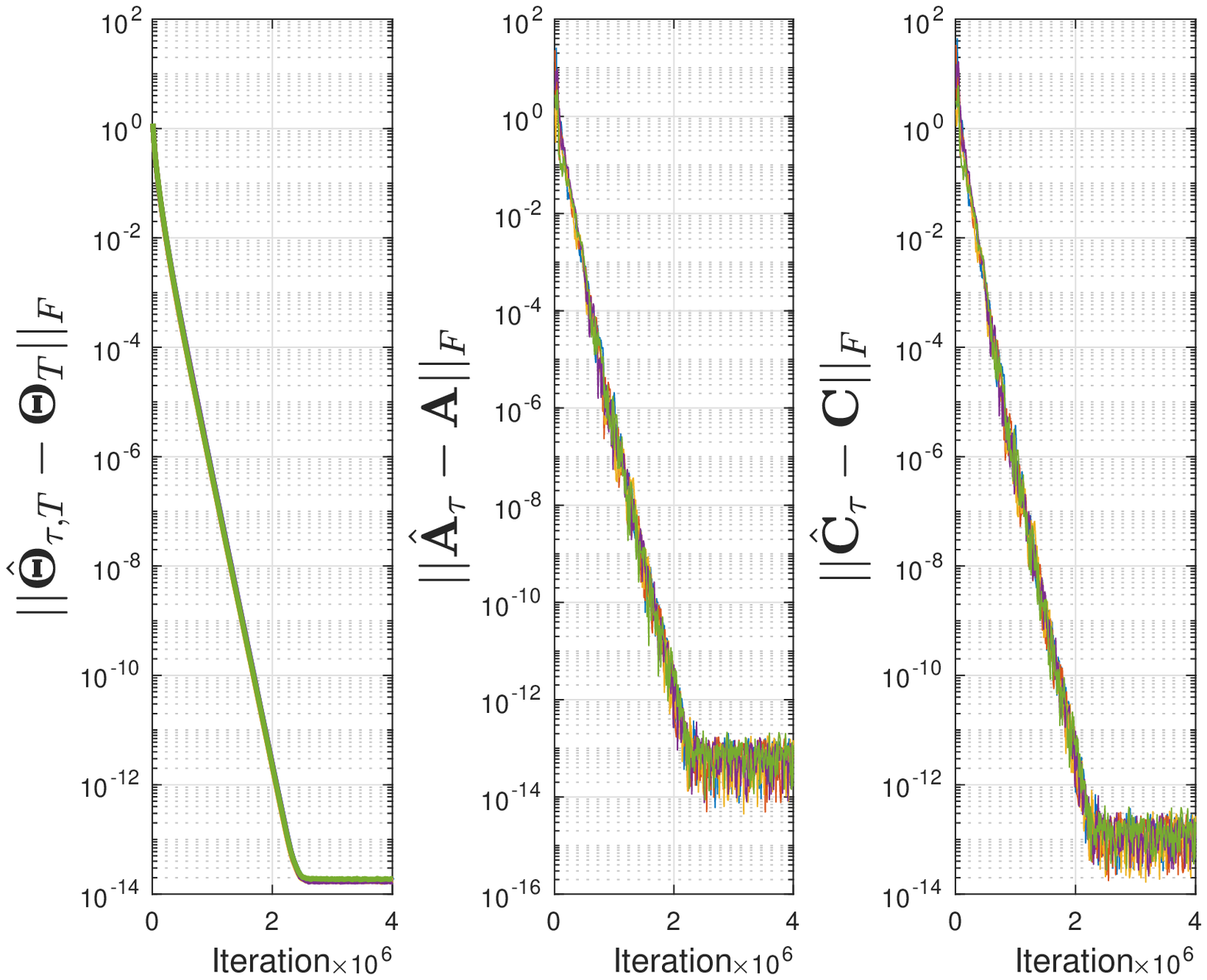}			\label{fig:miso20f}}
	\subfigure{\includegraphics[width=0.32\textwidth]{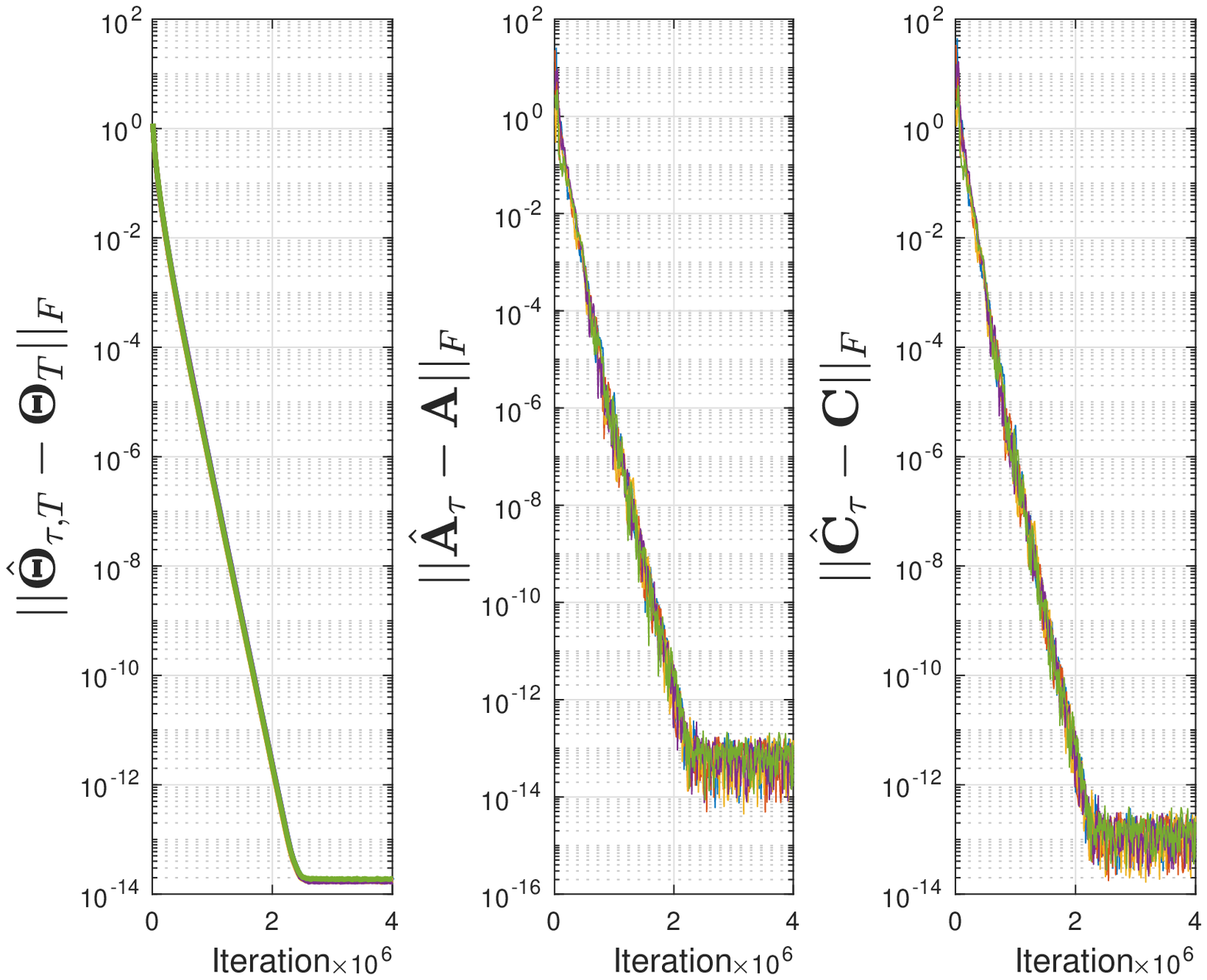}			\label{fig:miso25f}}
	\subfigure{\includegraphics[width=0.32\textwidth]{figs/miso30f.eps}			\label{fig:miso30f}}
	\subfigure{\includegraphics[width=0.32\textwidth]{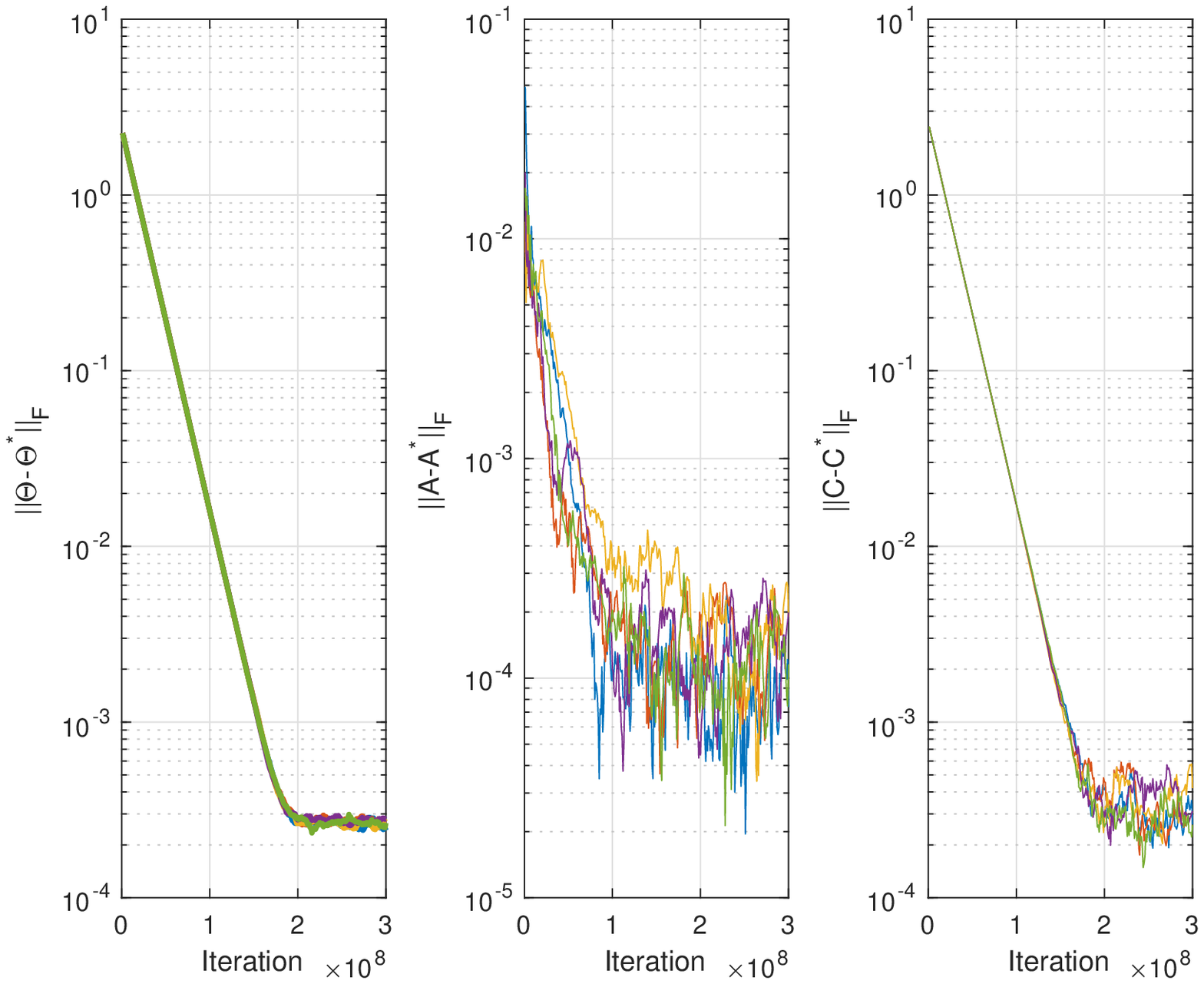}			\label{fig:miso20n}}
	\subfigure{\includegraphics[width=0.32\textwidth]{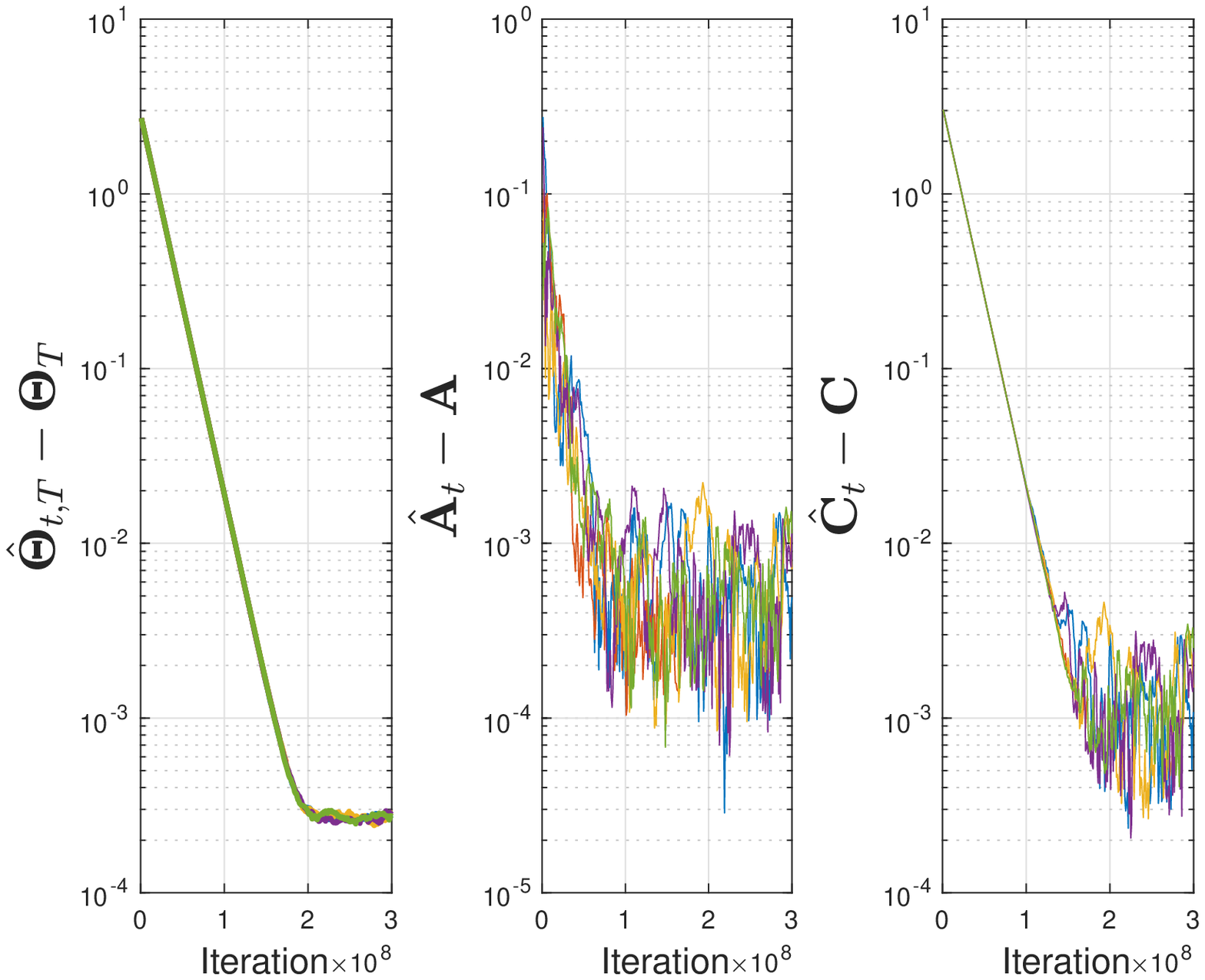}			\label{fig:miso25n}}
	\subfigure{\includegraphics[width=0.32\textwidth]{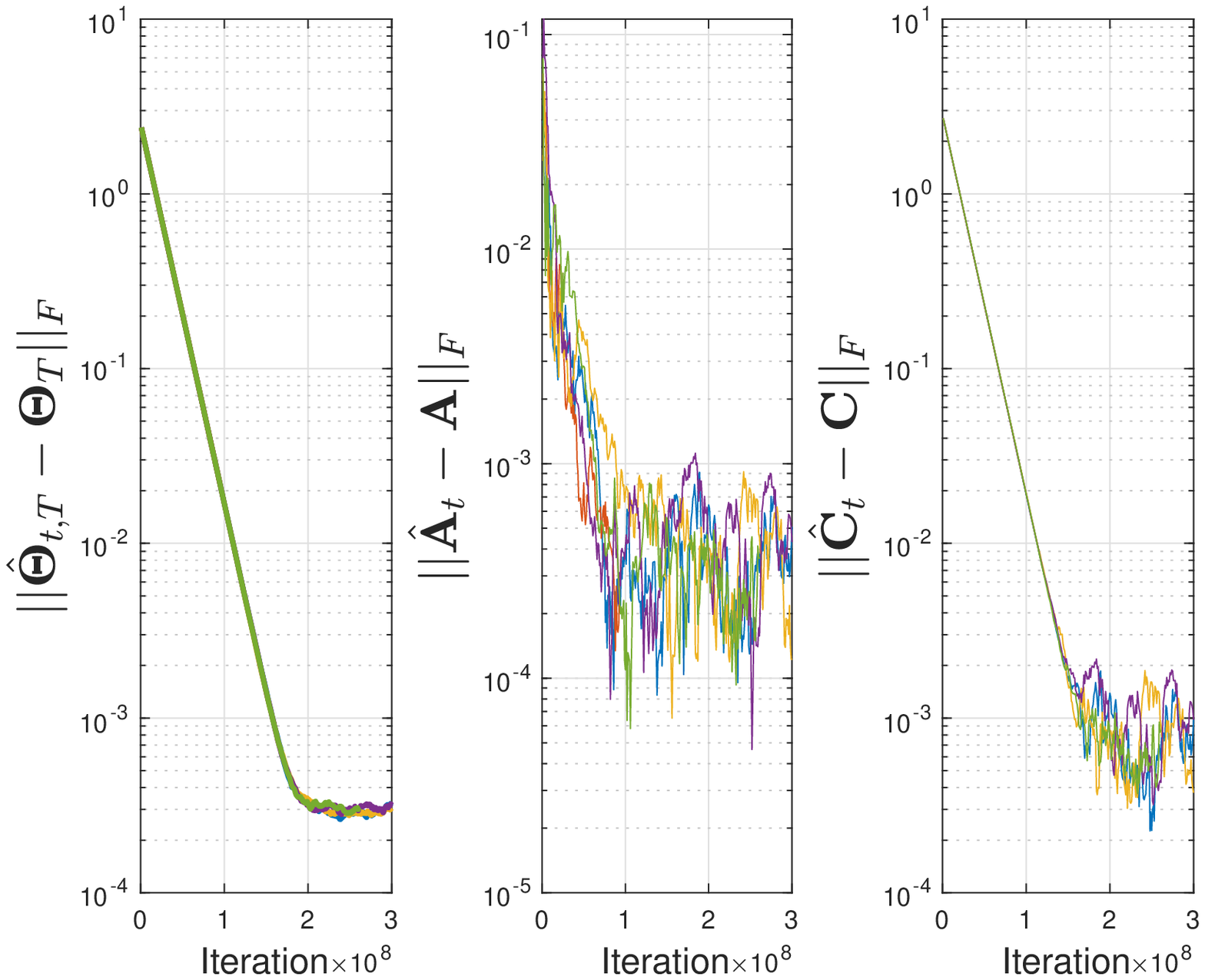}			\label{fig:miso30n}}
	\subfigure{\includegraphics[width=0.32\textwidth]{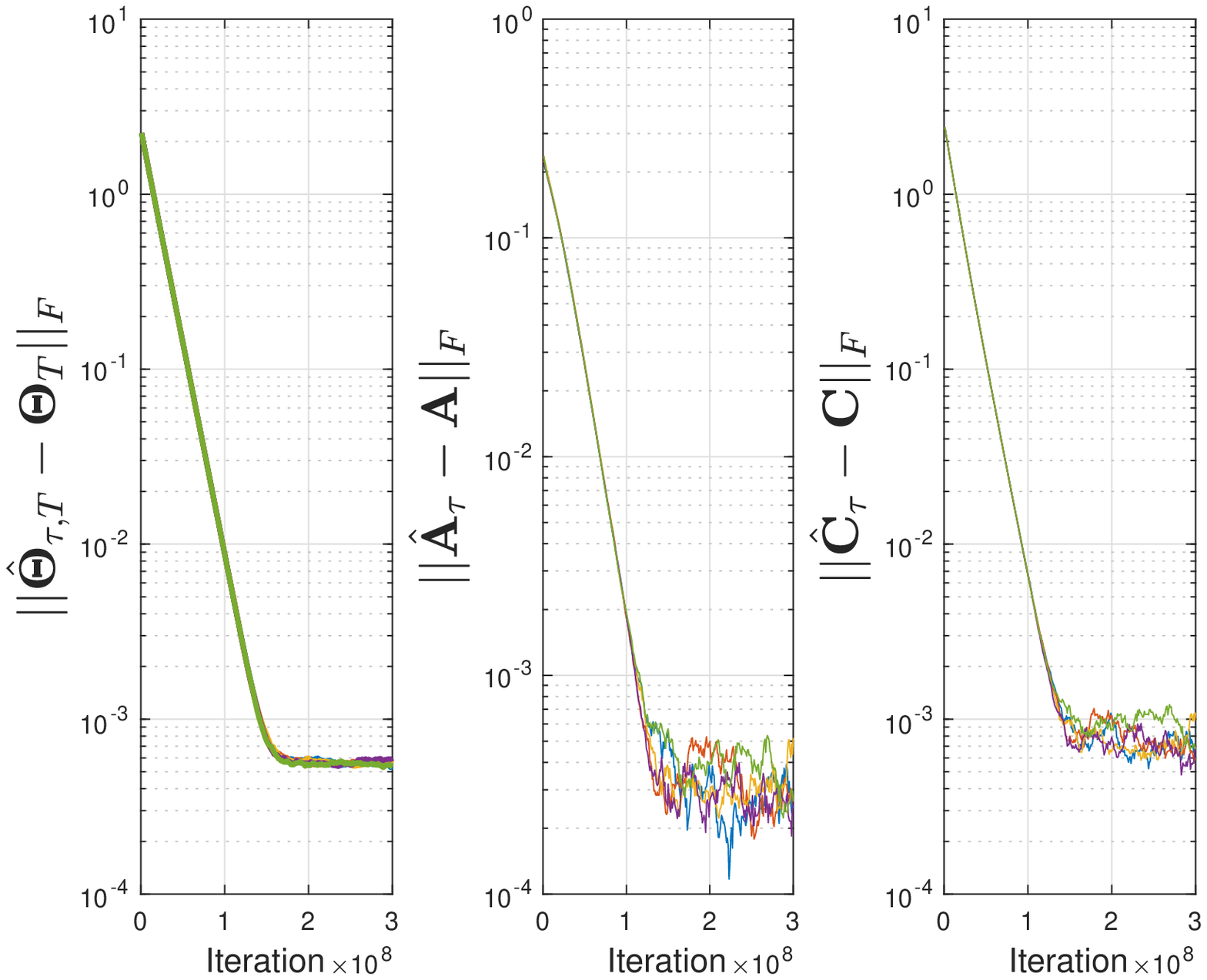}			\label{fig:miso20offline}}
	\subfigure{\includegraphics[width=0.32\textwidth]{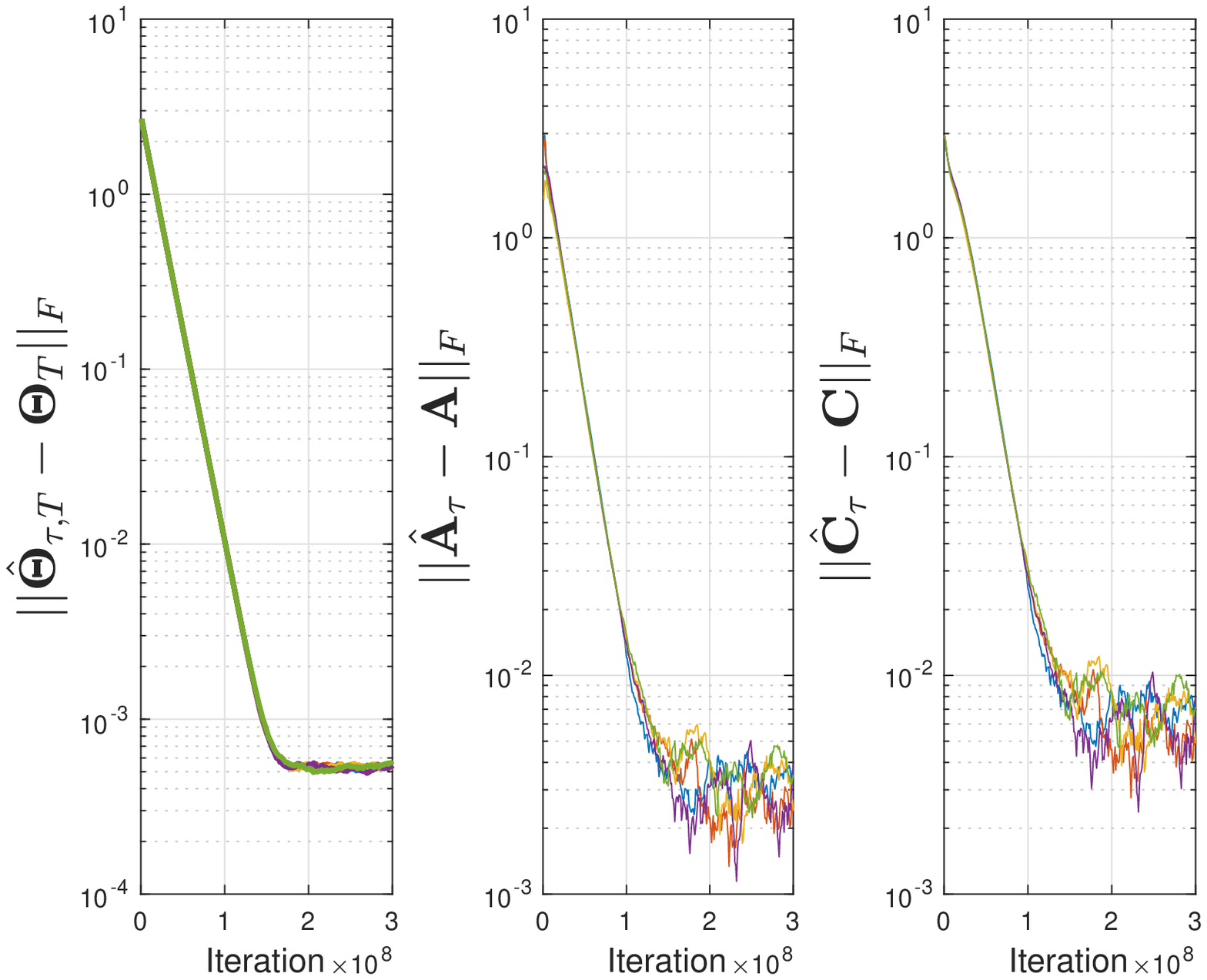}			\label{fig:miso25offline}}
	\subfigure{\includegraphics[width=0.32\textwidth]{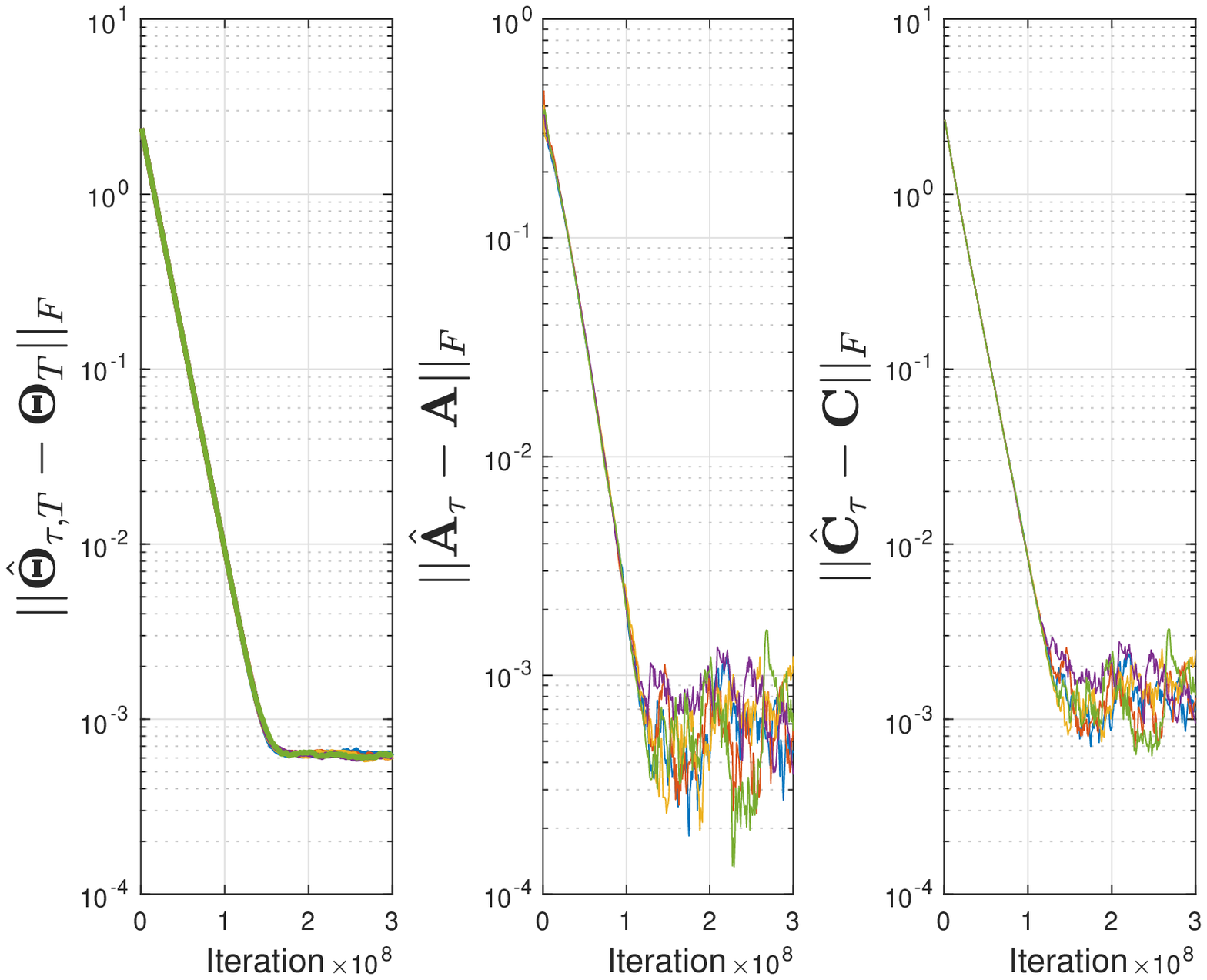}			\label{fig:miso30offline}}
	\caption{Results for MISO systems.  In (a)-(f), the systems are noise-free. In (g)-(l), the systems are noisy. In (a), (d), (g) and (j), $n=5$, $m=4$, $p=1$. In (b), (e), (h) and (k), $n=5$, $m=5$, $p=1$. In (c), (f), (i) and (l), $n=5$, $m=6$, $p=1$.}
\end{figure}

\subsection{MIMO}
Three different MIMO systems with hidden state dimensions 20, 25 and 30 are considered. In particular, for the three systems, we have $(n,m,p)=(5,4,4)$, $(n,m,p)=(5,5,4)$ and $(n,m,p)=(5,6,4)$. As the hidden state dimension increases, the number of required samples to reach a certain distance from the ground truth unknowns increases. The spectral radius of $\mathbf{A}$ in Figs. \ref{fig:mimo20}, \ref{fig:mimo25} and \ref{fig:mimo30} is $0.55$, $0.75$ and $0.64$, respectively. The convergence of Algorithm \ref{al:comb} for three  noise-free systems is depicted in Figs. \ref{fig:mimo20}-\ref{fig:mimo30}, where $(T,\eta)=\{(800,10^{-5}),(800, 10^{-5}),(800, 10^{-5})\}$. Figs. \ref{fig:mimo20f}-\ref{fig:mimo30f} depict the convergence of Algorithm \ref{al:comb1} with the same truncation lengths and learning rates. For Algorithm \ref{al:comb1}, the batch size is $10^4$. The convergence of Algorithm \ref{al:comb} for the three systems after the addition of white noise with standard deviation $0.1$ is given Figs. \ref{fig:mimo20n}-\ref{fig:mimo30n}. Furthermore, the convergence of Algorithm \ref{al:comb1} for identical systems is depicted Figs. \ref{fig:mimo20offline}-\ref{fig:mimo30offline}. For both approaches, we have $(T,\eta)=\{(60,5\times 10^{-8}),(60,5\times 10^{-8}),(60,5\times 10^{-8})\}$. The batch size is $10^7$. The control signal is white noise with zero mean and standard deviation $1$.

\begin{figure}[t!]
	\centering
	\subfigure{\includegraphics[width=0.32\textwidth]{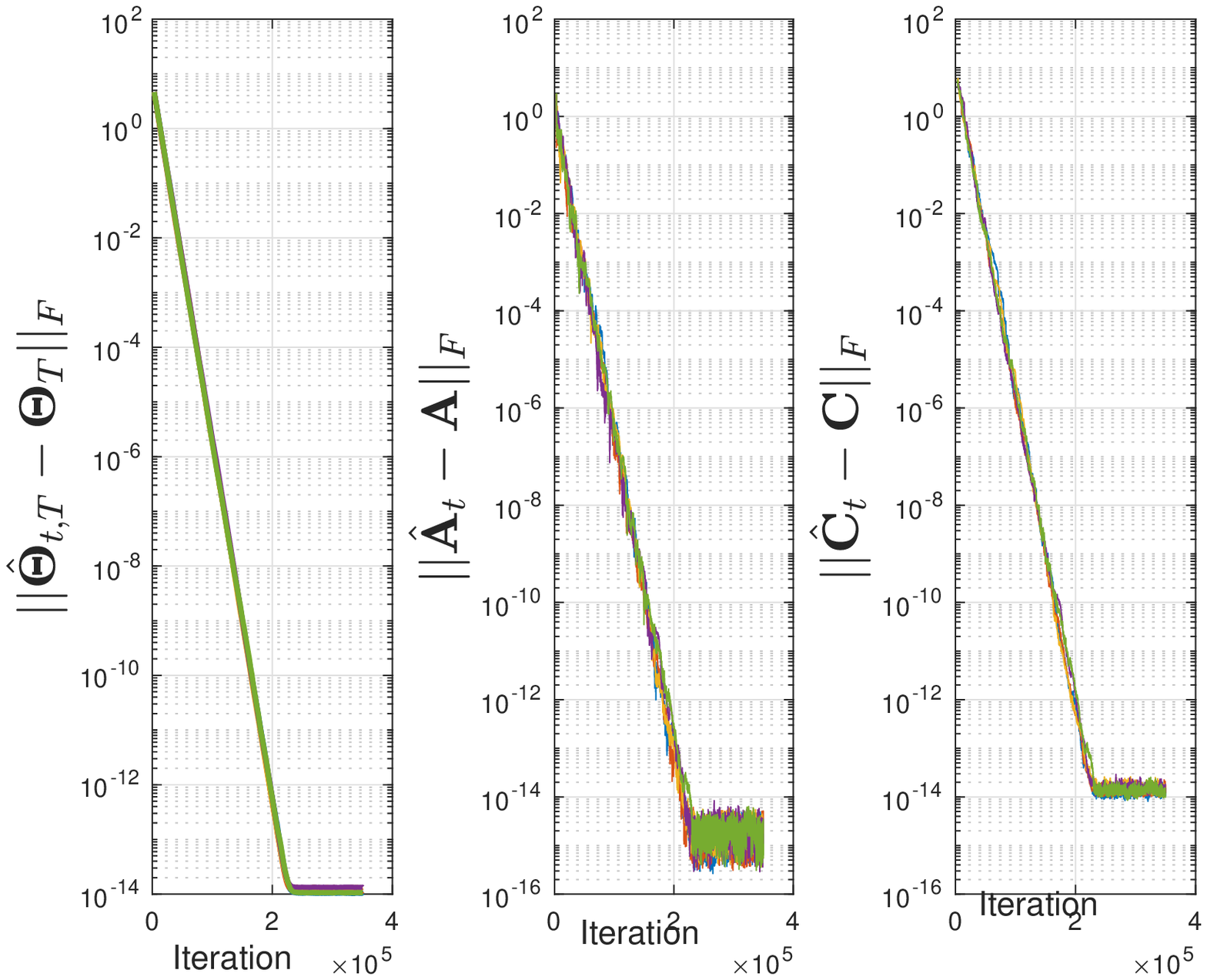}			\label{fig:mimo20}}
	\subfigure{\includegraphics[width=0.32\textwidth]{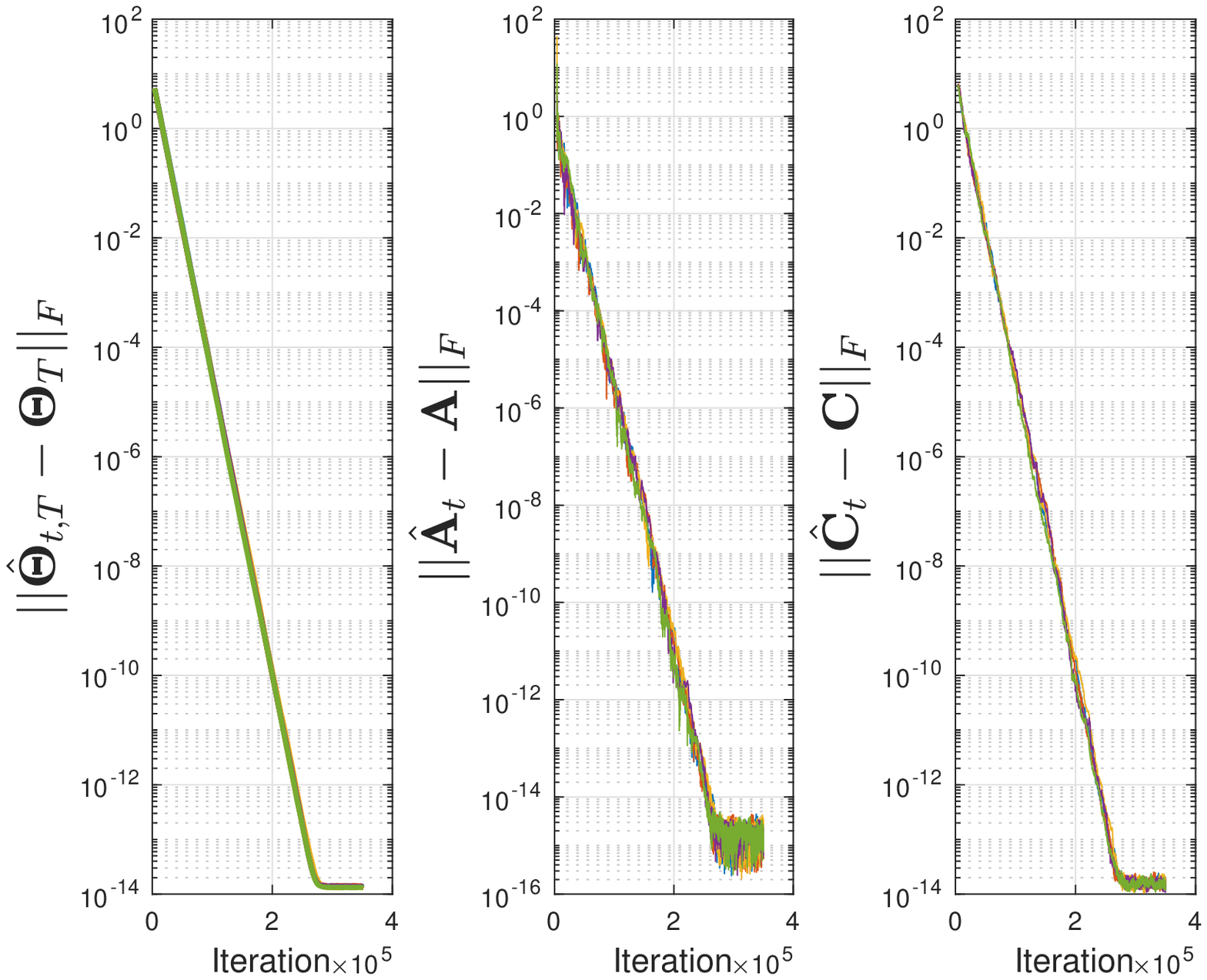}			\label{fig:mimo25}}
	\subfigure{\includegraphics[width=0.32\textwidth]{figs/mimo30.eps}			\label{fig:mimo30}}
	\subfigure{\includegraphics[width=0.32\textwidth]{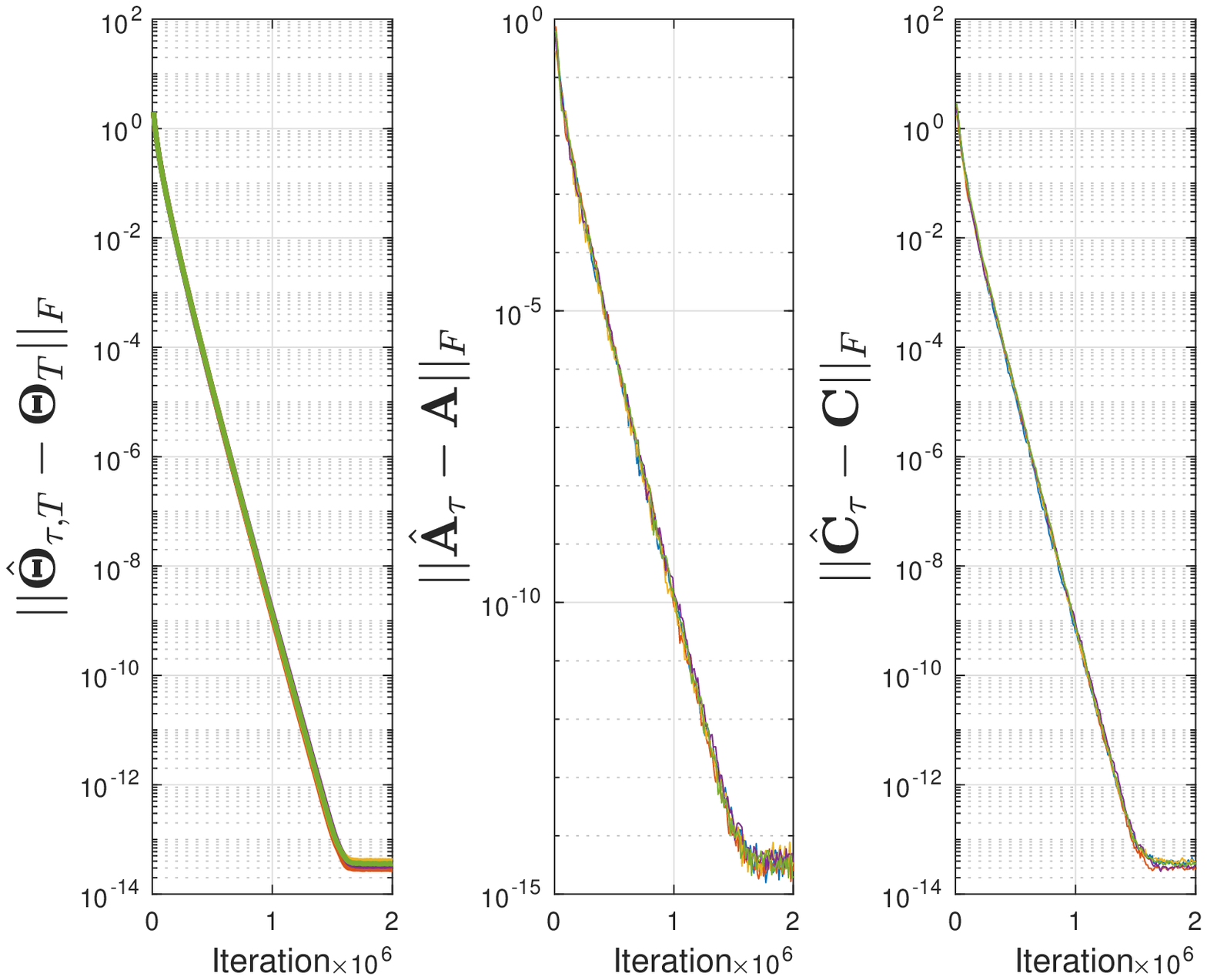}			\label{fig:mimo20f}}
	\subfigure{\includegraphics[width=0.32\textwidth]{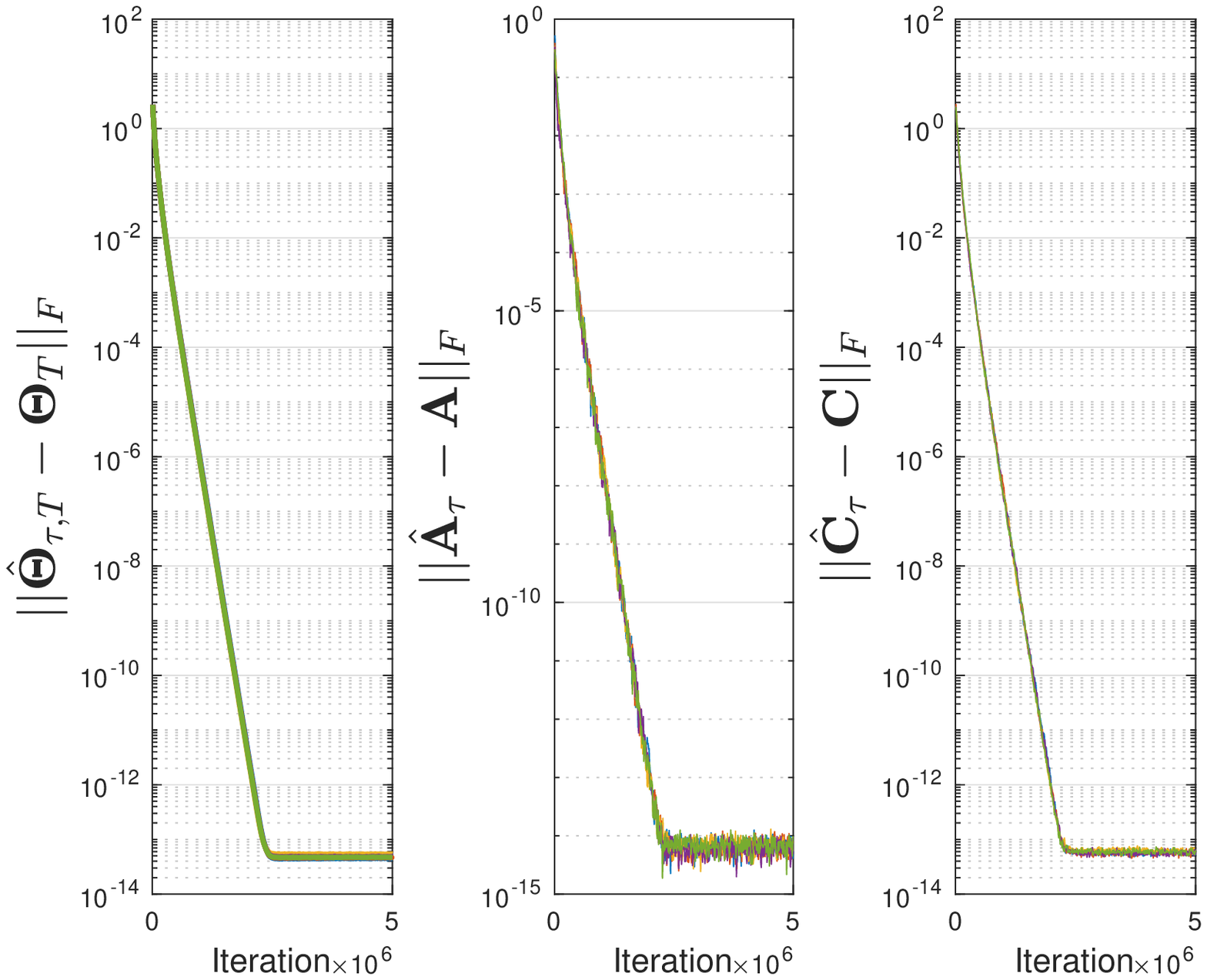}			\label{fig:mimo25f}}
	\subfigure{\includegraphics[width=0.32\textwidth]{figs/mimo30f.eps}			\label{fig:mimo30f}}
	\subfigure{\includegraphics[width=0.32\textwidth]{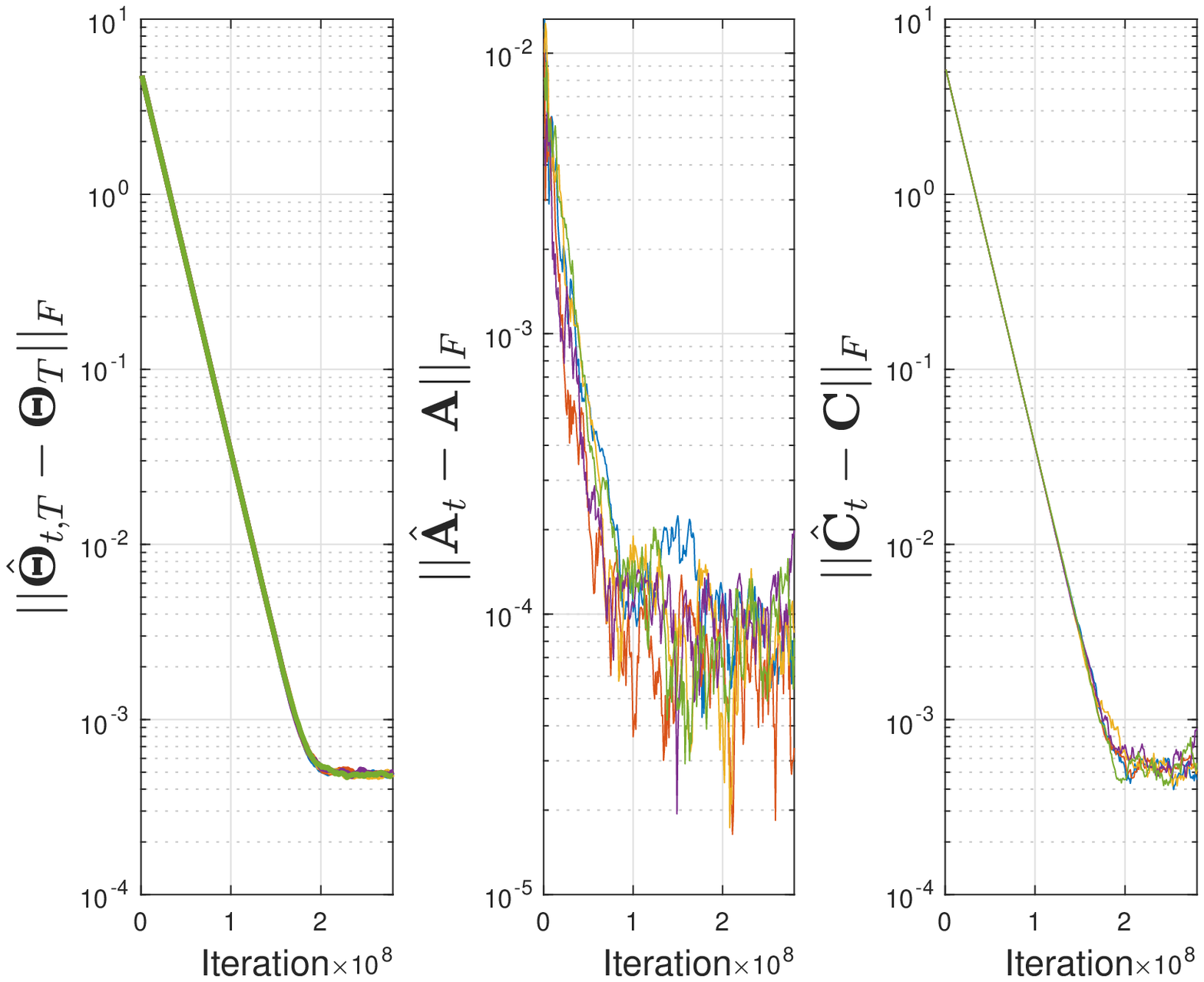}			\label{fig:mimo20n}}
	\subfigure{\includegraphics[width=0.32\textwidth]{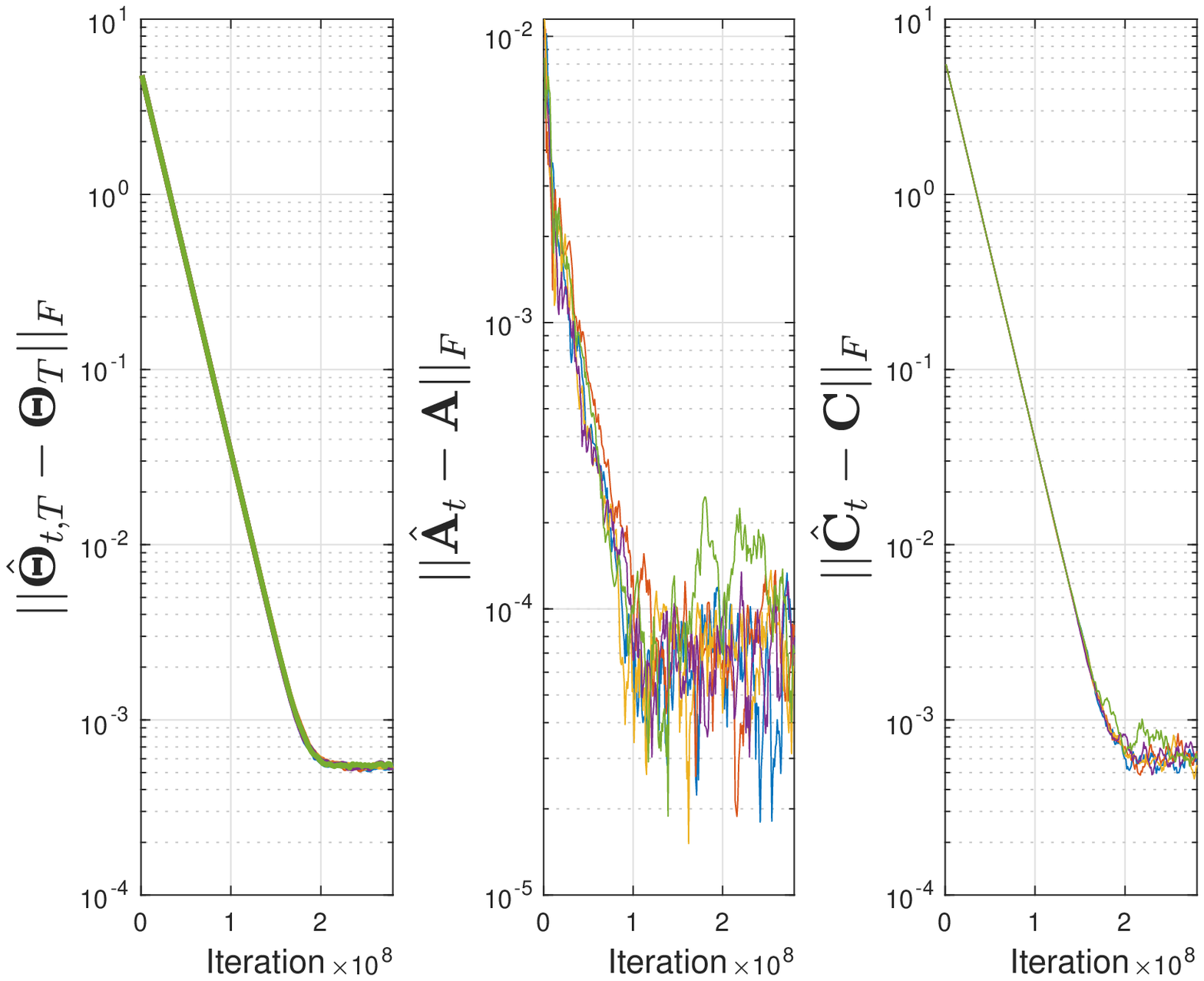}			\label{fig:mimo25n}}
	\subfigure{\includegraphics[width=0.32\textwidth]{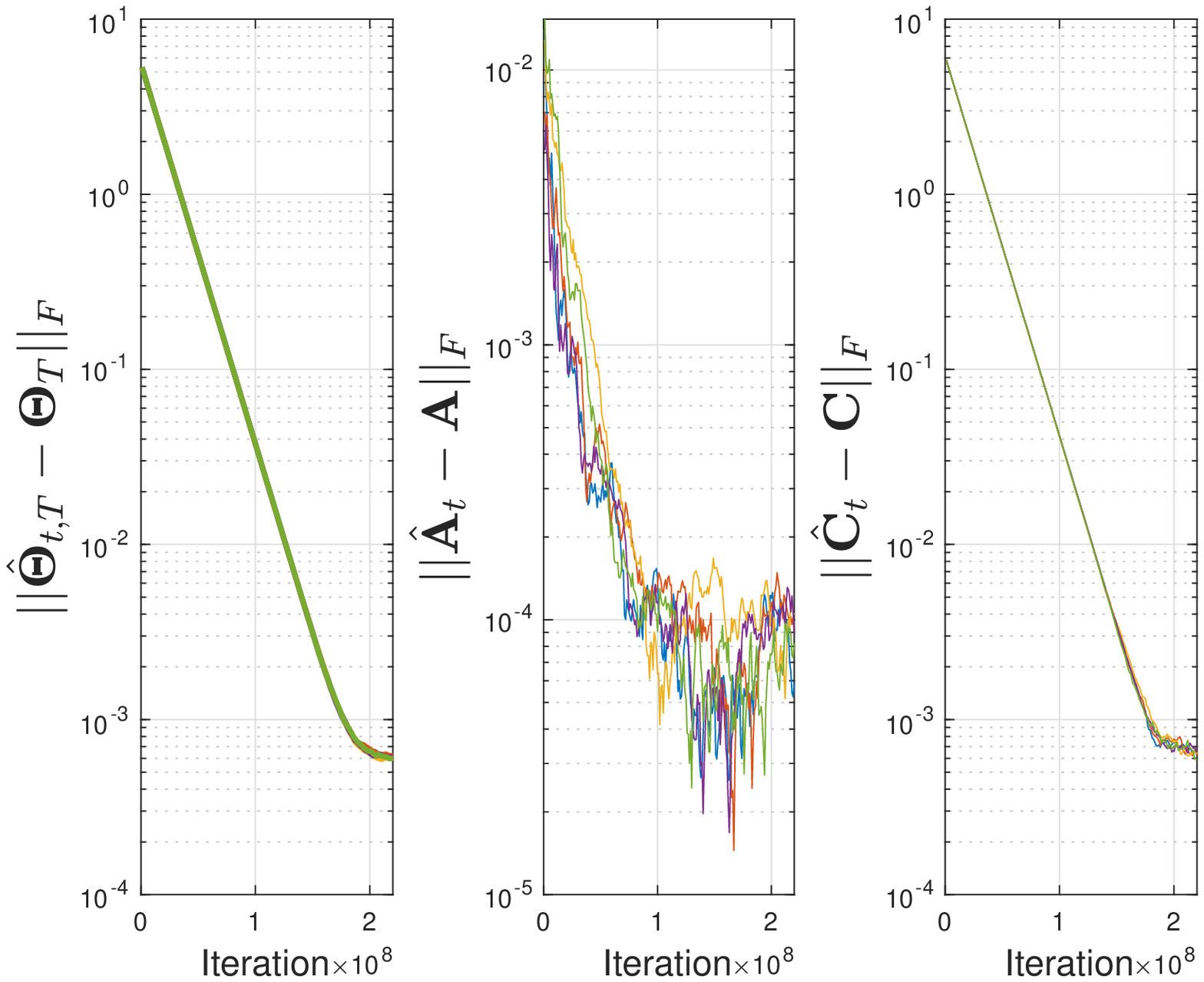}			\label{fig:mimo30n}}
	\subfigure{\includegraphics[width=0.32\textwidth]{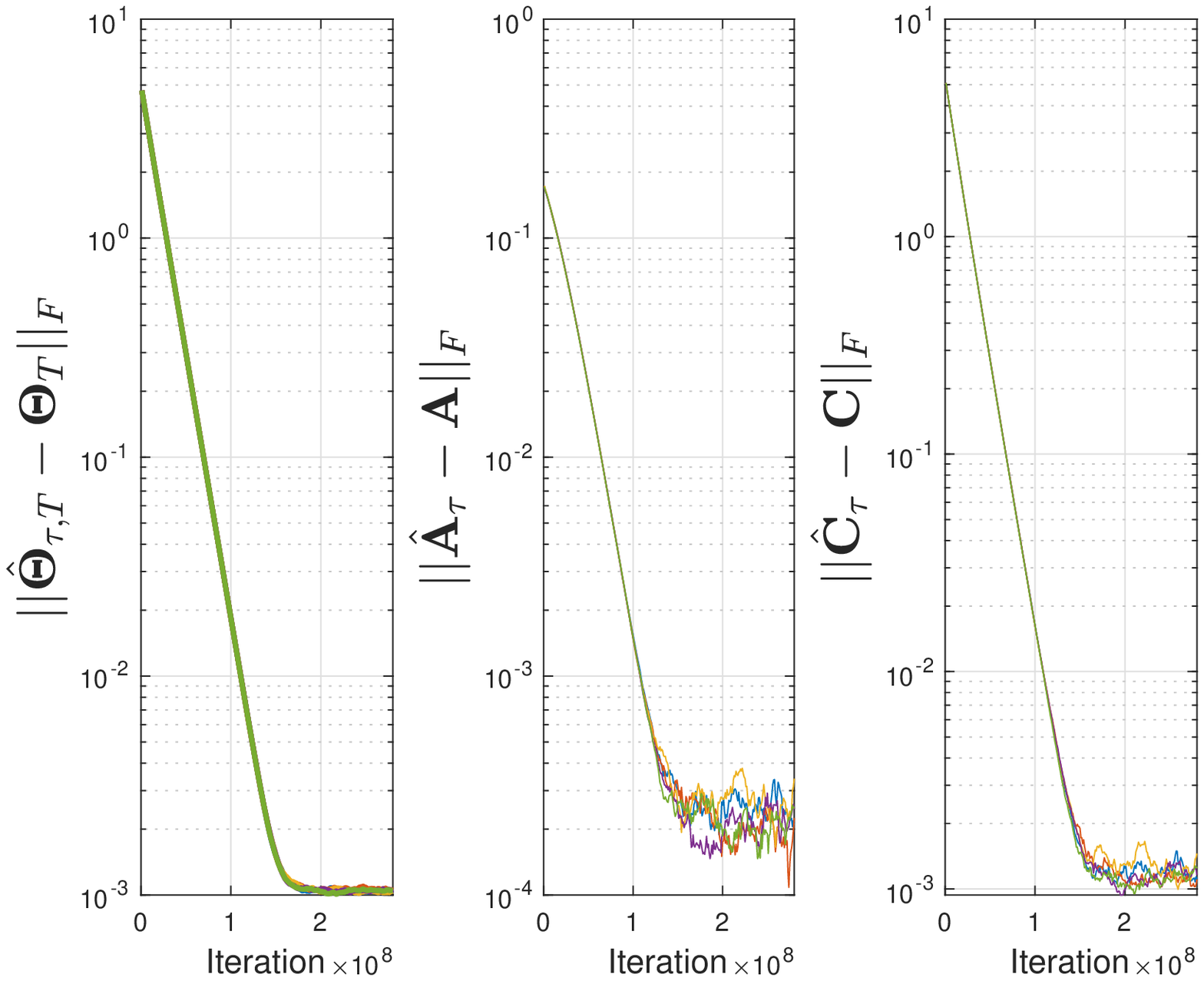}			\label{fig:mimo20offline}}
	\subfigure{\includegraphics[width=0.32\textwidth]{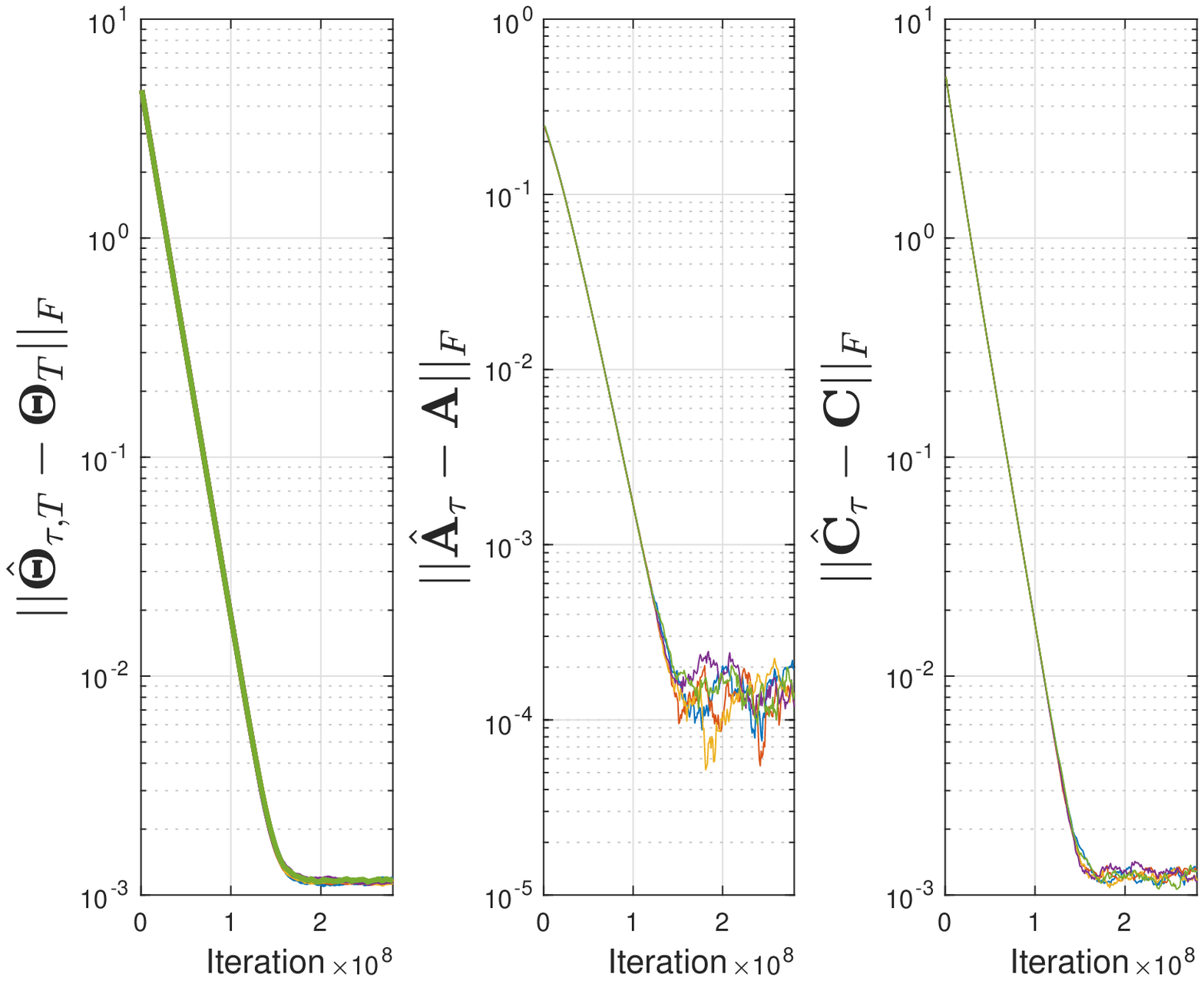}			\label{fig:mimo25offline}}
	\subfigure{\includegraphics[width=0.32\textwidth]{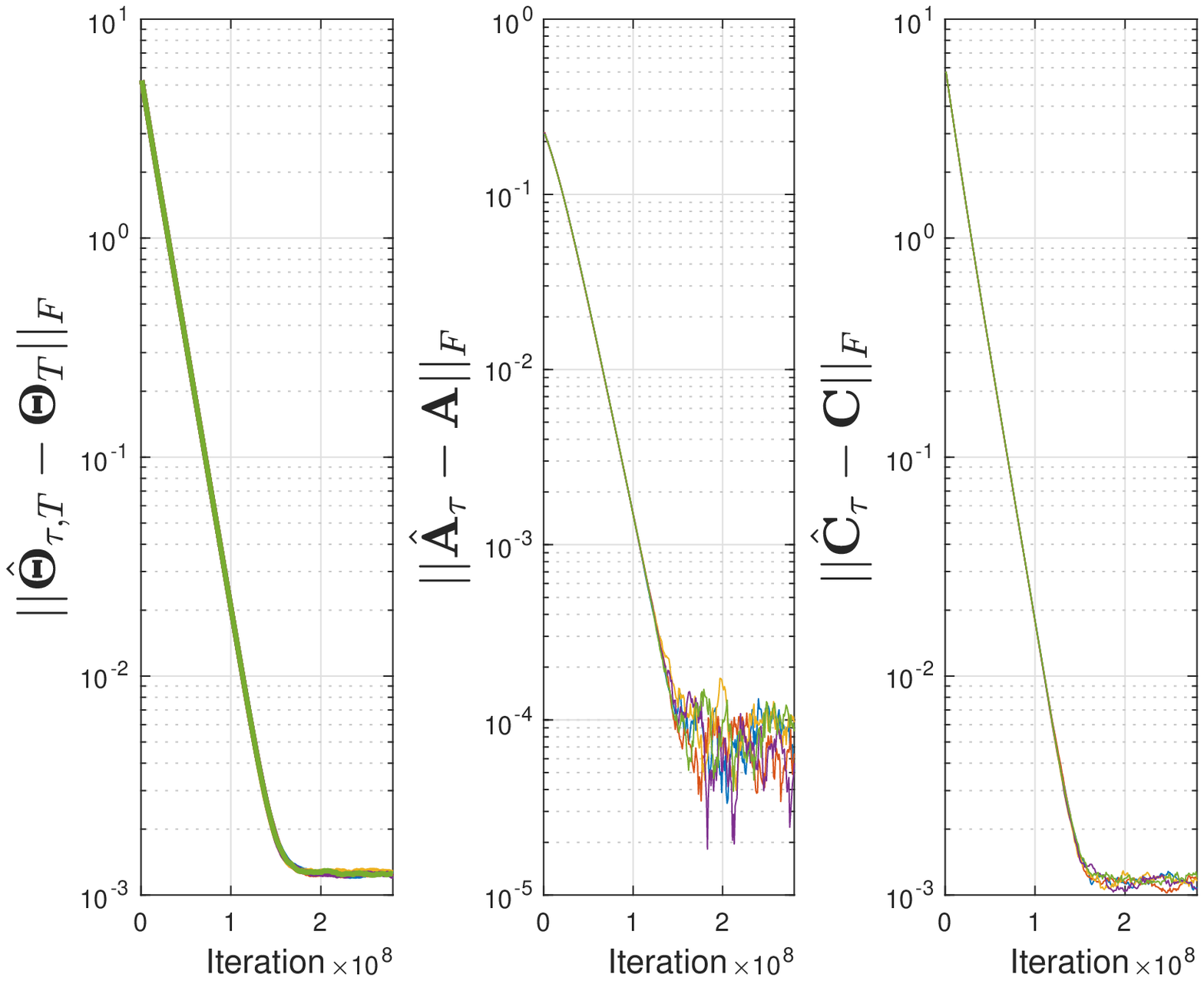}			\label{fig:mimo30offline}}
	\caption{Results for MIMO systems. In (a)-(f), the systems are noise-free. In (g)-(l), the systems are noisy. In (a), (d), (g) and (j), $n=5$, $m=4$, $p=4$. In (b), (e), (h) and (k), $n=5$, $m=5$, $p=4$. In (c), (f), (i) and (l), $n=5$, $m=6$, $p=4$.}
\end{figure}

\end{document}